%% file: paper.tex
\documentclass[english]{article}

\usepackage{geometry}
\usepackage[T1]{fontenc}
\usepackage[latin9]{inputenc}
\usepackage{bm}
\usepackage{amsmath,mathtools}
\usepackage{amssymb}
\usepackage[unicode=true,
 bookmarks=false,
 breaklinks=false,pdfborder={0 0 1},colorlinks=false]
 {hyperref}
\hypersetup{colorlinks,citecolor=blue,filecolor=blue,linkcolor=blue,urlcolor=blue}

\usepackage{tikz}
  \usetikzlibrary{positioning,fit,backgrounds, matrix,calc}
\usepackage{xcolor}

\makeatletter
 
\usepackage{amsthm}
\usepackage{cite}  
\usepackage{comment}
\usepackage{authblk}
\usepackage[sort,compress]{natbib}
\usepackage{booktabs}
\usepackage{mathabx}
\usepackage{graphicx}
\usepackage{array}

\usepackage{bbding}
\usepackage{colortbl}
\usepackage{makecell}

\usepackage[linesnumbered,ruled,vlined]{algorithm2e}

\SetCommentSty{mycommfont}

\usepackage{float}
\usepackage{multirow}
\usepackage{dsfont}
\usepackage{tcolorbox}
\usepackage{color}
\definecolor{yxc}{RGB}{255,0,0}
\definecolor{ytw}{RGB}{255,69,0}
\definecolor{gen}{RGB}{0,0,200}
\definecolor{zhh}{RGB}{200,200,0}

\allowdisplaybreaks

\input{notations}

\newtheorem{definition}{Definition}

\theoremstyle{plain} \newtheorem{lemma}{\textbf{Lemma}}\newtheorem{proposition}{\textbf{Proposition}}\newtheorem{theorem}{\textbf{Theorem}}

\theoremstyle{assumption}\newtheorem{assumption}{\textbf{Assumption}}
\theoremstyle{remark}\newtheorem{remark}{\textbf{Remark}}
\theoremstyle{Corollary}\newtheorem{corollary}{\textbf{Corollary}}

\usepackage{enumitem}

\usepackage[nosort,capitalise]{cleveref}
\crefname{enumi}{}{}
\crefname{equation}{Eqn.}{Eqns.}

\crefname{theorem}{Theorem}{Theorems}
\crefname{lemma}{Lemma}{Lemmas}
\crefname{corollary}{Corollary}{Corollaries}
\crefname{proposition}{Proposition}{Propositions}
\crefname{definition}{Definition}{Definitions}
\crefname{remark}{Remark}{Remarks}
\crefname{assumption}{Assumption}{Assumptions}
\crefname{conjecture}{Conjecture}{Conjectures}

\Crefname{theorem}{Theorem}{Theorems}
\Crefname{lemma}{Lemma}{Lemmas}
\Crefname{corollary}{Corollary}{Corollaries}
\Crefname{proposition}{Proposition}{Propositions}
\Crefname{definition}{Definition}{Definitions}
\Crefname{remark}{Remark}{Remarks}
\Crefname{assumption}{Assumption}{Assumptions}
\Crefname{conjecture}{Conjecture}{Conjectures}

\allowdisplaybreaks

\title{Efficient Sampling with Discrete Diffusion Models: \\ Sharp and Adaptive Guarantees}

\author{Daniil Dmitriev\thanks{Equal contribution, alphabetical order.} \thanks{Department of Statistics and Data Science, the Wharton School, University of Pennsylvania; email: \texttt{\{daniild,zhihanh,ytwei\}@wharton.upenn.edu}
},  ~~~Zhihan Huang$^{*\dagger}$, ~~~Yuting Wei$^\dagger$
}

\date{\today}

\makeatother

\begin{document}

\maketitle 

\vspace{-0.5cm}
\begin{abstract}
Diffusion models over discrete spaces have recently shown striking empirical success, yet their theoretical foundations remain incomplete. In this paper, we study the sampling efficiency of score-based discrete diffusion models under a continuous-time Markov chain (CTMC) formulation, with a focus on $\tau$-leaping-based samplers. We establish sharp 
  convergence guarantees for attaining $\varepsilon$ accuracy in Kullback-Leibler (KL) divergence for both uniform and masking noising processes. 
For uniform discrete diffusion, we show that the $\tau$-leaping algorithm achieves an iteration complexity of order $\wt O(d/\varepsilon)$, with $d$ the ambient dimension of the target distribution, eliminating linear dependence on the vocabulary size $S$ and improving existing bounds by a factor of $d$; moreover, we establish a matching algorithmic lower bound showing that linear dependence on the ambient dimension is unavoidable in general. 
For masking discrete diffusion, we introduce a modified $\tau$-leaping sampler whose convergence rate is governed by an intrinsic information-theoretic quantity, termed the \emph{effective total correlation}, which is bounded by $d \log S$ but can be sublinear or even constant for structured data. As a consequence, the sampler provably adapts to low-dimensional structure without prior knowledge or algorithmic modification, yielding sublinear convergence rates for various practical examples (such as hidden Markov models, image data, and random graphs). Our analysis requires no boundedness or smoothness assumptions on the score estimator beyond control of the score entropy loss.
\end{abstract}

\setcounter{tocdepth}{2}
\tableofcontents

\input{introduction}

\input{setting}

\input{results}

\input{discussion}

\section*{Acknowledgements}
This work is supported in part by the NSF grants CCF-2106778, CCF-2418156 and 
CAREER award DMS-2143215.

\bibliographystyle{apalike}
\bibliography{reference-diffusion}

\appendix

\input{appendix.tex}

\end{document}

%% file: notations.tex
\makeatletter

\def\LatinUpper{A,B,C,D,E,F,G,H,I,J,K,L,M,N,O,P,Q,R,S,T,U,V,W,X,Y,Z}

\newcommand{\genCal}[1]{\expandafter\newcommand\csname c#1\endcsname{{\mathcal #1}}}
\@for\q:=\LatinUpper\do{%
	\expandafter\genCal\q
}

\newcommand{\genBb}[1]{\expandafter\newcommand\csname b#1\endcsname{{\mathbb #1}}}
\@for\q:=\LatinUpper\do{%
	\expandafter\genBb\q
}

\makeatother

\DeclareMathOperator{\ham}{d_H}
\DeclarePairedDelimiter{\abs}{\lvert}{\rvert}%

\newcommand{\back}[1]{\overset{\leftarrow}{#1}}

\newcommand{\mi}{\mathrm{I}}
\newcommand{\ent}{\mathcal{H}}
\newcommand{\lse}{\cL_{\mathrm{SE}}}
\newcommand{\breg}{D}
\newcommand{\wt}{\widetilde}
\newcommand{\wh}{\widehat}
\newcommand{\defn}{\coloneqq}

\newcommand{\mask}{\mathrm{MASK}}

\renewcommand{\P}{\mathbb{P}}

\newcommand{\tr}{\mathrm{Tr}}

\renewcommand{\d}{\mathrm{d}}

\newcommand{\KL}{\mathsf{KL}}

\newcommand{\escore}{\varepsilon_{\mathrm{score}}}
\newcommand{\E}{\mathbb{E}}

\newcommand{\indep}{\perp \!\!\! \perp}

\newcommand{\Unif}{\text{Unif}}

%% file: introduction.tex
\section{Introduction}

Diffusion models have recently emerged as state-of-the-art approaches for high-fidelity image generation and video synthesis (\cite{ho2020denoising,dhariwal2021diffusion,song2019generative,ho2022video}), and have already led to significant scientific advances in various domains, including climate modeling, protein structure prediction, and materials science (\cite{watson2023novo,zeni2025generative,li2024generative}). 
At their core, diffusion models are built upon two stochastic processes: a forward process that gradually corrupts the data distribution into pure noise, and a reverse process that generates samples by learning the logarithmic gradient of the perturbed marginals, commonly referred to as the score function.

Despite their broad empirical success, diffusion models have been predominantly developed for continuous data. 
Their extension to discrete domains, such as natural language, graph-structured data, and categorical labels, has long remained challenging,~although already discussed in~\cite{sohl2015deep}.
This perspective began to shift following the seminal work of \cite{austin2021structured}, which revealed the promise of diffusion-based approaches in discrete settings. 
Analogous to the continuous case, discrete diffusion models rely on a pair of noisy forward and reverse processes, with sampling achieved by learning appropriate ratios of distributions. Among recent developments (\cite{campbell2022continuous,sahoo2024simple,shi2024simplified,ou2024your,bach2025sampling}), score-entropy discrete diffusion (SEDD) has demonstrated striking performance in text generation (\cite{lou2023discrete}), challenging the long-standing dominance of autoregressive language models. In contrast to autoregressive approaches, diffusion-based language models are not constrained to a fixed generation order (such as left-to-right), and they naturally lend themselves to more flexible forms of controlled generation, including conditional and structured text synthesis.

The promise of discrete diffusion models has spurred growing interest in their theoretical foundations. A particularly influential line of work formulates discrete diffusion through the lens of continuous-time Markov chains (CTMCs) \citep{campbell2022continuous}, in which the forward dynamics is governed by a carefully designed rate matrix, and the backward dynamics is approximated via a learned score function. 
Among the proposed constructions, two choices have emerged as especially prominent: the uniform rate matrix, which induces a uniform stationary distribution for the forward process, and the absorbing rate matrix, which yields a degenerate stationary distribution with an absorbing state. In practice, the performance of the resulting samplers depends sensitively on the choice of the rate matrix~(\cite{lou2023discrete,von2025scaling}).
Correspondingly, two parallel lines of work have sought to understand the sampling efficiency of discrete diffusion models --- specifically, the number of steps required to produce sufficiently accurate samples --- under these respective constructions. Representative results include \cite{chen2024convergence,ren2024discrete,zhang2024convergence,pham2025discrete,liang2025discrete} for uniform diffusion and \cite{park2024jump,liang2025absorb,conforti2025non,libreaking,chen2025optimal} for masking diffusion (also referred to as absorbing diffusion).

Existing theoretical analyses for score-based discrete diffusions suggest that convergence rates typically scale at least linearly with both the vocabulary size $S$ and the ambient dimension $d$. Such scaling can quickly become prohibitive in applications; for instance, in GPT-2-based tasks, the vocabulary size is $S = 50{,}257$ and the dimension is $d = 10^2 \sim 10^3$ \citep{lou2023discrete}. These considerations naturally motivate a fundamental question: 
\vspace{-0.2cm}
\begin{center}
\emph{
How efficient are discrete diffusion models? When is sublinear convergence possible?}
\end{center}

\subsection{Sampling efficiency and adaptivity}

To put our discussion in context, there has been substantial progress in understanding the sample efficiency of continuous diffusion models. Seminal work by \citet{chen2022sampling} characterizes the iteration complexity of the DDPM sampler under Lipschitz (or smoothness) assumptions on the score functions across all steps. Subsequent studies significantly relax these conditions and establish convergence guarantees for broader classes of continuous distributions \citep{benton2024denoising,li2023towards,chen2023improved}. Nevertheless, it is now well understood that for general distributions, a linear dependence on the ambient dimension $d$ is unavoidable. By contrast, when the target distribution exhibits additional structure --- such as Gaussian mixture models or support on low-dimensional manifolds --- a growing body of work shows that popular samplers can adaptively exploit intrinsic low-dimensional geometry, achieving improved efficiency without explicit dimension reduction (see, e.g., \cite{li2024adapting,li2025dimension,huang2024denoising,liang2025low}).

The landscape shifts considerably as we move to discrete diffusion models. Under the CTMC formulation, algorithms such as Gillespie's method and uniformization allow for exact simulation of the reverse process, free of discretization error \citep{gillespie1976general,van1992uniformization,chen2024convergence}. However, these methods suffer from high computational costs in high-dimensional settings.
Moreover, their convergence guarantees are inherently stochastic, as they depend on a random number of transitions. An alternative and widely adopted approach,  particularly in diffusion-based language models, is provided by $\tau$-leaping and its variants, including truncated $\tau$-leaping \citep{gillespie2001approximate,campbell2022continuous}. Originally developed in chemical kinetics, $\tau$-leaping replaces sequential state transitions with parallel updates across coordinates, offering substantial computational gains in large systems.
Yet, our theoretical understanding of $\tau$-leaping remains incomplete. Current state-of-the-art results exhibit at least linear dependence on the vocabulary size $S$, linear dependence on $d$ for the absorbing case, and quadratic dependence on $d$ for the uniform case (\cite{liang2025absorb,liang2025discrete,conforti2025non}); see Table~\ref{tab:comparison} for more details. It remains an open question whether these dependencies are fundamental information-theoretic barriers or merely analytical artifacts. Furthermore, as in the continuous setting, an ideal sampling algorithm should automatically adjust to the intrinsic difficulty of the target distribution. For example, one would expect substantially faster convergence for Dirac delta measures or uniform target distributions, without prior knowledge of the structure or modifications to the algorithm. Existing analyses of 
$\tau$-leaping do not illuminate whether such adaptivity is possible.
More specifically, we aim to address the following question:

\begin{center}
\emph{
Can score-based samplers automatically adapt to structured target distributions?
}
\end{center}

\subsection{Our contributions}

The contributions of this work are centered on establishing sharp convergence guarantees for discrete diffusion models, bridging the gap between empirical success and theoretical understanding. Specifically, our contributions are mainly threefold:
\begin{itemize}
    \item 

 \textbf{Optimal rates for uniform diffusion}: We establish that for the uniform diffusion process, the $\tau$-leaping sampler requires only $\widetilde{O}(d/\varepsilon)$ discretization steps to achieve an $\varepsilon$-error in KL divergence. This result significantly sharpens the previously best-known bound of $\widetilde{O}(d^2 S/\varepsilon)$ \citep{liang2025discrete}, effectively removing a factor of $d$ and the dependence on the vocabulary size $S$.

\item 
\textbf{Fundamental lower bounds}: We demonstrate that the linear dependence on the dimension $d$ is essentially unimprovable for the $\tau$-leaping algorithm. Specifically, we show that under uniform diffusion, an $o(d)$ complexity bound is unattainable unless the target distribution is already proximal to the uniform measure. This result characterizes a fundamental price of sampling for informative distributions.

\item 
\textbf{Adaptivity for masking diffusion}: For the masking diffusion process, we introduce a refined $\tau$-leaping sampler that has a complexity governed by $\widetilde{O}(\mathcal{D}/\varepsilon)$, where $\mathcal{D}$ is the effective total correlation, an information-theoretic measure of the target distribution's intrinsic complexity. Notably, while $\mathcal{D}$ is always bounded by the classical total correlation and the dual total correlation (and thus by $d \log S$), it can be sublinear or even $O(1)$ for highly structured data, allowing our sampler to adapt automatically to low-dimensional target distributions.
\end{itemize}

In contrast to prior work, our upper bounds 
do not require the boundedness of the score estimator or any auxiliary regularity assumptions beyond control of the score entropy loss. The key technical ingredients include a Girsanov change-of-measure argument, combined with establishing the martingale properties of the sampling dynamics. The latter effectively separates the approximation error from the discretization error, allowing each to be analyzed independently.
For the lower bound, we leverage a log-Sobolev inequality together with a strong data-processing inequality along the uniform noising process. 
To demonstrate the scope of our adaptivity results for masking discrete diffusion, we present several examples whose analysis requires control of information-theoretic quantities, which may be of independent interest.

\subsection{Notation}

For a positive integer \(n\), we define \([n] \defn \{1, \ldots, n\}\) and let \(I_n \in \bR^{n \times n}\) denote the identity matrix. 
Let \(d > 0\) denote the number of dimensions, \(S > 0\) denote the vocabulary size and \(T > 0\) denote the time horizon.
Let \(\mask\) denote a special value outside of \([S]\). Let \(\cX \defn \cV^d\) denote the domain, where, depending on the context, \(\cV \defn [S]\) or \(\cV \defn [S] \cup \{\mask\}\).
We denote the set of all distributions on \(\cX\) by \(\cP(\cX)\). Let \(\ent\), \(\KL\), and \(\mi\) denote \emph{entropy}, \emph{Kullback-Leibler (KL) divergence}, and \emph{mutual information}, respectively. Let \(\delta_x\) denote the Dirac measure at point \(x\). We adopt the standard asymptotic notation \(O(\cdot), {\Omega}(\cdot),\Theta(\cdot)\), \(\lesssim\), and \(\ll\). Additionally, $\widetilde{O}(\cdot), \widetilde{\Omega}(\cdot)$, and $\widetilde{\Theta}(\cdot)$ are defined analogously, except that the logarithmic dependency on $d, S$, $1/\varepsilon$, and \(1 / \delta\) is hidden. For a vector \(x = (x^1, x^2, \ldots, x^d) \in \cX\), \(i \in [d]\), and \(c \in \cV\), we define vectors \(x^{-i} \defn (x^1, \ldots, x^{i - 1}, x^{i + 1}, \ldots, x^d)\), and \(x \oplus_i c\), \(x \odot_ic \in \cX \) as follows:
\begin{itemize}
    \item  \((x \oplus_i c)^j = x^j\) for all \(j \neq i\), and \((x \oplus_i c)^i = (x^i + c) \bmod \abs{\cV}\) \footnote{In this case, we assume that \(\cV\) has additive structure. We only apply this notation when \(\cV = [S]\). We use the convention that \(0 \bmod S = S\).},
    \item \((x \odot_i c)^j = x^j\) for all \(j \neq i\), and \((x \odot_i c)^i = c\),
\end{itemize}
For \(x, y \in \cX\), we denote the Hamming distance by \(
    \ham(x, y) \defn \abs{\{i \in [d]: x^i \neq y^i\}} \), and for \(x \in ([S] \cup \{\mask\})^d\), we denote \(m(x) \defn \abs{\{i \in [d], \text{ such that } x^i = \mask\}}\).

\begin{table}[t]
    \centering
    
    \renewcommand\arraystretch{1.5} 
    \fontsize{7}{10}\selectfont
    \begin{tabular}
    {>{\centering\arraybackslash}m{2.6cm}|>
    {\centering\arraybackslash}m{1.5cm}|>
    {\centering\arraybackslash}m{2cm}|>
    {\centering\arraybackslash}m{1.8cm}|>
    {\centering\arraybackslash}m{1.5cm}|>
    {\centering\arraybackslash}m{1.8cm}|>
    {\centering\arraybackslash}m{1.7cm}}
    \hline
    \textbf{Paper} & 
    \textbf{Noising process} & 
    \textbf{Additional Assump.\  on \(\wh s_t\)}&
    \textbf{No Early Stopping}&
    \textbf{Sampler} &
    \textbf{Iteration Complexity}&
    \textbf{Adaptation}\\
    \hline
    \rule{0pt}{-3pt}\makecell{\vspace{0pt}\citet{ren2024discrete}}& Uniform & Bounded & \XSolidBrush&
    $\tau$-leaping & $d^2 S^2/\varepsilon$ 
    &{\centering\XSolidBrush}\\
    \hline
    \rule{0pt}{-3pt}\citet{liang2025discrete}& Uniform & Bounded & \XSolidBrush &
    $\tau$-leaping & $d^2 S/\varepsilon$ 
    &{\centering\XSolidBrush}\\
    \hline
    \rowcolor{gray!20}
    \rule{0pt}{5pt}\makecell{\vspace{-1pt}Our work,  \\Theorems~\ref{thm:uniform}\&\ref{thm:uniform_lower}}& Uniform & No requirement & \Checkmark &
    $\tau$-leaping & $d/\varepsilon$\(\ ^\ast\)
    &{\centering\XSolidBrush}\\
    \hline\hline
    \rule{0pt}{-5pt}\citet{liang2025absorb} & Masking& Bounded & \XSolidBrush &
    $\tau$-leaping & $d S/\varepsilon$ 
    &{\centering\XSolidBrush}\\
    \hline
    \citet{conforti2025non}& Masking & Small \(L_2\) error &
    \XSolidBrush
    &DMPM&
    $dS/\varepsilon$
    & {\centering\XSolidBrush}\\
    \hline
    \rowcolor{gray!20}
    \makecell{Our work, \\Theorem~\ref{thm:masking_main}}& Masking & 
    No requirement
    &\Checkmark&
    Algorithm~\ref{alg:modified_ttl}
    & $\cD/\varepsilon$
    &\Checkmark\\
    \hline
    \end{tabular}
    \caption{
    Comparison with prior work. 
    Logarithmic factors in the iteration complexity are omitted. \cite{ren2024discrete} and \cite{liang2025absorb} describe bounds without early stopping under more stringent assumptions on the target distribution, the score function, or the score estimator. The assumption on the approximation error in~\cite{conforti2025non} requires both the score entropy loss \(\cL_{\mathrm{SE}}\) and the $L_2$ error \(\|\wh s_t - s_t\|^2\) to be small at discretization points.
    The quantity \(\cD\) (defined in~\Cref{def:etc}), is upper bounded by \(d \log S\) and captures the intrinsic low-dimensional structure of the target distribution. 
    The entry marked with \(\,\ast\) indicates a sharp rate, with the matching lower bound established in \Cref{thm:uniform_lower}.
    \label{tab:comparison}
    }
\end{table}

%% file: setting.tex
\section{Preliminaries of discrete diffusion models}

\subsection{A continuous-time Markov chain formulation}

Our goal is to model $d$-dimensional discrete data $X_0 = (X_0^1, X_0^2, \ldots, X_0^d) \in [S]^d$. 
Let $q_{\mathrm{data}} = q_0$ denote the probability mass function (p.m.f.) of $X_0$ from which we aim to sample, and let $q_0^i$ be the marginal p.m.f. of the $i$-th coordinate.
Analogous to continuous diffusion models, their discrete counterparts consist of a forward and a reverse process over the discrete space. 

\paragraph{The forward process.} We define a forward noising process that progressively transforms the data distribution $q_0$ to a distribution $q_T$ that is close to an easy-to-sample distribution. This process is modeled using a continuous-time Markov chain (CTMC).
\begin{definition}[Continuous-time Markov chain]
    A \emph{CTMC} with an initial distribution \(q_0\) and rate matrices \((Q_t)_{t \in [0, T]}\) is a right-continuous stochastic process \((x_t)_{t \in [0, T]}\) such that 
    \begin{itemize}
        \item \((x_t)_{t \in [0, T]}\) satisfies the \emph{Markov property}: for any \(0 \leq s < t \leq T\), the conditional distribution of \(x_t\) given the history \(\{x_u, u \leq s\}\) depends only on \(x_s\),
        \item for any \(0 \leq t < T\), the transition probabilities satisfy, as \(\Delta t \to 0^+\):
        \begin{equation}
        \label{eq:qt_conditionals}
        \Pr(x_{t+\Delta t} = y \mid x_t = x) = \bI\{x = y\} + Q_t(x, y) \Delta t + o(\Delta t).
        \end{equation}
    \end{itemize}
     Here, the rate matrices satisfy $Q_t(x, y) \geq 0$ for all $x \neq y \in \mathcal{X}$ and $Q_t(x, x) = -\sum_{y \neq x} Q_t(x, y)$.
\end{definition}
We refer to~\cite{feller1940integro,feinberg2014solutions} for a rigorous treatment of CTMCs.
For a given $q_0$, the marginals \((q_t)_{t \in [0, T]}\) satisfying~\Cref{eq:qt_conditionals} are the solutions to the \emph{Kolmogorov forward equation}: $$\frac{\d q_t} {\d t} = Q_t^\top q_t, \quad \text{ for } 0 \leq t \leq T.$$
\paragraph{The reverse process.} For such a CTMC, there exists a time-reversed process with an initial distribution \(q_T\), rate matrices \((\back Q_t)_{t \in [0, T]}\), and marginals \((\back q_t)_{t \in[0, T]}\), such that $q_t \equiv \back q_{T - t}$, for $t \in [0, T]$.
The forward and reverse rate matrices are explicitly related~\citep{campbell2022continuous} by
\begin{equation}
\label{eq:Qback}
    \back Q_{t}(x, y) = Q_{T - t}(y, x) \frac{q_{T - t}(y)}{q_{T - t}(x)}, \quad \text{ for } x\neq y \in \cX \text{ and } 0 \leq t \leq T.
\end{equation}
In this paper, we focus on rate matrices that satisfy three conditions:
\begin{enumerate}
    \item they are time-homogeneous, $Q_t \equiv Q$,
    \item \(Q(x, y) = 0\) whenever \(\ham(x, y) \geq 2\),
    \item if \(\ham(x, y) = 1\) and \(x^i \neq y^i\), then \(Q(x, y) = Q^\mathrm{tok}(x^i, y^i)\), for some fixed matrix \(Q^\mathrm{tok}\).
\end{enumerate}
In particular, we consider two important instances of CTMCs that are widely adopted in practice, namely the \emph{uniform noising process} and the \emph{masking} (or absorbing) \emph{noising process}, which are defined through the choice of \(Q^\mathrm{tok}\).
\begin{itemize}
     \item \textbf{uniform noising process}: A CTMC is a \emph{uniform noising process} if for \(a \neq b \in [S]\)
    \begin{align}
    \label{def:uniform}
        Q^{\mathrm{tok}}(a, b) = 1 / S.
    \end{align}
    This CTMC converges to the uniform distribution on the domain $\cX \defn [S]^d$ in the limit. 

    \item \textbf{masking noising process}:
    A CTMC on the domain \(\cX \defn ([S] \cup \{\mask\})^d\) is a \emph{masking noising process} if for \(a \neq b \in [S] \cup \{\mask\}\)
    \begin{align}\label{def:masking}
        Q^{\mathrm{tok}}(a, b) = \bI\{a \neq \mask \text{ and } b = \mask\}.
    \end{align}
    The corresponding CTMC converges to the Dirac measure \((\delta_{\mask})^{\otimes d}\) as \(t \to \infty\). 
     Note that we constrain the initial distribution \(q_0\) to be supported on non-masked data, i.e., on \([S]^d\).
\end{itemize}

\subsection{Score estimation}

Recall that the reverse process is a CTMC with rate matrices satisfying the relation~\eqref{eq:Qback}, which is similar to the reverse process in the continuous case. The density ratio here generalizes the typical score function $\nabla_x \log q_t(x)$ in the continuous case and is often referred to as the (concrete) score function for discrete diffusion models \citep{meng2022concrete}. Formally, we define \emph{the score function} \(s_t(y, x)\) as $$s_t(y, x) = \frac{q_t(y)}{q_t(x)}, \quad \text{for } x \neq y \in \cX.$$

\paragraph{Score entropy loss. }For both the uniform and masking noising processes,
the marginals \((q_t)\), and consequently the score function, are intractable in general. In practice, one therefore resorts to an approximation \(\wh s_t(y, x)\) of the true score function \(s_t(y, x)\), which is learned from data sampled from the target distribution \(q_0\). 
To evaluate the quality of the estimated score, a widely used loss function is the \emph{score entropy loss}, originally introduced in \cite{lou2023discrete}, which has since become the de facto standard for training score-based discrete diffusion models. This loss provides a principled objective for matching the approximate score $\widehat{s}_t$ to the true score induced by the forward diffusion process.
Specifically, for \(t \geq 0\) and functions \(\wh s, s: \cX \times \cX \to \bR_+\), the score entropy loss \(\cL_{\mathrm{SE}}\) is defined as follows:
    \begin{align*}
        \cL_{\mathrm{SE}}(t, \wh s, s) &\defn
        \bE_{x \sim q_t} \Big[\sum_{y \neq x}Q_t(y, x) s(y, x) \breg(\wh s(y, x), s(y, x))\Big] \geq 0. 
    \end{align*}
    Here, for 
    \(a,b \geq 0\), the Bregman divergence for \(\phi(a) = - \log a\) is given by $$\breg(a, b)\defn \frac{a}{b} - 1 - \log \frac{a}{b} \geq 0.$$ 

In practice, to implement any sampling algorithm, one has to discretize the continuous dynamics and obtain score estimates at discrete time steps.
Suppose that the score estimates $\wh s_{T - t}$ are obtained at discrete time points \(0 \leq t_0 < t_1 < \ldots < t_N \leq T\).
We make the following standard assumption regarding the score estimation errors. 
\begin{assumption}[Approximation error]
\label{asm:escore}
    Let \(N > 0\) and \(0 \leq t_0 < t_1 < \ldots < t_N \leq T\). We assume that
    \begin{equation}\label{eqn:escore}
        \sum_{k=0}^{N - 1} (t_{k+1} - t_k) \lse(T - t_k, \wh s_{T - t_k}, s_{T - t_k}) \leq \escore.
    \end{equation}
\end{assumption}
This assumption is concerned with the aggregated estimation errors over all $N$ steps.
Several works have constructed estimates that satisfy this assumption; examples include~\cite{lou2023discrete,ou2024your,benton2024denoising}.

\subsection{Score-based sampling algorithms}

Armed with the score estimates \((\wh s_{T - t})_{t \in \{t_0,\ldots, t_N\}}\), 
we aim to construct a generative model $\widehat{q}_0$ that approximates the data distribution $q_0$. A natural approach proposed in \citet{campbell2022continuous} is to define a surrogate CTMC that starts from an easy-to-sample distribution $p_0 \approx q_T$ and approximates the backward dynamics in~\eqref{eq:Qback}. Concretely, we define the time-inhomogeneous rate matrix
\begin{equation}
\label{eq:Qhat}
    \wh Q_t(x, y) = Q_{T - t}(y, x) \wh s_{T - t}(y, x).
\end{equation}
In practice, score estimates are only available on a fixed discretization $\tau = (t_0,\ldots,t_N)$, and extending these estimates to the full interval $[0,T]$ introduces 
\emph{discretization error}.

\paragraph{$\tau$-leaping algorithm.} As mentioned above, a widely adopted sampler is the \(\tau\)-leaping algorithm \citep{campbell2022continuous}, which approximates~\Cref{eq:Qhat}
with multiple possible transitions within each discretization interval. 
Formally,  for \(k \in \{0, \ldots, N - 1\}\) and \(t \in [t_k, t_{k+1})\), given \(x_{t_k}\) and \(\wh s_{T - t_k}\), {\(\tau\)-leaping} obtains \(x_{t_{k+1}}\) as a random vector whose coordinates are sampled independently via \(d\) one-dimensional CTMCs. For each \(i \in [d]\), the initial distribution is \(\delta_{x_{t_k}^i}\) and the rate matrices are given by
\footnote{The algorithm admits an equivalent Poisson formulation, in which $dS$ Poisson random variables corresponding to coordinate-value transitions are sampled and applied in parallel.}:
    \begin{equation}
    \label{eq:tau-leaping}
        \wh Q_t^i(a, b) = \wh Q_{t_k}(x_{t_k}, x_{t_k} \oplus_i (b - a)), \qquad \text{ for } a \neq b \in \cV.
    \end{equation}
The formulation in~\Cref{eq:tau-leaping} requires either an additive structure on the state space or the restriction that each coordinate undergoes at most one transition between discretization points. Existing analyses for uniform and masking diffusions~\citep{campbell2022continuous,liang2025discrete} adopt the latter assumption (exactly or with high probability). In Section~\ref{sec:uniform}, we explore the necessity of this requirement for the uniform noising process.
\cite{lou2023discrete} generalizes \(\tau\)-leaping by introducing a class of samplers termed \emph{\(\tau\)-leaping strategies}, which allow arbitrary transformations \(x_{t_{k+1}}^i = f^{i}_k(\wh s_{T - t_k}, x_{t_k})\). Both the Euler method and Tweedie \(\tau\)-leaping fall into this class. However, they remain challenging for direct theoretical analysis due to the absence of a CTMC structure.

\begin{figure}[t]
  \centering
  \begin{minipage}[c]{1.0\linewidth}
      \centering
      \begin{tikzpicture}[
      scale=1.1,transform shape,
        baseline=(current bounding box.center),
        font=\small,
        item/.style={align=left},
        title/.style={font=\bfseries},
      ]

      \def\a{3.2}          
      \def\b{1.45}         
      \def\db{3.0}         
      \def\TopPad{0.05}    
      \def\BotPad{1.05}    
      \def\xMargin{0.9}    
      \def\yStep{0.85}     
      \def\LW{0.95pt}      

      \coordinate (L) at (0,0);
      \coordinate (R) at (\db cm,0);
      \coordinate (M) at ($(L)!0.5!(R)$);

      \begin{scope}
          \clip (L) ellipse (\a cm and \b cm);
          \fill[gray!20] (R) ellipse (\a cm and \b cm);
      \end{scope}

      \draw[line width=\LW] (L) ellipse (\a cm and \b cm);
      \draw[line width=\LW] (R) ellipse (\a cm and \b cm);

      \node[title, anchor=south] at ($(L)+(-0.3cm,\b cm+\TopPad cm-0.05cm)$) {parallel updates};
      \node[title, anchor=south] at ($(R)+(0.4cm,\b cm+\TopPad cm)$) {CTMC-based};

      \node[item, anchor=west] at ($(L)+(-\a cm+\xMargin cm+0.2cm, 0.85cm)$) {Euler method};
      \node[item, anchor=west] at ($(L)+(-\a cm+\xMargin cm -0.6cm, 0.35cm-\yStep cm)$) {Tweedie $\tau$-leaping};

      \node[item, anchor=west] at ($(M)+(-1.12cm, 0.70cm)$) {$\tau$-leaping alg.};
      \node[item, anchor=west] at ($(M)+(-1.42cm, 0.85cm-\yStep cm)$) {Truncated $\tau$-leaping};
      \node[item, anchor=west] at ($(M)+(-1.05cm, 0.95cm-2*\yStep cm)$) {\Cref{alg:modified_ttl}};

      \node[item, color=black!53, anchor=north] at ($(M)+(0,-\b cm)$) {\textbf{$\tau$-bridging}};

      \node[item, anchor=west] at ($(R)+(\a cm-2.05cm-\xMargin cm, 0.6cm)$) {Gillespie's alg.};

      \node[item, anchor=west] at ($(R)+(\a cm-1.75cm-\xMargin cm, 0.8cm-\yStep cm)$) {Uniformization};

      \node[item, anchor=west] at ($(R)+(\a cm-2.15cm-\xMargin cm, 0.0cm-\yStep cm)$) {DMPM};

      \end{tikzpicture}
      \caption{Overview of score-based samplers. The left part comprises score-based samplers that allow parallel updates, defined as \(\tau\)-leaping strategies in~\cite{lou2023discrete}. The right part consists of samplers that can be defined through the CTMC framework. At the intersection are \(\tau\)-bridging strategies, defined in~\Cref{eq:tau-bridging}.
      }
      \label{fig:tau-strategies}
  \end{minipage}
      
\end{figure}
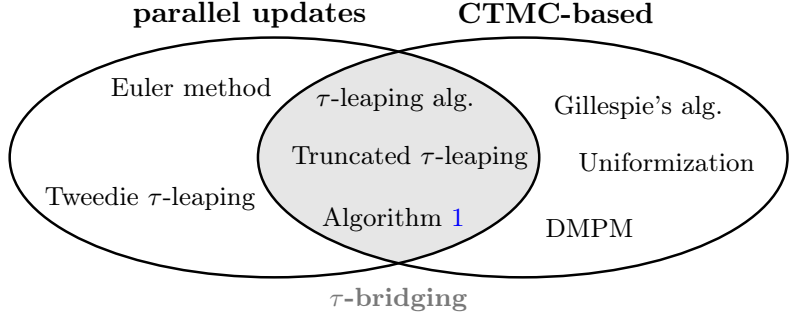

\paragraph{This paper: $\tau$-bridging strategies.} 
We introduce a structured class of samplers that generalizes the $\tau$-leaping algorithm. We name this class of algorithms \emph{$\tau$-bridging strategies}, which retain the parallel updating structure while remaining analytically tractable.
A {$\tau$-bridging strategy} generates $x_{t_{k+1}}$ from $x_{t_k}$ by evolving $d$ independent one-dimensional CTMCs on $[t_k,t_{k+1})$. For each coordinate $i\in[d]$, the chain is initialized at $\delta_{x_{t_k}^i}$ and has the rate matrix given by
\begin{equation}
\label{eq:tau-bridging}
    \widehat Q_t^i = G_t^i(\widehat s_{T-t_k}, x_{t_k}),
\end{equation}
for some mapping $G_t^i : \mathbb R_+^{\mathcal X\times\mathcal X}\times\mathcal X \to \mathbb R^{\mathcal V\times\mathcal V}$.
Compared to general $\tau$-leaping strategies, $\tau$-bridging strategies restrict updates to CTMC-based transitions. This restriction preserves parallel coordinate updates while facilitating theoretical analysis. Figure~\ref{fig:tau-strategies} summarizes the relationships between these classes of sampling algorithms.

\noindent

A representative instance of a \(\tau\)-bridging sampler is the \emph{truncated \(\tau\)-leaping} sampler of~\cite{liang2025discrete}.
For \(k \in [N]\) and \(t \in [t_k, t_{k+1})\), the corresponding rate matrices take the form:
\begin{align}
\label{eq:Qhat_ttl}
G_t^i(\wh s_{T - t_k}, x_{t_k})(a, b) = Q_{T - t_k}(x_{t_k} \odot_i b, x_{t_k})\wh s_{T - t_k}(x_{t_k} \odot_i b, x_{t_k}) \bI\{x_{t_k}^i = a\} \quad \text{for } a \neq b \in \cV. 
\end{align}
The indicator $\bI\{x_{t_k}^i = a\}$ enforces the constraint that at most one transition occurs per coordinate \(i \in [d]\)  within each discretization interval $[t_k, t_{k+1})$. In~\Cref{sec:masking_results}, 
we show that an instance of this scheme achieves sublinear complexity for the masking noising process under mild distributional assumptions. To the best of our knowledge, this is the first result establishing such a guarantee.

%% file: results.tex
\section{Main results}\label{sec:main_results}
In this section, we characterize the sampling efficiency of the $\tau$-bridging strategies for both the uniform and masking noising processes.
We develop sharp convergence guarantees and highlight cases where adaptivity is automatically achieved. We provide proof sketches for all results in this section, with full proofs deferred to the appendix.

\subsection{Uniform discrete diffusion}
\label{sec:uniform}

\subsubsection{A sharp convergence characterization}

We begin with the uniform discrete diffusion models, whose forward dynamics are given by the uniform noising process. We establish explicit sampling guarantees for the $\tau$-leaping algorithm, measured in KL divergence. The proof is given in~Appendix~\ref{sec:proof_uniform}.
\begin{theorem}\label{thm:uniform}
    Let $q_{\mathrm{data}} = q_0$ be the data distribution on $ \cX \defn [S]^d$. For \(0 = t_0 < t_1 < \ldots < t_N = T\), let \(\Delta \defn \max_k \{t_{k+1} - t_k\} = O(1)\). Set $p_0 = \mathrm{Unif}(\cX)$. 
    Under Assumption~\ref{asm:escore}, the $\tau$-leaping algorithm initialized at $p_0$ generates a sample from $p_{\mathrm{output}} = p_{T}$ such that
    \begin{align}\label{eq:uniform_kl}
        \KL(q_{\mathrm{data}} \,\|\, p_{\mathrm{output}}) \lesssim \escore + e^{-T} d \log(S) + \Delta d\log(S/\Delta).
    \end{align}
\end{theorem}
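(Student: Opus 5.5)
We will compare path measures with a Girsanov-type change of measure for CTMCs. Let $\mathbb{Q}$ be the law of the true reverse process on $[0,T]$, started from $q_T$ with rates $\back Q_t$, and let $\mathbb{P}$ be the law of the path produced by $\tau$-leaping, started from $p_0=\mathrm{Unif}(\cX)$. By data processing and the chain rule,
\[
\KL(q_0\,\|\,p_T)\le \KL(\mathbb{Q}\,\|\,\mathbb{P}) = \KL(q_T\,\|\,p_0) + \E_{\mathbb{Q}}\Big[\text{path KL term}\Big].
\]

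\textbf{Initialization error.} The uniform forward process factorizes over coordinates. Each coordinate mixes to uniform at rate one, and the per-coordinate log-Sobolev (or $\chi^2$) contraction, combined with $\KL(q_0^i\|\mathrm{Unif})\le\log S$ and tensorization, gives $\KL(q_T\|\mathrm{Unif})\lesssim e^{-T} d\log S$.

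\textbf{Path KL term.} The $\tau$-leaping dynamics on $[t_k,t_{k+1})$ is a genuine CTMC on $\cX$ whose rates depend on $x_{t_k}$ and on the current state through the additive shift $\oplus_i$. Since the coordinates evolve independently, the rate from $x_t$ to $x_t\oplus_i c$ is $\wh Q_{t_k}(x_{t_k},x_{t_k}\oplus_i c)$. The Girsanov formula expresses the path KL as
\[
\int_0^T \E_{\mathbb{Q}} \sum_{y\neq x_t} \back Q_t(x_t,y)\, \breg\big(\wh R_t(x_t,y)/\back Q_t(x_t,y)\big)\,dt,
\]
where $\wh R_t$ is the sampler's rate and $\breg$ is the Bregman divergence. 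We split $\wh R_t$ as follows: the estimated score frozen at $t_k$ and evaluated at $x_{t_k}$, compared with the true score at time $t_k$ and $x_{t_k}$, then with the true score at $x_t$. The first comparison gives exactly the score entropy loss in Assumption~\ref{asm:escore}, provided we can evaluate it at the frozen state $x_{t_k}$ rather than $x_t$.

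To make this separation clean without boundedness of $\wh s$, we will use a martingale property. Under $\mathbb{Q}$, the quantity $s_{T-t}(x_t\oplus_i c,x_t)$ (the true score along the reverse path) is a martingale-like object, in the sense that $\E[q_{T-t}(y)/q_{T-t}(x_t)\mid x_{t_k}]$ relates to the score at $t_k$. This lets the cross term be rewritten as the frozen score entropy loss plus a discretization term that involves only true scores. The convexity of $-\log$ and the identity $\breg(a,b)=a/b-1-\log(a/b)$ allow linear terms in $\wh s$ to cancel in expectation, so no bound on $\wh s$ is needed.

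\textbf{Discretization error.} It remains to bound
\[
\sum_k\int_{t_k}^{t_{k+1}}\E\sum_{i,c}\Big[\text{Bregman between } s_{T-t}(x_t\oplus_i c,x_t) \text{ and } s_{T-t_k}(x_{t_k}\oplus_i c,x_{t_k})\Big]dt.
\]
We expect this to be the main obstacle. The plan is to express each term as a difference of entropy-type functionals of the conditional marginals. Writing $s$ via $q_{T-t}(x^i=\cdot\mid x^{-i})$, the Bregman sum over $c$ becomes a conditional KL between coordinate posteriors at two times. Its integral over an interval of length $\Delta$ should be controlled by $\Delta$ times the time derivative of a mutual-information/entropy quantity, plus a term of order $\Delta\log(S/\Delta)$ per coordinate. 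The $\log(1/\Delta)$ arises from small score ratios, which we bound using the fact that the forward transition kernel gives $q_t(y)/q_t(x)\gtrsim t/S$ at small noise scales. Because $q_{T-t}$ varies on time scale one, summing over the $d$ coordinates and telescoping the entropy-derivative terms over $k$ should give at most $d\log S$ times $\Delta$. Together with the $\Delta d\log(S/\Delta)$ term, this yields the final bound. Since no early stopping is used, near $t=T$ we will rely on that $t/S$ lower bound for ratios and accept the $\log(1/\Delta)$ loss.
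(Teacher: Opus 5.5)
Your architecture is the paper's: data processing plus the chain rule, a log-Sobolev bound for $\KL(q_T\|p_0)$, the CTMC Girsanov bound, and a martingale property of the true score along the reverse chain that decouples $\wh s$ through the law of cosines. The gap is in the discretization step, which you flag as the main obstacle and leave as a heuristic. You rightly aim for a difference of entropy-type functionals, but the mechanism you propose does not produce one. Take the law of cosines with $\alpha=\wh s_{T-t_k}(x_{t_k}\oplus_i c,x_{t_k})$, $\beta=s_{T-t_k}(x_{t_k}\oplus_i c,x_{t_k})$ and $\gamma=s_{T-t}(x_t\oplus_i c,x_t)$. The leftover term $\gamma\,\breg(\beta,\gamma)$ compares scores at two different states. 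Recasting it as a KL between the posteriors $q_{T-t}(\cdot\mid x_t^{-i})$ and $q_{T-t_k}(\cdot\mid x_{t_k}^{-i})$ gives nothing of the form $F(t)-F(t_k)$, so nothing telescopes. Controlling it through time regularity of $q_{T-t}$ (your ``varies on time scale one'') fails near $t=T$, where without early stopping there is no uniform regularity. Your accounting also breaks. A remainder of order $\Delta\log(S/\Delta)$ per coordinate on every interval sums over $N\approx T/\Delta$ intervals to $Td\log(S/\Delta)$, which does not vanish as $\Delta\to 0$. The $\log(1/\Delta)$ may be paid only once, on the final window.

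The missing idea is to apply the martingale identity a second time. It is exact, not ``martingale-like'': $\E[s_{T-t}(x_t\oplus_i c,x_t)\mid x_{t_k}]=s_{T-t_k}(x_{t_k}\oplus_i c,x_{t_k})$, because the drift $\partial_t f+\back L_t f$ vanishes once you use that the shifts $\oplus_i c$ and $\oplus_{i'}c'$ commute.

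\begin{itemize}
\item Since $\beta$ and $\log\beta$ are $x_{t_k}$-measurable, expand $\gamma\,\breg(\beta,\gamma)=\beta-\gamma-\gamma\log\beta+\gamma\log\gamma$. This gives $\E[\gamma\,\breg(\beta,\gamma)\mid x_{t_k}]=\E[h(\gamma)\mid x_{t_k}]-h(\beta)$ with $h(u)=u\log u-u+1$.
\item Hence the whole discretization integrand is exactly $\varphi(T-t)-\varphi(T-t_k)$ for the one-time functional $\varphi(u)\defn\frac1S\E_{x\sim q_u}\sum_{y:\ham(y,x)=1}h(s_u(y,x))$.
\item Swapping $x\leftrightarrow y$ in the double sum gives $\varphi(u)=\frac1S\E_{q_u}\sum_{y}(-\log s_u(y,x))=-\frac{d}{du}\KL(q_u\|\Unif(\cX))$.
\item Nonnegativity of the Bregman term shows $\varphi$ is nonincreasing.
\item Your lower bound $s_u\geq\alpha_u\approx u/S$ gives $\varphi(u)\lesssim d\,(\log S+\max\{\log(1/u),0\})$.
\item Pick $M$ with $T-t_M\in[\Delta,2\Delta]$. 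Monotonicity and telescoping then give $\sum_k\int_{t_k}^{t_{k+1}}(\varphi(T-t)-\varphi(T-t_k))\,dt\le\int_0^{2\Delta}\varphi(u)\,du+\Delta\,\varphi(\Delta)\lesssim\Delta d\log(S/\Delta)$. No regularity of $q$ is needed.
\end{itemize}

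Separately, drop the ``(or $\chi^2$)'' option for the initialization. $\chi^2(q_0\|\Unif(\cX))$ can be of order $S^d$, so only the (modified) log-Sobolev route yields $e^{-T}d\log S$.
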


As expected, the KL divergence bound in Theorem~\ref{thm:uniform} decomposes into three terms. 
The first term \(\escore\) quantifies the quality of score estimation and captures the accumulation of estimation errors over the $N$ discretization steps. 
The second term corresponds to the initialization error, arising from initializing the sampler with the uniform distribution $p_0$ instead of the true terminal distribution $q_T$; this term decays exponentially with the diffusion horizon $T$. 
Finally, the third term accounts for the discretization error incurred by approximating the continuous-time reverse process with a discrete-time $\tau$-leaping scheme.

To further interpret Theorem~\ref{thm:uniform} and place it in context with existing results, we highlight several of its salient features.
First, the discretization error scales linearly with the dimension $d$ and only logarithmically with the vocabulary size $S$. 
This matches the result obtained for the random walk model~\citep{conforti2025non} and reveals that the discretization error is insensitive to the distribution scale, as has been shown for continuous diffusion models (e.g., \citet{huang2024denoising}).
Second, the theorem permits a flexible choice of step size schedules and does not require early stopping. In contrast to prior analyses that rely on carefully selected step sizes and introduce an early stopping time $\delta$ (where the algorithm outputs $p_{T-\delta}$ in place of $p_T$), the bound in Theorem~\ref{thm:uniform} depends only on the maximum step size. 
Moreover, the same bound applies uniformly to early stopping variants: the right-hand side of~\eqref{eq:uniform_kl} remains unchanged for any $\delta \ll 1$.

The only requirement we have on score estimation is Assumption~\ref{asm:escore}, with no additional boundedness or regularity conditions (typically assumed in the existing literature). As a result, the theorem applies to a broad class of score estimation procedures commonly used in practice.
We provide a sketch of its proof to illustrate the main proof ideas. 
\begin{proof}[Proof sketch of~\Cref{thm:uniform}]
    In view of the data-processing inequality and the chain rule for KL divergence, we upper bound the KL divergence between \(q_0\) and \(p_T\) by the KL divergence between the paths \(q_{T-t_0,\ldots, T-t_N}\) and \(p_{t_0, \ldots, t_N}\), which can be decomposed as
    \begin{align*}
        \KL(q_0 \,\|\, p_T) &\leq \KL(q_{T - t_0, \ldots, T - t_N} \,\|\, p_{t_0, \ldots, t_N})\\
        &= \KL(q_T \,\|\, p_0) + \sum_{k=0}^{N - 1} \bE_{x_{t_k} \sim \back q_{t_k}} \left[\KL\left(\back q_{t_{k+1}\mid t_k}(\cdot|x_{t_k}) \,\|\, p_{t_{k+1}\mid t_k}(\cdot|x_{t_k})\right)\right].
    \end{align*}
    The first term is the initialization error, which can be upper bounded by the log-Sobolev inequality in Lemma~\ref{lem:uniform_log_sobolev}.
    For the second term, we apply Girsanov's change-of-measure theorem for continuous-time Markov chains to obtain the following upper bound:
    \begin{align*}
        \frac{1}{S} \sum_{k=0}^{N-1} \int_{t_k}^{t_{k+1}} \bE_{x_{t_k}, x_t \sim \back q_{t_k, t}} \sum_{i \in [d]}\sum_{c \in [S]} s_{T - t}(x_t  \oplus_i c, x_t) \breg \big(\wh s_{T - t_k}(x_{t_k} \oplus_i c, x_{t_k}), s_{T - t}(x_t \oplus_i c, x_t)\big)\d t.
    \end{align*}
    The details can be found around Eqn.~\eqref{eq:uniform_kl_main}.

    To further control the right-hand side, we apply the law of cosines for Bregman divergence and derive that (with \(\ell \defn t_k\))
    \begin{align*}
        &\sum_{i \in [d]}\sum_{c \in [S]} s_{T - t}(x_t  \oplus_i c, x_t) \breg \big(\wh s_{T - t_k}(x_{t_k} \oplus_i c, x_{t_k}), s_{T - t}(x_t \oplus_i c, x_t)\big)\\
        &= \underbrace{\sum_{y_\ell: \ham(y_\ell, x_\ell) = 1} s_{T - \ell}(y_\ell, x_\ell)\breg \big(\wh s_{T - \ell}(y_\ell , x_\ell), s_{T - \ell}(y_\ell, x_{\ell})\big)}_{\text{Controlled by Assumption~\ref{asm:escore}}} \\
        &\quad + \underbrace{\sum_{i \in [d]}\sum_{c \in [S]}  \big(s_{T - \ell}(x_\ell \oplus_i c, x_\ell) - s_{T - t}(x_t \oplus_i c, x_t)\big) \log \wh s_{T - \ell}(x_\ell \oplus_i c, x_\ell)}_{\text{Expectation controlled by Lemma~\ref{lem:uniform_martingale}}}\\
        &\quad + \sum_{y_t: \ham(y_t, x_t) = 1} \big(-\log s_t(y_t, x_t)\big) -  \sum_{y_\ell: \ham(y_\ell, x_\ell) = 1}\big(-\log s_{T - \ell}(y_{\ell}, x_{\ell})\big).
    \end{align*}
    The first term can be controlled by Assumption~\ref{asm:escore} after taking the expectation over \(x_{\ell} \sim \back q_{\ell}\) and integrating over time.
    The second term can be shown to be zero with the help of~\Cref{lem:uniform_martingale} after taking the expectation over \(x_t \sim \back q_{t \mid \ell}(\cdot \mid x_\ell)\).
    Thus, the problem boils down to upper bounding the third term above, whose properties are characterized in~\Cref{lem:uniform_log_score_bound}. 
    After taking the expectation and integrating over time, we can upper bound the third term by \(\Delta d \log(S/\Delta)\).
    Combining the bounds for all three terms completes the proof.
\end{proof}

\medskip

Next, we specialize Theorem~\ref{thm:uniform} to the concrete choice of a discretization schedule to derive the iteration complexity required to obtain an $\varepsilon$-accurate sampler in KL divergence.
For a simple step size schedule, it turns out that $d/\varepsilon$ steps (up to logarithmic factors) suffice for convergence, significantly improving on the state-of-the-art complexity of $d^2 S/\varepsilon$ from~\cite{liang2025discrete}. Refer to Appendix~\ref{sec:proof-uniform-cor} for the proof.
\begin{corollary}\label{cor:uniform}
    For the setting in Theorem~\ref{thm:uniform} and $\varepsilon > 0$, the output of the $\tau$-leaping algorithm with constant step size schedule $t_{k+1} - t_k = T/N$ for $k  \in [N - 1]$, achieves 
    \begin{align*}
        \KL(q_{\mathrm{data}} \,\|\, p_{\mathrm{output}}) \lesssim \escore + \varepsilon,
    \end{align*}
    provided that the time horizon $T = \log(d\log(S)/\varepsilon)$ and the iteration number
    \begin{align}
         N = \widetilde{O}\left(\frac{d}{\varepsilon}\right).
    \end{align}
\end{corollary}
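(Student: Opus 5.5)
The plan is to plug the constant schedule directly into the bound \eqref{eq:uniform_kl} of \Cref{thm:uniform} and choose $T$ and $N$ so that the second and third terms are each at most a constant multiple of $\varepsilon$.

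With $t_{k+1}-t_k = T/N$ for all $k$, we have $\Delta = T/N$, and \Cref{thm:uniform} gives
\begin{align*}
\KL(q_{\mathrm{data}} \,\|\, p_{\mathrm{output}}) \lesssim \escore + e^{-T} d\log S + \frac{T}{N}\, d \log\Big(\frac{SN}{T}\Big).
\end{align*}
We need $\Delta = O(1)$ for the theorem to apply; this holds as soon as $N \geq T$, which the final choice of $N$ will satisfy.

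\textbf{Initialization term.} Choosing $T = \log(d\log(S)/\varepsilon)$ gives $e^{-T} d\log S = \varepsilon$ exactly.

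\textbf{Discretization term.} We need $\frac{T}{N} d \log(SN/T) \lesssim \varepsilon$. The proposed choice is
\begin{align*}
N = C\,\frac{T d}{\varepsilon}\,\log\Big(\frac{S d}{\varepsilon}\Big)
\end{align*}
for a suitable constant $C$. Note that $T$ is itself only logarithmic in $d$, $\log S$ and $1/\varepsilon$, so $N = \widetilde O(d/\varepsilon)$.

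\textbf{Main obstacle: the self-referential logarithm.} The factor $\log(SN/T)$ depends on $N$ itself. To close the loop, substitute the chosen $N$:
\begin{align*}
\frac{SN}{T} = \frac{CSd}{\varepsilon}\log\Big(\frac{Sd}{\varepsilon}\Big),
\qquad\text{so}\qquad
\log\Big(\frac{SN}{T}\Big) \lesssim \log\Big(\frac{Sd}{\varepsilon}\Big)
\end{align*}
(assuming, say, $\varepsilon \lesssim 1$). Therefore
\begin{align*}
\frac{T}{N}\, d\log\Big(\frac{SN}{T}\Big) \lesssim \frac{\varepsilon\,\log(Sd/\varepsilon)}{C\,\log(Sd/\varepsilon)} \lesssim \varepsilon.
\end{align*}

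\textbf{Side conditions.} For the regime $\varepsilon \geq d\log S$, take $T$ to be at least a constant, so that $T \geq 0$ and the bounds stay meaningful. Also, $N \geq T$ holds automatically for the chosen $N$, so $\Delta = T/N \leq 1$ as required.

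Summing the three terms gives $\KL(q_{\mathrm{data}} \,\|\, p_{\mathrm{output}}) \lesssim \escore + \varepsilon$ with $N = \widetilde O(d/\varepsilon)$, which is the claim.
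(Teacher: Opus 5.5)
Your proposal is correct and follows essentially the same route as the paper: substitute $\Delta = T/N$ into the bound of Theorem~\ref{thm:uniform}, take $T = \log(d\log(S)/\varepsilon)$ so the initialization term equals $\varepsilon$, and pick $N$ with enough polylogarithmic slack to absorb the self-referential factor $\log(SN/T)$. The only difference is in the logarithmic factors of $N$: the paper takes $N \asymp d\log(S)\,T^3/\varepsilon$, which gives $\log(N/T) \le 3T$, while you take $N \asymp T d\log(Sd/\varepsilon)/\varepsilon$; both are $\widetilde O(d/\varepsilon)$.
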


\begin{remark}[Step size schedule]
    In Corollary~\ref{cor:uniform}, we adopt the constant step size schedule for simplicity. This choice is optimal in the sense that it minimizes the worst-case upper bound for a fixed number of steps $N$, and it is also empirically effective~\citep{campbell2022continuous}.
    However, other step size schedules commonly used in practice and theory achieve the same iteration complexity, including the exponential-then-constant schedule (defined as in Corollary~\ref{cor:masking} and used in \cite{liang2025discrete}) and the log-linear schedule~\citep{lou2023discrete}. 
    In these works, early stopping is introduced to maintain numerical stability in score estimation during training and also to ensure a small discretization error. Our result shows that, under Assumption~\ref{asm:escore}, early stopping is not necessary for a small discretization error.  
\end{remark}

\subsubsection{A matching lower bound for $\tau$-leaping}

While Theorem~\ref{thm:uniform} establishes an upper bound for the \(\tau\)-leaping algorithm scaling nearly linearly with the dimension $d$ and logarithmically with the vocabulary size $S$, the fundamental question remains: is this dependence an intrinsic limit or merely a technical artifact? We show that the former is indeed the case 
by establishing a matching lower bound.

We note that for target distributions sufficiently close to the uniform distribution, sampling can be achieved with very few steps, as the forward CTMC converges efficiently to its limit. 
To avoid these pathological instances,
we restrict our focus to the class of  distributions that remain sufficiently well-separated from the uniform distribution.
Specifically, for any $\gamma \in [0, 1]$, define the subset \(\cP^{\gamma}(\cX) \subseteq \cP(\cX)\) as
\begin{align*}
    \cP^{\gamma}(\cX) = \big\{q_0 \in \cP(\cX): \ent(q_1) \leq (1-\gamma) \cdot \ent(\Unif(\cX)) = (1-\gamma)d\log(S)\big\},
\end{align*}
where $q_1$ is the marginal distribution at $t = 1$ of the uniform noising process initialized at $q_0$, $\Unif(\cX)$ is the uniform distribution on $\cX$, and $\ent(\cdot)$ denotes the entropy function of a distribution. 
Intuitively, for $\gamma \in (0, 1)$, the class \(\cP^{\gamma}(\cX)\) imposes a structural constraint on the convergence of the forward process; it describes distributions that do not mix rapidly.
In this sense, for $\gamma = O(1)$, \(\cP^{\gamma}(\cX)\) contains distributions that remain informative enough at time $t = 1$ in the forward process. This covers most distributions of practical interest, since they carry nontrivial information characterized by relatively low entropy.

The following lower bound shows that, when sampling from a distribution in \(\cP^\gamma(\cX)\) with the $\tau$-leaping algorithm, the iteration complexity bound in Corollary~\ref{cor:uniform} cannot be improved up to logarithmic factors. 
The proof is given in~Appendix~\ref{sec:uniform_lower_proof}.

\begin{theorem}\label{thm:uniform_lower}
    For any target distribution \(q_0 \in \cP^{\gamma}(\cX)\) and early stopping time $0 \leq \delta \ll 1$, denote the path measure of the backward process by $Q \overset{d}{=} \{\back q_t\}_{t \in [0,T-\delta]}$ and the sampling process by $P \overset{d}{=} \{p_t\}_{t\in[0,T-\delta]}$. Let $\gamma = \Omega(1)$. Then, for any step size schedule \(0 = t_0 < t_1 < \ldots < t_N = T-\delta\) with \(\max_k \{t_{k+1} - t_k\} \le \frac{1}{2}\), it takes the $\tau$-leaping algorithm at least 
    \begin{align}
        N = {\Omega}(d\log(S))
    \end{align}
    iterations to achieve
    \begin{align*}
        \KL(Q \,\Vert\, P) \le \escore + O(1).
    \end{align*}
\end{theorem}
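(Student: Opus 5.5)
The plan is to lower bound the path KL $\KL(Q\,\Vert\,P)$ by the discretization error alone, and then show that this error is large whenever the steps are few. By the chain rule for path measures, the $\tau$-leaping process is a CTMC whose rates are frozen on each interval $[t_k,t_{k+1})$. Girsanov's theorem for CTMCs, the same identity used around Eqn.~\eqref{eq:uniform_kl_main}, gives an exact expression:
\begin{align*}
\KL(Q\,\Vert\,P) = \KL(q_T\,\Vert\,p_0) + \sum_{k}\int_{t_k}^{t_{k+1}} \bE_{x_{t_k},x_t\sim \back q_{t_k,t}} \frac{1}{S}\sum_{i\in[d]}\sum_{c\in[S]} s_{T-t}(x_t\oplus_i c,x_t)\,\breg\big(\wh s_{T-t_k}(x_{t_k}\oplus_i c,x_{t_k}), s_{T-t}(x_t\oplus_i c,x_t)\big)\,\d t .
\end{align*}
The Bregman term with $\wh s$ frozen at $x_{t_k}$ but compared against the true score at $(t,x_t)$ is the key quantity. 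I will use the law of cosines decomposition from the proof sketch of \Cref{thm:uniform}. The first piece is at most $\escore$ in aggregate, and the cross term vanishes in expectation by \Cref{lem:uniform_martingale}. What remains, up to the $\escore$ slack (handled by the triangle-type inequality for Bregman divergences, or by taking the best frozen estimator), is the intrinsic discretization term
\begin{align*}
\sum_k\int_{t_k}^{t_{k+1}}\bE\Big[\sum_{y:\ham(y,x_t)=1}\big(-\log s_{T-t}(y,x_t)\big)-\sum_{y:\ham(y,x_{t_k})=1}\big(-\log s_{T-t_k}(y,x_{t_k})\big)\Big]\d t .
\end{align*}

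The key step is to recognize this as a time increment of an entropy-like functional. Write $F(u)\defn \bE_{x\sim q_u}\frac1S\sum_{\ham(y,x)=1}(-\log s_u(y,x))$. The integrand equals $F(T-t)-F(T-t_k)$ up to the normalization, so the total is a Riemann-sum error $\sum_k\int_{t_k}^{t_{k+1}}(F(T-t)-F(T-t_k))\d t$. Along the forward uniform noising, $\frac{\d}{\d u}\ent(q_u)$ equals a Dirichlet-form/Fisher-type quantity related to $F$, and $F$ is monotone in $u$ because the score ratios contract toward $1$. Hence each interval contributes at least $\int_{t_k}^{t_{k+1}}(t-t_k)\,|F'|\,\d t$. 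Summing and using monotonicity, the total is at least of order $\min_k (t_{k+1}-t_k)$ times the total variation of $F$ over the window. Since there are $N$ steps covering length $T-\delta\ge 1$, some window of length $\gtrsim 1/N$ carries a proportional share. More precisely, I will use Cauchy--Schwarz/Jensen on the partition of $[T-1,T]$: the sum is at least $\frac{1}{N}\big(F(0\text{ or }\delta)-F(1)\big)$, roughly.

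Next I lower bound the variation of $F$ on $[\delta,1]$ by $\Omega(\gamma d\log S)$. At $u=1$, the log-Sobolev inequality (\Cref{lem:uniform_log_sobolev}) and the strong data-processing inequality along the uniform channel control how fast entropy grows. Since $\ent(q_1)\le(1-\gamma)d\log S$ while $\ent(q_u)\to d\log S$, the entropy must still increase by $\gamma d\log S$ after time $1$. Entropy dissipation is bounded by the Dirichlet form, which is in turn controlled by $F$, so the functional $F$ (or the integrated dissipation) at time near $1$ must be of order $\gamma d\log S$. The condition $\max_k(t_{k+1}-t_k)\le\frac12$ ensures that at least one full step lies in a region where this dissipation is large. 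Combining the pieces gives $\KL(Q\Vert P)\ge \Omega(\gamma d\log S/N)-\escore$, so achieving $\escore+O(1)$ forces $N=\Omega(d\log S)$ when $\gamma=\Omega(1)$.

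The main obstacle is the middle step: making rigorous that the frozen-score discretization term is bounded below by the derivative of an entropy functional, rather than merely above. This requires an exact convexity identity for $a\mapsto a-1-\log a$ along the backward path, and positivity of the second-order term. I would first try to express $\bE[\breg(s_{T-t_k}(x_{t_k}\cdot),s_{T-t}(x_t\cdot))]$ as an integral of a squared-variation quantity via It\^o's formula for the jump process. I would then use the SDPI to compare it with the entropy gap.
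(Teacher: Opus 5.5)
Your reduction matches the paper's. Girsanov, the law-of-cosines split, \Cref{lem:uniform_martingale} and \Cref{lem:uniform_t3_sim} give an exact identity. Dropping the nonnegative score-entropy term yields $\KL(Q\,\|\,P)\ge \KL(q_T\,\|\,p_0)+\sum_k\int_{t_k}^{t_{k+1}}(\varphi(T-t)-\varphi(T-t_k))\,\d t$, with $\varphi$ as in \eqref{eq:defn-phi}. No triangle-type inequality is needed, and Bregman divergences do not satisfy one anyway.

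The gap is in how you lower bound these Riemann-sum errors. Exactly,
\[
\int_{t_k}^{t_{k+1}}\big(\varphi(T-t)-\varphi(T-t_k)\big)\,\d t=\int_{t_k}^{t_{k+1}}(t_{k+1}-s)\,\big(-\varphi'(T-s)\big)\,\d s .
\]
The weight is $t_{k+1}-s$, not $t-t_k$, so your ``at least $\int(t-t_k)|F'|$'' is false in general: any variation of $\varphi$ occurring near the end of a step is almost invisible. Consequently, monotonicity of $\varphi$ plus a bound on its total variation gives no lower bound at all. A bound of the form $(\min_k h_k)\times(\text{variation})$ fails for general monotone $\varphi$ (place the variation just before each $t_{k+1}$). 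Even if it held, it would be useless for the arbitrary schedules the theorem quantifies over, so your estimate $\gtrsim N^{-1}(\varphi(\delta)-\varphi(1))$ does not follow. Likewise, knowing that $\varphi$ is large near $u=1$ (which you correctly get from the log-Sobolev inequality) controls neither the variation nor the derivative of $\varphi$.

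The missing ingredient is a pointwise lower bound on the rate $-\varphi'(u)$ over the whole window $u\in(0,1)$. The paper obtains it as follows.
\begin{itemize}
\item Write $\varphi=\frac1S\sum_{i,c}\varphi_{i,c}$ with $\varphi_{i,c}(u)=\KL(q_u\,\|\,(N_{i,-c})_{\#}q_u)$ (\Cref{lem:dkl_dt_phi}).
\item Since the shift commutes with the noising semigroup, data processing removes the other coordinates and leaves a one-coordinate $S$-ary symmetric channel with KL contraction coefficient $e^{-v}$. Hence $\varphi_{i,c}(u+v)\le e^{-v}\varphi_{i,c}(u)$, i.e.\ $-\varphi'\ge\varphi$ (\Cref{lem:phi'_lower_bound}). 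This is also what proves the monotonicity you asserted without justification.
\item Combined with \Cref{lem:uniform_log_sobolev} and monotonicity of $\KL(q_u\,\|\,p_0)$, this gives $-\varphi'(u)\ge\varphi(u)\ge\KL(q_u\,\|\,p_0)\ge\KL(q_1\,\|\,p_0)\ge\gamma d\log S$ for all $u\in(0,1)$.
\item So every step with $T-t_k\le 1$ contributes at least $\frac12\gamma d\log(S)\,h_k^2$.
\item The condition $\max_k h_k\le\frac12$ is used to pick $M$ with $T-t_M\in[\frac12,1]$. Then a whole run of consecutive steps of total length $\gtrsim 1$ lies in this region; one step is not enough.
\item Cauchy--Schwarz, $\sum_{k\ge M}h_k^2\ge (T-t_M-\delta)^2/(N-M)$, gives $N\gtrsim\gamma d\log S$.
\end{itemize}
Your proposed It\^o/squared-variation analysis of the Bregman integrand targets the wrong object. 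The reduction to $\varphi$-increments is already exact, and what is needed is second-order information about $\varphi$ in time. You should also justify $T>1$. This follows because $\KL(q_u\,\|\,p_0)\ge\gamma d\log S\gg 1$ for $u\le 1$, while the initialization term must be $O(1)$.
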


\noindent We make several remarks concerning the nature and implications of our lower bound.

\Cref{thm:uniform_lower} reveals that for informative target distributions in $\mathcal P^\gamma(\mathcal X)$, ensuring that the KL divergence between the sampling process and the reverse process is small requires the number of steps to scale at least linearly with the dimension $d$, which cannot be avoided for general distributions.
In addition, the lower bound is uniform over both early stopping schedules ($0<\delta \ll 1$) and non-early stopping schemes ($\delta=0$).

This lower bound is algorithm-dependent: it relies on structural properties of the $\tau$-leaping algorithm and therefore differs fundamentally from information-theoretic or minimax lower bounds.
In principle, alternative sampling schemes may circumvent the linear dependence on $d$. Indeed, in Section~\ref{sec:masking_results}, we show that a modified $\tau$-leaping procedure achieves sublinear dependence on $d$ for structured target distributions under the masking noising process. Whether analogous improvements are possible for uniform discrete diffusion through modified algorithms remains an open question.

    When the target distribution has high entropy,
    the lower bound need not apply. Indeed, when $q_{\mathrm{data}}$ satisfies $\KL(q_{\mathrm{data}}\Vert \mathrm{Unif}(\cX))=o(d)$, one can show that $\ent(q_1)=\Theta(d\log S)$, and that a sample from a distribution with the KL error at most $\escore+\varepsilon$ can be obtained using $N=o(d)$ steps. A precise formulation of this claim is given in Appendix~\ref{sec:uniform-adaptive}.

We remark that the quantity controlled in Theorem~\ref{thm:uniform_lower} is the KL divergence between two path measures, rather than the divergence between the terminal output distributions, which may appear weaker than the upper bound in Corollary~\ref{cor:uniform}. However, to the best of our knowledge, all existing upper-bound analyses for the KL divergence, including ours, proceed by first bounding the KL divergence between path measures and then invoking the data-processing inequality. Consequently, the lower bound applies to all current analysis techniques. In this sense, Theorem~\ref{thm:uniform_lower} establishes the optimality of the iteration complexity in Corollary~\ref{cor:uniform} within the scope of the existing analysis techniques.

Finally, we provide the proof sketch of \Cref{thm:uniform_lower} to illustrate the main proof techniques. 

\begin{proof}[Proof sketch of~\Cref{thm:uniform_lower}]
    The proof is based on a refined analysis of the decay of KL divergence along the forward process for any distribution $q_0 \in \cP^{\gamma}(\cX)$. While we state our result for $\gamma = \Omega(1)$, the proof works for every $\gamma \in (0,1).$
    It can be shown that the KL divergence along the forward process is a differentiable function of time $t$, and we denote its negative rate of change as $\varphi(t)$, i.e.,
    \begin{align*}
        \varphi(t) = -\frac{\d}{\d t} \KL(q_t \| p_0) = \sum_{x, y: \ham(x, y) = 1} q_t(x) s_{t}(y, x) \log \left(\frac{s_t(y, x)}{s_t(x, y)}\right),
    \end{align*}
    where $p_0 = \mathrm{Unif}(\cX)$ is the limit distribution of the forward noising process.
    First, we show that the condition $\KL(Q \| P) \leq \escore + O(1)$ with the definition of $\cP^\gamma(\cX)$ implies that $T > 1$ and the following bound
    \begin{align}\label{eq:lower_phi_condition}
        \sum_{k=1}^{N-1} \int_{t_k}^{t_{k+1}} \big(\varphi(T - t) - \varphi(T - t_k)\big) \d t = O(1).
    \end{align}
    Furthermore, we can show that $\varphi(t)$ is a non-increasing and differentiable function of $t$.
    Thus,~\Cref{eq:lower_phi_condition} and the Newton-Leibniz formula lead to a stronger condition:
    \begin{align}
        \sum_{k=1}^{N-1} \inf_{t_k \le t \le t_{k+1}} (-\varphi'(T-t)) \cdot \frac{1}{2} (t_{k+1} - t_k)^2
        &\leq \sum_{k=1}^{N-1} \int_{t_k}^{t_{k+1}} \int_{T - t}^{T - t_k} -\varphi'(u) \d u \d t\notag\\
        &= \sum_{k=1}^{N-1} \int_{t_k}^{t_{k+1}} \big(\varphi(T - t) - \varphi(T - t_k)\big) = O(1).\label{eq:lower_phi'_condition}
    \end{align}
    Next, we view the forward process as an $S$-ary symmetric channel \citep{makur2018comparison} and apply the strong data-processing inequality to
    prove that for any $q_0 \in \cP^{\gamma}(\cX)$, the function $-\varphi'(t)$ has a lower bound scaling with $\gamma d \log(S) $ for all $t \in (0, 1)$.
Since $\max_k\{t_{k+1} - t_k\}  \leq \frac{1}{2}$, we can choose a suitable $M$, such that $1 < M < N$ and $T - t_M \in [\frac{1}{2}, 1]$.
    Combining this with Eqn.~\eqref{eq:lower_phi'_condition}, we obtain
    \begin{align*}
        \sum_{k=M}^{N-1} (t_{k+1} - t_k)^2 \lesssim \frac{1}{\gamma d \log(S)},
    \end{align*} 
    which implies that $N = \Omega(\gamma d \log(S))$ by the Cauchy-Schwarz inequality.
\end{proof}

\subsection{Masking discrete diffusion}
\label{sec:masking_results}

We now turn our attention to the masking noising process. 
Our main result in this setting is an upper bound that intrinsically depends on the structural properties of the target distribution \(q_{\mathrm{data}}\), 
rather than scaling with the ambient dimension $d$.
This aligns with the intuition that for highly structured distributions --- such as a sparse mixture of Dirac measures --- a sensible sampler should converge at a sublinear scale, or perhaps even logarithmically with $d$.

\subsubsection{Preliminaries}\label{sec:pre-info}
We begin by recalling two fundamental quantities in information theory: the total correlation and the dual total correlation.
For a distribution \(q\) over \([S]^d\) and \(x \sim q\),  {the total correlation} \(\cC(q)\) and {the dual total correlation} \(\cB(q)\) are defined as 
\begin{equation}
\label{def:dtc}
\cC(q) \defn \sum_{i=1}^d \ent(x^i) - \ent(x) \text{ and }
    \cB(q) \defn \ent(x) - \sum_{i=1}^d \ent(x^i \mid x^{-i}).
\end{equation}
We now introduce a time-dependent quantity associated with the masking noising process.
Consider a masking noising process defined by~\Cref{def:masking} with marginals \((q_t)_{t \geq 0}\). For \(x \in ([S] \cup \{\mask\})^d\) and \(i \neq j \in [d]\), let \(x^{-(i, j)}\) denote the collection of all unmasked elements of \(x\), excluding the \(i\)-th and the \(j\)-th coordinates. 
We define the \emph{effective total correlation} of the target distribution as 
    \begin{align}
        \label{def:etc}
        \cD(q_0) \defn \int_0^{\infty} \min(1, t) \cI(t) \d t
        \quad \text{with} \quad 
        \cI(t) \defn \sum_{i \neq j \in [d]} \mi(x_t^i; x_t^j \mid x_t^{-(i, j)}) \geq 0,
    \end{align}
    where $\mi(A ; B \mid C)$ denotes the conditional mutual information, and \(x_t \sim q_t\).  
\Cref{lem:int_dtc_tc} shows that the total correlation and the dual total correlation can be expressed through \(\cI(t)\) by
\begin{equation*}
    \cB(q_0) = \int_0^\infty \cI(t) \d t \quad \text{and} \quad \cC(q_0) = \int_0^\infty (e^t - 1) \cI(t) \d t.
\end{equation*}
Consequently, \(\cD(q_0) \leq \min(\cB(q_0), \cC(q_0))\).
The statement and the proof of this result are given in~Appendix~\ref{sec:proof-lemma-int}.

 Note that both \(\cB(q_0), \cC(q_0)\), and hence \(\cD(q_0)\) are upper bounded by \(d \log(S)\). Moreover, there exist distributions \(q_0\) with \(\cB(q_0) = O(1)\) while \(\cC(q_0) = \Omega(d \log(S))\), and vice versa. We refer to~\cite{austin2018multi} for a detailed study of the total correlation and the dual total correlation. Importantly, there are also natural distributions for which both \(\cB(q_0)\) and \(\cC(q_0)\) are of order \(d \), while \(\cD(q_0)\) remains small. See Proposition~\ref{prop:structure-with-noise} for an example of such a distribution.

\subsubsection{An adaptive characterization }
\label{sec:adaptive-masking}
\begin{algorithm}[t]
\SetAlgoLined
\DontPrintSemicolon 
\caption{Modified truncated \(\tau\)-leaping}
\label{alg:modified_ttl}

\SetKwInput{KwInput}{Input}
\SetKwInput{KwOutput}{Output}

\KwInput{\\
Initial distribution: \(p_0\), \\
Discretization steps: \(0 = t_0 < t_1 < \ldots < t_N = T\), \\
Score estimate function: \(\wh s_{T - t}\) for \(t \in \{t_0, \ldots, t_{N - 1}\}\).}
\KwOutput{Sample \(\wh x \in [S]^d\).}

Sample \(x_0\) from \(p_0\)

\For{\(k = 0, \ldots, N - 1\)}{
    \For{\(i \in m(x_{t_k}):= \{i\text{, such that } x_{t_k}^i = \mask\}\) }{
        \(\wh Q_k^i(a) \gets \wh s_{T - t_k}(x_{t_k} \odot_i a, x_{t_k})\),\quad for 
        \(a \in [S]\)\;
    
        \(\wh Q_k^i (\mathrm{MASK}) \gets - \sum_{a \in [S]} \wh Q_k^i(a)\)

        \If{\(k < N - 1\)}{
        \(\Delta_k \gets (e^{T - t_k} - 1) \log \left(\frac{e^T - e^{t_k}}{e^T - e^{t_{k+1}}}\right)\)  

        \(\cP_k \gets \exp(\wh Q_k^i(\mathrm{MASK}) \Delta_k)\)
        }
        \Else{ 
        \(\cP_k \gets 0\)
        }

        \(x_{t_{k+1}}^i \gets \begin{cases}
            \mathrm{MASK}, &\quad \text{with probability } \cP_k, \\
            a, &\quad \text{with probability } \frac{\wh Q^i_k(a)}{\sum_{b \in [S]} \wh Q_k^i(b)}(1 - \cP_k), \text{ for } a \in [S].
        \end{cases}\) \label{step:bla}
    }
}
\Return{$x_{t_N}$}
\end{algorithm}

Equipped with the above preliminaries, we present our main result on the masking noising process. The proof is given in~Appendix~\ref{sec:proof-masking}.
\begin{theorem}
\label{thm:masking_main}
Let \(q_{\mathrm{data}} = q_0\) be the target distribution on \([S]^d\). For \(0 = t_0 < t_1 < \ldots < t_N = T\), let \(h_k \defn t_{k+1} - t_k\) be the step sizes and assume that \(\Delta \defn \max_k h_k = O(1)\). Let
    \begin{align*}
        p_0 \defn \left(\big(1 - e^{-T}\big) \delta_{\mask} + S^{-1}e^{-T}\sum_{k=1}^S \delta_k\right)^{\otimes d}.
    \end{align*}
    Under~\Cref{asm:escore},~\Cref{alg:modified_ttl} initialized at \(p_0\) 
    produces a sample from \(p_{\mathrm{output}} = p_T\) such that
    \begin{equation}
    \label{eq:kl_ub_masking}
        \KL(q_{\mathrm{data}} \,\|\, p_{\mathrm{output}}) \lesssim \escore + e^{-T} d \log(S) + \sum_{k=0}^{N - 1} h_k \int_{T - t_{k+1}}^{T - t_k} \cI(t) \d t.
    \end{equation}
\end{theorem}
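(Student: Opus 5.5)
We follow the same blueprint as the proof of \Cref{thm:uniform}. By the data-processing inequality and the chain rule, we bound $\KL(q_0\,\|\,p_T)$ by the path-level divergence at the grid points:
\begin{align*}
\KL(q_0\,\|\,p_T)\le \KL(q_T\,\|\,p_0)+\sum_{k=0}^{N-1}\bE_{x_{t_k}\sim\back q_{t_k}}\KL\big(\back q_{t_{k+1}\mid t_k}(\cdot\mid x_{t_k})\,\|\,p_{t_{k+1}\mid t_k}(\cdot\mid x_{t_k})\big).
\end{align*}
For the masking process, $q_T$ is the product of the per-coordinate laws: coordinate $i$ is masked with probability $1-e^{-T}$ and otherwise equals $x_0^i$. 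Comparing with $p_0$ coordinatewise gives
\begin{align*}
\KL(q_T\,\|\,p_0)\le \sum_i e^{-T}\,\KL(q_0^i\,\|\,\Unif[S])\le e^{-T}d\log S,
\end{align*}
by convexity of KL along the common masking mixture.

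The key step is to recognize \Cref{alg:modified_ttl} as a $\tau$-bridging strategy that is an exact CTMC on $[t_k,t_{k+1})$. Within a step, each masked coordinate $i$ unmasks to $a$ at the frozen rate
\begin{align*}
\wh s_{T-t_k}(x_{t_k}\odot_i a,x_{t_k})\cdot\frac{e^{-(T-t)}}{1-e^{-(T-t)}}\Big/\frac{e^{-(T-t_k)}}{1-e^{-(T-t_k)}}.
\end{align*}
This uses the true time profile of the masking reverse rate, $\frac{1}{e^{T-t}-1}$ times a ratio that is constant in time conditionally on the unmasked coordinates. The scaling factor $\Delta_k$ in the algorithm is exactly the integral of this time profile, normalized at $t_k$. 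Two facts follow. First, the true reverse score factorizes as $s_{T-t}(x\odot_i a,x)=\frac{e^{-(T-t)}}{1-e^{-(T-t)}}\,q_0(x^i=a\mid x^{\mathrm{unmasked}})$. Second, the conditional probability $q_0(x^i=a\mid\cdot)$ is evaluated at $x_{t_k}$ by the algorithm, but at $x_t$ by the true process. Applying Girsanov for CTMCs then gives an integrated Bregman divergence of the form
\begin{align*}
\int_{t_k}^{t_{k+1}}\bE\sum_{i\in m(x_t)\cap m(x_{t_k})}\sum_a \frac{1}{e^{T-t}-1}\,\breg\Big(\wh\pi^i_{k}(a),\,\pi^i_t(a)\Big)\,\pi^i_t(a)\,\d t,
\end{align*}
where $\pi^i_t(a)=q_0(x_0^i=a\mid x_t)$ and $\wh\pi^i_k$ is the normalized estimate at $x_{t_k}$. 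The final step $k=N-1$ forces all remaining coordinates to be unmasked; it is handled by the fact that at $T-t_{N-1}=h_{N-1}$ the true reverse process also unmasks everything, with probability mass controlled by the same bound.

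Next we split the Bregman term with the law of cosines, as in \Cref{thm:uniform}. This gives three pieces. The first is the estimation error at $(t_k,x_{t_k})$, controlled by \Cref{asm:escore}. The second is a cross term of the form $(\pi_{t_k}-\pi_t)\log\wh\pi_k$, whose expectation vanishes because $t\mapsto\pi^i_t(a)$ is a martingale along the reverse process given $x_{t_k}$; this is the masking analogue of \Cref{lem:uniform_martingale}, and it is the reason no boundedness of $\wh s$ is required. The third is the discretization term
\begin{align*}
\bE\sum_i\big[\ent(x_0^i\mid x_t)-\ent(x_0^i\mid x_{t_k})\big],
\end{align*}
weighted by the rate $\frac{1}{e^{T-t}-1}$ and summed over still-masked coordinates.

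The main obstacle is bounding this entropy-drift term by $h_k\int_{T-t_{k+1}}^{T-t_k}\cI(u)\,\d u$. The plan is to differentiate in forward time $u=T-t$. As $u$ decreases, each additional unmasked coordinate $j$ reduces $\ent(x_0^i\mid\cdot)$ by a conditional mutual information $\mi(x^i;x^j\mid x^{-(i,j)})$. Summing over the pairs $(i,j)$ with their masking/unmasking rates should express the time derivative of the aggregate conditional entropy, multiplied by the rate, as exactly $\cI(u)$. Here $\cI(u)$ is defined through $x_u^i,x_u^j$ both masked-relevant; one uses the identity that $\mi$ with $x_u^j$ equals $e^{-u}$ times $\mi$ with $x_0^j$, with the same identity used in \Cref{lem:int_dtc_tc}. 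Integrating the difference over $t\in[t_k,t_{k+1}]$ then yields at most $h_k\int_{T-t_{k+1}}^{T-t_k}\cI(u)\,\d u$. The delicate part is checking that the rate weight $1/(e^u-1)$ cancels correctly against the $e^{-u}$ factors; if this cancellation fails I would fall back to a weight $\min(1,u)$-type bound. Summing over $k$ gives \eqref{eq:kl_ub_masking}.
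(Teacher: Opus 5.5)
Your architecture matches the paper's: data processing with \Cref{lem:chain_kl}, identifying \Cref{alg:modified_ttl} with the CTMC of rate $\frac{e^{T-t_k}-1}{e^{T-t}-1}\wh s_{T-t_k}(x_{t_k}\odot_i a,x_{t_k})$, then \Cref{lem:kl_der_ub}, the Bregman law of cosines, and the martingale property of $\pi^i_t(a)$ (which is \Cref{lem:mask_mart}). The CTMC identification is \Cref{lem:alg_is_ctmc}, and it already covers $k=N-1$, because the rescaled rate integrates to $+\infty$ on $[t_{N-1},T)$. Two small corrections here. First, $\wh\pi^i_k(a)$ must be the rescaled estimate $(e^{T-t_k}-1)\wh s_{T-t_k}(x_{t_k}\odot_i a,x_{t_k})$, not a version normalized to sum to one; otherwise the Bregman term no longer matches the sampler's rates. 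Second, the estimation piece carries the weight $e^{t-t_k}$, so it is $\lesssim\escore$ only because $\Delta=O(1)$.

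You depart from the paper in the discretization term, and your route does close. Set $u=T-t$ and $u_k=T-t_k$. Summing over $a$, your third piece becomes $\frac{1}{e^{u}-1}\bI\{i\in m(x_t)\}\big(\ent(\pi^i_{t_k})-\ent(\pi^i_t)\big)$. In expectation this equals the paper's $T_3=\frac{1}{e^{u}-1}\bI\{i\in m(x_t)\}\KL(\pi^i_t\,\|\,\pi^i_{t_k})$. The masking clock of coordinate $i$ is independent of $x_0$ and of the other clocks, while $\pi^i_t,\pi^i_{t_k}$ depend only on $x_t^{-i},x_{t_k}^{-i}$. So the factor $\P(i\in m(x_t))=1-e^{-u}$ splits off, and the expectation is
\begin{align*}
e^{-u}\sum_{i\in[d]}\big[g_i(u_k)-g_i(u)\big],\qquad g_i(v)\defn\ent\big(x_0^i\mid x_v^{-i}\big),\quad x_v\sim q_v .
\end{align*}
This is $\ent(x_0^i\mid x_{t_k})-\ent(x_0^i\mid x_t)\ge 0$; your displayed difference has the opposite sign.

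The function $g_i$ is exactly $h_i(p)$ from the proof of \Cref{lem:int_dtc_tc} at $p=e^{-v}$. The derivative computed there gives $\sum_i g_i'(v)=-p\sum_i\frac{\d h_i}{\d p}=\cI(v)/p=e^{v}\cI(v)$. Hence the term equals $\int_u^{u_k}e^{v-u}\cI(v)\,\d v=\int_{t_k}^{t}e^{t-w}\cI(T-w)\,\d w$, which is precisely \Cref{lem:control_at_t}.

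- The weight $\frac{1-e^{-u}}{e^{u}-1}=e^{-u}$ cancels $e^{v}$ up to $e^{v-u}\le e^{h_k}$, so the cancellation you worried about holds and no $\min(1,u)$ fallback is needed.
- Integrating over $t\in[t_k,t_{k+1}]$ gives $(e^{h_k}-1)\int_{T-t_{k+1}}^{T-t_k}\cI\lesssim h_k\int_{T-t_{k+1}}^{T-t_k}\cI$.

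The paper proves \Cref{lem:control_at_t} differently, through a bijection $(x,i,c)\leftrightarrow(y,i)$ followed by Dynkin's formula on the other $d-1$ coordinates. Your argument is more direct: it reads the discretization error as the information about $x_0^i$ revealed between $t_k$ and $t$, and reuses the computation behind $\cB(q_0)=\int_0^\infty\cI$.

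Besides the sign slip, the one substantive error is in the initialization step. $q_T$ is not a product measure, since unmasked coordinates keep the dependence of $q_0$. The bound $\KL(q_T\|p_0)\le\sum_i e^{-T}\KL(q_0^i\|\Unif[S])$ is false. Take $S=2$, $d\ge 2$, and $q_0=\frac12\delta_{(1,\dots,1)}+\frac12\delta_{(2,\dots,2)}$: the right-hand side is $0$, yet $q_T\neq p_0$. The convexity you invoke is the right tool, but apply it to the mixture over the common mask pattern $M$: $\KL(q_T\|p_0)\le\sum_M\P(M)\,\KL\big(q_0^{M^c}\,\|\,\Unif([S]^{M^c})\big)\le\bE|M^c|\log S=e^{-T}d\log S$.
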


\noindent
A few remarks on the consequences and implications of Theorem~\ref{thm:masking_main} are in order. 

As in Theorem~\ref{thm:uniform}, the last term in the upper bound
corresponds to the discretization error measured using the integrated mutual information defined in~\Cref{def:etc}. 
While the first two terms are generic, the third term governs the dependence on the dimension $d$ and reflects the information-theoretic properties of the target distribution.  
For structured distributions, our algorithm implicitly adapts to the underlying structure of the target distribution without requiring any prior knowledge of that structure or any modification to the algorithm itself.
 
In~Appendix~\ref{app:mask_ttl}, we analyze the performance of truncated $\tau$-leaping as an alternative to~\Cref{alg:modified_ttl}, which has an additional ${d}/{N^2}$ term in the upper bound~\Cref{eq:kl_ub_masking}, ignoring lower-order contributions. Although for structured target distributions the resulting iteration complexity already scales as $\sqrt d$ rather than $d$ (as in the existing literature), it does not fully adapt to the geometry of the target distribution.
To provide some intuition, the standard (or truncated) \(\tau\)-leaping algorithm informally satisfies for \(t \in [t_k, t_{k+1})\) (see~\Cref{eq:Qhat_ttl})
\begin{equation}
    G_t^i(s_{T - t_k}, x_{t_k}) \approx G_{t_k}^i(s_{T - t_k}, x_{t_k}), \quad \text{and thus} \quad \wh Q_t \approx \back Q_{t_k},
\end{equation}
where we recall the mapping \(G_t^i\) from~\Cref{eq:tau-bridging}. 
That is, even when the score estimation is exact, $\widehat s_{T-t_k}\equiv s_{T-t_k}$, the $\tau$-leaping algorithm introduces a mismatch between the surrogate and true rate matrices as \(s_{T - t_k} \nequiv s_{T - t}\).
Algorithm~\ref{alg:modified_ttl} corrects this discrepancy by enforcing
\begin{equation}
    G_t^i(s_{T - t_k}, x_{t_k}) \approx G_t^i(s_{T - t}, x_{t_k}), \quad \text{and thus} \quad \wh Q_t \approx \back Q_t,
\end{equation} 
through the rescaling of the score estimate function: \(\wh s_{T - t} = \frac{e^{T - t_k} - 1}{e^{T - t} - 1} \wh s_{T - t_k}\). As it is a linear transformation of the score estimate function, we can simulate its dynamics only at discrete points \(T - t_0, \ldots, T- t_N\) (see~\Cref{alg:modified_ttl} and~\Cref{lem:alg_is_ctmc}).
This leads to a sharper upper bound in Theorem~\ref{thm:masking_main} relative to the analogous bound for truncated $\tau$-leaping (Theorem~\ref{thm:masking_ttl}; see also Remark~\ref{rem:masking_ttl}).
Empirically, the benefit of rescaling the score function in masking discrete diffusion models has also been observed in prior work; see, for example,~\cite{lou2023discrete,ou2024your}.

Notably, our results are closely connected to an intriguing parallel line of work on the masking diffusion models~(\cite{libreaking,chen2025optimal}), which focuses on the design of unmasking schedules without adopting a CTMC perspective. In particular, \cite{chen2025optimal} derives optimal unmasking schedules and discusses two representative instances in which the number of steps scales linearly with  \(\cB(q_{\mathrm{data}})\) and \(\cC(q_{\mathrm{data}})\), respectively. Their algorithms require an a priori estimate of \(\cB(q_{\mathrm{data}})\) and \(\cC(q_{\mathrm{data}})\) or a doubling search procedure to calibrate the unmasking schedule and rely on a different sampling mechanism. The fact that our score-based sampler automatically exploits similar information-theoretic quantities without additional hyperparameters underscores both the fundamental nature of these quantities and the robustness of the CTMC framework. 

Below we provide a proof sketch of~\Cref{thm:masking_main} with the details deferred to Appendix~\ref{sec:proof-masking}.

\begin{proof}[Proof sketch of~\Cref{thm:masking_main}]
First,~\Cref{lem:alg_is_ctmc} shows that~\Cref{alg:modified_ttl} outputs a sample from a CTMC with initial distribution \(p_0\) and rate matrices
\begin{equation}
    \wh Q_t(x, y) \coloneqq \begin{cases}
    \wh s_{T - t_k}(x_{t_k} \odot_{i} y^{i}, x_{t_k} ) \frac{e^{T - t_k} - 1}{e^{T - t} - 1}\bI\{x^{i} = \mathrm{MASK}\}, &\quad \text{if } \ham(x, y) = 1, x^{i} \neq y^{i}, \text{ and } x_{t_k}^{i} = \mask, \\
    -\sum_{z \neq x} \wh Q_t(x, z), &\quad \text{if } y = x, \\
    0, &\quad \text{otherwise.}
    \end{cases}
\end{equation}
This corresponds to a \(\tau\)-bridging strategy (\Cref{eq:tau-bridging}) with the following function \(G^i_t(\wh s_{T - t_k}, x_{t_k})\):
\begin{equation*}
    G^i_t(\wh s_{T - t_k}, x_{t_k})(a, b) = \frac{e^{T - t_k} - 1}{e^{T - t} - 1}\wh Q_{T - t_k}(x_{t_k}, x_{t_k} \odot_i b) \bI\{x^i_{t_k} = a\} \quad \text{for } a \neq b \in \cV.
\end{equation*}
    By the data-processing inequality, we upper bound the KL divergence between \(q_0\) and \(p_T\) by the KL divergence between the paths \(q_{T-t_0,\ldots, T-t_N}\) and \(p_{t_0, \ldots, t_N}\). 
    Next, we apply the Markovian property of the paths along with Girsanov's change-of-measure theorem to upper bound \(\KL(q_0 \| p_T)\) by
    \begin{align*}
\KL(q_T \| p_0) + \sum_{k=0}^{N - 1} \int_{t_k}^{t_{k+1}} \bE_{x_{t_k}, x_t \sim \back q_{t_k, t}} \Bigg[\sum_{y_t: Q(y_t, x_t) > 0}&s_{T - t}(y_t, x_t)\\
&\times \breg \left(\frac{e^{T - t_k} - 1}{e^{T - t} - 1}\wh s_{T - t_k}(y_t, x_t), s_{T - t}(y_t, x_t)\right)\d t\Bigg].
    \end{align*}
    The first term is the initialization error and is controlled by choosing the time horizon \(T = \Omega(\log d + \log \log (\varepsilon^{-1}S))\).
    For the second term, we apply the law of cosines for Bregman divergence and obtain (with \(\ell \defn t_k\) and \(y_\ell \defn x_{\ell} \odot_i c\), where \(y_t = x_t \odot_i c\)): 
    \begin{align*}
        &s_{T - t}(y_t, x_t)\breg \left(\frac{e^{T - \ell} - 1}{e^{T - t} - 1}\wh s_{T - \ell}(y_t, x_t), s_{T - t}(y_t, x_t)\right)\\
        &= \underbrace{\frac{e^{T - \ell} - 1}{e^{T - t} - 1} s_{T  - \ell}(y_{\ell}, x_\ell)\breg \left(\wh s_{T - \ell}(y_\ell, x_t), s_{T - \ell}(y_t, x_t)\right)}_{\text{Controlled by~\Cref{asm:escore}}} 
         + \underbrace{\left(s_{T - t}(y_\ell, x_\ell) - s_{T - t}(y_t, x_t)\right) \log \frac{\wh s_{T - \ell}(y_\ell, x_\ell)}{s_{T - \ell}(y_\ell, x_\ell)}}_{\text{Expectation controlled by~\Cref{lem:mask_mart}}}\\
        &\qquad \qquad \qquad \qquad + s_{T - t}(y_t, x_t)\breg \left( s_{T - t}(y_\ell, x_\ell), s_{T - t}(y_t, x_t)\right). \\
    \end{align*}
    Similar to the proof of the uniform discrete diffusion model, the first term can be controlled by Assumption~\ref{asm:escore} after taking the expectation over \(x_{t_k} \sim \back q_{t_k}\) and integrating over time,
    and the second term can be proved to be zero by the martingale property from~\Cref{lem:mask_mart}.
    Finally, using Dynkin's formula, we relate the third term to the effective total correlation 
    \(\cD(q_0)\).
\end{proof}

Next, we derive iteration complexity guarantees for our algorithm under specific choices of step size schedules. The proof is given in~Appendix~\ref{sec:proof-masking-cor}.
\begin{corollary}
\label{cor:masking}
    Consider the setting in~\Cref{thm:masking_main}. Let \(T = \log (d \log(S))\). For a fixed \(\varepsilon > 0\), the distribution \(p_{\mathrm{output}}\) satisfies $\KL(q_{\mathrm{data}} \,\|\, p_{\mathrm{output}}) \lesssim \escore + \varepsilon$,
    \begin{itemize}
        \item under the constant step size schedule, \(t_{k} - t_{k-1} = T / N\) for all \(k \in [N]\), provided 
        \begin{align*}
            N = \wt O\left(\frac{\cB(q_{\mathrm{data}})}{ \varepsilon}\right);
        \end{align*}
        \item under the exponential-then-constant step size schedule, when \(t_{k+1} - t_k \leq \kappa \min(1, T - t_{k+1})\) for \(k \in \{0, \ldots, N-2\}\), \(T - t_{N - 1} = \varepsilon / (d \log(S))\), and $\kappa = N^{-1}(T + \log (\varepsilon^{-1} d \log(S)))$, provided 
        \begin{align*}
            N = \wt O\left(\frac{\cD(q_{\mathrm{data}})}{\varepsilon}\right) \leq \wt O\left(\frac{\min \{\cB(q_{\mathrm{data}}), \cC(q_{\mathrm{data}})\}}{\varepsilon}\right).
        \end{align*}
    \end{itemize}
\end{corollary}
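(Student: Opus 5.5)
We derive the corollary by plugging each step-size schedule into the bound~\eqref{eq:kl_ub_masking} of \Cref{thm:masking_main}. With \(T = \log(d\log S)\), the initialization term satisfies \(e^{-T} d\log S = 1\). For an \(\varepsilon\)-accurate bound we therefore need \(T = \log(d\log(S)/\varepsilon)\); we read the stated \(T\) up to this logarithmic adjustment, which affects only the \(\wt O\) factors. It then remains to bound the discretization term
\[
E \defn \sum_{k=0}^{N-1} h_k \int_{T-t_{k+1}}^{T-t_k} \cI(t)\,\d t .
\]
The key tool is the identities from \Cref{lem:int_dtc_tc}: \(\int_0^\infty \cI = \cB(q_0)\), and the definition \(\cD(q_0)=\int_0^\infty \min(1,t)\cI(t)\,\d t\).

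\textbf{Constant schedule.} Here \(h_k = T/N\) for all \(k\). Since the intervals \([T-t_{k+1},T-t_k]\) tile \([0,T]\) and \(\cI \ge 0\),
\[
E \le \frac{T}{N}\int_0^T \cI(t)\,\d t \le \frac{T}{N}\,\cB(q_{\mathrm{data}}).
\]
Choosing \(N \gtrsim T\,\cB/\varepsilon = \wt O(\cB/\varepsilon)\) makes \(E \lesssim \varepsilon\).

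\textbf{Exponential-then-constant schedule.} We split \(E\) into the last interval \(k=N-1\) and the rest.

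For \(k \le N-2\), every \(t \in [T-t_{k+1},T-t_k]\) satisfies \(t \ge T-t_{k+1}\). Hence
\[
h_k \le \kappa \min(1,T-t_{k+1}) \le \kappa \min(1,t),
\]
which gives
\[
\sum_{k\le N-2} h_k \int_{T-t_{k+1}}^{T-t_k} \cI \le \kappa \int_0^\infty \min(1,t)\cI(t)\,\d t = \kappa\, \cD(q_{\mathrm{data}}).
\]

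The last interval needs separate treatment, and it is the main obstacle. There \(t_N = T\), \(h_{N-1} = \eta \defn \varepsilon/(d\log S)\), and the contribution is \(\eta \int_0^\eta \cI(t)\,\d t\). We cannot use the \(\min(1,t)\) weight on \([0,\eta]\), since \(h_{N-1}\) is not bounded by \(\kappa t\) there. Instead we use a crude pointwise bound on \(\int_0^\eta \cI\). Our plan is to show \(\int_0^\eta \cI \le \int_0^\infty \cI = \cB \le d\log S\). This yields a contribution of at most \(\eta\, d\log S = \varepsilon\). As a fallback, if one needs finer control, we would use that each conditional mutual information is at most \(\log S\) and that only pairs of masked coordinates contribute. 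This gives \(\cI(t) \lesssim d^2(1-e^{-t})^2\log S\) for small \(t\), which integrates to something even smaller.

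Combining the pieces gives \(E \lesssim \kappa\,\cD + \varepsilon\). Finally we check that the schedule is feasible with \(N\) steps. Geometric steps with ratio \((1+\kappa)\) go from \(T-t = \eta\) up to \(1\), taking about \(\kappa^{-1}\log(1/\eta)\) steps. Constant steps of size \(\kappa\) then cover \([1,T]\), taking about \(\kappa^{-1}T\) steps. So \(N \approx \kappa^{-1}\big(T + \log(\varepsilon^{-1}d\log S)\big)\), which matches the given \(\kappa\). Requiring \(\kappa\,\cD \le \varepsilon\) yields \(N = \wt O(\cD/\varepsilon)\). The final inequality \(\cD \le \min(\cB,\cC)\) follows from \Cref{lem:int_dtc_tc}.
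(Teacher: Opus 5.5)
Your proof is correct and follows the paper's argument. For the constant schedule you obtain the same bound $\frac{T}{N}\int_0^T\cI(t)\,\d t\le \frac{T}{N}\cB(q_{\mathrm{data}})$. For the exponential-then-constant schedule you use the same split into the last interval plus the intervals $k\le N-2$, where $h_k\le\kappa\min(1,T-t_{k+1})\le\kappa\min(1,t)$ gives $\kappa\,\cD(q_{\mathrm{data}})$.

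The only real difference is the last interval, with $\eta=\varepsilon/(d\log S)$. The paper bounds $\eta\int_0^\eta\cI(t)\,\d t$ by $\varepsilon^2/\log S$, implicitly using the pointwise bound $\cI(t)\le d^2\log S$. You instead use $\int_0^\eta\cI\le\cB(q_{\mathrm{data}})\le d\log S$ to get $\varepsilon$. Both suffice, and yours relies only on \Cref{lem:int_dtc_tc}.

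You are also right that with the stated $T=\log(d\log S)$ the initialization term $e^{-T}d\log S$ equals $1$. One therefore needs $T=\log(d\log(S)/\varepsilon)$, as in \Cref{cor:uniform}; the paper's proof does not address this term at all.

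Discard your fallback remark, however, because it is backwards. The quantity $\mi(x_t^i;x_t^j\mid x_t^{-(i,j)})$ vanishes unless both coordinates $i,j$ are \emph{unmasked}. That happens with probability $e^{-2t}\approx 1$ for small $t$, so $\cI(t)\to\sum_{i\neq j}\mi(x_0^i;x_0^j\mid x_0^{-(i,j)})$ as $t\to0$. This limit is generally nonzero; under $p_{\mathrm{XOR}}$, for instance, it equals $\log 2$ for every pair. The correct version of your remark, $\cI(t)\le d(d-1)e^{-2t}\log S$, is exactly the pointwise bound behind the paper's $\varepsilon^2/\log S$. Since your main argument does not use the fallback, the proof stands.
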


In words, Corollary~\ref{cor:masking} shows that the sampling complexity of Algorithm~\ref{alg:modified_ttl} required to obtain an $\varepsilon$-accurate distribution is governed by intrinsic complexity measures of the target distribution. 
Under the constant step size schedule, the iteration complexity is controlled by the dual total correlation of the target distribution, whereas under the exponential-then-constant schedule, the effective total correlation becomes the relevant quantity.
For illustration, let us consider the following two simple examples. 
\begin{itemize}
    \item  Consider first the uniform distribution on $[S]^d$. In this case, both complexity measures scale independently of the ambient dimension $d$, which means
    \begin{align}
        N = \widetilde{O}\Big(\frac{1}{\varepsilon}\Big),
    \end{align}
    reflecting the fact that it is exceptionally easy to sample from uniform distributions. While intuitive in hindsight, this phenomenon has not been previously formalized in the literature.

    \item As a second example, consider a mixture of two Dirac measures, $\frac{1}{2}\delta_{k_1} + \frac{1}{2}\delta_{k_2}$. A direct calculation shows that the dual total correlation remains independent of $d$, which means
     \begin{align}
        N = \widetilde{O} \Big(\frac{1}{\varepsilon}\Big),
    \end{align}
    indicating that such distributions are also handled automatically by our algorithm.
\end{itemize}

To further illustrate the implications of \Cref{thm:masking_main}, we consider some representative distributions for which one or more of the quantities \(\cB(q_{\mathrm{data}}), \cC(q_{\mathrm{data}})\), or $\cD(q_{\mathrm{data}})$ are small.
Since the iteration complexity scales linearly with these quantities, our result shows that discrete diffusion models can provably achieve efficient sampling.
Appendix~\ref{sec:examples} develops these examples in detail and provides rigorous proofs of the stated claims. 
\begin{itemize}
    \item 
    
\noindent \textbf{Hidden Markov models.} Here, the observed variables correspond to words or tokens in a sentence, while the hidden states encode latent semantic topics. 
Under the natural assumption that topics evolve slowly, we show that  \(\cB(q_\mathrm{data})\) grows sublinearly with the sequence length.

    \item 
\noindent \textbf{Low-dimensional structures.} Motivated by image generation, when the discrete data arise from the quantization of a continuous distribution with intrinsic dimension \(k\), the dual total correlation \(\cB(q_\mathrm{data})\) scales linearly with \(k\) rather than with the ambient dimension \(d\).
    
    \item 
    \noindent\textbf{Random graph models.} Such models define distributions over \(d = {n \choose 2}\) binary variables corresponding to the edges of a graph with \(n\) vertices. Besides Erd\H{o}s-R\'{e}nyi random graphs, which have independent edges and are therefore easy to sample, we consider both sparse random regular graphs and stochastic block models. In these cases, \(\cB(q_\mathrm{data})\) grows at most linearly (up to logarithmic factors) with \(n\), rather than quadratically. 
    
    \item 
    \noindent \textbf{Latent parity model.} Finally, we present an example in which both the total correlation \(\cC(q_{\mathrm{data}})\) and the dual total correlation \(\cB(q_{\mathrm{data}})\) are of order \(d\), while the effective total correlation \(\cD(q_{\mathrm{data}})\) remains of constant order. Such distribution is motivated by applications such as error-correcting codes and DNA sequences, where substantial noise may be present, yet the underlying signal is highly structured.
\end{itemize}
\vspace{1em}

%% file: discussion.tex
\section{Discussion}
In this work, we establish novel theoretical results for both uniform and masking discrete diffusions. For uniform diffusion models, we show that the \(\tau\)-leaping algorithm requires \(\widetilde{O}\left(d / \varepsilon \right)\) iterations to achieve $\varepsilon$ accuracy in KL divergence, improving on the prior bound \(\widetilde{O}\left(d^2 S/\varepsilon\right)\). We further establish the first algorithmic lower bound for the \(\tau\)-leaping sampler, which shows that our upper bound is unimprovable for a large class of distributions. 
For the masking discrete diffusion, we derive an upper bound that captures the intrinsic complexity of the data distribution and can scale logarithmically with the ambient dimension. Importantly, our results for both models only require a small score estimation error and, in contrast to prior work, do not rely on early stopping or the boundedness assumptions of the score estimator.

The improved bound for the masking noising process is achieved via a modification of the $\tau$-leaping algorithm. This modification falls within a structured subclass of $\tau$-leaping strategies that (i) allow for parallel coordinate updates, and thus sublinear rates, and (ii) preserve CTMC dynamics, which facilitates theoretical analysis. We hope that this perspective motivates the development of adaptive samplers for uniform discrete diffusion as well in the future.

  Several other open questions remain. Understanding which noising mechanisms --- masking, uniform, or others 
--- are best suited to different classes of target distributions is an important direction for future work. Moreover, the problem of learning accurate score functions in discrete diffusion models remains largely unexplored and warrants further investigation.

%% file: appendix.tex
\input{appendix/app_adaptivity}

\input{appendix/app_technical}

\input{appendix/app_main_proofs}

\input{appendix/app_main_lemmas}

\input{appendix/app_aux_lemmas}

%% file: appendix/app_adaptivity.tex
\section{Examples of low intrinsic dimensions}
\label{sec:examples}

\subsection{Details and formal results}
\label{sec:examples-full}

In this section, we revisit the examples outlined in Section~\ref{sec:adaptive-masking} and develop them in full detail. We formalize the statements in this section, and provide rigorous proofs in Appendix~\ref{sec:proofs-prop}.

\paragraph{Hidden Markov models.}
A hidden Markov model (HMM) consists of a latent Markov chain whose states are observed only indirectly through noisy measurements. Such models are widely used in natural language processing and pattern recognition \citep{mor2021systematic, mark2024application}.
In language modeling, for instance, the hidden states $z^i$ may encode the semantic topic or grammatical structure of the $i$-th token or word, while the observed variables $x^i$ represent the realized words or tokens. 

Formally, let $\{z^i\}_{i \in [d]}$ be a discrete-state Markov chain supported on $\cZ$, and let $\{x^i\}_{i \in [d]}$ be observations generated according to
$$x^i = f_i(z^i, \varepsilon^i),$$ where $\{\varepsilon^i\}_{i \in [d]}$ are i.i.d. noise variables independent of ${z^i}_{i \in [d]}$.
When $z^i$ represents the semantic topic of the $i$-th paragraph in a document, it is natural to assume that topic transitions happen only infrequently; that is, $z^i = z^{i-1}$ with high probability for $i>1$. 
Under this model, we establish the following proposition, whose proof is deferred to Section~\ref{sec:pf-hmm}.

\begin{proposition}
\label{prop:hmm_btc}
Consider the HMM described above. Suppose the transition probability of $\{z^i\}_{i \in [d]}$ satisfies $\Pr(z^i \neq z^{i-1}) \leq p$ for all $i \in \{2, \ldots, d\}$.
Assume that $1/d \lesssim p \ll 1$. Then
\begin{align}
   \mathcal B(q_{\mathrm{data}}) \leq  pd \log\left(\frac{|\mathcal{Z}|}{p}\right). 
\end{align}
\end{proposition}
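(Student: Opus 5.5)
The plan is to bound $\mathcal B(q_{\mathrm{data}})$ by the entropy of the hidden chain, $\ent(z)$, and then bound $\ent(z)$ directly using the assumption that switches are rare. The Markov property of $z$ is not even needed for the second part; only the conditional structure of $x$ given $z$ is used in the first.

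\emph{Step 1: $\mathcal B(q_{\mathrm{data}}) \le \ent(z)$.} Let $z=(z^1,\dots,z^d)$, with $x^i = f_i(z^i,\varepsilon^i)$ and the $\varepsilon^i$ i.i.d.\ and independent of $z$. Then, conditionally on $z$, the coordinates $x^1,\dots,x^d$ are independent, and $x^i$ depends on $z$ only through $z^i$.
\begin{itemize}
\item Upper bound on the first term: $\ent(x) \le \ent(x,z) = \ent(z) + \ent(x\mid z) = \ent(z) + \sum_{i=1}^d \ent(x^i\mid z^i)$.
\item Lower bound on the subtracted terms: since conditioning reduces entropy, $\ent(x^i\mid x^{-i}) \ge \ent(x^i \mid x^{-i}, z)$.
\item Simplifying that bound: by the conditional independence, $\ent(x^i \mid x^{-i}, z) = \ent(x^i\mid z^i)$.
\end{itemize}
Plugging these into the definition~\eqref{def:dtc} gives
\[
\mathcal B(q_{\mathrm{data}}) = \ent(x) - \sum_i \ent(x^i\mid x^{-i}) \le \ent(z).
\]
If $x$ is only a function of $z$ up to a reparametrization, the same argument still applies, because all entropies here are over finite alphabets.

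\emph{Step 2: bounding $\ent(z)$.}
\begin{itemize}
\item Chain rule plus dropping conditioning: $\ent(z) \le \ent(z^1) + \sum_{i=2}^d \ent(z^i\mid z^{i-1}) \le \log|\mathcal Z| + \sum_{i\ge 2}\ent(z^i\mid z^{i-1})$.
\item Switch indicator: set $b_i = \mathbb I\{z^i\neq z^{i-1}\}$ and $p_i = \Pr(b_i=1)\le p$. Then
\[
\ent(z^i\mid z^{i-1}) \le \ent(b_i) + \ent(z^i\mid z^{i-1}, b_i) \le h(p_i) + p_i\log|\mathcal Z|,
\]
because $z^i$ is determined when $b_i=0$, and otherwise takes at most $|\mathcal Z|$ values. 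Here $h$ is the binary entropy.
\item Monotonicity: since $p\ll 1$, we may take $p\le 1/2$, where $h$ is increasing, so $h(p_i)\le h(p)\le p\log(e/p)$.
\item Summing over $i=2,\dots,d$:
\[
\ent(z) \le \log|\mathcal Z| + (d-1)\,p\big(\log(e/p) + \log|\mathcal Z|\big) \lesssim pd\log\big(|\mathcal Z|/p\big) + \log|\mathcal Z|.
\]
\end{itemize}

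\emph{Step 3: absorbing the initial term.} The hypothesis $p\gtrsim 1/d$ gives $pd\gtrsim 1$, so $\log|\mathcal Z| \lesssim pd\log(|\mathcal Z|/p)$. Also, $\log(e/p)+\log|\mathcal Z| \lesssim \log(|\mathcal Z|/p)$ because $p\ll1$. Combining these yields $\mathcal B(q_{\mathrm{data}})\lesssim pd\log(|\mathcal Z|/p)$. The constants are harmless, since the paper reads such bounds up to absolute constants.

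The main obstacle is the reduction in Step 1. The key point is to use the latent variable twice, in opposite directions: it enlarges $\ent(x)$ to $\ent(x,z)$, and it shrinks each $\ent(x^i\mid x^{-i})$ to the noise-level entropy $\ent(x^i\mid z^i)$. The two noise contributions then cancel exactly. Once this is in place, the rest is a routine entropy count of rare switching events.
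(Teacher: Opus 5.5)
Your proof is correct and follows essentially the same route as the paper. Step~1 is exactly the argument of \Cref{lem:b-reduce-to-latent}: conditioning on the latent chain gives $\cB(q_{\mathrm{data}}) \le \mi(x;z) \le \ent(z)$. Step~2 bounds $\ent(z)$ by the chain rule plus a per-step bound of order $p\log(1/p) + p\log|\cZ|$, just as the paper does. Your switch-indicator decomposition is a slightly cleaner justification than the paper's maximum-entropy assertion, since it uses only the marginal switching probability $\Pr(z^i\neq z^{i-1})\le p$ and not the Markov property. Your final absorption into $pd\log(|\cZ|/p)$ holds only up to absolute constants, which is also all the paper's own last step delivers.
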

To develop some intuition, consider generating a document with a constant number of paragraphs, where the transition probability scales $p = \Theta(1/d)$. Suppose further that the latent space $\mathcal{Z} \in [S]^k$ for some $k \ll d$ and $S$ denotes the vocabulary size. Then, the above bound yields $$\mathcal B(q_{\mathrm{data}}) \lesssim k \log(Sd),$$ which is substantially smaller than the ambient dimension $d \log(S)$. As such, with Theorem~\ref{thm:masking_main}, the sampling complexity scales with the intrinsic topic dimension $k$ rather than the document length $d$.

\paragraph{Low-dimensional Structures.} In image generation and other structured data settings, it is commonly assumed that the data lie on or near a low-dimensional manifold embedded in a high-dimensional ambient space, which often refers to as the manifold hypothesis \citep{gorban2018blessing, pope2021intrinsic}.
For example, natural images may be viewed as points on a manifold parameterized by a small number of underlying factors, such as lighting conditions, pose, and object identity.

In discrete settings, the notion of a manifold is not mathematically well defined. To capture low-dimensional structure, we instead model the data as arising from a continuous mapping from a latent representation into a high-dimensional observation space. 
For some latent continuous random variable $z$ supported on $\cZ \subset \mathbb{R}^k$, consider a decoding procedure $f: [0,1]^k \to \mathbb{R}^d$ as $$x^{\mathrm{con}} = f(z) + \varepsilon_{\mathrm{noise}},$$ 
for additive perturbations $\varepsilon_{\mathrm{noise}}.$
Thus, data lies close to a manifold $\{f(z): z \in \cZ\}$.
The final discrete observation is obtained via a quantization operator $\mathcal{Q}_S$, i.e., $x = \mathcal{Q}_{S}(x^{\mathrm{con}}) \sim q_{\mathrm{data}}$.

To align the model with standard image processing pipelines, we work with the uniform lattice quantization function $\mathcal{Q}_S: \mathbb{R}^d \to [S]^d$ defined coordinate-wise as $[\mathcal{Q}_S(x)]^i = \mathrm{clip}(\lfloor x^i \rfloor, 0, S)$ for $i \in [d]$, where $\mathrm{clip}(x, a, b)\defn \min\{\max\{x, a\}, b\}$ is the clip function and $\lfloor \cdot \rfloor$ is the floor function.
To ensure regularity of both the manifold and the induced data distribution, we focus on the case where $\cZ$ is a compact set and $f$ is a Lipschitz function. 
The noise $\varepsilon_\mathrm{noise}$ is taken to be Gaussian for simplicity of analysis; the arguments extend readily to more general smooth noise distributions. 

\begin{proposition}\label{prop:manifold}
    Let $\cZ \subset \mathbb{R}^k$ be compact with diameter $\mathsf D$, and let 
$f : \cZ \to \mathbb{R}^d$ be $\mathsf L$-Lipschitz.
Assume the noise satisfies  $\varepsilon_{\mathrm{noise}} \sim \mathcal{N}(0, \sigma^2 I_d)$ independenly generated for each obersvation.
   Then the resulting distribution satisfies 
    \begin{align}
        \mathcal{B}(q_{\mathrm{data}}) \leq k \log\left(2 + \frac{\mathsf{2D}\mathsf{L}}{\sigma}\right).
    \end{align}
\end{proposition}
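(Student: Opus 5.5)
The plan is to bound the dual total correlation by a conditional-entropy sum and then exploit the fact that, given the latent variable, the coordinates are independent. The starting point is
\[
\cB(q_{\mathrm{data}}) = \ent(x) - \sum_{i=1}^d \ent(x^i\mid x^{-i}).
\]
Since the noise $\varepsilon_{\mathrm{noise}}$ is independent across coordinates, the coordinates $x^1,\dots,x^d$ are conditionally independent given $z$. I would therefore compare each term with its conditional-on-$z$ analogue. On one side, $\ent(x) = \mi(x;z) + \ent(x\mid z)$, and conditional independence gives $\ent(x\mid z) = \sum_i \ent(x^i\mid z)$. On the other side, conditioning reduces entropy, and $x^i$ is independent of $x^{-i}$ given $z$, so
\[
\ent(x^i\mid x^{-i}) \ge \ent(x^i\mid x^{-i}, z) = \ent(x^i\mid z).
\]
Combining the two yields $\cB(q_{\mathrm{data}}) \le \mi(x;z)$. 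The real work is then to bound the information that the quantized noisy observation carries about the latent variable by $k\log(2+2\mathsf D\mathsf L/\sigma)$.

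This is the step where $z$ being continuous matters, and it is the main obstacle. The entropy of $z$ is infinite, so $\mi(x;z)$ cannot be bounded by $\ent(z)$. I would instead discretize $z$. Cover $\cZ$ by an $\eta$-net $\{z_1,\dots,z_M\}$; a compact set of diameter $\mathsf D$ in $\mathbb R^k$ admits one with $M\le (1+2\mathsf D/\eta)^k$, or $(2+2\mathsf D/\eta)^k$ depending on the covering convention. Let $\hat z$ be the nearest net point, so that $\|f(z)-f(\hat z)\|\le \mathsf L\eta$. By the chain rule,
\[
\mi(x;z) = \mi(x;\hat z) + \mi(x;z\mid \hat z) \le \log M + \mi(x;z\mid\hat z).
\]
Since $x$ is a deterministic function of $x^{\mathrm{con}}$, data processing gives $\mi(x;z\mid\hat z)\le \mi(x^{\mathrm{con}};z\mid\hat z)$. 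Given $\hat z$, the latter is the mutual information of a Gaussian channel whose input mean $f(z)$ lies in a ball of radius $\mathsf L\eta$. I would bound it by an average KL divergence to $\mathcal N(f(\hat z),\sigma^2 I_d)$:
\[
\mi(x^{\mathrm{con}};z\mid\hat z) \le \bE\,\|f(z)-f(\hat z)\|^2/(2\sigma^2) \le \mathsf L^2\eta^2/(2\sigma^2).
\]

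The last step is to choose $\eta$. Taking $\eta = \sigma/\mathsf L$ gives
\[
\cB \le k\log(2+2\mathsf D\mathsf L/\sigma) + \tfrac12.
\]
This exceeds the target by an additive constant, which would have to be absorbed. The cleanest way to avoid it is to bypass the chain-rule split and use the standard bound $\mi \le \min_{Q}\bE[\KL(P_{x^{\mathrm{con}}\mid z}\,\|\,Q)]$ with $Q$ the uniform mixture of the $M$ Gaussians centred at the $f(z_j)$. This gives
\[
\mi \le \log M + \max_z \min_j \frac{\|f(z)-f(z_j)\|^2}{2\sigma^2},
\]
and the additive term can then be traded against the slack between the $(1+\cdot)$ and $(2+\cdot)$ forms of the covering number by tuning $\eta$. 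Nailing the exact constants in this tuning is where I expect fiddliness; the structural argument rests on the two steps above, namely $\cB\le\mi(x;z)$ and covering plus the Gaussian KL bound.
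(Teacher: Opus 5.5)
Your reduction $\cB(q_{\mathrm{data}})\le\mi(x;z)$ is exactly the paper's first step, and your overall plan is the paper's plan: discretize $z$, pay the entropy of the discretization, and bound the remaining Gaussian-channel information by a squared distance over $2\sigma^2$. Your KL-to-a-reference-Gaussian bound is a valid substitute for the paper's max-entropy lemma $\mi(W;W+\varepsilon_{\mathrm{noise}})\le\tr(\mathrm{Var}[W])/(2\sigma^2)$.

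The gap is the final constant, and your proposed repair does not close it. Use a Euclidean $\eta$-net with the volumetric count $M\le(1+2\mathsf D/\eta)^k$, and write $a=\mathsf D\mathsf L/\sigma$ and $c=\mathsf L\eta/\sigma$. Your bound is then $k\log(1+2a/c)+c^2/2$, and
\[
k\log\Bigl(1+\frac{2a}{c}\Bigr)+\frac{c^2}{2}-k\log(2+2a)\;\ge\;k\log\frac{a}{1+a}+\min_{c>0}\Bigl(\frac{c^2}{2}-k\log c\Bigr)=k\log\frac{a}{1+a}+\frac{k}{2}(1-\log k).
\]
For $k=1$ and $k=2$ the last term equals $1/2$ and $1-\log 2$, both positive. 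So for large $\mathsf D\mathsf L/\sigma$, no choice of $\eta$ reaches $k\log(2+2\mathsf D\mathsf L/\sigma)$. With $\eta=\sigma/\mathsf L$ the bound overshoots for large $\mathsf D\mathsf L/\sigma$ for every $k$.

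The mixture-of-Gaussians reference creates no new slack. It produces literally the same expression $\log M+\max_z\min_j\|f(z)-f(z_j)\|^2/(2\sigma^2)$ as the chain-rule split. Tuning $\eta=\sqrt k\,\sigma/\mathsf L$ does close your argument when $k\ge 3$, but not in low dimension, where the volumetric count is too lossy.

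The paper keeps the coefficient of $\mathsf D$ in the count equal to $1$ and lets the residual grow linearly in $k$, so that it is absorbed by the $k\log 2$ built into the target. A set of diameter $\mathsf D$ lies in a cube of side $\mathsf D$. Partition that cube into $\lceil\mathsf D\mathsf L/\sigma\rceil^k\le(1+\mathsf D\mathsf L/\sigma)^k$ subcubes of side $h=\sigma/\mathsf L$, and let $J$ be the index of the subcube containing $z$. Then $\mi(J;x^{\mathrm{con}})\le\ent(J)\le k\log(1+\mathsf D\mathsf L/\sigma)$. Each subcube has diameter $\sqrt k\,h$, so within a cell every $f(z)$ lies within $\sqrt k\,\sigma$ of $\bE[f(z)\mid J]$ (or of $f(z_J)$ for any fixed $z_J$ in the cell). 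Hence the residual is at most $k/2$, and
$\cB(q_{\mathrm{data}})\le k\log(1+\mathsf D\mathsf L/\sigma)+k/2\le k\log(2+2\mathsf D\mathsf L/\sigma)$, because $1/2<\log 2$.

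Your own KL argument, run conditionally on $J$ with reference $\mathcal{N}(\bE[f(z)\mid J],\sigma^2 I_d)$, gives this $k/2$ directly. Replacing the $\eta$-net by the grid is therefore the only change you need.
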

In image generation, the ``ideal image'' $x^{\mathrm{con}}$ may be interpreted as the vector of continuous pixel intensities prior to quantization, while the observed image $x$ is obtained by applying pixel-wise quantization to $x^{\mathrm{con}}$.
When $k \ll d$, the above bound yields $$\mathcal{B}(q_{\mathrm{data}}) = \widetilde{O}(k) = o(d),$$ and hence we can efficiently sample such images despite the high dimensionality of the observation space.

\paragraph{Random graph models.}
Discrete diffusion models have also found applications in scientific domains such as molecular generation and protein design, where data are naturally represented as random graphs with fixed vertex sets and random edges \citep{ingraham2019generative,xu2022geodiff}.
To make this concrete, we consider two widely studied random graph models on $n$ vertices, which can be viewed as a discrete distribution over adjacency matrices of dimension $n^2.$

\begin{itemize}
    \item \textbf{Regular graphs}: A $k$-regular graph is a graph in which each vertex has degree exactly \(k\).
    Suppose we want to sample a random graph $\mathcal{G}$ from some distribution supported on the set of $k$-regular graphs with $n$ vertices. 
    
    \begin{proposition}\label{prop:regular_graph}
        For sparse regular graph model, i.e., $k \leq n/\log (n)$, we have
    \begin{align}
        \mathcal{B}(\mathcal{G}) \lesssim kn \log\left(\frac{n}{k}\right) = o(n^2).
    \end{align}
    \end{proposition}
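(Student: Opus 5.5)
The plan is to drop the conditional-entropy terms in the definition of $\cB$ and then bound the joint entropy $\ent(\mathcal G)$ by counting the support. We view $\mathcal G$ as the vector $x \in \{0,1\}^{d}$, with $d = \binom n2$, of edge indicators.

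First, by \eqref{def:dtc},
\[
\cB(\mathcal G) = \ent(x) - \sum_{i} \ent(x^i \mid x^{-i}) \le \ent(x),
\]
because every conditional entropy of a discrete variable is nonnegative. In fact, for a distribution supported on $k$-regular graphs each edge indicator is determined by the others: the degree of either endpoint, read off from $x^{-i}$, must equal $k$. So the conditional terms vanish and equality holds, though only the inequality is needed.

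Second, since entropy is at most the log of the support size, $\ent(x) \le \log |\mathcal R_{n,k}|$, where $\mathcal R_{n,k}$ is the set of labeled $k$-regular graphs on $n$ vertices. Every such graph has exactly $kn/2$ edges chosen among the $\binom n2$ possible ones. Hence
\[
|\mathcal R_{n,k}| \le \binom{\binom n2}{kn/2} \le \left(\frac{e\, n(n-1)/2}{kn/2}\right)^{kn/2} = \left(\frac{e(n-1)}{k}\right)^{kn/2}.
\]
This uses $\binom{m}{r} \le (em/r)^r$. Taking logs gives
\[
\cB(\mathcal G) \le \frac{kn}{2}\left(1 + \log\frac{n}{k}\right) \lesssim kn\log\left(\frac nk\right),
\]
where the last step uses $\log(n/k) \ge \log\log n \gtrsim 1$ under $k \le n/\log n$.

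Third, to check $o(n^2)$, note that $g(k) = k\log(n/k)$ is increasing for $k \le n/e$. Therefore, on the range $k \le n/\log n$,
\[
kn\log(n/k) \le \frac{n^2}{\log n}\,\log\log n = o(n^2).
\]

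The only point needing care is the $o(n^2)$ claim, namely the monotonicity of $k\log(n/k)$ on the allowed range, together with the mild requirement that $kn$ is even (otherwise $\mathcal R_{n,k}$ is empty). The entropy-counting step is routine. No earlier result of the paper is needed beyond the definition \eqref{def:dtc}.
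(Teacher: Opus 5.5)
Your proof is correct. Its skeleton matches the paper's, namely $\cB(\mathcal G)\le\ent(\mathcal G)\le\log|G_{n,k}|$, but you bound the number of $k$-regular graphs by a more elementary route.

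The paper invokes the Liebenau--Wormald asymptotic enumeration $|G_{n,k}|=\Theta\bigl(\binom{n-1}{k}^n\binom{n(n-1)/2}{m}\binom{n(n-1)}{2m}^{-1}\bigr)$ with $m=kn/2$, and then expands it with Stirling's formula. You only use that a $k$-regular graph is a set of $kn/2$ edges out of $\binom n2$, together with $\binom{m}{r}\le(em/r)^r$.

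Nothing is lost for the upper bound. $\log\binom{n(n-1)/2}{kn/2}$ and $\log|G_{n,k}|$ agree to leading order; the regularity constraint only removes a lower-order term of order $n\log(2\pi k)$. So both routes give $\tfrac{kn}{2}\log\tfrac nk$ up to lower-order terms.

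Your argument is non-asymptotic, has explicit constants, and avoids a deep enumeration theorem. What the paper's route buys is a two-sided estimate of $\log|G_{n,k}|$. That matters only if one wants to show the bound is attained, e.g.\ by the uniform distribution on $k$-regular graphs. For that distribution, as you correctly observe, every edge indicator is determined by the others, so $\cB=\ent$.

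You also justify the $o(n^2)$ claim, which the paper leaves implicit, via monotonicity of $k\log(n/k)$ on $k\le n/e$. This needs $n\ge e^e$, so that $n/\log n\le n/e$; the same mild condition underlies your step $\log\log n\gtrsim 1$.
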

    
    \item \textbf{Stochastic block models}: A stochastic block model (SBM) is a generative model for random graphs that captures community structure within networks. 
    In an SBM, $n$ vertices of the graph are partitioned into $r$ distinct communities or blocks, represented by latent variables $\{z^i\}_{i \in [n]}$ taking values in $[r]$.
    Conditioned on the latent labels, edges are generated independently. For two vertices $i, j \in [n]$, an edge is created with probability
    $$p \bI\{z^i = z^j\} + q\bI\{z^i \neq z^j\},$$ where $p, q \in [0, 1]$ govern the within- and between-community connection probabilities, respectively.
    \begin{proposition}\label{prop:sbm}
    Let $\cG$ be a random graph drawn from the above $r$-block SBM. 
    Then 
    \begin{align*}
        \mathcal{B}(\mathcal{G}) \leq n \log(r) = o(n^2).
    \end{align*}
    \end{proposition}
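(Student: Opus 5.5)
The plan is to bound the dual total correlation by conditioning on the latent community labels. Write the edge variables as $x = (x^e)_{e}$, indexed by unordered pairs $e = \{i,j\}$, so that $d = \binom{n}{2}$, and let $z = (z^1,\ldots,z^n) \in [r]^n$ be the latent labels. The argument uses the identity
\[
\cB(q) = \ent(x) - \sum_e \ent(x^e \mid x^{-e}).
\]

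The first step is to upper bound $\ent(x)$. By the chain rule,
\[
\ent(x) \le \ent(x, z) = \ent(z) + \ent(x \mid z).
\]
Given $z$, the edges are independent, so $\ent(x \mid z) = \sum_e \ent(x^e \mid z)$. Moreover, $\ent(z) \le n\log r$ because $z$ takes at most $r^n$ values.

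The second step is to lower bound each conditional entropy $\ent(x^e \mid x^{-e})$. Since conditioning reduces entropy,
\[
\ent(x^e \mid x^{-e}) \ge \ent(x^e \mid x^{-e}, z).
\]
Conditionally on $z$, the edge $x^e$ is independent of the remaining edges $x^{-e}$, so $\ent(x^e \mid x^{-e}, z) = \ent(x^e \mid z)$.

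Combining the two steps gives
\[
\cB(q) \le n\log r + \sum_e \ent(x^e\mid z) - \sum_e \ent(x^e \mid z) = n \log r.
\]
This is $o(n^2)$ whenever $\log r = o(n)$, and in particular for $r \le n$.

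No step here is a real obstacle. The one point needing care is justifying the conditional independence $x^e \perp x^{-e} \mid z$, which holds directly from the SBM definition: given the labels, edges are drawn independently with probabilities that depend only on $z$. The argument never uses the symmetry of the labels or the specific values of $p$ and $q$, so the bound holds for any label distribution and any $p, q \in [0,1]$.
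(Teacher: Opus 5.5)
Your proof is correct and is essentially the paper's argument. The paper packages your two steps into Lemma~\ref{lem:b-reduce-to-latent}: using the conditional independence of the edges given the labels, it shows $\cB(\cG) \le \mi(\cG; z) \le \ent(z) \le n\log r$, and then invokes the convention $r \le n$ for the $o(n^2)$ claim.
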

\end{itemize}
For both random graph models, as the number of vertices $n$ grows large, the complexity satisfies $\mathcal{B}(\mathcal{G}) = o(n^2)$, which is strictly smaller compared to the ambient dimension $n^2.$
This indicates that diffusion-based methods can sample efficiently from such graph distributions.

In fact, the analyses of Propositions~\ref{prop:manifold} and \ref{prop:sbm} extend naturally to generalized random geometric graphs. Consider the following example.
Let each vertex $i \in [n]$ be associated with latent variable $z^i\in \cZ$. For distinct vertices $i$ and $j$, an edge is placed independently with probability 
\begin{align*}
    \beta \exp\left(- \frac{d(z^i, z^j)}{r_0}\right),
\end{align*}
where $\beta \in [0, 1],$ $r_0 > 0$ and $d(\cdot, \cdot)$ is an appropriate metric in the latent space $\cZ$. 
\begin{itemize}
    \item When latent variables $\{z^i\}$ are discrete with $o(n)$ entropy, as is the case, for example, when it takes value in a fixed-dimensional latent space, the dual total correlation of the resulting random graph is $o(n^2)$.
    
    \item For continuous latent variables, suppose $\cZ = {\cS}^{d_z - 1}$, the unit sphere in $\mathbb{R}^{d_z}$.
    Under some regularity conditions, 
    the dual total correlation scales with $d_z \cdot n$, followed by an analogous covering number argument in Proposition~\ref{prop:manifold}. In particular, whenever $d_z = o(n)$, the complexity is again subquadratic, leading to sublinear (in $n^2$) convergence rates for diffusion-based sampling.
\end{itemize}

\paragraph{Latent parity model.}
A prototypical example of a distribution with small dual total correlation $\cB(q_0)$ and large total correlation $\cC(q_0)$ is the mixture of two Dirac measures: $$p_m \defn \frac{1}{2} \delta_{\mathbf{0}} + \frac{1}{2} \delta_{\mathbf{1}},$$ where \(\mathbf{0}\) and \(\mathbf{1}\) are vectors of all-zeros and all-ones, respectively. It can be easily computed that \(\cB(p_m) = \log (2)\), whereas, \(\cC(p_m) = (d - 1) \log (2)\).

The opposite happens, for instance, for the following \(\mathrm{XOR}\) distribution \(p_{\mathrm{XOR}}\): $$x^1, \ldots, x^{d-1} \overset{\mathrm{i.i.d.}}{\sim} \mathrm{Bern}(1/2) ~~\text{ and }~~ x^d = \sum_{i=1}^{d - 1} x^i \bmod 2.$$ In this case, \(\cB(p_{\mathrm{XOR}}) = (d - 1) \log (2)\), and \(\cC(p_{\mathrm{XOR}}) = \log (2)\).

Real-world data distributions can combine features of both extremes: a strong low-dimensional signal corrupted by weakly correlated noise. In such cases, both \(\cB(q_{\mathrm{data}})\) and \(\cC(q_{\mathrm{data}})\) can be large, while \(\cD(q_{\mathrm{data}})\) remains small. 
To illustrate this phenomenon, consider the following entrywise mixture of the two preceding examples. 
\begin{enumerate}
    \item Fix a bi-partition $[d] = I_0 \sqcup I_1$ for non-empty index sets $I_0$ and $I_1$;
    \item For all indices \(i \in I_0\), set \(x_i = b\) for $b \sim \mathrm{Bern}(1/2)$;
    \item Among all indices \(i \in I_1\), sample all but one \(x_i \sim \mathrm{Bern}(1/2)\) independently;
    \item For the last index \(i^\ast\), set \(x_{i^\ast} = \left(b + \sum_{i \neq i^\ast} x_i\right) \bmod 2\).
\end{enumerate}
Denote this distribution as $p_{\mathrm{ex}}$, and let $x = (x^1, \ldots, x^d) \sim p_{\mathrm{ex}}$.
\begin{proposition}\label{prop:structure-with-noise}
    Suppose that $\min\{|I_0|, |I_1|\}/d = \Theta(1)$. 
    Distribution $p_{\mathrm{ex}}$ satisfies 
    \begin{align}
        \cB(p_{\mathrm{ex}}) = \Theta(d),\quad  \cC(p_{\mathrm{ex}}) = \Theta(d) \quad\text{and}\quad \cD(p_{\mathrm{ex}}) = O(1).
    \end{align}
\end{proposition}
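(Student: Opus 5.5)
The approach is to exploit a convenient reparametrization of $p_{\mathrm{ex}}$. Write $n_0 \defn |I_0|$ and $n_1 \defn |I_1|$. Then $p_{\mathrm{ex}}$ is equivalently described as follows: $x^{I_1}$ is uniform on $\{0,1\}^{n_1}$, and every coordinate in $I_0$ equals the hidden bit $b = \sum_{i \in I_1} x^i \bmod 2$.

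From this description $\cB$ and $\cC$ follow by direct computation.
\begin{itemize}
\item The joint entropy is $\ent(x) = n_1 \log 2$.
\item Every single coordinate is marginally $\mathrm{Bern}(1/2)$, including $x^{i^\ast}$, since it is XORed with independent fair bits. Hence $\sum_i \ent(x^i) = d\log 2$, and so $\cC(p_{\mathrm{ex}}) = n_0 \log 2 = \Theta(d)$.
\item Every coordinate is a deterministic function of the remaining ones. An $I_0$-coordinate equals any other $I_0$-coordinate, or, if $n_0 = 1$, the parity of $x^{I_1}$. An $I_1$-coordinate equals $b$ plus the parity of the other $I_1$-coordinates, where $b$ is read off any $I_0$-coordinate. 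Hence $\ent(x^i \mid x^{-i}) = 0$ for all $i$, and $\cB(p_{\mathrm{ex}}) = \ent(x) = n_1 \log 2 = \Theta(d)$.
\end{itemize}

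For $\cD$, use that under the masking process each coordinate is independently unmasked at time $t$ with probability $\alpha \defn e^{-t}$, and that masking is independent of $x_0$. Fix a pair $i \neq j$. Conditioning on the set $U$ of unmasked coordinates other than $i,j$, I expect
\[
\mi(x_t^i; x_t^j \mid x_t^{-(i,j)}) = \alpha^2 \, \bE_U\big[\mi(x^i; x^j \mid x^U)\big].
\]
The factor $\alpha^2$ appears because both endpoints must be unmasked to carry information. Each summand is at most $\log 2$, and by parity/copy structure it is nonzero only on specific events:
\begin{itemize}
\item $i,j \in I_0$: the bit $b$ must not be revealed, which requires $U \cap I_0 = \emptyset$. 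This has probability $(1-\alpha)^{n_0 - 2}$.
\item $i \in I_0$, $j \in I_1$: we need $U \cap I_0 = \emptyset$ and $U \supseteq I_1 \setminus\{j\}$. This has probability $\alpha^{n_1-1}(1-\alpha)^{n_0-1}$.
\item $i,j \in I_1$: we need $U \supseteq I_1 \setminus \{i,j\}$, and also $b$ revealed through $U \cap I_0 \neq \emptyset$. This has probability at most $\alpha^{n_1 - 2}$.
\end{itemize}
This yields
\[
\cI(t) \le \log 2\,\big[n_0^2 \alpha^2 (1-\alpha)^{n_0-2} + n_0 n_1 \alpha^{n_1+1}(1-\alpha)^{n_0-1} + n_1^2 \alpha^{n_1}\big].
\]

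Finally, substitute $\alpha = e^{-t}$, so that $\d t = \d\alpha/\alpha$, into $\cD = \int_0^\infty \min(1,t)\,\cI(t)\,\d t$, and bound each term by a Beta integral.
\begin{itemize}
\item First term: bound $\min(1,t) \le 1$. It gives $n_0^2 \int_0^1 \alpha (1-\alpha)^{n_0-2}\,\d\alpha = n_0/(n_0-1) = O(1)$.
\item Second term: it gives $n_0 n_1 B(n_1+1, n_0)$, which is exponentially small.
\item Third term: here one must use $\min(1,t) \le t = -\log \alpha \lesssim 1-\alpha$ near $\alpha = 1$, together with $-\log\alpha \le (1-\alpha)/\alpha$ in general. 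This gives $n_1^2 \int_0^1 \alpha^{n_1 - 2}(1-\alpha)\,\d\alpha = O(1)$.
\end{itemize}
Summing, $\cD(p_{\mathrm{ex}}) = O(1)$.

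The main obstacle is the careful case analysis certifying exactly when the conditional mutual information vanishes given a random unmasked set. The third term also needs care: without the $\min(1,t)$ weighting it would contribute order $n_1$. That is precisely why $\cB$ is large while $\cD$ stays bounded.
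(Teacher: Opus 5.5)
Your proof is correct. For $\cB$ and $\cC$ it is essentially the paper's computation, made exact by your reparametrization: you get $\cB = |I_1|\log 2$ and $\cC = |I_0|\log 2$, whereas the paper only sandwiches $\ent(x)$ between $(|I_1|-1)\log 2$ and $|I_1|\log 2$. For $\cD$ you take a genuinely different route.

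\textbf{The paper's route for $\cD$.} The paper evaluates each pairwise term as $\ent(x_t^i \mid x_t^{-(i,j)}) - \ent(x_t^i \mid x_t^{-i})$, using events on the mask pattern. This gives $\cI(t) \lesssim d^2\bigl(e^{-2t}(1-e^{-t})^{rd/2} + e^{-(1-r)dt} + e^{-\cH_{\mathrm{B}}(r)d}\bigr)$ with $r = |I_0|/d$. It then splits time into $[0,1/d]$, $[1/d,\log d]$ and $[\log d,\infty)$. The two tails are not handled with this estimate at all. They come from \Cref{lem:int_dtc_tc}, via $\int_0^{1/d} t\,\cI(t)\,\d t \le \cB/d$ and $\int_{\log d}^\infty \cI(t)\,\d t \le \cC/(d-1)$. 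Only the middle range uses the explicit bound, and there $\min\{r,1-r\}=\Theta(1)$ enters through constants such as $(1-r)^{-2}$ and $d^2 e^{-\cH_{\mathrm{B}}(r)d}$.

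\textbf{Your route for $\cD$.} You pass to the clean distribution via $\mi(x_t^i;x_t^j \mid x_t^{-(i,j)}) = \alpha^2\,\bE_U[\mi(x^i;x^j\mid x^U)]$. This is exactly the expansion used in the proof of \Cref{lem:int_dtc_tc}, so your ``I expect'' is justified. You then read the support events directly off the copy/parity structure. Finally you integrate once over all $t$ as Beta integrals, using the weight $\min(1,t)$ only on the $I_1$--$I_1$ term.

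\textbf{What each route buys.} Yours needs neither the time splitting nor the $\cB$/$\cC$ integral identities, and it gives explicit constants. With exact ordered-pair counts it yields
$\cD(p_{\mathrm{ex}}) \le \log 2\,\bigl(2 + 2|I_1|/\tbinom{d}{|I_0|}\bigr) < 4\log 2$
for every bipartition. So the proportion hypothesis is needed only for $\cB,\cC=\Theta(d)$. It also shows transparently why $\cD \ll \cB$: the $I_1$--$I_1$ term integrates to about $|I_1|\log 2$ unweighted, but to $O(1)$ once weighted. The paper's template has its own advantage: it works for any target. The tails $t\lesssim 1/d$ and $t\gtrsim \log d$ are always controlled by $\cB/d$ and $\cC/(d-1)$, so only the middle-range computation is distribution-specific.

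\textbf{One cosmetic fix.} Your mixed term should carry a factor $2$. Both orders $(i\in I_0, j\in I_1)$ and $(i\in I_1, j\in I_0)$ appear in the ordered sum defining $\cI(t)$. This does not affect the conclusion.
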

By Proposition~\ref{prop:structure-with-noise}, $p_{\mathrm{ex}}$, which can be viewed as a non-trivial mixing of $p_m$ and $p_{\mathrm{XOR}}$, satisfies $$\cD(p_{\mathrm{ex}}) \ll \min\{\cB(p_{\mathrm{ex}}), \cC(p_{\mathrm{ex}})\}.$$ 
This example highlights the fundamental role of the effective total correlation in characterizing sampling efficiency. 

\subsection{Proofs of results in Section~\ref{sec:examples-full}}
\label{sec:proofs-prop}

Variants of the following lemma will be used repeatedly throughout this section. We state it here for convenience and to streamline the proofs.

\begin{lemma}\label{lem:b-reduce-to-latent}
    Consider any $d$-dimensional discrete random variable $X$ and any random variable $W$ 
    such that $X^{i} \indep X^{-i} \mid W$ for any $i \in [d]$, where $X = (X^1, \ldots, X^d)$ and $X^{-i}$ is the $(d-1)$-dimensional marginal of $X$ with $i$-th coordinate excluded. Then, 
    \begin{align*}
        \cB(X) \leq \mi(X; W).
    \end{align*}
    If $W$ is discrete, we additionally have $\cB(X) \leq \cH(W)$.
\end{lemma}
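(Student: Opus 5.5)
We will prove the statement by rewriting $\cB(X)$ in terms of conditional entropies and comparing each term with its counterpart conditioned on $W$. By definition,
\begin{equation*}
\cB(X) = \ent(X) - \sum_{i=1}^d \ent(X^i \mid X^{-i}).
\end{equation*}
The first step is to handle $\ent(X)$. We use $\ent(X) = \ent(X \mid W) + \mi(X;W)$. If $W$ is continuous, we instead work with $\mi(X;W) = \ent(X) - \ent(X\mid W)$, where $\ent(X \mid W)$ is the conditional entropy of the discrete $X$ given $W$. This is well defined for a general $W$ via regular conditional distributions, so no differential entropy enters.

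The second step uses the conditional independence assumption. Because $X^i \indep X^{-i} \mid W$ for every $i$, the coordinates are mutually independent given $W$; this follows by applying the pairwise statement successively to peel off one coordinate at a time. Hence
\begin{equation*}
\ent(X \mid W) = \sum_{i=1}^d \ent(X^i \mid W).
\end{equation*}

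The third step compares the subtracted terms with their $W$-conditioned versions:
\begin{equation*}
\ent(X^i \mid X^{-i}) \ge \ent(X^i \mid X^{-i}, W) = \ent(X^i \mid W).
\end{equation*}
The inequality holds because conditioning reduces entropy. The equality is again the assumption $X^i \indep X^{-i}\mid W$.

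Combining the three steps gives
\begin{equation*}
\cB(X) \le \ent(X\mid W) + \mi(X;W) - \sum_i \ent(X^i\mid W) = \mi(X;W).
\end{equation*}
When $W$ is discrete, $\mi(X;W) = \ent(W) - \ent(W\mid X) \le \ent(W)$, which gives the second claim.

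The only delicate point is measure-theoretic. For a continuous $W$, we must make sure the conditional entropies and the chain rule $\ent(X) = \ent(X\mid W) + \mi(X;W)$ are valid. They are, since $X$ is discrete with finite entropy (at most $d\log S$), so every quantity is finite and defined through the conditional p.m.f. of $X$ given $W$. The remaining steps are routine.
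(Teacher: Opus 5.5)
Your proof is correct and follows essentially the same route as the paper. Both bound $\ent(X^i\mid X^{-i})\ge \ent(X^i\mid X^{-i},W)=\ent(X^i\mid W)$, use $\ent(X\mid W)=\sum_i \ent(X^i\mid W)$, and conclude $\cB(X)\le \ent(X)-\ent(X\mid W)=\mi(X;W)=\ent(W)-\ent(W\mid X)\le \ent(W)$. Your derivation of mutual conditional independence from the coordinatewise hypothesis, and your finiteness remark (what is really needed is $\ent(X)<\infty$, which is implicit for $\cB(X)$ to be defined), spell out details the paper leaves implicit.
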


\noindent
\begin{proof}[\textbf{Proof of Lemma~\ref{lem:b-reduce-to-latent}}]
    We first notice that for any random variable $W$ such that $X^{i} \indep X^{-i} \mid W$ for any $i \in [d]$, we have
\begin{align*}
    \mathcal{H}(X^i \mid X^{-i}) \geq \mathcal{H}(X^i \mid X^{-i}, W) = \mathcal{H}(X^i \mid W),
\end{align*}
where the first inequality follows from the definition of the entropy.
Recalling the definition of $\cB(\cdot)$, we obtain
\begin{align*}
    \mathcal{B}(X) = \cH(X) - \sum_{i=1}^d \cH(X^i | X^{-i}) 
    \leq \mathcal{H}(X) - \sum_{i=1}^d \mathcal{H}(X^i \mid W).
\end{align*}
Using the conditional independence condition again, we have
\begin{align*}
    \cH(X|W) = \cH\big((X_1, \ldots, X_d) | W\big) = \sum_{i=1}^d \mathcal{H}(X^i \mid W),
\end{align*}
which implies
\begin{align*}
    \cB(X) \le \cH(X) - \cH(X|W) = I(X; W) \overset{\mathrm{(a)}}{=} \cH(W) - \cH(W|X) \overset{\mathrm{(b)}}{\leq} \cH(W),
\end{align*}
where (a) and (b) apply when $W$ is a discrete random variable.
\end{proof}
 
\subsubsection{Proof of Proposition~\ref{prop:hmm_btc}}
\label{sec:pf-hmm}
    The hidden Markov structure of $\{(x^i, z^i)\}_{i \in [d]}$ satisfies $x^i \indep x^j \mid (z^i, z^j)$, since $\varepsilon^i \indep \varepsilon^j \mid (z^i, z^j)$.
    Considering Lemma~\ref{lem:b-reduce-to-latent} above, we can upper bound $\cB(q_{\mathrm{data}})$ by $\cH(z)$, which is the entropy of the latent Markov chain. By the additivity of the entropy, we have
    \begin{align*}
        \cB(q_{\mathrm{data}})\leq \cH(z) = \cH(z^1) + \sum_{i=2}^{d}\cH\big(z^{i} | \{z^{j}\}_{j \in [i-1]}\big) = \cH(z^1) + \sum_{i=2}^d\cH(z^{i} | z^{i-1}).
    \end{align*}
    When $\{z^i\}_{i \in [d]}$ is supported on a single point, we have $|\cZ| = 1$ and $\cH(z) = 0$.
    When the state space $\cZ$ satisfies $2 \le |\cZ| < \infty$, the maximum entropy distribution is achieved when 
    \begin{align*}
        z^1 \sim \mathrm{Unif}(\cZ) \quad \text{and} \quad  z^i | z^{i-1} \sim (1-p)\delta_{z^{i-1}} + p \mathrm{Unif}\big(\cZ\,\backslash\, \{z^{i-1}\}\big).
    \end{align*} 
    We obtain
    \begin{align*}
        \cH(z) &\leq \log(|\cZ|) + \sum_{i=2}^d \left[-(1-p)\log(1-p) + 
        -(|\cZ| - 1) \cdot \frac{p}{|\cZ| - 1}\log\left(\frac{p}{|\cZ| - 1}\right)\right]\\
        &\overset{\mathrm{(a)}}{\leq} \log(|\cZ|) + (d-1)\cdot \left(2p + p\log\left(\frac{|\cZ|}{p}\right)\right)\\
        &\overset{\mathrm{(b)}}{\leq} pd \log\left(\frac{|\cZ|}{p}\right),
    \end{align*}
    where in (a), we use $-\log(1-p) \leq 2p$, since $p \ll 1$; in (b), we use the condition $p \gtrsim 1/d$ and $|\cZ|/p \geq 2/p \gg 1$. This completes the proof of the desired result.
\qed

\subsubsection{Proof of Proposition~\ref{prop:manifold}}
    Write $\varepsilon_{\mathrm{noise}} = (\varepsilon_{\mathrm{noise}}^1, \ldots, \varepsilon_{\mathrm{noise}}^d)$. Since $\varepsilon_{\mathrm{noise}} \sim \mathcal{N}(0, \sigma^2 I_d)$, we have $\varepsilon_{\mathrm{noise}}^i \indep \varepsilon_{\mathrm{noise}}^{-i}$ for any $i \in [d]$. Processing through the decoder $f$, $[x^{\mathrm{con}}]^i = [f(z)]^i + \varepsilon_{\mathrm{noise}}^i$ for any $i \in [d]$, which leads to 
    \begin{align}\label{eq:manifold-continuous-cindep}
        [x^{\mathrm{con}}]^i \indep [x^{\mathrm{con}}]^{(-i)} \mid z,
    \end{align}
    where $[x^{\mathrm{con}}]^{(-i)}$ is the $(d-1)$-dimensional marginal of $x^{\mathrm{con}}$ with $i$-th coordinate excluded. 
    Note that $\cQ_{S}$ is a entry-wise quantization, i.e., we can write $\cQ_S(x) = (\widetilde{\cQ}_S(x^1), \ldots, \widetilde{\cQ}_S(x^d))$ for entry-wise deterministic quantization function $\widetilde{\cQ}_S: \mathbb{R} \to [S]$, and $x^i = \widetilde{\cQ}_S([x^{\mathrm{con}}]^i)$ by the generation process.
    Eqn.~\eqref{eq:manifold-continuous-cindep} therefore implies that for any $i \in [d]$,
    \begin{align*}
        x^{i} \indep x^{-i} \mid z.
    \end{align*}
     Applying Lemma~\ref{lem:b-reduce-to-latent}, we obtain
    \begin{align}\label{eq:manifold-b-i-mid}
        \cB(q_{\mathrm{data}}) = \cB(x) \leq \mi(x; z) \leq \mi(x^{\mathrm{con}}; z),
    \end{align}
    where the last inequality follows from the data processing inequality of the mutual information.

    In the following proof, we proceed to control  $\mi(x^{\mathrm{con}}; z)$. Since $\varepsilon_{\mathrm{noise}}$ is independent noise, using data-processing inequality, we reach
    \begin{align}\label{eq:manifold-i-mid}
        \mi(x^{\mathrm{con}}; z) \leq \mi\big(f(z) + \varepsilon_{\mathrm{noise}}; f(z)\big) = \mi\big(f(z); f(z) + \varepsilon_{\mathrm{noise}}\big).
    \end{align}
    Without loss of generality, we assume $\cZ \subseteq [0, \mathsf{D}]^k$.
    Partition $[0, \mathsf{D}]^k$ into hypercubes of size $h_J = \sigma/\mathsf{L}$, and write this partition as $\{C_1, \ldots, C_{\lceil \mathsf{D}/h_J \rceil^k}\}$ such that$$[0, \mathsf{D}]^k \subseteq \bigsqcup_{i=1}^{\lceil {\mathsf{D}}/{h_J} \rceil^k} C_i.$$
    Define $J = J(z)$ to be the hypercube index $i(z)$ such that $z \in C_{i(z)}$, and $\cF_J$ to be $\sigma$-algebra generated by $J(z)$. 
    By the chain rule and data processing inequality for mutual information, we have
    \begin{align}
        \mi\big(f(z); f(z) + \varepsilon_{\mathrm{noise}}\big) &\leq \mi\big(J(z), f(z); f(z) + \varepsilon_{\mathrm{noise}}\big)\notag\\
        &= \mi\big(J(z); f(z) + \varepsilon_{\mathrm{noise}}\big) + \mi\big(f(z); f(z) + \varepsilon_{\mathrm{noise}} \mid J\big)\notag\\
        &\leq k \log \left(1 + \frac{\mathsf{D}}{h_J}\right) + \mi\big(f(z); f(z) + \varepsilon_{\mathrm{noise}} \mid J\big),\label{eq:manifold-i-decomp}
    \end{align}
    where in the last line, we use $\mi(J(z); f(z) + \varepsilon_{\mathrm{noise}}) \leq \cH(J(z)) \leq \log(|\mathrm{supp}(J(z))|)$. To upper bound the second term above, we introduce the following lemma on Gaussian channel, whose proof is given in Section~\ref{sec:pf-i-variance}.
    \begin{lemma}\label{lem:I-variance}
        For any random variable $W \in \mathbb{R}^d$ and independent noise $\varepsilon_{\mathrm{noise}} \sim \cN(0, \sigma^2 I_d)$, we have
        \begin{align*}
            \mi(W; W + \varepsilon_{\mathrm{noise}}) \leq \frac{\tr\big(\mathrm{Var}[W]\big)}{2\sigma^2},
        \end{align*}
        where $\tr(\cdot)$ is the trace function.
    \end{lemma}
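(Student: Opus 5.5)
The bound follows from the Gaussian channel capacity argument, reduced to the maximum-entropy property of Gaussians. Write $Y = W + \varepsilon_{\mathrm{noise}}$ with $\varepsilon_{\mathrm{noise}} \indep W$. I first assume $W$ has finite second moments, since otherwise $\tr(\mathrm{Var}[W]) = \infty$ and there is nothing to prove. Since $\varepsilon_{\mathrm{noise}}$ is independent of $W$, conditionally on $W = w$ the output $Y$ is $\cN(w, \sigma^2 I_d)$. Hence, using differential entropies $h(\cdot)$,
\[
\mi(W; Y) = h(Y) - h(Y \mid W) = h(Y) - \tfrac{d}{2}\log(2\pi e \sigma^2).
\]
The convolution with a nondegenerate Gaussian guarantees that $Y$ has a smooth density, so $h(Y)$ is well defined. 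The identity holds for arbitrary (possibly discrete or singular) $W$, because it can be written as $\mi(W;Y) = \bE_W[\KL(\cN(W,\sigma^2 I_d)\,\|\,p_Y)]$, which expands to the same difference.

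Next I bound $h(Y)$ by the maximum-entropy principle. The covariance of $Y$ is $\Sigma_Y = \mathrm{Var}[W] + \sigma^2 I_d$, and among all densities with covariance $\Sigma_Y$ the Gaussian maximizes entropy. This gives $h(Y) \le \tfrac12 \log\big((2\pi e)^d \det \Sigma_Y\big)$. Subtracting yields
\[
\mi(W;Y) \le \tfrac12 \log\det\Big(I_d + \sigma^{-2}\mathrm{Var}[W]\Big) = \tfrac12 \sum_{j=1}^d \log(1 + \lambda_j/\sigma^2),
\]
where $\lambda_j \ge 0$ are the eigenvalues of $\mathrm{Var}[W]$.

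Finally, $\log(1+u) \le u$ for $u \ge 0$ gives $\mi(W;Y) \le \sum_j \lambda_j/(2\sigma^2) = \tr(\mathrm{Var}[W])/(2\sigma^2)$, which is the claim.

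The only delicate point is justifying the entropy decomposition when $W$ has no density. For this I will use the KL representation $\mi(W;Y) = \bE_W \KL(P_{Y\mid W}\,\|\,P_Y)$. An alternative that avoids differential entropy of $Y$ entirely is to bound $\mi(W;Y) \le \bE_W \KL(\cN(W,\sigma^2 I_d)\,\|\,\cN(\bE W, \Sigma_Y))$ by the golden formula (KL to any reference measure upper bounds mutual information) and compute this Gaussian KL directly. Doing so reproduces $\tfrac12\log\det(I_d+\sigma^{-2}\mathrm{Var}[W])$, since the trace and quadratic terms cancel.
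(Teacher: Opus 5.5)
Your proof is correct and follows the paper's argument essentially step by step. The paper also writes $\mi(W; W+\varepsilon_{\mathrm{noise}}) = \cH^{\mathrm{diff}}(W+\varepsilon_{\mathrm{noise}}) - \cH^{\mathrm{diff}}(\varepsilon_{\mathrm{noise}})$, bounds the first term by the Gaussian maximum-entropy principle with covariance $\mathrm{Var}[W]+\sigma^2 I_d$, and concludes with $\log\det(I_d + \sigma^{-2}\mathrm{Var}[W]) \le \tr(\mathrm{Var}[W])/\sigma^2$; your handling of $W$ without a density and of the infinite-variance case covers points the paper leaves implicit.
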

    In Lemma~\ref{lem:I-variance}, taking $W \overset{d}{=}  f(z) \mid \cF_J$, we arrive at
    \begin{align}\label{eq:manifold-i-var}
        \mi\big(f(z); f(z) + \varepsilon_{\mathrm{noise}} \mid J\big) \leq \frac{\tr\Big(\mathrm{Var}\big[f(z) \mid \cF_J\big]\Big)}{2 \sigma^2}.
    \end{align}
    To further control the right hand side, 
    direct calculations show
    \begin{align}
        \tr\Big(\mathrm{Var}\big[f(z) \mid \cF_J\big]\Big) &= \sum_{i=1}^d \mathrm{Var}\big[[f(z)]^i \,\big|\, \cF_J\big]
        = \E\bigg[\Big\|f(z) - \E\big[f(z) \mid \cF_J\big]\Big\|_2^2 \,\Big|\, \cF_J\bigg].
    \end{align}
It is therefore sufficient to consider the quantity $\|f(z) - \E[f(z) \mid \cF_J]\|_2^2.$ We make the observation that
    \begin{align*}
        \Big\|f(z) - \E\big[f(z) \mid \cF_J\big]\Big\|_2 
        &\overset{\mathrm{(a)}}{\leq} \sup_{w \in \mathrm{Conv}(f(C_{J(z)}))}\big\|f(z) - w\big\|_2 \\
        &\overset{\mathrm{(b)}}{=} \sup_{w \in f(C_{J(z)})}\big\|f(z) - w\big\|_2\\ 
        &\overset{\mathrm{(c)}}{\leq} \|f\|_{\mathrm{Lip}} \cdot \sup_{z' \in C_{J(z)}}\big\|z - z'\big\|_2 \overset{\mathrm{(d)}}{\leq} \mathsf{L} \sqrt{k} h_J,
    \end{align*}
    where $\|\cdot\|_2$ denotes Euclidean norm in $\mathbb{R}^d$, and $\mathrm{Conv}(\cdot)$ denotes the convex null of a given set.
    In (a), we use the fact that $\E[f(z) \mid \cF_J] \in \mathrm{Conv}(f(C_{J(z)}))$; in (b), we adopt $f$ is continuous and hence $f(C_{J(z)}))$ is bounded, and the property of the convex hull that 
    \begin{align*}
        \mathrm{diam}\big(\mathrm{Conv}(A)\big) = \mathrm{diam}(A) \quad \text{for any bounded subset }A \subseteq \mathbb{R}^d; 
    \end{align*} in (c), we recall the Lipschitz condition on $f$; in (d), we notice that $\mathrm{diam}(C_i) \leq \sqrt{k} h_J$ for any hypercube $C_i$.
    Putting pieces together gives 
    \begin{align}
        \tr\Big(\mathrm{Var}\big[f(z) \mid \cF_J\big]\Big) &\leq \big(\mathsf{L} \sqrt{k} h_J\big)^2 = k \sigma^2.\label{eq:manifold-var}
    \end{align}
    
    Finally, plugging Eqns.~\eqref{eq:manifold-i-var} and \eqref{eq:manifold-var} into \Cref{eq:manifold-i-decomp}, we obtain
    \begin{align*}
        \mi\big(f(z); f(z) + \varepsilon_{\mathrm{noise}}\big) \leq k \log\left(1 + \frac{\mathsf{D}\mathsf{L}}{\sigma} \right) + \frac{k}{2} \leq k \log\left(2 + \frac{\mathsf{2D}\mathsf{L}}{\sigma} \right).
    \end{align*}
    Combining the above inequality with Eqns.~\eqref{eq:manifold-b-i-mid} and \eqref{eq:manifold-i-mid}, we conclude
    \begin{align*}
        \cB(q_{\mathrm{data}}) \leq \mi(x^{\mathrm{con}}; z) = \mi\big(f(z); f(z) + \varepsilon_{\mathrm{noise}}\big) \leq k \log\left(2 + \frac{\mathsf{2D}\mathsf{L}}{\sigma} \right).
    \end{align*}
\qed

\subsubsection{Proof of Proposition~\ref{prop:regular_graph}}

Define the set of all $k$-regular graphs with $n$ vertices as $G_{n, k}$. Without loss of generality, we assume that \(nk\) is even, as otherwise \(G_{n,k}\) is empty.
By a corollary of \citet[Theorem 1.4]{liebenau2017asymptotic}, we have the following asymptotic result:
\begin{align*}
    |G_{n, k}| = \Theta\left({n-1 \choose k}^n {\frac{n(n-1)}{2} \choose m}{n(n-1) \choose 2m}^{-1}\right).
\end{align*}
where $m = kn/2$. By Stirling's formula of the form
\begin{align*}
    \log(a!) = a\log(a) - a + O(\log(a)),
\end{align*} 
we can compute that
\begin{align*}
    \log(|G_{n, k}|) &\lesssim n\log {n-1 \choose k} + \log {\frac{n(n-1)}{2} \choose m} - \log {n(n-1) \choose 2m}\\
    &= \frac{kn}{2} \log \left(\frac{n - 1 - k}{k}\right) + \frac{n(n-1)}{2} \log\left(\frac{n - 1}{n - 1-k}\right)\\
    &\leq \frac{kn}{2} \log\left(\frac{n}{k}\right) + \frac{n^2}{2}\log\left(1 + \frac{k}{n-1-k}\right)\\
    &\leq \frac{kn}{2} \log\left(\frac{n}{k}\right) + \frac{kn^2}{2(n - 1 - k)} \lesssim kn \log\left(\frac{n}{k}\right),
\end{align*}
where in the last line, we invoke the condition that $k \leq n/\log(n) \ll n - 1 - k$.
Recalling the definition of $\cB(\cdot)$, we can conclude 
\begin{align*}
    \mathcal{B}(\mathcal{G}) \leq \mathcal{H}(\mathcal{G}) \leq \log(|G_{n, k}|) \lesssim kn \log\left(\frac{n}{k}\right) = o(n^2).
\end{align*} 
\qed

\subsubsection{Proof of Proposition~\ref{prop:sbm}}
By definition of $r$-block SBM, the latent variable vector $(z^1, \ldots, z^n)$ is supported on $[r]^n$, which satisfies
\begin{align*}
    \cH\big((z^1, \ldots, z^n)\big) \leq \log\big(|[r]^n|\big) = n\log(r).
\end{align*}
Given the latent variable $(z^1, \ldots, z^n)$, the block structure is fixed and hence each edge is sampled independently from a Bernoulli distribution. Therefore, we have 
\begin{align*}
    e^{ij} \indep e^{k \ell} \mid \{z^i\}_{i \in [n]}
\end{align*}
for any $i, j, k, l \in [n]$, where $e^{ij}$ and $e^{k\ell}$ are the indicator variables of the existence of edges between vertices $i,j$ and between vertices $k, \ell$.
By Lemma~\ref{lem:b-reduce-to-latent}, we conclude 
\begin{align*}
    \mathcal{B}(\mathcal{G}) \leq \mathcal{H}((z^1, \ldots, z^n)) \leq n\log(r) \leq n \log(n) = o(n^2),
\end{align*}
where we use the convention that the number of blocks satisfies $r \leq n$.
\qed 

\begin{remark}
    The setting of Proposition~\ref{prop:sbm} can be viewed as a special case of the generalized random geometric graph model, in which the latent variable corresponds to the block index.
    More generally, the same conclusion holds under analogous assumptions, with essentially the same proof strategy.
\end{remark}

\subsubsection{Proof of Proposition~\ref{prop:structure-with-noise}}
    Let \( r := |I_0|/d \) be the proportion of coordinates in \( I_0 \). Throughout, we assume \( \min\{r, 1 - r\} = \Theta(1) \).
    
    \paragraph{Step 1: Establish $\cB(p_{\mathrm{ex}}) = \Theta(d)$ and $\cC(p_{\mathrm{ex}}) = \Theta(d)$.} For a random variable $x \sim p_{\mathrm{ex}}$, we shall demonstrate that
    \begin{align}\label{eq:all-3-h}
        \sum_{i=1}^d \cH(x^i) = d\log(2), \quad \log(2)(|I_1| - 1) \leq \cH(x) \leq \log(2)|I_1| \quad \text{and}\quad \sum_{i=1}^d \cH(x^i \mid x^{-i}) = 0.
    \end{align}
    Towards this goal, we make the observation that for any $i \in I_0$ or $x \in I_1 \backslash i^*$, $x^i \sim \mathrm{Bern}(1/2)$ and hence $\cH(x^i) = \log(2)$. For $i = i^*$, we assert that $x^{i^*} \sim \mathrm{Bern}(1/2)$. In fact, we have 
    \begin{align*}
        \P\left(\sum_{i \in I_1 \backslash i^*} x^i \equiv 0 \bmod 2\right) = \P\left(\mathrm{Bin}\left(|I_1| - 1, \frac{1}{2}\right)  \equiv 0 \bmod 2\right) = \frac{1}{2},
    \end{align*}
    where in the last equality, we invoke the following lemma.
    \begin{lemma}\label{lem:bin-even-odd}
        For any $n \in \mathbb{N}^+$ and $X \sim \mathrm{Bin}(n, 1/2)$, we have
        \begin{align*}
            \P(X \equiv 0 \bmod 2) = \P(X \equiv 1 \bmod 2) = \frac{1}{2}.
        \end{align*}
    \end{lemma}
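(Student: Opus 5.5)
We need to prove that for $X \sim \mathrm{Bin}(n,1/2)$ with $n \ge 1$, the parity of $X$ is a fair coin. The plan is to use the generating-function (or signed-sum) identity. Writing
\[
\P(X \equiv 0 \bmod 2) - \P(X \equiv 1 \bmod 2) = \sum_{j=0}^n (-1)^j \binom{n}{j} 2^{-n} = 2^{-n}(1-1)^n,
\]
the binomial theorem shows that the right-hand side vanishes for every $n \ge 1$. This is exactly where the hypothesis $n \in \mathbb{N}^+$ is used: for $n=0$ the sum equals $1$. Combining this with
\[
\P(X \equiv 0 \bmod 2) + \P(X \equiv 1 \bmod 2) = 1
\]
gives both probabilities equal to $1/2$.

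An alternative, more probabilistic route is to write $X = \sum_{i=1}^n B_i$ with $B_i$ i.i.d.\ $\mathrm{Bern}(1/2)$ and condition on $B_1,\ldots,B_{n-1}$. Given these, the parity of $X$ is determined by $B_n$ flipped or not according to the parity of the partial sum. Since $B_n$ is a fair coin independent of the rest, the conditional probability of even parity is $1/2$, and averaging gives the claim. This version also makes clear why the statement is used in Proposition~\ref{prop:structure-with-noise}: the XOR of at least one fair independent bit with anything independent of it is fair.

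There is no real obstacle here. The only point requiring care is the edge case $n \ge 1$, which is needed so that $(1-1)^n = 0$, or equivalently so that there is at least one free fair bit. In the application, $|I_1| - 1$ could be $0$ if $|I_1| = 1$, but then $\min\{|I_0|, |I_1|\}/d = \Theta(1)$ forces $d$ to be bounded; the lemma itself is only invoked for $n \ge 1$.
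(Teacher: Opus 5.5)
Your proof is correct and is essentially the paper's argument: the paper likewise reduces the claim to $\sum_{x \text{ even}} \binom{n}{x} = \sum_{x \text{ odd}} \binom{n}{x}$ and obtains this from the binomial theorem applied to $0 = (1-1)^n$. Your conditioning-on-the-last-bit alternative is also valid. Your side remark about the application is harmless, since even when $|I_1| = 1$ one has $x^{i^*} = b$, which is already fair.
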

 \noindent   As result, the distribution of $x^{i^*}$ satisfies 
    \begin{align*}
        \P(x^{i^*} = 0) = \P(b = 0)\cdot\P\left(\sum_{i \in I_1 \backslash i^*} x^i \equiv 0 \bmod 2\right) + \P(b = 1)\cdot\P\left(\sum_{i \in I_1 \backslash i^*} x^i \equiv 1 \bmod 2\right) = \frac{1}{2},
    \end{align*}
    which reveals that $x^{i^*} \sim \mathrm{Bern}(1/2)$ and hence $\cH(x^{i^*}) = \log(2)$.
    In conclusion, we obtain
    \begin{align}\label{eq:sum-xi-h}
        \sum_{i = 1}^d \cH(x^i) = \sum_{i \in [d]\backslash i^*} \cH(x^i) + \cH(i^*) = d\log(2) .
    \end{align}
    To upper bound $\cH(x)$, invoke the simple property for entropy function to get   
    \begin{align}\label{eq:h-upper}
        \cH(x) \leq \log(|\mathrm{supp}(x)|) \leq \log\left(2 \cdot 2^{|I_1| - 1}\right) = \log(2) |I_1|.
    \end{align}
    The lower bound can be obtained through
    \begin{align}\label{eq:h-lower}
        \cH(x) \geq \cH(\{x^{i}\}_{i \in I_1\backslash i^*}) = \log\left(2^{|I_1|-1}\right) = \log(2)(|I_1| - 1).
    \end{align}
    For any $i \in [d]$, when $x^{-i}$ is given, we can recover $x^i$ by first observing the value of $b$ from $x^j$ for any $j \in I_0$, then applying the formula
    $$x^i = b + \sum_{k \in I_1 \backslash i} x^k \bI\{i \in I_1\} \bmod 2.$$
    Thus, $x^i \mid x^{-i}$ is always a Dirac measure, which leads to
    \begin{align}\label{eq:sum-xi-x-i-h}
        \sum_{i=1}^d \cH(x^i \mid x^{-i}) = 0.
    \end{align}
    Combining Eqns.~\eqref{eq:sum-xi-h}, \eqref{eq:h-upper}, \eqref{eq:h-lower} and \eqref{eq:sum-xi-x-i-h} proves \Cref{eq:all-3-h}.
    
    Equipped with \Cref{eq:all-3-h}, we are ready to bound  $\cB(p_{\mathrm{ex}})$ and $\cC(p_{\mathrm{ex}})$. 
    It can easily seen that 
    \begin{align*}
        \cB(p_{\mathrm{ex}}) &= \cH(x) - \sum_{i=1}^d \cH(x^i \mid x^{-i}) \geq \log(2)\big((1-r)d - 1\big) =  \Omega(d),\\
        \cC(p_{\mathrm{ex}}) &= \sum_{i=1}^d \cH(x^i) - \cH(x) \geq \log(2)(d - |I_1|) = \log(2)rd = \Omega(d).
    \end{align*}
    For the reverse direction, we can prove the matching lower bound similarly, which leads to 
    \begin{align*}
        \cB(p_{\mathrm{ex}}) = \Theta(d), \quad \cC(p_{\mathrm{ex}}) = \Theta(d). 
    \end{align*}

\medskip
    \paragraph{Step 2: Show $\cD(p_{\mathrm{ex}}) = O(1)$.} Recall the definition of $\cD(\cdot)$ in Eqn.~\eqref{def:etc}: 
        \begin{align*}
        \cD(p_{\mathrm{ex}}) \defn \int_0^{\infty} \min(1, t) \cI(t) \d t
        \quad \text{with} \quad 
        \cI(t) \defn \sum_{i \neq j \in [d]} \mi(x_t^i; x_t^j \mid x_t^{-(i, j)}) \geq 0.
    \end{align*}
To upper bound $\cD(p_{\mathrm{ex}})$, let us write 
    \begin{align*}
        \cD(p_{\mathrm{ex}})
        &= \int_0^{\frac{1}{d}} t\, \cI(t) \d t + \int_{1/d}^{\log(d)} \min\{1, t\}\,\cI(t) \d t + \int_{\log(d)}^{\infty} \cI(t) \d t.
    \end{align*}
By direct calculations, one has 
    \begin{align*}
        \int_0^{\frac{1}{d}} t\, \cI(t) \d t &\leq \frac{1}{d} \int_0^{\frac{1}{d}} \cI(t) \d t \leq \frac{\cB(p_{\mathrm{ex}})}{d} = \Theta(1),\\
        \int_{\log(d)}^{\infty} \cI(t) \d t &\leq \frac{1}{d - 1} \int_{\log(d)}^{\infty} (e^{t} - 1) \cI(t) \d t \leq \frac{\cC(p_{\mathrm{ex}})}{d-1} = \Theta(1).
    \end{align*}
Therefore, it obeys 
       \begin{align*}
        \cD(p_{\mathrm{ex}})
        &= \int_{1/d}^{\log(d)} \min\{1, t\}\,\cI(t) \d t + O(1).
    \end{align*} 
  To prove $\cD(p_{\mathrm{ex}}) = O(1)$, it suffices to show that 
    \begin{align}\label{eq:int-min-it}
        \int_{1/d}^{\log(d)} \min\{1, t\}\,\cI(t) \d t  = O(1).
    \end{align}
   In view of the definition of $\cI(t)$, we can decompose it as 
    \begin{align*}
        \cI(t) &= \left(\sum_{i, j \in I_0, i\neq j} + \sum_{i, j \in I_1, i\neq j} + \sum_{i \in I_0, j \in I_1} + \sum_{i \in I_1, j \in I_0} \right)\mi(x^i_t; x^j_t \mid x_t^{-(i,j)})\\
        &\defn \cI_1(t) + \cI_2(t) + \cI_3(t) + \cI_4(t),
    \end{align*}
    and we shall bound these four terms separately.

    Before diving into the proofs, we make the observation that the mutual information can be computed via 
    \begin{align}\label{eq:mi-i-j-formula}
            \mi(x_t^i; x_t^j \mid x_t^{-(i,j)}) = \cH(x_t^i \mid x_t^{-(i,j)}) - \cH(x_t^i \mid x_t^{-i}). 
    \end{align}
    To further compute each entropy terms, 
    let us introduce two quantities below 
    \begin{subequations}
    \label{eqn:ravel}
    \begin{align}
        \cH_t^{1} &= \cH(e^{-t} \delta_0 + (1-e^{-t}) \delta_{\mask}) = \cH(e^{-t} \delta_1 + (1-e^{-t}) \delta_{\mask}) = te^{-t} - \log(1-e^{-t}) (1- e^{-t}), \\
        \cH_t^{2} &= \cH\left(\frac{1}{2}e^{-t} \delta_0 + \frac{1}{2}e^{-t} \delta_1 + (1-e^{-t}) \delta_{\mask}\right) = (t + \log(2))e^{-t} - \log(1-e^{-t}) (1- e^{-t}).
    \end{align}
    \end{subequations}
    We shall relate our quantities of interest to these terms below. 
        \paragraph{Case 1: $i, j \in I_0$, $i \neq j$.} 
        For any given $x_t^{-(i,j)}$, it always holds true that $$\P(x_t^i = \mask) = 1-e^{-t},$$ since the noising process is time-homogeneous and independent between coordinates. Recall the definition $m(x) = \{i \in [d]:x^i = \mask\}$. Define the event $\cE_{t,1}^{i, j} \in \cF_t^{-(i,j)}$, where $\cF_t^{-(i,j)}$ is the $\sigma$-algebra generated by $x_t^{-(i,j)}$, as follows:
        \begin{align*}
            \cE_{t,1}^{i, j} \defn \Bigg\{x_t^{-(i,j)} :\Bigg(\bigvee_{k \in I_0 \backslash \{i, j\}}\{k \notin m(x_t)\}\Bigg) \bigvee \left(\bigwedge_{\ell \in I_1} \{\ell \in m(x_t)\}\right) = 1\Bigg\},
        \end{align*}
        where $\wedge$ is the logical operator AND, and $\vee$ is the logical operator OR.
        By construction of $p_{\mathrm{ex}}$, it can be checked that 
        \begin{align*}
            \left(x_t^i \,\big|\, x_t^{-(i,j)} \in \cE_{t, 1}^{i, j}\right) &\sim e^{-t} \delta_{0/1} + (1-e^{-t}) \delta_{\mask};\\
            \left(x_t^i \,\big|\, x_t^{-(i,j)} \in (\cE_{t, 1}^{i, j})^c\right) &\sim \frac{1}{2}e^{-t} \delta_0 + \frac{1}{2}e^{-t} \delta_1 + (1-e^{-t}) \delta_{\mask},
        \end{align*}
        where $\delta_{0/1}$ represents either $\delta_0$ or $\delta_1$. Therefore, by the definition of the conditional entropy, we have
        \begin{align}\label{eq:i1-conditional-entropy-1}
            \cH(x_t^i \mid x_t^{-(i,j)}) = \cH_t^1 \cdot \P(\cE_{t,1}^{i, j}) + \cH_t^2 \cdot \big(1 - \P(\cE_{t, 1}^{i, j})\big).
        \end{align}
        Define the event $\cE_{t,1}^{i} \in \cF_t^{-i}$, where $\cF_t^{-i}$ is the $\sigma$-algebra generated by $x_t^{-i}$, as follows:
        \begin{align*}
            \cE_{t,1}^{i} \defn \Bigg\{x_t^{-i}: \Bigg(\bigvee_{k \in I_0 \backslash \{i\}}\{k \notin m(x_t)\}\Bigg) \bigvee \left(\bigwedge_{\ell \in I_1} \{\ell \in m(x_t)\}\right) = 1\Bigg\}.
        \end{align*}
        Then, it can be checked similarly that
        \begin{align*}
            \left(x_t^i \,\big|\, x_t^{-i} \in \cE_{t, 1}^{i}\right) &\sim e^{-t} \delta_{0/1} + (1-e^{-t}) \delta_{\mask};\\
            \left(x_t^i \,\big|\, x_t^{-i} \in (\cE_{t, 1}^{i})^c\right) &\sim \frac{1}{2}e^{-t} \delta_0 + \frac{1}{2}e^{-t} \delta_1 + (1-e^{-t}) \delta_{\mask},
        \end{align*}
        which leads to 
        \begin{align}\label{eq:i1-conditional-entropy-2}
             \cH(x_t^i \mid x_t^{-i}) = \cH_t^1 \cdot \P(\cE_{t, 1}^{i}) + \cH_{t}^2 \cdot \big(1 - \P(\cE_{t, 1}^{i})\big).
        \end{align}
        Plugging Eqns.~\eqref{eq:i1-conditional-entropy-1} and \eqref{eq:i1-conditional-entropy-2} into \Cref{eq:mi-i-j-formula} gives that for any $i, j \in I_0$, $i \neq j$,
        \begin{align*}
            \mi(x_t^i; x_t^j \mid x_t^{-(i,j)}) &= \cH(x_t^i \mid x_t^{-(i,j)}) - \cH(x_t^i \mid x_t^{-i})\\
            &= \cH_t^1 \cdot \P(\cE_{t,1}^{i, j}) + \cH_t^2 \cdot \big(1 - \P(\cE_{t, 1}^{i, j})\big) - \cH_t^1 \cdot \P(\cE_{t,1}^{i}) - \cH_t^2 \cdot \big(1 - \P(\cE_{t, 1}^{i})\big)\\
            &= (\cH_t^2 - \cH_t^1)\big(\P(\cE_{t, 1}^{i}) - \P(\cE_{t, 1}^{i, j})\big)\\
            &= \log(2) e^{-2t} \left(1-e^{-t}\right)^{|I_0| - 2}\left(1- e^{-|I_1|t}\right)\\
            &= O\left(e^{-2t} (1-e^{-t})^{rd/2}\right),
        \end{align*}
        whose value is independent of the indices $i$ and $j$.
        Since $|\{i, j \in I_0: i \neq j\}| = rd(rd - 1) = \Theta(d^2)$, quantity $\cI_1(t)$ satisfies 
        \begin{align}\label{eq:i1t}
            \cI_1(t) = \sum_{i,j \in I_0, i \neq j}\mi(x_t^i; x_t^j \mid x_t^{-(i,j)}) = O\left(d^2 e^{-2t} (1-e^{-t})^{rd/2}\right).
        \end{align}
        
        \paragraph{Case 2: $i, j \in I_1$, $i \neq j$.} 
        Following the proof strategy in Case 1, for any given $x_t^{-(i,j)}$, it holds that
        \begin{align*}
            x_t^i \mid x_t^{-(i,j)} \sim \frac{1}{2}e^{-t} \delta_0 + \frac{1}{2}e^{-t} \delta_1 + (1-e^{-t}) \delta_{\mask},
        \end{align*}
        which implies that
        \begin{align}\label{eq:i2-conditional-entropy-1}
            \cH(x_t^i \mid x_t^{-(i,j)}) = \cH_t^2.
        \end{align}
        Define the event $\cE_{t,2}^{i} \in \cF_t^{-i}$ as follows:
        \begin{align*}
            \cE_{t,2}^{i} \defn \Bigg\{x_t^{-i}: \Bigg(\bigvee_{k \in I_0}\{k \notin m(x_t)\}\Bigg) \bigwedge \Bigg(\bigwedge_{\ell \in I_1 \backslash \{i\}} \{\ell \in m(x_t)\}\Bigg) = 1\Bigg\},
        \end{align*}
        which induces 
        \begin{align*}
            \left(x_t^i \,\big|\, x_t^{-i} \in \cE_{t, 2}^{i}\right) &\sim e^{-t} \delta_{0/1} + (1-e^{-t}) \delta_{\mask};\\
            \left(x_t^i \,\big|\, x_t^{-i} \in (\cE_{t, 2}^{i})^c\right) &\sim \frac{1}{2}e^{-t} \delta_0 + \frac{1}{2}e^{-t} \delta_1 + (1-e^{-t}) \delta_{\mask},
        \end{align*}
        and the conditional entropy formula
        \begin{align}\label{eq:i2-conditional-entropy-2}
             \cH(x_t^i \mid x_t^{-i}) = \cH_t^1 \cdot \P(\cE_{t, 2}^{i}) + \cH_{t}^2 \cdot \big(1 - \P(\cE_{t, 2}^{i})\big).
        \end{align}
        Plugging Eqns.~\eqref{eq:i2-conditional-entropy-1} and \eqref{eq:i2-conditional-entropy-2} into \Cref{eq:mi-i-j-formula} gives that for any $i, j \in I_0$, $i \neq j$,
        \begin{align*}
            \mi(x_t^i; x_t^j \mid x_t^{-(i,j)}) &= \cH(x_t^i \mid x_t^{-(i,j)}) - \cH(x_t^i \mid x_t^{-i})\\
            &= \cH_t^2 - \cH_t^1 \cdot \P(\cE_{t, 2}^{i}) - \cH_t^2 \cdot \big(1 - \P(\cE_{t, 2}^{i})\big)\\
            &= \left(\cH_t^2 -\cH_t^1\right)\P(\cE_{t, 2}^{i})
            = O\left(e^{-(1-r)dt}\right),
        \end{align*}
        whose value is, again, independent of the indices $i$ and $j$. Since $|\{i, j \in I_1: i \neq j\}| = (1-r)d((1-r)d - 1) = \Theta(d^2)$, we reach
        \begin{align}\label{eq:i2t}
            \cI_2(t) = \sum_{i,j \in I_1, i \neq j}\mi(x_t^i; x_t^j \mid x_t^{-(i,j)}) = O(d^2 e^{-(1-r)dt}).
        \end{align}
        
        \paragraph{Case 3: $i \in I_0$, $j \in I_1$.} Define the function $\cH_{\mathrm{B}}(p) \defn -p\log(p) - (1-p)\log(1-p)$ to be the entropy of the distribution $\mathrm{Bern}(p)$.
        Following the proofs of the two cases above, let us define events
        \begin{align*}
            \cE_{t,3}^{i, j} &\defn \Bigg\{x_t^{-(i,j)}: \Bigg(\bigvee_{k \in I_0 \backslash \{i\}}\{k \neq m(x_t)\}\Bigg) \bigvee \Bigg(\bigwedge_{\ell \in I_1 \backslash \{j\}} \{\ell \in m(x_t)\}\Bigg) = 1\Bigg\};\\
            \cE_{t,3}^{i} &\defn \Bigg\{x_t^{-i}: \Bigg(\bigvee_{k \in I_0 \backslash \{i\}}\{k \neq m(x_t)\}\Bigg) \bigvee \left(\bigwedge_{\ell \in I_1}\{\ell \in m(x_t)\}\right) = 1\Bigg\}.
        \end{align*}
        Similar calculations yield
        \begin{align*}
            \cH(x_t^i \mid x_t^{-(i,j)}) = \cH_t^1 \cdot \P(\cE_{t,3}^{i, j}) + \cH_t^2 \cdot \big(1 - \P(\cE_{t, 3}^{i, j})\big);\\
            \cH(x_t^i \mid x_t^{-i}) = \cH_t^1 \cdot \P(\cE_{t,3}^{i}) + \cH_t^2 \cdot \big(1 - \P(\cE_{t, 3}^{i})\big).
        \end{align*}
        Therefore, we obtain
        \begin{align*}
            \mi(x_t^i; x_t^j \mid x_t^{-(i,j)}) 
            &= (\cH_t^2 - \cH_t^1)\big(\P(\cE_{t, 3}^{i}) - \P(\cE_{t, 3}^{i, j})\big)\\
            &= \log(2) e^{-|I_1|t} \left(1-e^{-t}\right)^{|I_0|} = O\left(e^{-\cH_{\mathrm{B}}(r)d}\right),
        \end{align*}
        where the last equality is due to the fact that $e^{-|I_1|t} \left(1-e^{-t}\right)^{|I_0|}$ is maximized at $t = -\log(1 - r)$.
        Finally, with $|\{i \in I_0, j \in I_1\}| = r(1-r)d^2$, we can bound 
        \begin{align}\label{eq:i3t}
            \cI_3(t) = \sum_{i \in I_0, j \in I_1} I(x_t^i; x_t^j \mid x_t^{-(i,j)}) = O\left(d^2 e^{-\cH_{\mathrm{B}}(r)d}\right).
        \end{align}
        
        \paragraph{Case 4: $i \in I_1$, $j \in I_0$.} Notice that $\cI_3(t)$ and $\cI_4(t)$ are invariant under swapping $i$ and $j$. We can show in the same way as above that
        \begin{align}\label{eq:i4t}
            \cI_4(t) = O\left(d^2 e^{-\cH_{\mathrm{B}}(r)d}\right).
        \end{align}
    
  \paragraph{Putting everything together.}  Combining Eqns.~\eqref{eq:i1t}, \eqref{eq:i2t}, \eqref{eq:i3t} and \eqref{eq:i4t}, we arrive at
    \begin{align}\label{eq:it}
        \cI(t) \lesssim d^2 \left(e^{-2t} (1-e^{-t})^{rd/2} + e^{-(1-r)dt} + e^{-\cH_{\mathrm{B}}(r)d}\right).
    \end{align}
    
    We are now in a position to prove \Cref{eq:int-min-it}. Let us begin with the integration over the time interval $t \in [1/d, 1]$. 
    Direct calculation yields that $e^{-2t}(1-e^{-t})^{rd/2}$ is maximized at $t^* = \log(1 + \frac{rd}{4}) > 1$, which reveals that
    \begin{align}\label{eq:t1-1/d-1}
        d^2 e^{-2t} (1-e^{-t})^{rd/2} \leq d^2 e^{-2t^*} = d^2\left(1 + \frac{rd}{4}\right)^{-2} = O(1).
    \end{align}
    For the term from $\cI_2(t)$, we obtain
    \begin{align}\label{eq:t2-1/d-1}
        \int_{\frac{1}{d}}^1 t\cdot d^2 e^{-(1-r)dt} \d t \overset{\mathrm{(a)}}{=} \int_1^d s e^{-(1-r)s} \d s \leq \int_0^\infty s e^{-(1-r)s} \d s = \frac{1}{(1-r)^2}= O(1),
    \end{align}
    where in (a), we use the change of variable formula with $s = dt$.
    Similarly, we can show that
    \begin{align}\label{eq:t34-1/d-1}
        \int_{\frac{1}{d}}^1 t\cdot d^2 e^{-\cH_{\mathrm{B}}(r)d} \d t\leq \int_{0}^1 t\cdot d^2 e^{-\cH_{\mathrm{B}}(r)d} \d t = \frac{1}{2} d^2 e^{-\cH_{\mathrm{B}}(r)d} = O(1),
    \end{align}
    where the condition $\min\{r, 1-r\} = \Theta(1)$  ensures $\cH_{\mathrm{B}}(r) = \Theta(1)$.
    Taking collectively Eqns.~\eqref{eq:it}, \eqref{eq:t1-1/d-1}, \eqref{eq:t2-1/d-1} and \eqref{eq:t34-1/d-1}, we arrive at
    \begin{align}\label{eq:int-min-it-1/d-1}
        \int_{\frac{1}{d}}^1 \min\{1, t\} \, \cI(t) \d t = \int_{\frac{1}{d}}^1 t \, \cI(t) \d t = O(1).
    \end{align}
    Let us move on to the integration over time interval $t \in [1, \log(d)]$. The integral computation yields that
    \begin{align}\label{eq:i1-int-1-logd}
        \int_1^{\log(d)} d^2 e^{-2t} (1-e^{-t})^{rd/2}\d t \overset{(a)}{=} \int_1^{d/e} s\left(1 - \frac{s}{d}\right)^{rd/2} \d s \overset{(b)}{\leq} \int_0^\infty s e^{-rs/2} \d s =  O(1),
    \end{align}
    where in (a), we use the change of variable formula with $s = de^{-t}$, and in (b), we use the inequality $(1-x)^{1/x} \leq e^{-1}$ for $x \in (0, 1]$.
    For the remaining terms, we have
    \begin{align}
        \int_1^{\log(d)} d^2 e^{-(1-r)dt}\d t &\leq \int_0^{\log(d)} d^2 e^{-(1-r)d}\d t = d^2 \log(d) e^{-(1-r)d} = O(1);\label{eq:i2-int-1-logd}\\
        \int_1^{\log(d)} d^2 e^{-\cH_{\mathrm{B}}(r)d}\d t &\leq d^2 \log(d) e^{-\cH_{\mathrm{B}}(r)d} = O(1),\label{eq:i34-int-1-logd}
    \end{align}
    where the condition $\min\{r, 1-r\} = \Theta(1)$  ensures $\cH_{\mathrm{B}}(r) = \Theta(1)$.
    Now, combining Eqns.~\eqref{eq:it}, \eqref{eq:i1-int-1-logd}, \eqref{eq:i2-int-1-logd} and \eqref{eq:i34-int-1-logd} yields 
    \begin{align}\label{eq:int-min-1-logd}
        \int_{1}^{\log(d)} \min\{1, t\} \, \cI(t) \d t = \int_{1}^{\log(d)} \cI(t) \d t = O(1).
    \end{align}
    
    Finally, equipped with Eqns.~\eqref{eq:int-min-it-1/d-1} and \eqref{eq:int-min-1-logd}, we conclude
    \begin{align*}
        \int_{\frac{1}{d}}^{\log(d)} \min\{1, t\} \, \cI(t) \d t = \int_{\frac{1}{d}}^1 \min\{1, t\} \, \cI(t) \d t + \int_{1}^{\log(d)} \min\{1, t\} \,\cI(t) \d t = O(1),
    \end{align*}
    which proves $\cD(p_{\mathrm{ex}}) = O(1)$.
    \qed

%% file: appendix/app_technical.tex
\section{Technical preparations}

\subsection{Score functions}

Below, we present an equivalent formulation of the score functions.

\begin{proposition}\label{prop:score_function}
    Let \(q_0\) be an initial distribution on \(\cX_0\). Let \(x, y \in \cX\) be such that \(Q(y, x) > 0\). Then,
    \begin{enumerate}
        \item for the uniform noising process,
        \begin{equation}
        \label{eq:unif_score}
            s_t(y, x) = \frac{\bE_{x_0 \sim q_0} \alpha_t^{\ham(y, x_0)}}{\bE_{x_0 \sim q_0} \alpha_t^{\ham{(x, x_0)}}},
        \end{equation}
        where \(\alpha_t \defn \frac{1 - e^{-t}}{1 + (S - 1)e^{-t}}\).
        \item for the masking noising process,
        \begin{equation}
        \label{eq:mask_score}
            s_t(y, x) = \frac{1}{e^t - 1} \frac{q_0(y)}{q_0(x)},
        \end{equation}
        where for \(x \in \cX \setminus \cX_0\), \(q_0(x)\) is the marginal probability of the \emph{unmasked} coordinates of \(x\) under \(q_0\).
    \end{enumerate}
\end{proposition}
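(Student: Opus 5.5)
The plan is to compute the forward marginal $q_t$ explicitly in both cases, using that the forward CTMC factorizes across coordinates. Because $Q$ only connects states at Hamming distance one and acts on each coordinate through the same $Q^{\mathrm{tok}}$, the transition kernel is a product: $q_{t\mid 0}(x\mid x_0)=\prod_{i=1}^d P_t(x_0^i,x^i)$ with $P_t=\exp(tQ^{\mathrm{tok}})$. I will compute $P_t$ in closed form for each choice of $Q^{\mathrm{tok}}$, write $q_t(x)=\sum_{x_0}q_0(x_0)q_{t\mid 0}(x\mid x_0)$, and take the ratio $q_t(y)/q_t(x)$.

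\textbf{Uniform case.} Here $Q^{\mathrm{tok}}=\frac1S(\mathbf{1}\mathbf{1}^\top - S I_S)$, since its diagonal entries are $-(S-1)/S$. Hence $\mathbf{1}\mathbf{1}^\top/S$ is the projection onto constants and $Q^{\mathrm{tok}}=\mathbf{1}\mathbf{1}^\top/S - I$ acts as $-1$ on its complement. This gives
\[
P_t(a,a)=\tfrac{1+(S-1)e^{-t}}{S}, \qquad P_t(a,b)=\tfrac{1-e^{-t}}{S} \quad (a\neq b).
\]
Therefore
\[
q_{t\mid0}(x\mid x_0)=P_t(a,a)^d\,\alpha_t^{\ham(x,x_0)},
\]
where $\alpha_t$ is the ratio of the off-diagonal to the diagonal entry, which matches the stated formula. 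The common factor $P_t(a,a)^d$ cancels in the ratio $q_t(y)/q_t(x)$, and this yields \eqref{eq:unif_score}. The condition $Q(y,x)>0$ is not really needed here beyond guaranteeing $x\neq y$.

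\textbf{Masking case.} For $a\in[S]$ we have $P_t(a,a)=e^{-t}$, $P_t(a,\mask)=1-e^{-t}$, and the mask state is absorbing. For $x$ with unmasked set $U(x)$, we get
\[
q_{t\mid0}(x\mid x_0)=e^{-t|U(x)|}(1-e^{-t})^{d-|U(x)|}\,\bI\{x_0^{U(x)}=x^{U(x)}\},
\]
so
\[
q_t(x)=e^{-t|U(x)|}(1-e^{-t})^{m(x)}\,q_0(x),
\]
where $q_0(x)$ denotes the marginal of $q_0$ on the coordinates $U(x)$. The condition $Q(y,x)>0$ forces $x=y\odot_i\mask$ with $y^i\neq\mask$. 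Thus $y$ has exactly one more unmasked coordinate than $x$, i.e.\ $|U(y)|=|U(x)|+1$ and $m(y)=m(x)-1$. The ratio of the prefactors is then $e^{-t}/(1-e^{-t})=1/(e^t-1)$, and this gives \eqref{eq:mask_score}.

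There is no serious obstacle; the main care points are two. First, the product-form kernel must be justified: the generator is a sum of single-coordinate generators that commute, so the exponential factorizes. Second, in the masking case the direction of the pair $(y,x)$ must be tracked correctly, since $Q(y,x)>0$ means $y\to x$ masks a coordinate. That direction is what makes the ratio $e^{-t}/(1-e^{-t})$ rather than its inverse.
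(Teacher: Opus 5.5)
Your proposal is correct and follows essentially the same route as the paper: you factor the forward kernel across coordinates, write $q_{t\mid 0}$ in closed form (a common factor times $\alpha_t^{\ham(x,x_0)}$ in the uniform case, and $e^{-t|U(x)|}(1-e^{-t})^{d-|U(x)|}$ times a consistency indicator in the masking case), and take the ratio, using $Q(y,x)>0$ to conclude that $x$ has exactly one more masked coordinate than $y$. Your explicit computation of $\exp(tQ^{\mathrm{tok}})$ via the projection $\mathbf{1}\mathbf{1}^\top/S$ fills in a step that the paper instead obtains by citing prior work.
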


\noindent
\textbf{Proof of Proposition~\ref{prop:score_function}.} \enspace
By the definition of the score function, one can write
\begin{align*}
    s_t(y, x) = \frac{q_t(y)}{q_t(x)} = \frac{\sum_{x_0} q_{t \mid 0}(y \mid x_0) q_0(x_0)}{\sum_{x_0} q_{t\mid 0}(x \mid x_0) q_0(x_0)}.
\end{align*}
For the uniform noising process, one can solve the Kolmogorov forward equation for every dimension. As a result, the transition can be written as  
\begin{align*}
    q_{t \mid 0}(y \mid x_0) = \left(\frac{1 - e^{-t}}{S}\right)^{\ham(y, x_0)}\left(\frac{1+(S-1)e^{-t}}{S}\right)^{d - \ham(y, x_0)} = \left(\frac{1+(S-1)e^{-t}}{S}\right)^d \alpha_t^{\ham(y, x_0)},
\end{align*}
which proves~\Cref{eq:unif_score}. More details of this relation can be found in (e.g.,~\cite{zhang2024convergence}, Proposition 1).

For the masking noising process, for notational convenience, given any \(x \in ([S] \cup \{\mask\})^d\), define 
\begin{align}
\label{eqn:mx-def}
    m(x) \defn \{i \in [d]: x^i = \mask\}.
\end{align}
In view of this piece of notation, as \(\Pr(x_t^i = \mask) = e^{-t}\), and coordinates evolve independently, one can write 
\begin{align*}
    q_{t \mid 0}(y \mid x_0) = (1 - e^{-t})^{\abs{m(y)}} e^{-t(d - \abs{m(y)})} \bI\{\,\text{for all }i \in [d], y^i \in \{x_0^i, \mask\}\,\}.
\end{align*}
As \(Q(y, x) > 0\), it must be that \(\ham(x, y) = 1\), and for \(i\), such that \(x^i \neq y^i\), \(x^i = \mask\) and \(y^i \neq \mask\).
This implies that \(\abs{m(x)} = \abs{m(y)} + 1\), and we can write
\begin{align*}
    \frac{\sum_{x_0} q_{t \mid 0}(y \mid x_0) q_0(x_0)}{\sum_{x_0} q_{t\mid 0}(x \mid x_0) q_0(x_0)} = \frac{e^{-t}}{1 - e^{-t}} \frac{\sum_{x_0} q_0(x_0) \bI\{\,\text{for all }i \in [d], y^i \in \{x_0^i, \mask\}\,\}}{\sum_{x_0} q_0(x_0) \bI\{\,\text{for all }i \in [d], x^i \in \{x_0^i, \mask\}\,\}} = \frac{1}{e^{t} - 1} \frac{q_0(y)}{q_0(x)}.
\end{align*}
\qed

\subsection{Technical lemmas}
\begin{lemma}[Chain rule of KL divergence]
\label{lem:chain_kl}
    For \(N > 0\), let \(a_{0:N}, b_{0:N}\) be the joint distributions of two Markov processes. Then,
    \begin{equation*}
        \KL (a_{0:N} \| b_{0:N}) = \KL (a_0 \| b_0) + \sum_{k=0}^{N-1} \bE_{x \sim a_k} \KL \left(a_{k+1 \mid k}(\cdot \mid x) \| b_{k+1 \mid k}(\cdot \mid x)\right).
    \end{equation*}
\end{lemma}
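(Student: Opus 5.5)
The plan is to prove the chain rule by factorizing both joint laws with the Markov property and then taking expectations of the log-likelihood ratio term by term. Since $a_{0:N}$ and $b_{0:N}$ are laws of Markov processes on a finite state space, they factor as
\begin{align*}
a_{0:N}(x_0,\ldots,x_N) &= a_0(x_0)\prod_{k=0}^{N-1} a_{k+1\mid k}(x_{k+1}\mid x_k),\\
b_{0:N}(x_0,\ldots,x_N) &= b_0(x_0)\prod_{k=0}^{N-1} b_{k+1\mid k}(x_{k+1}\mid x_k).
\end{align*}
We may assume $\KL(a_{0:N}\|b_{0:N})<\infty$, i.e.\ $a_{0:N}\ll b_{0:N}$; the infinite case is handled at the end. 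Under this assumption, on the support of $a_{0:N}$ every factor of $b$ is positive, so the logarithm of the ratio splits into a sum:
\begin{equation*}
\log\frac{a_{0:N}}{b_{0:N}} = \log\frac{a_0(x_0)}{b_0(x_0)} + \sum_{k=0}^{N-1}\log\frac{a_{k+1\mid k}(x_{k+1}\mid x_k)}{b_{k+1\mid k}(x_{k+1}\mid x_k)}.
\end{equation*}

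Next, we take the expectation under $a_{0:N}$. The first term only involves $x_0$, whose law under $a_{0:N}$ is $a_0$, so it gives $\KL(a_0\|b_0)$. For the $k$-th term, the function depends only on $(x_k,x_{k+1})$, and the joint law of this pair under $a_{0:N}$ is $a_k(x_k)\,a_{k+1\mid k}(x_{k+1}\mid x_k)$. Applying the tower property, conditioning on $x_k$, turns this term into $\bE_{x\sim a_k}\KL\bigl(a_{k+1\mid k}(\cdot\mid x)\,\|\,b_{k+1\mid k}(\cdot\mid x)\bigr)$. Summing over $k$ gives the claimed identity.

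The only delicate step is the bookkeeping for infinite values and null sets. Each summand on the right-hand side is a nonnegative KL divergence, so the right side is a sum of terms in $[0,\infty]$ and is always well defined. If absolute continuity fails on a path of positive $a$-probability, then some factor satisfies $a_0(x_0)>0=b_0(x_0)$, or $a_{k+1\mid k}>0=b_{k+1\mid k}$ at a state $x_k$ with $a_k(x_k)>0$. In either case the corresponding term on the right is $+\infty$, matching the left side. Conditionals at states with $a_k(x)=0$ carry zero weight, so the result does not depend on how they are defined.

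Because everything lives on a finite space, no measure-theoretic issues arise beyond this check.
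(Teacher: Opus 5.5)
Your proposal is correct and follows the same route as the paper: factorize both path laws using the Markov property, split the log-likelihood ratio into a sum, and take expectations term by term by conditioning on $x_k$. Your extra bookkeeping for the case where $a_{0:N}$ is not absolutely continuous with respect to $b_{0:N}$, and for conditionals at states of zero $a_k$-mass, fills in details that the paper's proof leaves implicit.
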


\begin{proof}
Invoking the definition of KL divergence with some direct calculations yields 
    \begin{align*}
        \KL (a_{0:N} \| b_{0:N}) &= 
        \bE_{x_{0:N} \sim a_{0:N}} \log \frac{a_{0:N}(x_{0:N})}{b_{0:N}(x_{0:N})} \\
        &= 
        \bE_{x_{0:N} \sim a_{0:N}} \log \left(\frac{a_{0}(x_0)}{b_{0}(x_0)} \prod_{k=0}^{N - 1} \frac{a_{k+1 \mid k}(x_{k+1} \mid x_k)}{b_{k+1 \mid k}(x_{k+1} \mid x_k)}\right) \\
        &= \bE_{x_0 \sim a_0} \log \frac{a_0(x_0)}{b_0(x_0)} + \sum_{k=0}^{N - 1} \bE_{x_k \sim a_k} \bE_{x_{k+1} \sim a_{k + 1 \mid k} (\cdot \mid x_k)} \log \frac{a_{k+1 \mid k}(x_{k+1} \mid x_k)}{b_{k+1 \mid k}(x_{k+1} \mid x_k)} \\
        &= \KL(a_0 \| b_0) + \sum_{k=0}^{N - 1}\bE_{x_k \sim a_k} \KL\left(a_{k+1 \mid k}(\cdot \mid x_k) \| b_{k+1 \mid k}(\cdot \mid x_k)\right).
    \end{align*}
\end{proof}

\begin{lemma}
\label{lem:kl_der_ub}
Let \((q_t)\) and \((p_t)\) be the marginals of CTMCs with rate matrices \((Q_t)\) and \((\wh Q_t)\), respectively, and \(\back q_t \equiv q_{T - t}\) be the marginals of the reverse process.  Then, for any \(t > t_k\) and for any \(z\),
\begin{align*}
    &\frac{\partial}{\partial t}  \KL\left(\back q_{t \mid t_k}( \cdot \mid z) \| p_{t \mid t_k}(\cdot \mid z)\right) \leq \bE_{x_{t} \sim \back q_{t \mid t_k}(\cdot \mid z)} \sum_{y \neq x_t}\left[ \wh Q_t(x_t, y) - \back Q_t(x_t, y) + \back Q_t(x_t, y) \log \frac{\back Q_t(x_t, y)}{\wh Q_t(x_t, y)}\right].
\end{align*}
\end{lemma}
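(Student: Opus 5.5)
We fix the starting point $z$ at time $t_k$ and write $a_t \defn \back q_{t\mid t_k}(\cdot\mid z)$ and $b_t \defn p_{t\mid t_k}(\cdot\mid z)$. For $t>t_k$ both are marginals of CTMCs started from $\delta_z$. The first evolves under the reverse rates $\back Q_t$ and the second under $\wh Q_t$. Each therefore obeys a Kolmogorov forward equation, $\partial_t a_t = \back Q_t^\top a_t$ and $\partial_t b_t = \wh Q_t^\top b_t$. The plan is to differentiate $\KL(a_t\|b_t)=\sum_x a_t(x)\log\frac{a_t(x)}{b_t(x)}$ directly and then apply an elementary inequality to the cross terms.

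\textbf{Step 1: differentiate.} Differentiation gives
\[
\partial_t \KL(a_t\|b_t) = \sum_x \partial_t a_t(x)\log\frac{a_t(x)}{b_t(x)} + \sum_x \partial_t a_t(x) - \sum_x \frac{a_t(x)}{b_t(x)}\partial_t b_t(x).
\]
The middle sum vanishes because mass is conserved. We substitute the forward equations and regroup over edges $x\to y$, using $Q(x,x)=-\sum_{y\neq x}Q(x,y)$. The first sum becomes $\sum_x a_t(x)\sum_{y\neq x}\back Q_t(x,y)\bigl[\log\frac{a_t(y)}{b_t(y)}-\log\frac{a_t(x)}{b_t(x)}\bigr]$. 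The last becomes $-\sum_x b_t(x)\sum_{y\neq x}\wh Q_t(x,y)\bigl[\frac{a_t(y)}{b_t(y)}-\frac{a_t(x)}{b_t(x)}\bigr]$.

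\textbf{Step 2: bound the cross terms.} Set $r(x)=a_t(x)/b_t(x)$. The expression then becomes
\[
\sum_x a_t(x)\sum_{y\neq x}\Bigl[\back Q_t(x,y)\log\frac{r(y)}{r(x)} - \wh Q_t(x,y)\frac{r(y)}{r(x)} + \wh Q_t(x,y)\Bigr].
\]
For fixed $x,y$ with $u=r(y)/r(x)>0$, $A=\back Q_t(x,y)$ and $B=\wh Q_t(x,y)>0$, we maximize $A\log u - Bu$ over $u$. The maximum is attained at $u=A/B$ and equals $A\log\frac{A}{B}-A$. Each bracket is therefore at most $B - A + A\log\frac{A}{B}$. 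Summing with weight $a_t(x)$ gives exactly the expectation over $x_t\sim a_t$ claimed in the statement.

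\textbf{Step 3: degenerate cases and regularity.} If $b_t(y)=0$ where $a_t(y)>0$, or if $\wh Q_t(x,y)=0$ while $\back Q_t(x,y)>0$, the right-hand side is $+\infty$ and the inequality holds trivially. Points with $a_t(x)=0$ contribute nothing to either side. Differentiability of $t\mapsto a_t,b_t$ holds because the rates are piecewise constant or smooth on $(t_k,t_{k+1})$ and the state space is finite; at a breakpoint we interpret the derivative one-sidedly. I expect this bookkeeping, rather than Step 2, to be the main obstacle: it requires justifying the exchange of sum and derivative and controlling the sign conventions. Since $\cX$ is finite, it reduces to routine checks.
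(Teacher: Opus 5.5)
Your proof is correct and follows the paper's argument: differentiate $\KL$ via the two Kolmogorov forward equations, regroup over edges $x\to y$ using the zero row sums, and bound each cross term pointwise. Your maximization of $A\log u - Bu$ at $u=A/B$ is the same inequality as the paper's step, which substitutes $\frac{\back q_{t\mid t_k}(y)}{\back q_{t\mid t_k}(x)} = \back Q_t(x,y)/C_{xy}$ and then applies $\log z\le z-1$; your version is slightly cleaner in that it covers $\back Q_t(x,y)=0$ (where the paper's $C_{xy}$ would vanish) and the other degenerate cases explicitly.
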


\begin{proof} 
Let us omit the conditioning on \(z\) for the notation brevity.
By direct calculations, one can write 
    \begin{align*}
        A &\coloneqq \frac{\partial}{\partial t}  \KL\left(\back q_{t \mid t_k} \| p_{t \mid t_k}\right) = \sum_{x \in \cX} \left(\frac{\partial }{\partial t} \back q_{t\mid t_k}(x)\right) \log \frac{\back q_{t\mid t_k}(x)}{p_{t\mid t_k}(x)} - \sum_{x \in \cX}  \back q_{t\mid t_k}(x) \frac{
        \frac{\partial}{\partial t} p_{t\mid t_k}(x)}{p_{t\mid t_k}(x)}.
    \end{align*}
    Recall the Kolmogorov equation:
    \begin{equation*}
        \frac{\partial }{\partial t} \back q_{t\mid t_k}(x) = \sum_{y \in \cX} \back Q_t(y, x) \back q_{t\mid t_k}(y) \quad \text{and} \quad \frac{\partial }{\partial t} p_{t\mid t_k}(x) = \sum_{y \in \cX} \wh Q_t(y, x) p_{t\mid t_k}(y) 
    \end{equation*}
    Putting the above together, we obtain (relabeling \(x\) and \(y\))
    \begin{equation}
    \begin{aligned}
        A &= \bE_{x \sim \back q_{t \mid t_k}} \sum_{y \in \cX}  \left[\back Q_t(x, y) \log \left(\frac{\back q_{t\mid t_k}(y)}{p_{t\mid t_k}(y)}\right) - \wh Q_t(x, y)\frac{\back q_{t\mid t_k}(y)}{\back q_{t\mid t_k}(x)} \cdot \frac{p_{t\mid t_k}(x)}{p_{t\mid t_k}(y)} \right] \\
        &= \bE_{x \sim \back q_{t \mid t_k}} \sum_{y \neq x}  \left[\back Q_t(x, y) \log \left(\frac{\back q_{t\mid t_k}(y)}{p_{t\mid t_k}(y)}\right) - \back Q_t(x, y) \log \left(\frac{\back q_{t\mid t_k}(x)}{p_{t\mid t_k}(x)}\right) - \wh Q_t(x, y)\frac{\back q_{t\mid t_k}(y)}{\back q_{t\mid t_k}(x)} \cdot \frac{p_{t\mid t_k}(x)}{p_{t\mid t_k}(y)} \right] - \wh Q_t(x,x) \\
        &= \bE_{x \sim \back q_{t \mid t_k}} \sum_{y \neq x}  \left[\back Q_t(x, y) \log \left(\frac{\back q_{t\mid t_k}(y)}{\back q_{t\mid t_k}(x)}\cdot\frac{p_{t\mid t_k}(x)}{p_{t\mid t_k}(y)}\right) - \wh Q_t(x, y)\frac{\back q_{t\mid t_k}(y)}{\back q_{t\mid t_k}(x)} \cdot \frac{p_{t\mid t_k}(x)}{p_{t\mid t_k}(y)} \right] - \wh Q_t(x,x) \\
        &= \bE_{x \sim \back q_{t \mid t_k}} \sum_{y \neq x}  \left[\back Q_t(x, y) \log \left(\frac{\back q_{t\mid t_k}(y)}{\back q_{t\mid t_k}(x)}\cdot\frac{p_{t\mid t_k}(x)}{p_{t\mid t_k}(y)}\right) - \wh Q_t(x, y)\frac{\back q_{t\mid t_k}(y)}{\back q_{t\mid t_k}(x)} \cdot \frac{p_{t\mid t_k}(x)}{p_{t\mid t_k}(y)}  + \wh Q_t(x, y)\right],
    \end{aligned}
    \end{equation}
    where we invoke the the property that $\back Q_t(x, x) = -\sum_{y \neq x} \back Q_t(x, y)$
    and $\wh Q_t(x, x) = -\sum_{y \neq x} \wh Q_t(x, y).$
    Then, letting \(C_{xy}\) be such that (recall that \(z\) is fixed)
    \begin{equation}
        \frac{\back q_{t\mid t_k}(y \mid z)}{\back q_{t\mid t_k}(x \mid z)} = \frac{\back Q_t(x, y)}{C_{xy}},
    \end{equation}
    it satisfies that 
    \begin{equation}\label{eq:tight-decomp}
        \begin{aligned}
                A = \bE_{x \sim \back q_{t \mid t_k}} \sum_{y \neq x}  \Biggl[\wh Q_t(x, y) + \back Q_t(x, y) \log \left(\frac{\back Q_t(x, y)}{\wh Q_t(x, y)}\right) &+ \back Q_t(x, y) \log \left(\frac{\wh Q_t(x, y)}{C_{xy}}\cdot\frac{p_{t\mid t_k}(x)}{p_{t\mid t_k}(y)}\right)\\ &- \wh Q_t(x, y)\frac{\back Q_t(x, y)}{C_{xy}} \cdot \frac{p_{t\mid t_k}(x)}{p_{t\mid t_k}(y)} \Biggr]  
        \end{aligned}
    \end{equation}
    Finally, since \(\log z \leq z - 1\),
    \begin{equation*}
        \begin{aligned}
                A &\leq \bE_{x \sim \back q_{t \mid t_k}} \sum_{y \neq x}  \Biggl[\wh Q_t(x, y) + \back Q_t(x, y) \log \left(\frac{\back Q_t(x, y)}{\wh Q_t(x, y)}\right) - \back Q_t(x, y) \\
                & \qquad \qquad \qquad \qquad \qquad\ + \back Q_t(x, y) \frac{\wh Q_t(x, y)}{C_{xy}}\cdot\frac{p_{t\mid t_k}(x)}{p_{t\mid t_k}(y)} - \wh Q_t(x, y)\frac{\back Q_t(x, y)}{C_{xy}} \cdot \frac{p_{t\mid t_k}(x)}{p_{t\mid t_k}(y)} \Biggr] \\
                &= \bE_{x \sim \back q_{t \mid t_k}} \sum_{y \neq x} \left[\wh Q_t(x, y) - \back Q_t(x, y) + \back Q_t(x, y) \log \left(\frac{\back Q_t(x, y)}{\wh Q_t(x, y)}\right)\right].
        \end{aligned}
    \end{equation*}
\end{proof}
\begin{lemma}[It\^{o}'s Lemma for Poisson jump process]\label{lem:ito}
    For the Poisson jump process $\{x_t\}_{t \ge 0}$ with generator $\{L_t\}_{t \ge 0}$ and rate matrix $\{R_t\}_{t \ge 0}$. It\^{o}'s Lemma formula can be written as
    \begin{align}
    f(t, x_t) = f(0, x_0) + \int_{0}^t \big[\partial_s f(s, x_{s^-}) + \left(L_s f\right)(s, x_{s^-})\big] \d t + M_t,
\end{align}
where $x_{s-} = \lim_{u \to s^-} x_{s}$, which exists for almost everywhere $s \in [0, t)$ under the Lebesgue measure. The compensation process $\{M_u\}_{u \in [\ell, t]}$ is defined as
\begin{align*}
    M_u = \sum_{y_s : y_s \neq x_s}\int_\ell^u \big(f(s, y_s) - f(s, x_s)\big)\big(\d N_s^{x_s, y_s} - \lambda_s^{x_s, y_s} \d s\big),
\end{align*}
where $N_s^{x, y}$ is the counting process of jumps from $x$ to $y$ up to time $t$ and $\lambda_s^{x, y}$ is the $\cF_{s^-}$-intensity of $N_s^{x, y}$, i.e., $\lambda_s^{x, y} = \bI\{x_{s^-} = y\} R_t(x, y)$.
\end{lemma}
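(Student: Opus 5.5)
The approach is a pathwise decomposition of $f(t,x_t)$ into a drift part and a pure-jump part, followed by compensation of the jump part with the intensities $\lambda_s^{x,y}$. Throughout, I read the indicator in $\lambda_s^{x,y}$ as $\bI\{x_{s^-}=x\}R_s(x,y)$: the intensity of a jump from $x$ to $y$ should be active only when the chain currently sits at $x$. I take the generator to be $(L_s f)(s,x)=\sum_{y\neq x}R_s(x,y)\big(f(s,y)-f(s,x)\big)$, and I assume $f$ is $C^1$ in time for each fixed state. I also assume that on the interval of interest the rates are bounded, say $\sup_{s\le t}\max_x |R_s(x,x)|<\infty$; this holds on $[0,t]$ for $t<T$ in the settings of the paper. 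Since the state space is finite, this bound implies that almost surely the path has finitely many jump times $0<\sigma_1<\dots<\sigma_K\le t$. In particular $x_{s^-}=x_s$ except at these finitely many times, which is why the integrand may be evaluated at $x_{s^-}$ for Lebesgue-a.e.\ $s$.

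\textbf{Step 1 (pathwise identity).} Set $\sigma_0=0$ and $\sigma_{K+1}=t$. On each interval $[\sigma_j,\sigma_{j+1})$ the state is constant, so the fundamental theorem of calculus gives $f(\sigma_{j+1}^-,x_{\sigma_j})-f(\sigma_j,x_{\sigma_j})=\int_{\sigma_j}^{\sigma_{j+1}}\partial_s f(s,x_{s^-})\,\d s$. At each jump time the increment is $f(\sigma_j,x_{\sigma_j})-f(\sigma_j,x_{\sigma_j^-})$. Telescoping over the intervals yields
\[
f(t,x_t)-f(0,x_0)=\int_0^t\partial_s f(s,x_{s^-})\,\d s+\sum_{y}\int_0^t\big(f(s,y)-f(s,x_{s^-})\big)\,\d N_s^{x_{s^-},y}.
\]
In the last sum each counting measure charges exactly the jump times at which the chain moves from $x_{s^-}$ to $y$.

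\textbf{Step 2 (compensation).} Add and subtract $\sum_{y}\int_0^t\big(f(s,y)-f(s,x_{s^-})\big)\lambda_s^{x_{s^-},y}\,\d s$. By the definition of $\lambda$ and of $L_s$, this compensator equals exactly $\int_0^t (L_sf)(s,x_{s^-})\,\d s$. What remains is the stochastic integral $M_t$ of the predictable, left-continuous integrand $f(s,y)-f(s,x_{s^-})$ against the compensated counting processes $N^{x,y}-\int\lambda^{x,y}\,\d s$. The starting time $\ell$ in the definition of $M_u$ plays the role of $0$ here; the same argument applies verbatim on $[\ell,t]$.

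\textbf{Step 3 (martingale property), the main obstacle.} I need $M$ to be a genuine martingale, not merely a local martingale, because this is what lets the expectation of the compensated term vanish as used in the later martingale lemmas. The route I would take is the standard one. Each $N^{x,y}_u-\int_0^u\lambda^{x,y}_s\,\d s$ is a martingale with respect to the natural filtration, which follows from the CTMC transition asymptotics \eqref{eq:qt_conditionals} via the Markov property. The integrand is predictable and bounded on $[0,t]$, since $f$ is continuous on $[0,t]\times\cX$ with $\cX$ finite. The intensities are bounded, so $\bE N_t<\infty$. Together these give $\bE|M|<\infty$ and the martingale property.

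The delicate point is a time horizon at which the rates blow up, as the reverse rates do when $t\to T$ without early stopping. There I would apply the result on $[0,t']$ for $t'<T$ and pass to the limit, or else localize with stopping times $\sigma_n$ and invoke dominated convergence, using that $f$ is bounded on the finite state space.
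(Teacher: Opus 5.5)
Your proof is correct. The paper gives no argument for this lemma; it only points to \citet[Appendix A.5]{conforti2025non}. Your write-up is therefore a self-contained version of the standard derivation behind that citation: you telescope pathwise over the finitely many jump times, then add and subtract the compensator $\sum_{y}\bigl(f(s,y)-f(s,x_{s^-})\bigr)\bI\{x_{s^-}=x\}R_s(x,y)=(L_sf)(s,x_{s^-})$. Along the way you correctly repair the typos in the statement: the indicator should be $\bI\{x_{s^-}=x\}$, the rate $R_s$, the outer integral in $\d s$, and the integrand of $M$ should be evaluated at $x_{s^-}$ so that it is predictable.

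Compared with the bare citation, your route makes the hypotheses explicit: $f$ is $C^1$ in time, and the rates are bounded on $[0,t]$, which holds for $t<T$ in both noising processes. Your Step~3 also yields a true martingale, which is what \Cref{lem:uniform_martingale} and \Cref{lem:mask_mart} actually need. Your justification there, a bounded predictable integrand against a compensated counting process with $\bE N_t<\infty$, is sounder than the paper's corresponding step in the proof of \Cref{lem:uniform_martingale}. That step passes from ``local martingale with $\sup_s\bE|M_s|<\infty$'' to ``martingale'', which fails in general and holds there only because of the same boundedness you invoke.

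One caveat concerns your closing idea of reaching the blow-up horizon by dominated convergence ``using that $f$ is bounded''. This does not fit the functions the paper uses: in the masking case, $f(t,x)=s_{T-t}(x\odot_i c,x)\bI\{i\in m(x)\}$ grows like $(e^{T-t}-1)^{-1}$ as $t\to T$. Since every application in the paper takes $t<T$, nothing rests on it.
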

See \citep[Appendix A.5]{conforti2025non} for more details.

%% file: appendix/app_main_proofs.tex
\section{Proofs of results in Section~\ref{sec:uniform}}

\subsection{Proof of Theorem~\ref{thm:uniform}}\label{sec:proof_uniform}
We first decompose the KL divergence between the output distribution \(p_T\) and the target distribution \(q_0\) as
    \begin{align}
        \KL(q_0 \| p_T) &\leq \KL(q_{T - t_0, \ldots, T - t_N} \| p_{t_0, \ldots, t_N})\notag\\
        &= \KL(q_T \| p_0) + \sum_{k=0}^{N - 1} \bE_{x_{t_k} \sim \back q_{t_k}} \left[\KL\left(\back q_{t_{k+1}\mid t_k}(\cdot|x_{t_k}) \Vert p_{t_{k+1}\mid t_k}(\cdot|x_{t_k})\right)\right],\label{eq:uniform_kl_first_decomp}
    \end{align}
    where the first inequality follows from the data-processing inequality for KL divergence and the second inequality follows from the chain rule for KL divergence in Lemma~\ref{lem:chain_kl}. 
    The first term is the initialization error, which can be upper bounded by the following lemma.
\begin{lemma}\label{lem:uniform_log_sobolev}
    For the uniform noising process, for any initial distribution \(q_0 \in \cP(\cX)\), time index \(t \geq 0\), one has the same limit distribution
    \begin{align*}
        q_t \overset{d}{\to} p_0 = \mathrm{Unif}(\cX), \quad \text{as } t \to \infty.
    \end{align*}
    Further, the modified log-Sobolev constant \footnote{$C_{\mathrm{LSI}}$ is defined as the smallest number such that for any $q \in \cP(\cX)$, $\KL(q \mid \mathrm{Unif}(\cX)) \leq C_{\mathrm{LSI}}/2 \cdot \cE(q, \log(q))$, where $\cE$ is the Dirichlet form associated with the uniform noising process, i.e., $\cE(f, g) = -(2 |\cX|)^{-1} \sum_{x, y \in \cX}(f(x) - f(y))(g(x) - g(y)) Q(x, y)$.} of \(q_t\) satisfies \(C_{\mathrm{LSI}} = 2\), which leads to 
    \begin{align*}
        \KL(q_t \| p_0) \le e^{-t} \KL(q_0 \| p_0) \le e^{-t} d \log(S).
    \end{align*}
\end{lemma}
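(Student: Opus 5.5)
The plan is to exploit the product structure of the uniform noising process and tensorize a one-dimensional log-Sobolev inequality.

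\emph{Convergence and the Markov semigroup.} The generator is $Q=\sum_{i=1}^d Q^{(i)}$, where $Q^{(i)}$ acts on coordinate $i$ by $Q^{\mathrm{tok}}=\frac1S(\mathbf{1}\mathbf{1}^\top - S I_S)$, with $\mathbf{1}$ the all-ones vector. Hence the one-coordinate transition kernel is $P_t=e^{-t}I_S+(1-e^{-t})\frac1S\mathbf{1}\mathbf{1}^\top$, which is exactly the form already used in the proof of Proposition~\ref{prop:score_function}. The full kernel is $P_t^{\otimes d}$. Since $P_t\to \frac1S\mathbf{1}\mathbf{1}^\top$, we get $q_t=q_0P_t^{\otimes d}\to \mathrm{Unif}(\cX)$ for every $q_0$. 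Uniqueness of the limit also follows from irreducibility together with the symmetry $Q=Q^\top$, which makes the uniform distribution stationary.

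\emph{Modified log-Sobolev constant.} For the one-dimensional chain on $[S]$, I will take the Dirichlet form $\cE(f,g)=\frac{1}{2S}\sum_{a,b}(f(a)-f(b))(g(a)-g(b))\frac1S$, with the sign convention fixed so that the form is nonnegative. I then need to show $\KL(\mu\|\mathrm{Unif})\le \cE(\mu/\pi,\log(\mu/\pi))$, where $\pi$ denotes the uniform distribution. This is the entropy-production inequality for the complete-graph chain with rates $1/S$, whose jump kernel is $K(a,\cdot)=\pi$ (the independence sampler). Writing $f=\mu/\pi$, the right-hand side equals
\[
\frac12\sum_{a,b}\pi(a)\pi(b)(f(a)-f(b))(\log f(a)-\log f(b)) = \mathrm{Cov}_\pi(f,\log f).
\]
This in turn equals $\E_\pi[f\log f]-\E_\pi[f]\,\E_\pi[\log f]=\KL(\mu\|\pi)-\E_\pi\log f$. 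Because $-\E_\pi\log f\ge -\log\E_\pi f=0$ by Jensen's inequality, we obtain $\mathrm{Cov}_\pi(f,\log f)\ge \KL(\mu\|\pi)$. This gives the inequality in the footnote's normalization with $C_{\mathrm{LSI}}=2$.

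\emph{Tensorization and conclusion.} The modified log-Sobolev inequality tensorizes for product generators. Applying the standard subadditivity of entropy, $\mathrm{Ent}_{\pi^{\otimes d}}(f)\le\sum_i \E[\mathrm{Ent}_i(f)]$, and then the one-dimensional inequality coordinatewise conditional on $x^{-i}$, yields the same constant for $\mathrm{Unif}(\cX)$. Along the forward flow, $\frac{d}{dt}\KL(q_t\|p_0)=-\cE(q_t/p_0,\log(q_t/p_0))\le -\KL(q_t\|p_0)$. Gr\"onwall's inequality then gives $\KL(q_t\|p_0)\le e^{-t}\KL(q_0\|p_0)$. Finally, $\KL(q_0\|p_0)=d\log S-\ent(q_0)\le d\log S$.

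The main obstacle is bookkeeping of the normalization: the footnote's factor $(2|\cX|)^{-1}$ and its sign convention must be matched so that the decay rate is exactly $e^{-t}$ and not $e^{-2t/C}$ with a different constant. The $\mathrm{Cov}_\pi(f,\log f)$ identity makes the rate-$1$ decay transparent, so I would verify the constant directly through the entropy-dissipation computation rather than through the definition of $C_{\mathrm{LSI}}$.
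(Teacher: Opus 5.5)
Your proof is correct. The paper gives no argument for this lemma beyond citing \citet[Proposition~2]{zhang2024convergence}, so what you add is a self-contained derivation along the route the statement itself indicates. For the one-site generator $Q^{\mathrm{tok}}=\tfrac1S\mathbf{1}\mathbf{1}^\top-I_S$ the Dirichlet form is exactly $\mathrm{Cov}_\pi$. Jensen then gives $\mathrm{Cov}_\pi(f,\log f)\ge\mathrm{Ent}_\pi(f)$, and subadditivity of entropy tensorizes this to $[S]^d$. Finally, the dissipation identity $-\frac{\d}{\d t}\KL(q_t\|p_0)=\cE(q_t/p_0,\log(q_t/p_0))$, which is the paper's $\varphi(t)$ from Lemma~\ref{lem:dkl_dt_phi}, combined with Gr\"onwall yields the $e^{-t}$ decay.

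Two details deserve a line each in your write-up:
\begin{enumerate}
\item The tensorization step applies the one-site bound to unnormalized slices $f(\cdot,x^{-i})$, but you only wrote the case $\E_\pi f=1$. This is harmless, since $\mathrm{Cov}_\pi(f,\log f)-\mathrm{Ent}_\pi(f)=\E_\pi f\,(\log\E_\pi f-\E_\pi\log f)\ge0$ for every positive $f$.
\item The dissipation identity should be used only for $t>0$, where $q_t$ has full support; continuity of $\KL(q_t\|p_0)$ at $t=0$ then covers initial laws without full support.
\end{enumerate}

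What your route buys over the citation is explicit control of the constants. You were right to bypass the footnote: its form as written is nonpositive, and evaluated at $q$ rather than at the density $q/p_0$ it is off by a factor $|\cX|$.

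Note also that you prove $C_{\mathrm{LSI}}\le 2$, not that $2$ is minimal. That is all the decay bound needs, and all that the later use $\varphi(t)\ge\KL(q_t\|p_0)$ in the proof of Theorem~\ref{thm:uniform_lower} needs. It is also the only reading under which the statement is true. For $S=2$ one checks $\tfrac{x}{2}\log\tfrac{1+x}{1-x}\ge(1+x)\log(1+x)+(1-x)\log(1-x)$ on $(-1,1)$, i.e.\ $\cE(f,\log f)\ge 2\,\mathrm{Ent}_\pi(f)$, so the sharp constant there is $1$.
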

The proof of Lemma~\ref{lem:uniform_log_sobolev} can be found in previous works, e.g., \citet[Proposition 2]{zhang2024convergence}. Applying the lemma above together with  Lemma~\ref{lem:kl_der_ub} and~\Cref{eq:tau-leaping} on the second term in Eqn.~\eqref{eq:uniform_kl_first_decomp}, we obtain
\begin{align}
    &\KL(p_0 \| q_T)\notag\\
    &= \KL(q_T \| p_0) + \sum_{k=0}^{N - 1} \bE_{x_{t_k} \sim \back q_{t_k}} \left[\int_{t_k}^{t_{k+1} }\frac{\partial}{\partial t}\KL\left(\back q_{t\mid t_k}(\cdot|x_{t_k}) \Vert p_{t \mid t_k}(\cdot|x_{t_k})\right)\d t\right]\notag\\
    &\leq e^{-T}d\log(S)\notag\\ 
    &\quad +  \sum_{k=0}^{N - 1}\bE_{x_{t_k} \sim \back q_{t_k}} \int_{t_k}^{t_{k+1} } \bE_{x_{t} \sim \back q_{t \mid t_k}} \sum_{y \neq x_t}\left[ \wh Q_t(x_t, y) - \back Q_t(x_t, y) + \back Q_t(x_t, y) \log \frac{\back Q_t(x_t, y)}{\wh Q_t(x_t, y)}\right]\d t\notag\\
    &\leq e^{-T}d\log(S) + \frac{1}{S} \sum_{k=0}^{N-1} \int_{t_k}^{t_{k+1}} \bE_{x_{t_k}, x_t \sim \back q_{t_k, t}} \bigg[\sum_{i \in [d]}\sum_{c \in [S]} s_{T - t}(x_t  \oplus_i c, x_t)\notag\\ 
    &\hspace{20em} \breg \big(\wh s_{T - t_k}(x_{t_k} \oplus_i c, x_{t_k}), s_{T - t}(x_t \oplus_i c, x_t)\big)\d t\bigg].\label{eq:uniform_kl_main}
\end{align}
In the following, we focus on the quantity
\begin{align}\label{eq:uniform_tk_quantity}
    \bE_{x_{t_k}, x_t \sim \back q_{t_k, t}} \sum_{i \in [d]}\sum_{c \in [S]} s_{T - t}(x_t  \oplus_i c, x_t) \breg \big(\wh s_{T - t_k}(x_{t_k} \oplus_i c, x_{t_k}), s_{T - t}(x_t \oplus_i c, x_t)\big).
\end{align}
For simplicity, we write $t_k \defn \ell$.
Direct calculations yield the following decomposition
\begin{align*}
    &\sum_{i \in [d]}\sum_{c \in [S]} s_{T - t}(x_t  \oplus_i c, x_t) \breg \big(\wh s_{T - t_k}(x_{t_k} \oplus_i c, x_{t_k}), s_{T - t}(x_t \oplus_i c, x_t)\big)\\
    &= \underbrace{\sum_{y_\ell: \ham(y_\ell, x_\ell) = 1} s_{T - \ell}(y_\ell, x_\ell)\breg \left(\wh s_{T - \ell}(y_\ell , x_\ell), s_{T - \ell}(y_\ell, x_{\ell})\right)}_{T_{1}^{t,\ell}} \\
    &\quad + \underbrace{\sum_{i \in [d]}\sum_{c \in [S]} \left(s_{T - \ell}(x_\ell \oplus_i c, x_\ell) - s_{T - t}(x_t \oplus_i c, x_t)\right) \log \wh s_{T - \ell}(x_\ell \oplus_i c, x_\ell)}_{T_2^{t, \ell}}\\
    &\quad + \underbrace{\sum_{y_t: \ham(y_t, x_t) = 1} h\left(s_{T-t}(y_t, x_t)\right) -  \sum_{y_\ell: \ham(y_\ell, x_\ell) = 1}h\left(s_{T - \ell}(y_{\ell}, x_{\ell})\right)}_{T^{t, \ell}_3},
\end{align*}
where $h(x) = x\log x - x + 1$. 
We proceed by bounding each term separately. 

\begin{itemize}
    \item For term $T_1^{t, \ell}$, notice that $T_1^{t, \ell}$ is independent of $t$. In view of definition of score entropy loss, we have
    \begin{align}
        &\bE_{x_{\ell}, x_t \sim \back q_{\ell, t}}\left[T_1^{t, \ell}\right]\\ 
        &= \bE_{x_{\ell}, x_t \sim \back q_{\ell, t}}\left[\sum_{y_\ell: \ham(y_\ell, x_\ell) = 1} s_{T - \ell}(y_\ell, x_\ell)\breg \left(\wh s_{T - \ell}(y_\ell , x_\ell), s_{T - \ell}(y_\ell, x_{\ell})\right)\right]\notag\\
        &= S\cdot\bE_{x_{\ell}, x_t \sim \back q_{\ell, t}}\left[\sum_{y_\ell: \ham(y_\ell, x_\ell) = 1} Q_{T-\ell}(y_\ell, x_\ell) s_{T - \ell}(y_\ell, x_\ell)\breg \left(\wh s_{T - \ell}(y_\ell , x_\ell), s_{T - \ell}(y_\ell, x_{\ell})\right)\right]\notag\\
        &= S\cdot\cL_{\mathrm{SE}}(T - \ell, \wh s_{T-\ell}, s_{T-\ell}),\label{eq:uniform_t1}
    \end{align}
    where we use the fact that $Q_{T-t}(y, x) = S^{-1}$ for any $\ham(y, x) = 1$.
\item For term $T_2^{t, \ell}$, we establish the following lemma, whose proof is provided in Section~\ref{pf:lem-uniform-martingale}.
\begin{lemma}\label{lem:uniform_martingale}
    Consider the uniform noising process and let \(0 \leq \ell < t < T\). Then, for any \(c \in [S]\), \(i \in [d]\) and $x_\ell \in \cX$, it obeys 
    \begin{align*}
        \bE_{x_t \sim \back q_{t|\ell}(\cdot \mid x_{\ell})} \Big[\big(s_{T - \ell}(x_\ell \oplus_i c, x_\ell) - s_{T - t}(x_t \oplus_i c, x_t)\big) \log \wh s_{T - \ell}(x_\ell \oplus_i c, x_\ell)\Big] = 0.
    \end{align*}
\end{lemma}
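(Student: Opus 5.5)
The plan is to show that, under the reverse process, $u \mapsto s_{T-u}(x_u \oplus_i c, x_u)$ is a martingale. The factor $\log \wh s_{T-\ell}(x_\ell \oplus_i c, x_\ell)$ is a deterministic function of the conditioning variable $x_\ell$, so it comes out of the conditional expectation. The lemma therefore reduces to the identity
\begin{align*}
\bE_{x_t \sim \back q_{t\mid \ell}(\cdot \mid x_\ell)} \big[ s_{T-t}(x_t \oplus_i c, x_t) \big] = s_{T-\ell}(x_\ell \oplus_i c, x_\ell).
\end{align*}
Rather than using It\^o's formula (\Cref{lem:ito}) and computing a generator, I would prove this identity directly by Bayes' rule and a symmetry of the uniform kernel.

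\textbf{Step 1 (Bayes' rule).} Because the reverse process is the time reversal of the forward chain, its transition kernel is
\begin{align*}
\back q_{t\mid \ell}(z \mid x) = \frac{q_{T-t}(z)\, q_{T-\ell \mid T-t}(x \mid z)}{q_{T-\ell}(x)}.
\end{align*}
Here $q_{T-\ell\mid T-t}$ is the forward transition over a time lag $t-\ell$. All marginals $q_u$ with $u = T-t > 0$ and $u = T-\ell > 0$ have full support on $[S]^d$, since the uniform kernel at positive time is strictly positive. Hence the ratios are well defined and every sum is finite, so no integrability issues arise. Substituting this kernel, the left-hand side becomes
\begin{align*}
\frac{1}{q_{T-\ell}(x_\ell)} \sum_{z} q_{T-\ell\mid T-t}(x_\ell \mid z)\, q_{T-t}(z \oplus_i c).
\end{align*}
The factor $q_{T-t}(z)$ cancels against the denominator of the score.

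\textbf{Step 2 (translation invariance).} From the explicit formula in the proof of \Cref{prop:score_function}, the forward kernel depends on $(x,z)$ only through $\ham(x,z)$. Since $\ham(x \oplus_i c, z \oplus_i c) = \ham(x,z)$, we have
\begin{align*}
q_{T-\ell\mid T-t}(x \mid z) = q_{T-\ell\mid T-t}(x \oplus_i c \mid z \oplus_i c).
\end{align*}
The map $z \mapsto w = z \oplus_i c$ is a bijection of $[S]^d$ under the mod-$S$ convention. Changing variables, the sum becomes
\begin{align*}
\sum_w q_{T-\ell\mid T-t}(x_\ell \oplus_i c \mid w)\, q_{T-t}(w) = q_{T-\ell}(x_\ell \oplus_i c),
\end{align*}
by the Chapman--Kolmogorov equation. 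Dividing by $q_{T-\ell}(x_\ell)$ gives $s_{T-\ell}(x_\ell \oplus_i c, x_\ell)$, which proves the identity and hence the lemma.

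The main point to check carefully is Step 2: it relies on the kernel being invariant under the simultaneous shift $\oplus_i c$. This holds for the uniform process because it acts independently on coordinates with a permutation-invariant $Q^{\mathrm{tok}}$. The same argument would fail for a non-symmetric noising kernel, and I expect this is why the masking case needs the separate \Cref{lem:mask_mart}.
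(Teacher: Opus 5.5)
Your proof is correct, and it establishes the key identity $\bE_{x_t\sim\back q_{t\mid\ell}(\cdot\mid x_\ell)}[s_{T-t}(x_t\oplus_i c,x_t)]=s_{T-\ell}(x_\ell\oplus_i c,x_\ell)$ by a genuinely different route from the paper.

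The paper argues infinitesimally. It applies It\^o's formula for jump processes (\Cref{lem:ito}) to $f(u,x_u)=s_{T-u}(x_u\oplus_i c,x_u)$. It shows that the drift $\partial_u f+\back L_u f$ vanishes, using the Kolmogorov forward equation, the multiplicativity $s(x,y)s(y,z)=s(x,z)$, and the commutation $x\oplus_{i'}c'\oplus_i c=x\oplus_i c\oplus_{i'}c'$. It then has to upgrade the compensated jump process from a local martingale to a true martingale, using uniform bounds on the score and jump intensities over $[\ell,t]$.

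You work with the integrated kernel instead: Bayes' rule for the reversal, shift-invariance of the forward kernel, and Chapman--Kolmogorov. Every quantity is a finite sum with strictly positive denominators, so no stochastic calculus or integrability check is needed. Your route also makes the mechanism transparent. The map $z\mapsto q_{T-t}(z\oplus_i c)$ is the marginal of the same forward dynamics started from the shifted data distribution. The score is therefore a ratio of two marginals of one forward chain, which is automatically a reverse-time martingale.

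Both proofs rest on the same fact: the shift commutes with the forward semigroup. The paper uses it at the generator level. You read it off the closed-form kernel, though it also follows directly from $Q(x\oplus_i c,y\oplus_i c)=Q(x,y)$. The advantage of the paper's local approach is uniformity with \Cref{lem:mask_mart} and \Cref{lem:control_at_t}, which use the same It\^o/Dynkin machinery.

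One small correction to your closing remark. The masking lemma is stated differently because $\odot_i c$ alters the masking pattern, which is why it carries the indicator and uses the same time index $T-t$ in both scores. It is not that a direct argument is unavailable there: a conditioning (tower-property) argument based on \Cref{prop:score_function} proves it as well.
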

With Lemma~\ref{lem:uniform_martingale}, it is easily seen that
\begin{align}
    &\bE_{x_{\ell}, x_t \sim \back q_{\ell, t}}\left[T_2^{t, \ell}\right]\notag\\
    &= \bE_{x_{\ell}, x_t \sim \back q_{\ell, t}}\Bigg[\sum_{i \in [d]} \sum_{c \in [S]}\big(s_{T - \ell}(x_\ell \oplus_i c, x_\ell) - s_{T - t}(x_t \oplus_i c, x_t)\big) \log \wh s_{T - \ell}(x_\ell \oplus_i c, x_\ell)\Bigg]\notag\\
    &= \sum_{i \in [d]} \sum_{c \in [S]}\bE_{x_{\ell} \sim \back q_\ell}\bigg[\bE_{x_t \sim \back q_{t|\ell}(\cdot \mid x_{\ell})} \Big[\big(s_{T - \ell}(x_\ell \oplus_i c, x_\ell) - s_{T - t}(x_t \oplus_i c, x_t)\big) \log \wh s_{T - \ell}(x_\ell \oplus_i c, x_\ell)\Big]\bigg]\notag\\
    &= 0. \label{eq:uniform_t2}
\end{align}
\item For term $T_3^{t, \ell}$, we make the crucial observation that $\bE_{x_t \sim \back q_{t}}[\sum_{y_t: \ham(y_t, x_t) = 1} h(s_{T-t}(y_t, x_t))]$ admits a simple representation.
The statement is formalized in the following lemma.
\begin{lemma}
\label{lem:uniform_t3_sim}
    For any $t \in [0, T]$, we have
    \begin{align*}
        &\bE_{x_t \sim \back q_{t}}\left[\sum_{y_t: \ham(y_t, x_t) = 1} h\left(s_{T-t}(y_t, x_t)\right)\right] = \bE_{x_t \sim \back q_{t}}\left[\sum_{y_t: \ham(y_t, x_t) = 1} -\log\left(s_{T-t}(y_t, x_t)\right)\right].
    \end{align*}
\end{lemma}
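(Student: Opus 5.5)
Since $h(x) = x\log x - x + 1$, the claim is equivalent to
\[
\bE_{x_t\sim\back q_t}\sum_{y:\ham(y,x_t)=1}\big(s_{T-t}(y,x_t)\log s_{T-t}(y,x_t) - s_{T-t}(y,x_t) + 1\big) = \bE_{x_t\sim\back q_t}\sum_{y:\ham(y,x_t)=1}\big(-\log s_{T-t}(y,x_t)\big).
\]
The plan is to verify two separate identities, writing $u \defn T-t$ and $\back q_t = q_u$, and using only the symmetry of the neighbor relation $\ham(x,y)=1$ together with the definition $s_u(y,x) = q_u(y)/q_u(x)$.

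\emph{First identity.} We will show
\[
\bE_{x\sim q_u}\sum_{y:\ham(y,x)=1} s_u(y,x) = \sum_{x}\sum_{y:\ham(y,x)=1} q_u(y).
\]
Swapping the order of summation, the right-hand side equals $\sum_y q_u(y)\,\abs{\{x:\ham(x,y)=1\}} = d(S-1)$. The left-hand side is likewise $d(S-1) = \bE_x\sum_{y:\ham(y,x)=1} 1$. Hence the ``$-x+1$'' part of $h$ contributes zero in expectation.

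\emph{Second identity.} We will show
\[
\bE_{x\sim q_u}\sum_{y}s_u(y,x)\log s_u(y,x) = \sum_{x,y:\ham(x,y)=1} q_u(y)\log\frac{q_u(y)}{q_u(x)}.
\]
Relabel $x\leftrightarrow y$, which is allowed because the set of pairs at Hamming distance one is symmetric. This turns the right-hand side into $\sum_{x,y} q_u(x)\log\frac{q_u(x)}{q_u(y)} = \bE_{x\sim q_u}\sum_y\big(-\log s_u(y,x)\big)$, which is exactly the desired right-hand side.

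Adding the two identities gives the lemma. The only point needing care is that all quantities are finite. For the uniform noising process with $u>0$, the marginal $q_u$ has full support, so every score is positive and finite. At $u=0$, i.e.\ $t=T$, zeros of $q_0$ may occur. There I would either adopt the conventions $0\log 0=0$ and $-\log 0 = \infty$ (under which the identity holds with both sides possibly infinite), or simply note that the lemma is only ever applied inside time integrals, so the single endpoint $t=T$ is immaterial. There is no real obstacle here: the whole proof is a double-counting / change-of-variables argument over ordered neighbor pairs.
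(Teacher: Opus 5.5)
Your argument is correct and is essentially the paper's proof: expand $h$, cancel the $-s+1$ part via the double count $\sum_x\sum_{y:\ham(y,x)=1} q_{T-t}(y)=d(S-1)$, and convert the $s\log s$ term into the $-\log s$ term by swapping $x\leftrightarrow y$ over the symmetric neighbor relation.

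One correction to your endpoint remark: at $t=T$ with $q_0$ lacking full support, the identity genuinely fails rather than ``holding with both sides possibly infinite'', so your first fallback is wrong. For example, $d=1$, $S=2$, $q_0=\delta_1$ gives left side $h(0)=1$ and right side $-\log 0=+\infty$. Only your second fallback is valid: the lemma is only used inside $\int_{t_k}^{t_{k+1}}\cdots\,\mathrm{d}t$ and at $\ell=t_k<T$, where $T-t>0$ and $q_{T-t}$ has full support.
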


In view of this lemma, we can further express the term $T_3^{t, \ell}$ as
\begin{align}
    &\bE_{x_{\ell}, x_t \sim \back q_{\ell, t}}\left[T_3^{t, \ell}\right]\\
    &= \bE_{x_t \sim \back q_{t}}\left[\sum_{y_t: \ham(y_t, x_t) = 1} h\left(s_{T-t}(y_t, x_t)\right)\right] - \bE_{x_{\ell}\sim \back q_{\ell}}\left[\sum_{y_\ell: \ham(y_\ell, x_\ell) = 1} h\left(s_{T-\ell}(y_\ell, x_\ell)\right)\right]\notag\\
    &=\bE_{x_t \sim \back q_{t}}\left[\sum_{y_t: \ham(y_t, x_t) = 1} -\log\left(s_{T-t}(y_t, x_t)\right)\right] - \bE_{x_{\ell}\sim \back q_{\ell}}\left[\sum_{y_\ell: \ham(y_\ell, x_\ell) = 1} -\log\left(s_{T-\ell}(y_\ell, x_\ell)\right)\right].\label{eq:uniform_t3}
\end{align}
\end{itemize}

Plugging Eqns.~\eqref{eq:uniform_t1}, \eqref{eq:uniform_t2}, and \eqref{eq:uniform_t3} into Eqn.~\eqref{eq:uniform_tk_quantity}, we end up with 
\begin{align*}
    &\bE_{x_{t_k}, x_t \sim \back q_{t_k, t}} \sum_{i \in [d]} \sum_{c \in [S]} s_{T - t}(y_t, x_t) \breg \left(\wh s_{T - t_k}(x_{t_k} \oplus_i c, x_{t_k}), s_{T - t}(y_t, x_t)\right)\\
    &= S \cdot \cL_{\mathrm{SE}}(T-\ell, \wh s_{T-\ell}, s_{T-\ell}) + \bE_{x_t \sim \back q_{t}}\left[\sum_{y_t: \ham(y_t, x_t) = 1} -\log\left(s_{T-t}(y_t, x_t)\right)\right]\\
    &\hspace{20em} - \bE_{x_{\ell}\sim \back q_{\ell}}\left[\sum_{y_\ell: \ham(y_\ell, x_\ell) = 1} -\log\left(s_{T-\ell}(y_\ell, x_\ell)\right)\right]\\
    &= S \cdot \cL_{\mathrm{SE}}(T-\ell, \wh s_{T-\ell}, s_{T-\ell}) + S\big(\varphi(T-t) - \varphi(T-\ell)\big), 
\end{align*}
where we define $\varphi(t)$ as
\begin{align}\label{eq:defn-phi}
    \varphi(t) \defn \frac{1}{S}\bE_{x_t \sim q_{t}}\left[\sum_{y_t: \ham(y_t, x_t) = 1} -\log\left(s_t(y_t, x_t)\right)\right].
\end{align}
Returning to Eqn.~\eqref{eq:uniform_kl_main}, we conclude that
\begin{align}
    \KL(p_0 \| q_T) 
    &\leq e^{-T}d\log(S) + \sum_{k=0}^{N-1} \int_{t_k}^{t_{k+1}} \cL_{\mathrm{SE}}(T - t_k, \wh s_{T-t_k}, s_{T-t_k}) + \big(\varphi(T-t) - \varphi(T-t_k)\big)\,\d t\notag\\
    &\leq \escore + e^{-T}d\log(S) + \sum_{k=0}^{N-1} \int_{t_k}^{t_{k+1}} \big(\varphi(T-t) - \varphi(T-t_k)\big) \,\d t.\label{eq:uniform_kl_final}
\end{align}

To establish Theorem~\ref{thm:uniform}, it is only left for us to control the last term in Eqn.~\eqref{eq:uniform_kl_final}. 
First, by Jensen's inequality, $\varphi(t)$ is lower bounded by 
\begin{align}\label{eq:phi_lower}
    \varphi(t) \geq -\frac{1}{S}\sum_{i \in [d]}\sum_{c \in [S]}\log\big(\bE_{x_t \sim  q_{t}}\left[s_t(x_t \oplus_i c, x_t)\right]\big) = 0.
\end{align}
For the upper bound, from the definition of $\varphi(t)$, it satisfies that 
\begin{align}\label{eq:phi_upper_1}
    \varphi(t) \leq \frac{1}{S}\bE_{x_t \sim q_{t}}\left[\big|\{y_t: \ham(y_t, x_t) = 1\}\big| \cdot \sup_{x, y: \ham(x,y) = 1}\big|\log\left(s_t(y, x)\right)\big|\right],
\end{align}
where $|\{y_t: \ham(y_t, x_t) = 1\}|$ denotes the cardinality of the set $\{y_t: \ham(y_t, x_t) = 1\}$, which equals $d(S-1)$ for any $x_t \in \cX$.
It therefore suffices to control the quantity $|\log\left(s_t(y_t, x_t)\right)|$, which is achieved through the following lemma. 

\begin{lemma}\label{lem:uniform_log_score_bound}
    For any distribution \(q_0\) on \(\cX\), let \(q_t\) be the marginal distribution of the uniform noising process at time \(t\). Then, for any $x, y \in \cX$ such that $\ham(x, y) = 1$, it holds that
    \begin{align*}
        |\log s_t(y, x)| \lesssim \log(S) + \max\{\log(t^{-1}), 0\}.
    \end{align*}
\end{lemma}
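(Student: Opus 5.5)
We prove the bound directly from the explicit representation of the score in \Cref{prop:score_function}. For the uniform noising process,
\[
s_t(y,x)=\frac{\bE_{x_0\sim q_0}\alpha_t^{\ham(y,x_0)}}{\bE_{x_0\sim q_0}\alpha_t^{\ham(x,x_0)}},
\qquad
\alpha_t=\frac{1-e^{-t}}{1+(S-1)e^{-t}}\in[0,1).
\]
Since $\ham(x,y)=1$, the triangle inequality gives $\abs{\ham(y,x_0)-\ham(x,x_0)}\le 1$ for every $x_0$. Hence, pointwise in $x_0$,
\[
\alpha_t\,\alpha_t^{\ham(x,x_0)}\;\le\;\alpha_t^{\ham(y,x_0)}\;\le\;\alpha_t^{-1}\,\alpha_t^{\ham(x,x_0)}.
\]
Taking expectations over $q_0$ yields $\alpha_t\le s_t(y,x)\le \alpha_t^{-1}$, and therefore $\abs{\log s_t(y,x)}\le \log(1/\alpha_t)$. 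This holds for every $q_0$ with no assumption on its support, since all terms are nonnegative and the denominator is positive for $t>0$.

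It remains to bound $\log(1/\alpha_t)=\log\bigl(1+(S-1)e^{-t}\bigr)-\log\bigl(1-e^{-t}\bigr)$, and we split by the size of $t$.

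\emph{Case $t\ge 1$.} The first term is at most $\log S$. The second satisfies $-\log(1-e^{-t})\le -\log(1-e^{-1})=O(1)$. Together these give $\lesssim \log S$.

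\emph{Case $t<1$.} The first term is again at most $\log S$, because $1+(S-1)e^{-t}\le S$. For the second, use $1-e^{-t}\ge t(1-e^{-1})\ge t/2$ on $[0,1]$, which follows from concavity of $1-e^{-t}$. This gives $-\log(1-e^{-t})\le \log(2/t)=\log(t^{-1})+O(1)$.

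Combining the two cases gives $\abs{\log s_t(y,x)}\lesssim \log S+\max\{\log(t^{-1}),0\}$, as claimed. No step here is really an obstacle. The only point needing care is the pointwise sandwich of $\alpha_t^{\ham(y,x_0)}$ by $\alpha_t^{\ham(x,x_0)}$, which uses $\alpha_t\le 1$ so that powers are monotone in the exponent. At $t=0$ the bound is vacuous, consistent with $\max\{\log(t^{-1}),0\}=\infty$.
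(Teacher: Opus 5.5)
Your proposal is correct and follows essentially the same route as the paper: both use the representation in \Cref{prop:score_function} and the Hamming triangle inequality to get $\alpha_t \le s_t(y,x) \le \alpha_t^{-1}$, hence $\abs{\log s_t(y,x)} \le \log(1/\alpha_t)$. Your case split ($t \ge 1$ versus $t<1$, using $1-e^{-t}\ge t/2$ on $[0,1]$) spells out the final bound on $\log(1/\alpha_t)$, a step the paper only asserts.
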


As a consequence of Lemma~\ref{lem:uniform_log_score_bound}, we arrive at 
\begin{align}\label{eq:phi_upper}
    \varphi(t) \le \frac{d(S-1)}{S}\cdot \sup_{x, y: \ham(x, y) = 1}|\log\left(s_t(y, x)\right)|
    \lesssim d \big(\log(S) + \max\{\log(t^{-1}), 0\}\big).
\end{align}
In addition, we make the observation in Lemma~\ref{lem:phi'_lower_bound} that $\varphi(t)$ is a non-increasing function in $t$.

Now we are ready to combine everything and bound the last term of Eqn.~\eqref{eq:uniform_kl_final}.
Define $\Delta = \max_k\{t_{k+1} - t_k\}$, and choose $1 \le M \le N-1$ such that $T - t_M \in [\Delta, 2\Delta]$.
Armed with Eqns.~\eqref{eq:phi_lower}, \eqref{eq:phi_upper} and the monotonicity of $\varphi(t)$, we obtain
\begin{align*}
    &\sum_{k=0}^{N-1} \int_{t_k}^{t_{k+1}} \big(\varphi(T-t) - \varphi(T-t_k)\big) \,\d t\\
    &\leq \int_{0}^{T-t_{M}} \varphi(t) \,\d t + \sum_{k=0}^{M-1}\int_{t_k}^{t_{k+1}} \big(\varphi(T-t_{k+1}) - \varphi(T-t_k)\big) \,\d t\\
    &\leq 2\Delta d\big(\log(S) +  \log(2/\Delta) + 1\big) + \Delta\sum_{k=0}^{N-2} \big(\varphi(T-t_{k+1}) - \varphi(T-t_k)\big)\\
    &\le 2\Delta d\big(\log(S) +  \log(2/\Delta) + 1\big) + \Delta \varphi(\Delta) \lesssim \Delta d\log(S/\Delta).
\end{align*}

Combining the inequality above with Eqn.~\eqref{eq:uniform_kl_final} achieves
\begin{align*}
    \KL(p_0 \| q_T) \leq \escore + e^{-T}d\log(S) + \Delta d\log(S/\Delta),
\end{align*}
which completes the proof of Theorem~\ref{thm:uniform}.
\qed
\subsection{Proof of Corollary~\ref{cor:uniform}}
\label{sec:proof-uniform-cor}
Choose time horizon $T = \log(d \log(S)/\varepsilon)$ and number of discretization steps
\begin{align*}
    N = \Theta\left(\frac{d\log(S)\log^3(d\log(S)/\varepsilon)}{\varepsilon}\right) = \widetilde{\Theta}\left(\frac{d}{\varepsilon}\right).
\end{align*}
Adopting the upper bound in Theorem~\ref{thm:uniform} leads to
\begin{align*}
    \KL(q_{\mathrm{data}} \| p_{\mathrm{output}}) = \KL(q_{0} \| p_{T})
    &\lesssim \escore + e^{-T} d\log(S) + \frac{Td}{N} \log\left(\frac{SN}{T}\right)\\
    &\lesssim \escore + \varepsilon + \frac{\varepsilon}{\log(S) T^2} \Big(\log(S) + 3T\Big)\\
    &\lesssim \escore + \varepsilon.
\end{align*}
\qed

\subsection{Proof of Theorem~\ref{thm:uniform_lower}}\label{sec:uniform_lower_proof}

Recall that the path measures of the backward process and the sampling process
are denoted by $Q \overset{d}{=} \{\back q_t\}_{t \in [0,T-\delta]}$
and $P \overset{d}{=} \{p_t\}_{t\in[0,T-\delta]}$, respectively. 
It can be checked that the path measure $Q$ is absolutely continuous with respect to $P$. By Girsanov's theorem for the backward process (e.g.~\citet[Corollary 3.4]{ren2024discrete}), it satisfies 
\begin{align*}
    &\KL(Q \,\|\, P)\\
    &= \KL(\back q_0 \Vert p_0) + \frac{1}{S} \sum_{k=0}^{N-1} \int_{t_k}^{t_{k+1}} \bE_{x_{t_k}, x_t \sim \back q_{t_k, t}}\bigg[\sum_{i \in [d]}\sum_{c \in [S]} s_{T - t}(x_t \oplus_i c, x_t) \\
    &\hspace{20em} \breg \left(\wh s_{T - t_k}(x_{t_k} \oplus_i c, x_{t_k}), s_{T - t}(x_t \oplus_i c, x_t)\right)\d t\bigg].
\end{align*}
Following the same analysis as in Eqns.~\eqref{eq:uniform_tk_quantity} to~\eqref{eq:uniform_kl_final} in Appendix~\ref{sec:proof_uniform},  we arrive at
\begin{align*}
    \KL(Q \,\|\, P) &= \varepsilon_{\mathrm{score}} +\KL(\back q_0 \Vert p_0) +  \sum_{k=0}^{N-1} \int_{t_{k}}^{t_{k+1}} \left(\varphi(T-t) - \varphi(T-t_k)\right)\d t\\
    &= \varepsilon_{\mathrm{score}} + \KL(q_T \Vert p_0) + \sum_{k=0}^{N-1} \int_{t_{k}}^{t_{k+1}} \left(\varphi(T-t) - \varphi(T-t_k)\right)\d t,
\end{align*}
where the function $\varphi(\cdot)$ is defined as in Eqn.~\eqref{eq:defn-phi}
\begin{align*}
    \varphi(t) \defn \frac{1}{S}\bE_{x_t \sim q_{t}}\left[\sum_{y_t: \ham(y_t, x_t) = 1} -\log\left(s_t(y_t, x_t)\right)\right].
\end{align*}

Thus, to achieve $\KL(Q\,\|\,P) \leq \escore + O(1)$, we need to select $N, T$ and step size schedule such that 
    \begin{align}
        \KL(q_T \Vert p_0) + \sum_{k=0}^{N-1} \int_{t_{k}}^{t_{k+1}} \left(\varphi(T-t) - \varphi(T-t_k)\right)\d t = O(1).\label{eq:uniform_lower_mid}
    \end{align}
In order to understand the first term in Eqn.~\eqref{eq:uniform_lower_mid},
let us first consider the case when $T=1$. By the assumption $q_0 \in \cP^{\gamma}(\cX)$, therefore, we are ensured that 
    \begin{align*}
        \KL(q_1 \| p_0) = \sum_{x \in \cX} q_1(x) \log\big(q_1(x)) - \sum_{x \in \cX} q_1(x) \log\big(p_0(x)) \geq \gamma d \log(S) \gg 1. 
    \end{align*}
Hence, it implies that $T$ must satisfy $T > 1$.

We then focus on the analysis of the second term in Eqn.~\eqref{eq:uniform_lower_mid}.
We aim to show that the changing rate, i.e., $-\varphi'(t)$, is lower bounded as we come close to the target data distribution (i.e., $t \in [0,1])$, which in turn leads to a lower bound on the difference $\varphi(T-t) - \varphi(T-t_k)$.
We proceed with our analysis under the information-theoretic framework.

For notational convenience, given every $i \in [d]$ and $c \in [S]$, let us define 
\begin{align}\label{eq:defn_phi_ic}
    \varphi_{i,c}(t) = \E_{x_t \sim q_t}\left[-\log \big(s_t(x_t \oplus_i c, x_t)\right] = \E_{x_t \sim q_t}\left[-\log \big(s_t(N_{i,c}(x_t), x_t)\right]
\end{align}
where the operator $N_{i,c}: \cX \to \cX$ is defined as $N_{i, c}(x) = x \oplus_i c$. It is easy to check that $$\varphi(t) = \frac{1}{S} \sum_{i \in [d]} \sum_{c \in [S]} \varphi_{i, c}(t).$$ Notice that $N_{i, c}$ is a bijection in $\cX$.
We define $N_{i, -c} = (N_{i, c})^{-1} = N_{i, S-c}$, where $(N_{i, c})^{-1}$
is denoted as the inverse function of $N_{i,c}$.

Since $\varphi(t)$ can be written as a linear combination of $\varphi_{i, c}(t)$, it suffices to study the properties of the individual $\varphi_{i, c}(t)$ to characterize $\varphi(t)$.
To begin with, the following lemma provides a characterization of $\varphi(t)$ and $\varphi_{i, c}(t)$ as information-theoretic quantities.
\begin{lemma}\label{lem:dkl_dt_phi}
For $\varphi(t)$ and $\varphi_{i,c}(t)$ defined in Eqns.~\eqref{eq:defn-phi} and \eqref{eq:defn_phi_ic}, we have
    \begin{align*}
        \varphi_{i,c}(t) &= \KL(q_t \,\|\, (N_{i,-c})_{\#}q_t);\\
        \varphi(t) &= -\frac{\partial}{\partial t} \KL(q_t \| p_0) = \sum_{i \in [d]}\sum_{c \in [S]} \KL(q_t \,\|\, (N_{i,c})_{\#}q_t),
    \end{align*}
    where $(N_{i, c})_{\#}$ is denoted as the pushforward measure of $q_t$ under operator $N_{i,c}$.
\end{lemma}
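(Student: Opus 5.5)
The plan is a direct computation in two parts. First, identify each $\varphi_{i,c}$ as a KL divergence by unwinding the pushforward. Second, differentiate $\KL(q_t\|p_0)$ using the Kolmogorov forward equation and match the result with the definition~\eqref{eq:defn-phi} of $\varphi$.

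\emph{Step 1: $\varphi_{i,c}(t)$ as a KL divergence.} Since $N_{i,-c}=(N_{i,c})^{-1}$ is a bijection of $\cX$, the pushforward has mass function
\[
\big((N_{i,-c})_{\#}q_t\big)(x)=q_t\big(N_{i,-c}^{-1}(x)\big)=q_t(N_{i,c}(x))=q_t(x\oplus_i c).
\]
Hence
\[
\KL\big(q_t\,\|\,(N_{i,-c})_{\#}q_t\big)=\bE_{x_t\sim q_t}\log\frac{q_t(x_t)}{q_t(x_t\oplus_i c)}=\bE_{x_t\sim q_t}\big[-\log s_t(x_t\oplus_i c,x_t)\big]=\varphi_{i,c}(t).
\]
Everything is finite because $q_t>0$ on all of $\cX$ for $t>0$ under uniform noising; this is also implicit in Proposition~\ref{prop:score_function}. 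The term $c=S$ is degenerate, since $x\oplus_i S=x$, and it contributes zero.

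\emph{Step 2: the time derivative.} Because $\cX$ is finite and $q_t>0$, I differentiate $\KL(q_t\|p_0)=\sum_x q_t(x)\log q_t(x)+d\log S$ term by term. The term $\sum_x \partial_t q_t(x)$ vanishes, so
\[
\frac{\d}{\d t}\KL(q_t\|p_0)=\sum_x \partial_tq_t(x)\log q_t(x)=\sum_{x,y}q_t(y)Q(y,x)\log q_t(x).
\]
Using $Q(y,y)=-\sum_{x\neq y}Q(y,x)$, this equals $\sum_y q_t(y)\sum_{x\neq y}Q(y,x)\log\frac{q_t(x)}{q_t(y)}$. For the uniform process $Q(y,x)=1/S$ exactly when $\ham(x,y)=1$, and the neighbours of $y$ are $\{y\oplus_i c:\ i\in[d],\ c\in[S-1]\}$. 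Therefore
\[
-\frac{\d}{\d t}\KL(q_t\|p_0)=\frac1S\,\bE_{x_t\sim q_t}\sum_{y:\ham(y,x_t)=1}-\log s_t(y,x_t)=\varphi(t).
\]
Regrouping the neighbours by $(i,c)$ gives $\varphi(t)=\frac1S\sum_{i,c}\varphi_{i,c}(t)$.

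\emph{Step 3: reindexing the sum.} By Step 1 each summand is $\KL(q_t\|(N_{i,-c})_{\#}q_t)$. Since $c\mapsto S-c$ permutes $[S]$ (with the convention $0\bmod S=S$), the sum over $c$ of these divergences equals $\sum_{c}\KL(q_t\|(N_{i,c})_{\#}q_t)$. This yields the displayed sum representation. The one point needing care is the normalization: with $\varphi$ as defined in~\eqref{eq:defn-phi}, the sum carries the factor $1/S$. I would state the identity with that constant, which is harmless for the later monotonicity and strong data-processing arguments, or equivalently absorb it by reading the display as holding up to the factor $1/S$.

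The main obstacle is not analytic, since everything is finite-dimensional and smooth in $t$. It is keeping the bookkeeping consistent: the direction of the pushforward ($N_{i,-c}$ versus $N_{i,c}$), the degenerate $c=S$ term, and the $1/S$ factor coming from the rate $Q^{\mathrm{tok}}=1/S$.
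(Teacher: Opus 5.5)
Your proposal is correct and follows the paper's proof essentially step for step: you unwind the pushforward to identify $\varphi_{i,c}$ as a KL divergence, differentiate $\KL(q_t\|p_0)$ via the Kolmogorov forward equation to recover $\varphi$, and reindex $c\mapsto S-c$. Your normalization remark is also right, since the paper's own proof ends with $\varphi(t)=\frac{1}{S}\sum_{i,c}\KL(q_t\,\|\,(N_{i,c})_{\#}q_t)$, so the displayed statement drops the factor $1/S$, which is harmless for the later arguments.
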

Here, the pushforward measure gives, for any $x \in \cX$,
\begin{align}
    (N_{i,-c})_{\#}q_t(x) = q_t\left(N_{i,c}(x)\right).\label{eq:pushforward_ic}
\end{align}
Lemma~\ref{lem:dkl_dt_phi} allows us to write $\varphi_{i,c}(t)$ as the KL divergence between the marginal forward process and its pushforward under $N_{i, c}$. By viewing $N_{i,c}$ as an information channel, we can show it is in a special family of channels, named $S$-ary symmetric channel \citep{makur2018comparison}, which satisfies strong data processing inequality. Through this idea, we can prove the following lemma. The details are provided in Section~\ref{sec:pf-lemma-phi'}.
\begin{lemma}\label{lem:phi'_lower_bound}
For $t \in (0, T]$, $\varphi_{i,c}(t)$ is differentiable in $t$ and it holds that
    \begin{align*}
        -\varphi_{i,c}'(t) \geq \varphi_{i,c}(t).
    \end{align*}
\end{lemma}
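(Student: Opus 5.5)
The plan is to turn $-\varphi_{i,c}'(t)\geq \varphi_{i,c}(t)$ into a contraction along the forward semigroup: show that for all $t>0$ and $s\geq 0$,
\begin{equation*}
\varphi_{i,c}(t+s)\leq e^{-s}\varphi_{i,c}(t).
\end{equation*}
Given differentiability, dividing $\varphi_{i,c}(t+s)-\varphi_{i,c}(t)\leq (e^{-s}-1)\varphi_{i,c}(t)$ by $s$ and letting $s\downarrow 0$ gives the claim.

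\emph{Differentiability.} The state space is finite, and $q_t$ is a finite sum of products of the entries of the one-dimensional kernel $e^{-t}I_S+(1-e^{-t})\mathbf 1\mathbf 1^\top/S$, so it is real-analytic in $t$. For $t>0$ every entry of $q_t$ is strictly positive. Hence, by Lemma~\ref{lem:dkl_dt_phi}, $\varphi_{i,c}(t)=\KL(q_t\,\|\,(N_{i,-c})_{\#}q_t)$ is a finite sum of smooth functions of $t$ on $(0,T]$.

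\emph{Commutation step.} Let $K_s$ denote the forward transition kernel over time $s$. It is a tensor product of one-dimensional circulant kernels, each of the form $e^{-s}I+(1-e^{-s})\,\mathrm{Unif}$. Each factor commutes with the cyclic shift on coordinate $i$, so $(N_{i,-c})_{\#}(\mu K_s)=((N_{i,-c})_{\#}\mu)K_s$ for every $\mu$. Writing $\mu=q_t$ and $\nu=(N_{i,-c})_{\#}q_t$, this gives $q_{t+s}=\mu K_s$ and $(N_{i,-c})_{\#}q_{t+s}=\nu K_s$, so
\begin{equation*}
\varphi_{i,c}(t+s)=\KL(\mu K_s\,\|\,\nu K_s).
\end{equation*}

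\emph{Splitting the kernel.} I decompose $K_s$ according to whether coordinate $i$ is resampled in $[0,s]$. With probability $e^{-s}$ it is kept; call the resulting kernel $A$ (coordinate $i$ unchanged, all other coordinates evolve as in $K_s$). With probability $1-e^{-s}$ coordinate $i$ is replaced by a fresh uniform value; call that kernel $B$. Then $K_s=e^{-s}A+(1-e^{-s})B$.

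\emph{Key observation.} Since $\nu$ differs from $\mu$ only by a shift of coordinate $i$, and a uniform draw on $[S]$ is shift-invariant, we get $\mu B=\nu B=:\rho$. Joint convexity of KL divergence then yields
\begin{equation*}
\KL(\mu K_s\,\|\,\nu K_s)\leq e^{-s}\KL(\mu A\,\|\,\nu A)+(1-e^{-s})\KL(\rho\,\|\,\rho)=e^{-s}\KL(\mu A\,\|\,\nu A).
\end{equation*}
The data-processing inequality gives $\KL(\mu A\,\|\,\nu A)\leq \KL(\mu\,\|\,\nu)=\varphi_{i,c}(t)$, which is the desired contraction.

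I expect the main care to be needed in the commutation step, namely checking that the shift on coordinate $i$ commutes with both $A$ and $B$ and that $B$ exactly erases the shift, rather than in any analytic estimate. This is the concrete form of the ``$S$-ary symmetric channel'' strong data-processing inequality alluded to before the statement, with contraction coefficient $e^{-s}$.
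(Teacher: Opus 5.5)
Your proof is correct, and its skeleton matches the paper's. Both arguments commute the shift $N_{i,-c}$ with the forward semigroup, establish $\varphi_{i,c}(t+u)\le e^{-u}\varphi_{i,c}(t)$, and then differentiate. The difference is in how the factor $e^{-u}$ is obtained.

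The paper splits the generator as $L=L_i+L_{-i}$ and discards $e^{uL_{-i}}$ by the ordinary data-processing inequality. It then conditions on $x^{-i}$, which is allowed because both measures share the $x^{-i}$-marginal. Finally, it cites the Makur--Polyanskiy strong data-processing bound $\eta_{\KL}(K_u)\le e^{-u}$ for the $S$-ary symmetric channel.

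You prove this contraction directly. Writing the kernel as $e^{-s}A+(1-e^{-s})B$, you note that $\mu B=\nu B$, since $\mu$ and $\nu$ share the $x^{-i}$-marginal and uniform resampling erases the shift. Joint convexity then removes the $B$-part, and plain data processing handles the $A$-part. This is the Doeblin-minorization argument underlying the cited bound. It gives the same constant with no external reference and no conditional decomposition.

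You also justify differentiability of $\varphi_{i,c}$ on $(0,T]$ via positivity and analyticity of $q_t$ for $t>0$; the paper only asserts this. The paper's route is more modular: any known $\KL$-contraction coefficient for the single-coordinate channel can be plugged in, whereas yours relies on the channel containing an exact "resample from a fixed law" component. For the uniform noising process both routes give exactly $e^{-u}$.
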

Consequently, Lemma~\ref{lem:phi'_lower_bound} leads to 
\begin{align*}
    -\varphi'(t) = -\frac{1}{S}\sum_{i\in[d]}\sum_{c \in [S]} \varphi'_{ic}(t) \ge \frac{1}{S}\sum_{i\in[d]}\sum_{c \in [S]} \varphi_{ic}(t) = \varphi(t).
\end{align*}
Recall the log-Sobolev inequality in Lemma~\ref{lem:uniform_log_sobolev}. We have, for any target distribution \(q_0 \in \cP^{\gamma}(\cX)\) and \(t \in (0, 1)\),
\begin{align*}
    -\varphi'(t) \geq \varphi(t) \geq \KL(q_t \Vert p_0) \ge \KL(q_1 \Vert p_0) \ge \gamma d\log(S).
\end{align*}

Equipped with the above relation, we are ready to control the second term in Eqn.~\eqref{eq:uniform_lower_mid}. By the fundamental theorem of calculus, we obtain 
    \begin{align*}
        \int_{t_{k}}^{t_{k+1}} \left(\varphi(T-t) - \varphi(T-t_k)\right)\d t &= \int_{T-t_{k+1}}^{T-t_k}\int_{T-t}^{T-t_k} -\varphi'(\tau) \d \tau \d t\\
        &\gtrsim \int_{T-t_{k+1}}^{T-t_k} (t - t_k) \gamma d\log(S) \d t = \frac{1}{2}(t_{k+1} - t_k)^2 \gamma d \log(S).
    \end{align*}
    Choose $M$ such that $T - t_M \in [\frac{1}{2}, 1]$. Such $M$ exists due to the fact that $T > 1$ and $\max_k\{t_{k+1} - t_k\} \leq \frac{1}{2}$. It holds in this case that
    \begin{align}
        O(1) &= \sum_{k=0}^{N-1} \int_{t_{k}}^{t_{k+1}} \left(\varphi(T-t) - \varphi(T-t_k)\right)\d t\notag\\
        &\gtrsim \sum_{k=M}^{N-1} \int_{t_{k}}^{t_{k+1}} \left(\varphi(T-t) - \varphi(T-t_k)\right)\d t\notag\\
        &\gtrsim \sum_{k=M}^{N-1} (t_{k+1} - t_k)^2 \gamma d \log (S).\label{eq:lower-O(1)-phi}
    \end{align}
    By Cauchy-Schwarz inequality, it is direct to show that
    \begin{align}\label{eq:lower-cs}
        \sum_{k=M}^{N-1} (t_{k+1} - t_k)^2 \ge \frac{1}{N-M} \left(\sum_{k=M}^{N-1} (t_{k+1} - t_k)\right)^2 = \frac{(T - t_M - \delta)^2}{N - M} \gtrsim \frac{1}{N - M},
    \end{align}
    where in the last inequality, we use the fact that $T - t_M \ge \frac{1}{2}$ and $\delta \ll 1$.
    Plugging Eqn.~\eqref{eq:lower-cs} into Eqn.~\eqref{eq:lower-O(1)-phi} leads to
    \begin{align*}
        N \ge N - M =\Omega(\gamma d \log(S))  = \widetilde{\Omega}(d).
    \end{align*}
    \qed

\subsection{Efficient sampling for high-entropy distributions}
\label{sec:uniform-adaptive}

In the discussion following Theorem~\ref{thm:uniform_lower}, we pointed out that the $\tau$-leaping algorithm can attain sublinear iteration complexity in $d$ for the uniform noising process when the target distribution is close to the uniform distribution on $\cX$. 
We now state this result formally.

\begin{theorem}\label{thm:uniform_adaptive}
    Let $q_0 \in \cP(\cX)$ denote the data distribution. Choose time points \(0 = t_0 < t_1 < \ldots < t_N = T-\delta\) with exponential-then-constant step size schedule, i.e., \(t_{k+1} - t_k \leq \kappa \min(1, T - t_{k+1})\) for \(k = 0, \ldots, N-2\). Suppose $0 < \kappa < 0.9$. Then,
    \begin{align*}
        \KL(q_{T-\delta} \,\|\, p_{\mathrm{output}}) \lesssim \escore + \left(e^{-T} + \kappa  \log(\delta^{-1})\right) \cdot \KL(q_0 \,\|\, \mathrm{Unif}(\cX)).
    \end{align*}
\end{theorem}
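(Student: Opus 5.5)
We reuse the decomposition from the proof of Theorem~\ref{thm:uniform}, carried only up to time $T-\delta$. By data processing, the chain rule (Lemma~\ref{lem:chain_kl}), Lemma~\ref{lem:kl_der_ub}, and the Bregman decomposition into $T_1^{t,\ell},T_2^{t,\ell},T_3^{t,\ell}$ (Lemmas~\ref{lem:uniform_martingale} and~\ref{lem:uniform_t3_sim}), we get
\begin{align*}
\KL(q_{T-\delta}\,\|\,p_{\mathrm{output}}) \le \escore + \KL(q_T\,\|\,p_0) + \sum_{k=0}^{N-1}\int_{t_k}^{t_{k+1}}\big(\varphi(T-t)-\varphi(T-t_k)\big)\,\d t .
\end{align*}
Lemma~\ref{lem:uniform_log_sobolev} bounds the initialization term by $e^{-T}\KL(q_0\,\|\,\mathrm{Unif}(\cX))$. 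The remaining task is to bound the discretization sum by $\kappa\log(\delta^{-1})\,\KL(q_0\,\|\,\mathrm{Unif}(\cX))$ instead of by $\Delta d\log(S/\Delta)$.

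The key structural input is Lemma~\ref{lem:dkl_dt_phi}, which gives $\varphi(u)=-\frac{\d}{\d u}\KL(q_u\,\|\,p_0)$. Since $\varphi$ is non-increasing (Lemma~\ref{lem:phi'_lower_bound}) and $T-t\in[T-t_{k+1},T-t_k]$, each interval contributes at most
\begin{align*}
h_k\big(\varphi(T-t_{k+1})-\varphi(T-t_k)\big)\le h_k\,\varphi(T-t_{k+1}).
\end{align*}
For $k\le N-2$ the schedule gives $h_k\le\kappa\min(1,T-t_{k+1})$. Writing $u_{k}=T-t_{k}$, this bounds the contribution by $\kappa\min(1,u_{k+1})\varphi(u_{k+1})$. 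We then compare with an integral. Because $\varphi$ is non-increasing, $\varphi(u_{k+1})\,h_{k'}\le\int_{u_{k'+1}}^{u_{k'}}\varphi$ for the neighboring interval lying to the left in $u$. The condition $h\le\kappa u_{k+1}$ with $\kappa<0.9$ makes consecutive step lengths comparable, up to a factor $(1-\kappa)^{-1}$. This turns the sum into
\begin{align*}
\kappa\sum_k \min(1,u_{k+1})\varphi(u_{k+1})\lesssim \kappa\int_{\delta}^{T}\frac{\min(1,u)}{u}\,\varphi(u)\,\d u\cdot O(1).
\end{align*}
On $u\ge1$ the weight is at most $1$, and on $u\in[\delta,1]$ it is exactly $1$. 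We therefore obtain $\kappa\int_\delta^T\varphi(u)\,\d u$ after an extra rescaling. That is not quite right: we need a $\log(\delta^{-1})$ factor, so we redo this step more carefully. Using $h_k\le\kappa u_{k+1}$, the bound is really $\kappa\,u_{k+1}\varphi(u_{k+1})\cdot\frac{h_k}{h_k}$, and summing $\varphi(u_{k+1})h_{k+1}\lesssim\int\varphi$ requires $h_k\lesssim h_{k+1}$, which holds since $h_{k}\le \kappa u_{k+1}$ and $h_{k+1}\ge$ a comparable quantity only on the geometric part. The cleaner route is to bound $u\varphi(u)$ pointwise by a Hessian-type argument. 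Since $\KL(q_u\,\|\,p_0)$ is convex and decreasing in $u$ (because $\varphi$ is non-increasing), we have $u\,\varphi(u)\le\KL(q_{u/2}\,\|\,p_0)-\KL(q_u\,\|\,p_0)\cdot 0+\dots\le 2\KL(q_0\,\|\,p_0)$. Then each geometric step contributes at most $\kappa\cdot 2\KL(q_0\,\|\,\mathrm{Unif})$ per dyadic scale. The geometric part has $O(\kappa^{-1}\log(\delta^{-1}))$ steps, which is too many, so instead we sum only over dyadic blocks in $u\in[\delta,1]$. Within a block $[2^{-j-1},2^{-j}]$, the sum $\sum h_k\varphi(u_{k+1})\le\sum h_k\,\varphi(2^{-j-1})$, and the increments telescope. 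The total $\sum_k h_k(\varphi(u_{k+1})-\varphi(u_k))$ is at most $\kappa\,2^{-j}(\varphi(2^{-j-1})-\varphi(2^{-j+1}))$ plus boundary terms. We bound each by $\kappa\,2^{-j}\varphi(2^{-j-1})\lesssim\kappa\KL(q_0\,\|\,p_0)$ and sum over $O(\log\delta^{-1})$ blocks. The constant part $u\ge1$ is handled the same way, giving $\kappa\int_1^T\varphi=\kappa\KL(q_1\,\|\,p_0)$.

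The last interval $k=N-1$ ends at $u_N=\delta$ and has length $h_{N-1}\le\kappa\min(1,u_{N-1})$ by the schedule, or else it is controlled by the same convexity bound $h\,\varphi(\delta)\lesssim (h/\delta)\KL$. To keep this ratio bounded we will need $u_{N-1}\lesssim\delta$, which should follow from the schedule since $h_{N-2}\le\kappa u_{N-1}$. The main obstacle is the pointwise estimate $u\varphi(u)\lesssim\KL(q_0\,\|\,\mathrm{Unif})$. We plan to derive it from convexity: $\frac{u}{2}\varphi(u)\le\int_{u/2}^{u}\varphi\le\KL(q_{u/2}\,\|\,p_0)\le\KL(q_0\,\|\,p_0)$, using monotonicity of $\varphi$ and the data-processing inequality. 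If the telescoping within dyadic blocks fails to give the $\kappa$ factor cleanly, the fallback is the cruder bound $h_k\varphi(u_{k+1})\le\kappa u_{k+1}\varphi(u_{k+1})\le 2\kappa\KL$ combined with the difference structure.
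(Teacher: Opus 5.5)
Your core estimate, with the abandoned attempts removed, is correct. The way you combine the ingredients differs from the paper. Write $u_k\defn T-t_k$ and $K\defn\KL(q_0\,\|\,\mathrm{Unif}(\cX))$.

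\textbf{What works.} Group the indices $k\le N-2$ by $u_{k+1}\in[2^{-j-1},2^{-j})$. Within a block, $h_k\le\kappa u_{k+1}\le\kappa 2^{-j}$. The nonnegative increments $\varphi(u_{k+1})-\varphi(u_k)$ telescope to at most $\varphi(2^{-j-1})$. Together with $u\varphi(u)\le\int_0^u\varphi\le K$, each block contributes at most $2\kappa K$. Summing over blocks bounds the part $u\le 1$ by $O(\kappa\log(\delta^{-1}))K$.

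\textbf{Comparison with the paper.} The paper instead sums by parts and evaluates the coefficients $h_{k-1}-h_k$ as $\frac{\kappa^2}{1-\kappa}(T-t_k)$ on the geometric part and $0$ on the constant part. That evaluation uses the schedule with equality. Your blocks need only the inequality $h_k\le\kappa\min(1,u_{k+1})$, which is what the theorem actually assumes.

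\textbf{Two local fixes.} First, for $u\ge 1$ the same telescoping gives $\kappa\varphi(1)\le\kappa K$. Your $\kappa\int_1^T\varphi$ does not follow, because monotonicity gives $h_k\varphi(u_{k+1})\ge\int_{u_{k+1}}^{u_k}\varphi$, which is the wrong direction. Second, the claim that consecutive steps are comparable up to $(1-\kappa)^{-1}$ is not implied by an upper-bound-only schedule and should be dropped.

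\textbf{The gap: the last interval $k=N-1$.} The stated schedule constrains only $k\le N-2$, so nothing controls $h_{N-1}=u_{N-1}-\delta$.
\begin{itemize}
\item Your claim that $u_{N-1}\lesssim\delta$ follows from $h_{N-2}\le\kappa u_{N-1}$ is false. That inequality bounds $u_{N-2}-u_{N-1}$, not $u_{N-1}$ relative to $\delta$.
\item Even granting $u_{N-1}\lesssim\delta$, the estimate $h_{N-1}\varphi(\delta)\le (h_{N-1}/\delta)K$ is only $O(K)$, with no factor $\kappa$.
\end{itemize}

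No argument can close this from the stated hypotheses. Take $N=1$: the schedule condition is vacuous and $\kappa$ may be taken arbitrarily small. For a point-mass target with exact scores and large $T$, the single $\tau$-leaping step has frozen rates $\approx 1/S$ over a time $\approx T$. Its output is therefore nearly uniform, while $\back q_{T-\delta}=q_\delta$ is nearly a point mass. For small $\delta$ the left side is $\approx d\log S$, while the right side is $\approx e^{-T}d\log S$.

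\textbf{The fix.} You need the geometric rule on the final step as well, $h_{N-1}\le\kappa(T-t_{N-1})$. The paper's proof uses this silently when it bounds $(t_N-t_{N-1})\varphi(T-t_N)$ by $\frac{\kappa\delta}{1-\kappa}\cdot\frac{K}{\delta}$. Your first option, ``$h_{N-1}\le\kappa\min(1,u_{N-1})$ by the schedule,'' is exactly this condition, but it must be added as a hypothesis rather than derived. With it, $h_{N-1}\le\frac{\kappa\delta}{1-\kappa}$, the last interval contributes at most $\frac{\kappa}{1-\kappa}K$, and your proof is complete.
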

Theorem~\ref{thm:uniform_adaptive} reveals that, with an exponential-then-constant schedule and early stopping time $\delta$, the error upper bound depends only on the initial KL divergence $\KL(q_0 \,\|\, \mathrm{Unif}(\cX))$, which can potentially be small if $q_0$ is close to the forward limit distribution $\mathrm{Unif}(\cX)$. 

To be more concrete, we can choose $T = \log(\KL(q_0 \,\|\, \mathrm{Unif}(\cX))/\varepsilon)$, $\delta^{-1} = \mathrm{poly}(d)$ and $\kappa = e^{-T}/\log(d)$ to achieve 
$$\KL(q_{T-\delta} \,\|\, p_{\mathrm{output}}) \lesssim \escore + \varepsilon,$$
with iteration complexity
\begin{align*}
    N = \widetilde{\Theta}\left(\frac{\KL(q_0 \,\|\, \mathrm{Unif}(\cX))}{\varepsilon}\right).
\end{align*}
In particular, this bound is sublinear in $d$ when $\KL(q_0 \,\|\, \mathrm{Unif}(\cX)) = o(d)$.

\bigskip
\noindent
\textbf{Proof of Theorem~\ref{thm:uniform_adaptive}.} \quad
    The proof proceeds along the same lines as the proof of Theorem~\ref{thm:uniform}.

    Write $p_0 = \mathrm{Unif}(\cX)$ as the initial distribution of the sampling process. Following the proof of Eqn.~\eqref{eq:uniform_kl_final}, we bound 
    \begin{align}\label{eq:uniform_delta_kl_final}
        \KL(q_{T-\delta} \,\|\, p_{T-\delta}) &\leq \escore + e^{-T}\KL(q_0 \,\|\, p_0) + \sum_{k=0}^{N-1} \int_{t_k}^{t_{k+1}} \big(\varphi(T-t) - \varphi(T-t_k)\big) \,\d t,
    \end{align}
    where as shown in Lemma~\ref{lem:dkl_dt_phi}, $\varphi(t) = -\partial_t \KL(q_t \,\|\, p_0) \geq 0$. 
    By Lemma~\ref{lem:phi'_lower_bound}, $\varphi(t)$ is a non-increasing function of $t \in (0, T]$, which leads to 
    \begin{align}\label{eq:uniform_phi_adaptive}
        \varphi(t) \le \frac{1}{t} \int_{0}^t \partial_s \KL(q_s \,\|\, p_0) \d s = \frac{\KL(q_0 \, \|\, p_0)}{t}.
    \end{align}
    Without loss of generality, Choose $M$ such that $1 \le M \le N-1$ such that $T - t_M = 1$. For $1\le k < M$, $t_{k+1} - t_k = t_k - t_{k-1} = \kappa$.
    With Eqn.~\eqref{eq:uniform_phi_adaptive}, it can be seen that 
    \begin{align*}
        &\sum_{k=0}^{N-1} \int_{t_k}^{t_{k+1}} \big(\varphi(T-t) - \varphi(T-t_k)\big) \,\d t\\
        &\leq \sum_{k=0}^{N-1} ({t_{k+1}} - t_k) \big(\varphi(T-t_{k+1}) - \varphi(T-t_k)\big) \,\d t\\
        &= (t_N - t_{N-1})\varphi(T-t_{N}) + \sum_{k=1}^{N - 1} \big((t_k - t_{k-1}) + (t_k - t_{k+1})\big) \varphi(T-t_k) - (t_1 - t_0)\varphi(T)\\
        &\overset{\mathrm{(a)}}{\leq} \frac{\kappa\delta}{1-\kappa}\cdot \frac{\KL(q_0 \,\|\, p_0)}{\delta} + \sum_{k=M}^{N - 1}\frac{\kappa^2}{1-\kappa}(T-t_k) \cdot \frac{\KL(q_0 \,\|\, p_0)}{T-t_k}\\
        &\lesssim (N-M) \kappa^2 \KL(q_0 \,\|\, p_0) = \log_{(1-\kappa)}(\delta) \kappa^2 \KL(q_0 \,\|\, p_0) \overset{\mathrm{(b)}}{\leq} \kappa \log(\delta^{-1}) \KL(q_0 \,\|\, p_0),
    \end{align*}
    where we apply Eqn.~\eqref{eq:uniform_phi_adaptive} in (a) and $\log(1-\kappa) \leq -\kappa$ in (b).
    Putting the above bound and Eqn.~\eqref{eq:uniform_delta_kl_final} together proves the desired result. \qed

\section{Proofs of results in Section~\ref{sec:masking_results}}

\subsection{Proof of~\Cref{thm:masking_main}}
\label{sec:proof-masking}

\noindent
For \(t \in \{t_0, \ldots, t_N\}\), let \(p_t\) denote the marginal distribution of \(x_{t_k}\) in~\Cref{alg:modified_ttl}.
Using the data-processing inequality  \(\KL(\back q_T 
\| p_T) \leq \KL(\back q_{t_0,\ldots,t_N}\|p_{t_0,\ldots,t_N})\) and~\Cref{lem:chain_kl}, we decompose the KL divergence between the target distribution \(q_0 \equiv \back q_T\) and the output distribution \(p_T\) as follows:
\begin{align}
\label{eq:chain-rule-masking1}
            \KL (\back q_{T} \| p_{T}) \leq \KL (\back q_{0} \| p_{0}) + \sum_{k=0}^{N-1} \bE_{x_{t_k} \sim \back q_{t_k}} \left[ \KL \left(\back q_{t_{k+1}|t_k}(\cdot|x_{t_k}) \| p_{t_{k+1}|t_k}(\cdot|x_{t_k})\right) \right].
\end{align}

The first term, \emph{initialization error}, was bounded in~\cite{conforti2025non,liang2025absorb} as follows:
    \begin{align}
    \label{eq:mask_init_error}
        \KL(\back q_0 \| p_0) \leq e^{-T}d(1 + \log S + T) \lesssim e^{-T} d \log S.
    \end{align}

Next, we move on to control the second term. 
The following lemma states that for each \(k\), conditioned on \(x_{t_k}\), one can consider a CTMC on the interval \([t_k, t_{k+1}]\), with marginals \(p_{t_{k+1} \mid t_k}(\cdot \mid x_{t_k})\) at time \(t_{k+1}\). The proof is given in~\Cref{sec:proof_is_ctmc}.
\begin{lemma}
\label{lem:alg_is_ctmc}
Fix \(k = 0, \ldots, N - 1\). Let \(x_{t_k}\) and \(x_{t_{k+1}}\) be as in~\Cref{alg:modified_ttl}.
Let \((y_t)_{t \in [t_k, t_{k+1}]}\) be a CTMC with \(y_{t_k} = x_{t_k}\) and the following rate matrix:
    \begin{align}
    \label{eq:modified_ttl}
    \wh Q_t(a, b) \coloneqq \begin{cases}
    \wh s_{T - t_k}(y_{t_k} \odot_{i} b^{i}, y_{t_k} ) \frac{e^{T - t_k} - 1}{e^{T - t} - 1}\bI\{a^{i} = \mathrm{MASK}\}, &\enspace \text{if } \ham(a, b) = 1, a^{i} \neq b^{i}, \text{ and } y_{t_k}^{i} = \mask, \\
    -\sum_{c \neq a} \wh Q_t(a, c), &\enspace \text{if } a = b, \\
    0, &\enspace \text{otherwise.}\\
    \end{cases}
\end{align}
Then, \(x_{t_{k+1}}\) has the same distribution as \(y_{t_{k+1}}\).
\end{lemma}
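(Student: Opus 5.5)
The plan is to exploit the product structure of the rate matrix in~\eqref{eq:modified_ttl}. For $t\in[t_k,t_{k+1})$, the rate $\wh Q_t(a,b)$ is nonzero only when $a$ and $b$ differ in a single coordinate $i$. Moreover, it depends on the current state $a$ only through the indicator $\bI\{a^i=\mask\}$. The coefficient $\wh s_{T-t_k}(y_{t_k}\odot_i b^i, y_{t_k})$ is frozen at the left endpoint $y_{t_k}=x_{t_k}$.

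Hence the CTMC $(y_t)$ decomposes into $d$ independent one-dimensional CTMCs.
\begin{itemize}
\item Coordinates $i$ with $x_{t_k}^i\neq\mask$ never move.
\item Each coordinate $i$ with $x_{t_k}^i=\mask$ evolves on $[S]\cup\{\mask\}$. It leaves $\mask$ toward $a\in[S]$ at rate $\lambda_t\,\wh Q_k^i(a)$, where
\[
\lambda_t \defn \frac{e^{T-t_k}-1}{e^{T-t}-1}.
\]
Once it is unmasked it is absorbing, since the indicator $\bI\{a^i=\mask\}$ kills all further jumps.
\end{itemize}
Independence follows because the generator is a sum of single-coordinate generators whose coefficients do not depend on the other coordinates. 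Equivalently, the transition kernel factorizes, as one checks from the Kolmogorov forward equation. The same independence holds in \Cref{alg:modified_ttl}, which samples each masked coordinate separately. So it suffices to match the one-dimensional marginals.

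For a single masked coordinate, write $R\defn\sum_{b\in[S]}\wh Q_k^i(b)=-\wh Q_k^i(\mask)$. The holding time at $\mask$ has survival probability
\[
\Pr(y_{t_{k+1}}^i=\mask)=\exp\Big(-R\int_{t_k}^{t_{k+1}}\lambda_t\,\d t\Big).
\]
Since the jump destination is chosen with probabilities proportional to $\wh Q_k^i(a)$, independently of the jump time, we get $\Pr(y^i_{t_{k+1}}=a)=\frac{\wh Q_k^i(a)}{R}\big(1-\Pr(y^i_{t_{k+1}}=\mask)\big)$. This already matches line~\ref{step:bla} of the algorithm once we verify
\[
\int_{t_k}^{t_{k+1}}\lambda_t\,\d t=\Delta_k .
\]
To do so, substitute $u=T-t$ and use
\[
\int\frac{\d u}{e^u-1}=\log(1-e^{-u}) .
\]
This gives
\[
(e^{T-t_k}-1)\log\frac{1-e^{-(T-t_k)}}{1-e^{-(T-t_{k+1})}},
\]
and multiplying numerator and denominator by $e^T$ turns it into $(e^{T-t_k}-1)\log\frac{e^T-e^{t_k}}{e^T-e^{t_{k+1}}}=\Delta_k$.

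The main subtlety is the last step $k=N-1$, where $t_N=T$. There $\lambda_t\to\infty$ like $1/(T-t)$, so the integral diverges, $\Pr(\mask)=e^{-\infty}=0$, and this matches the algorithm's choice $\cP_k=0$. This requires $R>0$. If $R=0$, we adopt the convention that the algorithm's output distribution is still the normalized one, or we note that the case is degenerate. Non-explosion needs no separate argument, because each coordinate jumps at most once.

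Independence of the jump time and the destination is standard for a time-inhomogeneous CTMC whose destination-rate ratios are time-independent. It follows from solving the forward equation explicitly:
\[
\frac{\d}{\d t}p(\mask)=-\lambda_t R\,p(\mask),\qquad \frac{\d}{\d t}p(a)=\lambda_t\wh Q_k^i(a)\,p(\mask).
\]
This ODE system yields the formulas above directly.
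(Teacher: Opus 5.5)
Your proof is correct and takes the same route as the paper's. Both arguments split the chain into independent per-coordinate chains and get the survival probability at $\mask$ as $\exp\bigl(\wh Q_k^i(\mask)\Delta_k\bigr)$. Both then use the time-independence of the destination-rate ratios to obtain the conditional law $\wh Q_k^i(a)/\sum_{b\in[S]}\wh Q_k^i(b)$. You go further than the paper in three places it leaves implicit: you verify $\int_{t_k}^{t_{k+1}}\lambda_t\,\d t=\Delta_k$ explicitly, you treat the divergent final step $k=N-1$ to justify $\cP_{N-1}=0$, and you flag the degenerate case $R=0$.
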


    Armed with this result, we rewrite the right hand side of~\Cref{eq:chain-rule-masking1} with marginals \(p_{t \mid t_k}(\cdot \mid x_{t_k})\) of this CTMC:

    \begin{align}
    \KL(\back q_T \| p_T) &\lesssim 
     e^{-T} d \log S + \sum_{k=0}^{N-1} \bE_{x_{t_k} \sim \back q_{t_k}} \left[ \KL \left(\back q_{t_{k+1}|t_k}(\cdot|x_{t_k}) \| p_{t_{k+1}|t_k}(\cdot|x_{t_k})\right) \right] \\
    & = e^{-T} d \log S + \sum_{k=0}^{N - 1} \bE_{x_{t_k} \sim \back q_{t_k}} \left[\int_{t_k}^{t_{k+1}} \frac{\partial}{\partial t} \KL \left(\back q_{t \mid t_k}(\cdot \mid x_{t_k}) \| p_{t\mid t_k}(\cdot \mid x_{t_k})\right) \d t\right]. \label{eq:masking_girsanov}
    \end{align}
    To further control the second term above,  we apply~\Cref{lem:kl_der_ub} with rate matrices specified  in~\Cref{lem:alg_is_ctmc}. We can write 
    \begin{align}
        &\KL(\back q_T \| p_T) \notag\\
        &\lesssim e^{-T} d \log S + \sum_{k=0}^{N - 1}  \int_{t_k}^{t_{k+1}} \bE_{x_{t_k},x_t \sim \back q_{{t_k, t}}} \sum_{y \neq x_t}\left[\wh Q_{t}(x_{t}, y) - \back Q_t(x_t, y) + \back Q_t(x_t, y) \log \left(\frac{\back Q_t(x_t, y)}{\wh Q_{t}(x_{t}, y)}\right)\right] \d t. \label{eq:masking_after_girsanov}
    \end{align}
    Fix \(k \in \{0, \ldots, N - 1\}\) and \(t \in [t_k, t_{k+1})\). Let \(\ell \defn t_k\).
    Invoking Eqn.~\eqref{eq:modified_ttl} further leads to 
    \begin{align}
        &\sum_{y \neq x_t}\left[\wh Q_{t}(x_{t}, y) - \back Q_t(x_t, y) + \back Q_t(x_t, y) \log \left(\frac{\back Q_t(x_t, y)}{\wh Q_{t}(x_{t}, y)}\right)\right] \notag\\
        &\enspace = \sum_{i \in m(x_t)} \sum_{c \in [S]} \left[\wh Q_{t}(x_{t}, x_t \odot_i c) - \back Q_t(x_t, x_t \odot_i c) + \back Q_t(x_t, x_t \odot_i c) \log \left(\frac{\back Q_t(x_t, x_t \odot_i c)}{\wh Q_{t}(x_{t}, x_t \odot_i c)}\right)\right] 
        \notag\\
        &\enspace = 
        \sum_{i \in m(x_t)} \sum_{c \in [S]} \Bigg[\frac{e^{T - \ell} - 1}{e^{T - t} - 1}\wh s_{T - \ell}(x_{\ell} \odot_i c, x_{\ell}) - s_{T - t}(x_t \odot_i c, x_t)\notag\\
        &\hspace{15em} + s_{T - t}(x_t \odot_i c, x_t) \log \left(\frac{s_{T - t}(x_t \odot_i c, x_t)}{\frac{e^{T - \ell} - 1}{e^{T - t} - 1}\wh s_{T - \ell}(x_{\ell} \odot_i c, x_{\ell})}\right)\Bigg] \notag\\
        &\enspace = 
        \sum_{i \in m(x_t)} \sum_{c \in [S]} s_{T - t}(x_t \odot_i c, x_t) \breg\left(\frac{e^{T - \ell} - 1}{e^{T - t} - 1}\wh s_{T - \ell}(x_{\ell} \odot_i c, x_{\ell}), s_{T - t}(x_t \odot_i c, x_t)\right).\label{eq:kl_to_bregman}
    \end{align}
    
    To proceed, we make the observation that the Bregman divergence satisfies the following law of cosines:
    \begin{align*}
        \breg(\alpha, \gamma) = \breg(\alpha, \beta) + \breg(\beta, \gamma) + (\alpha - \beta)\frac{\beta - \gamma}{\beta \gamma}.
    \end{align*}
    We apply this decomposition to each term of~\Cref{eq:kl_to_bregman} with
    \begin{align*}
        \alpha = \frac{e^{T - \ell} - 1}{e^{T - t} - 1} \wh s_{T - \ell}(x_{\ell} \odot_i c, x_{\ell}), \qquad \beta = \frac{e^{T - \ell} - 1}{e^{T - t} - 1} s_{T - \ell}(x_{\ell} \odot_i c, x_{\ell}), \quad \text{and} \quad \gamma = s_{T - t}(x_t \odot_i c, x_t).
    \end{align*}
    In the following, we slightly abuse the notation and write \(x_t \defn (x_t \odot_i c, x_t)\) and \(x_\ell \defn (x_\ell \odot_i c, x_\ell)\) whenever \(i \in m(x_t)\) and \(c \in [S]\) are fixed.
    For fixed \(i, c\), each term in Eqn.~\eqref{eq:kl_to_bregman} can be decomposed as 
\begin{align*}
    s_{T - t}(x_t)\breg\left(\frac{e^{T - \ell} - 1}{e^{T - t} - 1} \wh s_{T - \ell}( x_\ell), s_{T - t}(x_t)\right)
    &= s_{T - t}(x_t) \breg(\wh s_{T - \ell}(x_\ell), s_{T - \ell}(x_\ell)) \\
    &\quad + s_{T - t}(x_t) \breg\left(\frac{e^{T - \ell} - 1}{e^{T - t} - 1} s_{T - \ell}(x_\ell), s_{T - t}(x_t)\right)\\
    &\qquad + \frac{\wh s_{T - \ell}(x_\ell) - s_{T - \ell}(x_\ell )}{s_{T - \ell}(x_\ell)}\left(\frac{e^{T - \ell} - 1}{e^{T - t} - 1} s_{T - \ell}(x_\ell) - s_{T - t}(x_t)\right).
\end{align*}
Note that we simplified the first term as \(\breg (\alpha x, \alpha y) = \breg(x, y)\).
Observing that \(\frac{e^{T - \ell} - 1}{e^{T -t} - 1} s_{T - \ell} \equiv s_{T - t}\) by~\Cref{eq:mask_score}, this can be rearranged as follows:
\begin{align*}
    s_{T - t}(x_t)\breg\left(\frac{e^{T - \ell} - 1}{e^{T - t} - 1} \wh s_{T - \ell}(x_\ell), s_{T - t}(x_t)\right)
    & = \underbrace{\frac{e^{T - \ell} - 1}{e^{T - t} - 1}s_{T - \ell}(x_\ell) \breg(\wh s_{T - \ell}( x_\ell), s_{T - \ell}(x_\ell))}_{\eqqcolon\, T_1} \\
    &\qquad \qquad + \underbrace{\big(s_{T - t}(x_\ell) - s_{T - t}(x_t)\big) \cdot \log\frac{\wh s_{T - \ell}(x_\ell)}{s_{T - \ell}(x_\ell)}}_{\eqqcolon\, T_2}\\
    &\qquad\qquad\qquad\quad + \underbrace{s_{T - t}(x_t) \breg\left( s_{T - t}( x_\ell), s_{T - t}(x_t)\right)}_{\eqqcolon\, T_3}.
\end{align*}
Taking the above collectively with Eqns.~\eqref{eq:kl_to_bregman} and \eqref{eq:masking_after_girsanov} leads to 
    \begin{align*}
        \KL(\back q_T \| p_T)
        &\lesssim e^{-T} d \log S + \sum_{k=0}^{N - 1}  \int_{t_k}^{t_{k+1}} \bE_{x_{t_k},x_t \sim \back q_{{t_k, t}}}
        \sum_{i \in m(x_t)} \sum_{c \in [S]} (T_1 + T_2 + T_3).
    \end{align*}
Now, it suffices to control each term on the right, respectively. 
\begin{itemize}
\item After taking a summation over $i 
\in m(x_t)$ and $c \in [S]$ we connect the first term, \(T_1\), to the score entropy loss. 
To see that, direct calculations show 
\begin{align*}
    & \bE_{x_t, x_\ell \sim \back q_{t, \ell}} \sum_{i \in m(x_t)} \sum_{c \in [S]}  \frac{e^{T - \ell} - 1}{e^{T - t} - 1}s_{T - \ell}(x_\ell \odot_i c, x_\ell) \breg(\wh s_{T - \ell}(x_\ell \odot_i c, x_\ell), s_{T - \ell}(x_\ell \odot_i c, x_\ell)) \\
    & = \bE_{x_\ell \sim \back q_\ell} \sum_{i \in m(x_\ell)} \sum_{c \in [S]} e^{t - \ell} s_{T - \ell}(x_\ell \odot_i c, x_\ell) \breg(\wh s_{T - \ell}(x_\ell \odot_i c, x_\ell), s_{T - \ell}(x_\ell \odot_i c, x_\ell)) \\
    & = e^{t - \ell} \cL_{\mathrm{SE}}(T - \ell, \wh s_{T - \ell}, s_{T - \ell}),
\end{align*}
where we used in the second line that \(\Pr(x_{t}^i = \mask \mid x_{\ell}^i = \mask) = \frac{1 - e^{-(T - t)}}{1 - e^{-(T - \ell)}}\).
Therefore, recalling that \(\ell \defn t_k\),
\begin{align}
        \sum_{k=0}^{N - 1}  \int_{t_k}^{t_{k+1}} e^{t - t_k} \cL_{\mathrm{SE}}(T - t_k, \wh s_{T - t_k}, s_{T - t_k})\d t
        &= \sum_{k=0}^{N - 1} (e^{t_{k+1} - t_k} - 1) \cL_{\mathrm{SE}}(T - t_k, \wh s_{T - t_k}, s_{T - t_k}) \lesssim \escore, \label{eq:masking_approx_error}
    \end{align}
where we used \(\Delta = O(1)\) and~\Cref{asm:escore} in the last inequality. 

\item To control the second term, $T_2$, the next lemma describes a martingale property of the score function. The proof is given in~\Cref{sec:proof_mask_mark}.
\begin{lemma}
\label{lem:mask_mart}
    Consider the masking noising process and let \(0 \leq \ell < t < T\). Then, for any \(c \in \cV\) and \(i \in m(x_\ell)\),
    \begin{align*}
        \bE_{x_t \sim \back q_{t \mid \ell}(\cdot \mid x_\ell)} \left[\left(s_{T - t}(x_\ell \odot_i c, x_\ell) - s_{T - t}(x_t \odot_i c, x_t)\right)\bI\{i \in m(x_t)\}\right] = 0.
    \end{align*}
\end{lemma}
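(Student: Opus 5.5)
The plan is to reduce both sides to conditional probabilities under $q_0$ using the explicit masking score formula \eqref{eq:mask_score} of Proposition~\ref{prop:score_function}, and then conclude by the tower property. Write $u \defn T-t < v \defn T-\ell$ for the corresponding forward times. By \eqref{eq:mask_score}, for any $x$ with $i \in m(x)$ and $c \in [S]$,
\[
s_u(x \odot_i c, x) = \frac{1}{e^u - 1}\,\frac{q_0(x\odot_i c)}{q_0(x)} = \frac{1}{e^u-1}\,\Pr_{x_0\sim q_0}\bigl(x_0^i = c \mid x_0^{U(x)} = x^{U(x)}\bigr),
\]
where $U(x) \defn [d]\setminus m(x)$ is the set of unmasked coordinates. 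The prefactor $(e^u-1)^{-1}$ is common to both terms in the lemma, so it suffices to prove
\[
\bE\bigl[\Pr(x_0^i = c \mid x_0^{U(x_u)})\,\bI\{i\in m(x_u)\}\bigm| x_v\bigr] = \Pr(x_0^i=c\mid x_0^{U(x_v)} = x_v^{U(x_v)})\,\Pr(i \in m(x_u)\mid x_v).
\]
Here $x_u$ has law $\back q_{t\mid \ell}(\cdot\mid x_\ell)$, i.e.\ it is the forward-time-$u$ state given the forward-time-$v$ state $x_v = x_\ell$. If $c = \mask$ the indicator bookkeeping is trivial (or the scores vanish), so I would treat only $c\in[S]$.

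The key structural step is to describe the posterior of $x_u$ given $x_v$. In the forward masking process, $x_0\sim q_0$ and the masking times of the coordinates are i.i.d.\ $\mathrm{Exp}(1)$, independent of $x_0$. Conditioned on $x_v$, three facts hold.
\begin{itemize}
\item The coordinates in $U(x_v)$ stay unmasked at time $u$ with the same values.
\item Each $j \in m(x_v)$ is still masked at time $u$ independently, with probability $(1-e^{-u})/(1-e^{-v})$. This randomness depends only on the masking clocks, so the random set $m(x_u)$ is independent of $x_0$ given $x_v$.
\item $x_0 \sim q_0(\cdot \mid x_0^{U(x_v)} = x_v^{U(x_v)})$, and $x_u$ reveals $x_0$ on $U(x_u) \supseteq U(x_v)$.
\end{itemize}
I would verify these by writing out $q(x_u\mid x_v) \propto q_u(x_u)\,q_{v\mid u}(x_v\mid x_u)$ with the product-form transition kernel from the proof of Proposition~\ref{prop:score_function}.

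With this description, I condition on the mask set $A = m(x_u)$ with $i\in A$, which is independent of $x_0$ given $x_v$. Then
\[
\bE\bigl[\Pr(x_0^i=c\mid x_0^{[d]\setminus A}) \bigm| x_v, A\bigr] = \Pr(x_0^i = c\mid x_0^{U(x_v)} = x_v^{U(x_v)}),
\]
by the tower property, because $[d]\setminus A \supseteq U(x_v)$ and the conditioning is on $x_0^{U(x_v)}$. Multiplying by $\bI\{i\in A\}$ and averaging over $A$ gives the displayed identity, hence the lemma.

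I expect the main obstacle to be justifying rigorously the conditional independence of the masking pattern from $x_0$ under the reversed conditional law. Once the posterior factorization is written out explicitly, this should follow directly from the product-form kernel. I also need to handle the convention that $q_0(x)$ denotes the marginal of the unmasked coordinates, which makes the conditional-probability rewriting of $s_u$ valid.
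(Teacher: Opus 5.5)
Your argument is correct, and it takes a genuinely different route from the paper. The paper stays with the reverse CTMC. It sets $f(t,x)=s_{T-t}(x\odot_i c,x)\bI\{i\in m(x)\}$ and applies Dynkin's (It\^o's) formula with the reverse generator $\back L_t$. Using the closed-form masking score, it computes $\partial_t f=\frac{e^{T-t}}{e^{T-t}-1}f$ and $\back L_t f=-\frac{1}{e^{T-t}-1}f$, and solves the resulting ODE to get $\bE[f(t,x_t)\mid x_\ell]=e^{t-\ell}f(\ell,x_\ell)$. It then matches this with $s_{T-t}(x_\ell\odot_i c,x_\ell)\Pr(i\in m(x_t)\mid x_\ell)$ via $\Pr(x_t^i=\mask\mid x_\ell^i=\mask)=\frac{1-e^{t-T}}{1-e^{\ell-T}}$.

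You instead lift the reverse transition to the forward construction: data $x_0$ plus i.i.d.\ exponential masking clocks. You note that, given $x_v$, the surviving mask set $m(x_u)\subseteq m(x_v)$ is independent of $x_0$. You then recognize $(e^{T-t}-1)\,s_{T-t}(x\odot_i c,x)$ as the Doob martingale $\Pr(x_0^i=c\mid x_0^{U(x)})$ along the filtration of revealed coordinates, so the lemma reduces to the tower property.

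Your route is more elementary and more transparent. It needs no stochastic calculus and no check that the compensated local martingale is a true martingale, which the paper handles by pointing back to the uniform-case argument. It also gives the identity conditionally on the mask pattern $m(x_t)$, which is slightly stronger.

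The paper's route uses only the generator and the explicit score. That lets it parallel the proof of Lemma~\ref{lem:uniform_martingale} for the uniform process, where there is no absorbing-clock or ``revealed coordinates'' structure to exploit. Your proof is tied to the masking structure.

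One small correction: for $c=\mask$ the scores do not ``vanish.'' Rather, $x\odot_i\mask=x$ whenever $i\in m(x)$, so under the natural convention $s(x,x)=1$ both terms equal $\bI\{i\in m(x_t)\}$ and cancel. In any case the paper only applies the lemma with $c\in[S]$.
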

In view of~\Cref{lem:mask_mart}, we conclude that the second term, \(T_2\), contributes zero after conditioning on \(x_{t_k}\):
\begin{align*}
    \sum_{i \in [d]} \sum_{c \in [S]} \bE_{x_t \sim \back q_{t \mid t_k}{(\cdot \mid x_{t_k})}} \bI\{i \in m(x_t)\}(s_{T - t}(x_{t_k} \odot_i c, x_{t_k}) - s_{T - t}(x_t \odot_i c, x_t)) = 0.
\end{align*}

\item Lastly, we move on to control the last term, \(T_3\). Towards this goal, we introduce the following lemma, whose proof is provided in~\Cref{sec:proof_control_at_t}.
\begin{lemma}
\label{lem:control_at_t}
Let \(0 \leq \ell < t \leq T\). Then, for \(\cI(t)\) defined in~\Cref{def:etc},
\begin{align}
    &\bE_{x_{\ell}, x_t \sim \back q_{\ell, t}} \sum_{i \in m(x_t)} \sum_{c \in [S]} s_{T - t}(x_t \odot_i c, x_t)\breg \left( s_{T - t}(x_{\ell} \odot_i c, x_{\ell}), s_{T - t}(x_t \odot_i c, x_t)\right) \notag\\
    &\enspace = \int_{\ell}^t e^{t - v} \cI(T - v) \d v.
\end{align}
\end{lemma}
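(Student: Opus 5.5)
The plan is to reduce the Bregman expression to a conditional KL divergence between predictive distributions of a single coordinate, and then to differentiate in time with Dynkin's formula. The first step uses \Cref{prop:score_function}. For $i\in m(x)$ write $p^i(c\mid x)\defn q_0(x\odot_i c)/q_0(x)$; this is the conditional law under $q_0$ of coordinate $i$ given the unmasked coordinates of $x$. Then
\[
s_{T-t}(x\odot_i c,x)=a_t\,p^i(c\mid x),\qquad a_t\defn\frac{1}{e^{T-t}-1}.
\]
Since $\sum_c p^i(c\mid x)=1$ for both arguments, the linear terms of $\breg$ cancel after summing over $c\in[S]$. This gives
\[
\sum_{c\in[S]} s_{T-t}(x_t\odot_i c,x_t)\,\breg\big(s_{T-t}(x_\ell\odot_i c,x_\ell),s_{T-t}(x_t\odot_i c,x_t)\big)=a_t\,\KL\big(p^i(\cdot\mid x_t)\,\big\|\,p^i(\cdot\mid x_\ell)\big).
\]
Along the reverse (unmasking) process, $x_t$ is obtained from $x_\ell$ by revealing coordinates, so $x_t$ is a refinement of $x_\ell$.

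Next I rewrite the expected KL as a drop in conditional entropy. By the martingale property of $p^i(c\mid x_v)$ along the reverse process (in the spirit of \Cref{lem:mask_mart}), the expectation of $\KL\big(p^i(\cdot\mid x_t)\,\|\,p^i(\cdot\mid x_\ell)\big)$ equals
\[
\bE\big[\ent(p^i(\cdot\mid x_\ell))-\ent(p^i(\cdot\mid x_t))\big],
\]
restricted to the event $i\in m(x_t)$. For this step I would invoke the cross-entropy identity $\bE_{c\sim p^i(\cdot\mid x_t)}[-\log p^i(c\mid x_\ell)]$, whose expectation over $x_t$ given $x_\ell$ equals $\ent(p^i(\cdot\mid x_\ell))$. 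The restriction to $i\in m(x_t)$ is harmless because, conditional on $x_\ell$, the reveal time of $i$ is independent of the remaining coordinates' values. So the left-hand side of the lemma becomes $a_t\sum_i \bE\big[\bI\{i\in m(x_t)\}\,(G^i(x_\ell)-G^i(x_t))\big]$ with $G^i(x)\defn\ent(p^i(\cdot\mid x))$.

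The third step writes this difference as $\int_\ell^t$ of the generator applied to $G^i$, using Dynkin's formula (\Cref{lem:ito} in expectation). In reverse time $v$, each masked coordinate $j$ is revealed at rate $a_v=1/(e^{T-v}-1)$, drawing its value from $p^j(\cdot\mid x_v)$. The generator applied to $-G^i$ at $x_v$ is therefore
\[
a_v\sum_{j\in m(x_v),\,j\neq i}\Big[\ent(p^i(\cdot\mid x_v))-\bE_{x^j}\,\ent(p^i(\cdot\mid x_v\odot_j x^j))\Big]=a_v\sum_{j}\mi\big(x^i;x^j\mid x_v\big).
\]
This is a pointwise conditional mutual information between two masked coordinates given the unmasked ones. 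I also need to carry the indicator $\bI\{i\in m(x_t)\}$ through the integral: conditioning on $i\in m(x_v)$ and then remaining masked until $t$ contributes the factor $(1-e^{-(T-t)})/(1-e^{-(T-v)})$. Multiplying by $a_t$ and $a_v$ gives
\[
a_t\,a_v\,\frac{1-e^{-(T-t)}}{1-e^{-(T-v)}}=\frac{e^{t-v}}{(e^{T-v}-1)^2}.
\]

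The main obstacle is the last identification. I must show that $\bE_{x_v}\sum_{i\neq j}\bI\{i,j\in m(x_v)\}\,\mi(x^i;x^j\mid x_v)/(e^{T-v}-1)^2$ equals $\cI(T-v)$ as defined in \eqref{def:etc}. There $\mi(x_s^i;x_s^j\mid x_s^{-(i,j)})$ is nonzero only when both coordinates are unmasked at forward time $s=T-v$, which happens with probability $e^{-2s}$. Under the masking process, conditioned on $x^{-(i,j)}$, the pair of indicators of whether $i$ and $j$ are unmasked is independent of everything else. This allows me to convert the clean-value mutual information at states where both are masked into the noisy-variable mutual information. The needed ratio is $e^{-2s}/(1-e^{-s})^2=1/(e^{s}-1)^2$, which matches the prefactor. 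I would verify this factor carefully, since it is exactly where the weight $e^{t-v}$ comes out. Summing over $i$ and $c$ and integrating in $v$ then yields $\int_\ell^t e^{t-v}\cI(T-v)\,\d v$.
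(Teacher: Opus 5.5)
Your proof is correct and takes a genuinely different route from the paper's. The paper expands $\breg$ and removes the linear part in expectation with \Cref{lem:mask_mart}. It then handles the log term by a change of measure through the bijection $x_t\mapsto y_t=x_t\odot_i c$, so that coordinate $i$ is \emph{unmasked}. This turns the term into $\bE[f_i(y_t)-f_i(y_\ell)]$ with $f_i(y)=\log\frac{q_0(y\odot_i y_t^i)}{q_0(y\odot_i\mask)}$. Dynkin's formula is then applied to the other $d-1$ coordinates in the \emph{forward} (masking) direction, where coordinates evolve independently. The weight $e^{t-v}$ comes from $\Pr(y_v^i\neq\mask\mid y_t^i\neq\mask)=e^{v-t}$, and the integrand is directly the noisy-variable mutual information defining $\cI(T-v)$.

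You instead sum over $c$ first, so the linear terms cancel pointwise by normalization of $p^i(\cdot\mid x)$. Your martingale step for the cross-entropy term is exactly \Cref{lem:mask_mart} divided by $a_t$. You then run Dynkin with the \emph{reverse} (unmasking) generator on the bounded function $G^i$, which yields pointwise conditional mutual information between two \emph{masked} coordinates.

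This costs two extra bookkeeping steps, and you get both right. The first is the survival factor $(1-e^{-(T-t)})/(1-e^{-(T-v)})$. It is legitimate because coordinate $i$ is revealed at the state-independent rate $a_v$, so on $\{i\in m(x_t)\}$ the other coordinates follow the dynamics with $i$ frozen. The second is the masked-to-unmasked conversion. There $e^{-2s}/(1-e^{-s})^2=(e^s-1)^{-2}$ is precisely the right ratio, since $\cI(s)=e^{-2s}\sum_{i\neq j}\bE_R\,\mi(X^i;X^j\mid X_R)$ with $R$ the random unmasked subset of the remaining coordinates.

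Your route gives a transparent reading of the discretization term: the expected KL between successive predictive laws equals the expected entropy drop, i.e.\ the conditional information revealed along the unmasking path. It avoids the joint change of measure. The paper's route instead lands on $\cI$ without a final conversion.

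One remark applies to both arguments: they need $t<T$, since your $a_t$ and \Cref{lem:mask_mart} are only defined there. At $t=T$ the left-hand side is an empty sum, because $x_T\sim q_0$ has no masked coordinates, while the right-hand side is generally positive. The identity should therefore be read for $t<T$, which is all that is used when integrating over $t$.
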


After the summation over \(i \in m(x_t)\) and \(c \in [S]\), we express the contributions of the term \(T_3\) using~\Cref{lem:control_at_t} as follows:
\begin{align}
    &\sum_{k=0}^{N - 1}\int_{t_k}^{t_{k+1}}\bE_{x_{t_k}, x_t \sim \back q_{t_k, t}} \sum_{i \in m(x_t)} \sum_{c \in [S]} s_{T - t}(x_t \odot_i c, x_t)\breg \left(s_{T - t}(x_{t_k} \odot_i c, x_{t_k}), s_{T - t}(x_t \odot_i c, x_t)\right) \d t \notag\\
    &\quad = \sum_{k=0}^{N - 1} \int_{t_k}^{t_{k+1}} \int_{t_k}^t e^{t - v} \cI(T - v) \d v \d t \leq \sum_{k=0}^{N - 1} h_k \int_{T - t_{k+1}}^{T - t_k} \cI(t) \d t,\label{eq:masking_discr_error}
\end{align}
where we used \(\Delta = O(1)\) and non-negativity of conditional mutual information in the last inequality.
\end{itemize}

Collecting Eqns.~(\ref{eq:mask_init_error}),~(\ref{eq:masking_approx_error}), and~(\ref{eq:masking_discr_error}) proves
    \begin{align*}
        \KL(q_0 \,\|\, p_T) \lesssim \escore + e^{-T} d \log S + \sum_{k=0}^{N - 1} h_k \int_{T - t_{k+1}}^{T - t_k} \cI(t) \d t.
    \end{align*}
    \qed

\subsection{Proof of Corollary~\ref{cor:masking}}
\label{sec:proof-masking-cor}
    We upper bound the last term of~\Cref{eq:kl_ub_masking} under uniform and exponential-then-constant step size schedules. 
    First, under the constant step size schedule, the quantity of interest satisfies  
    \begin{align*}
        \sum_{k=0}^{N - 1}h_k \int_{T - t_{k+1}}^{T - t_k} \cI(t) \d t = \frac{T}{N} \int_{0}^T \cI(t) \d t \leq \frac{T}{N} \int_{0}^{\infty} \cI(t) \d t = \frac{T}{N} \cB(q_0),
    \end{align*}
    where the last inequality follows from~\Cref{lem:int_dtc_tc}.
    Therefore, as long as 
    \begin{align*}
        N \geq \frac{T\cB(q_0)}{\varepsilon} = \wt O\left(\frac{\cB(q_0)}{\varepsilon}\right),
    \end{align*}
    \Cref{eq:kl_ub_masking} leads to \(\KL(q_0 \| p_T) \lesssim \escore + \varepsilon\).

    Next, under exponential-then-constant step size schedule, we bound the last term of~\Cref{eq:kl_ub_masking} as follows:
    \begin{align*}
        &\sum_{k=0}^{N - 1}h_k \int_{T - t_{k+1}}^{T - t_k} \cI(t) \d t = \frac{\varepsilon}{d \log(S)} \int_{0}^{\varepsilon / (d \log S)} \cI(t) \d t + \sum_{k = 0}^{N - 2} \int_{T - t_{k+1}}^{T - t_k}(t_{k+1} - t_k) \cI(t) \d t \\
        & \quad \leq \frac{\varepsilon^2}{\log(S)} + \kappa \sum_{k=0}^{N - 2} \int_{T - t_{k+1}}^{T-t_k} \min(1, T - t_{k+1}) \cI(t) \d t \leq \varepsilon + \kappa \int_0^T \min(1, t) \cI(t) \d t \leq \varepsilon + \kappa \cD(q_0).
    \end{align*}
    For \(N > 0\), such step size schedule is possible with 
\(
        \kappa = O\left(\frac{T + \log(\varepsilon^{-1} d \log (S))}{N}\right)\).
    Thus, choosing 
    \begin{align*}
        N \geq \frac{(T + \log (\varepsilon^{-1} d \log(S)))\cD(q_0)}{\varepsilon} = \wt O\left(\frac{\cD(q_0)}{\varepsilon}\right)
    \end{align*}
    gives \(\KL(q_0 \,\|\, p_T) \lesssim \escore + \varepsilon\). \qed

\subsection{\(\tau\)-leaping for masking discrete diffusion}
\label{app:mask_ttl}

In this section, we prove the analogue of~\Cref{thm:masking_main} for the truncated \(\tau\)-leaping algorithm. 
Note that since applying multiple jumps on a single coordinate is ill-defined in masking noising process (where should we transition if the \(\tau\)-leaping algorithm requires two transitions \(\mask \to 1\) at some coordinate?), we analyse the truncated version (\Cref{eq:Qhat_ttl}) instead of the classical \(\tau\)-leaping algorithm. 

\begin{theorem}
\label{thm:masking_ttl}
Let \(q_{\mathrm{data}} = q_0\) be the target distribution on \([S]^d\). Let \(0 < \delta < T\) and \(0 = t_0 < t_1 < \ldots < t_N = T - \delta\), such that \(h_k \defn t_k - t_{k-1} \leq \kappa \min(1, T - t_k)\) for \(k \in [N]\), and \(\kappa = O(1)\).
Let
    \begin{align*}
    p_0 \defn \left(\big(1 - e^{-T}\big) \delta_{\mask} +  \frac{e^{-T}}{S} \sum_{k=1}^S \delta_k\right)^{\otimes d}.
    \end{align*} 
    Under~\Cref{asm:escore}, truncated \(\tau\)-leaping~\Cref{eq:Qhat_ttl} initialized at \(p_0\) 
    produces a sample from \(p_{\mathrm{output}} \defn p_{T-\delta}\), such that
    \begin{align}
    \label{eq:mask_ttl}
        &\KL(q_{\delta} \,\|\, p_{\mathrm{output}}) \lesssim \escore + e^{-T} d \log(S) 
                + 
                \sum_{k=0}^{N - 1} h_{k+1} \int_{T - t_{k+1}}^{T - t_k} \cI(t) \d t
                +
                \kappa^3 N d
                + \kappa C ,
    \end{align}
    where 
    \begin{align*}
        C \defn \sum_{k=0}^{N - 1} (t_{k+1} - t_k) \bE_{x_{t_k} \sim \back q_{t_k}}  \sum_{i \in m(x_{t_k})} \sum_{c \in [S]} s_{T - {t_k}}(x_{t_k} \odot_i c, x_{t_k}) \abs*{\log \frac{\wh s_{T - {t_k}}(x_{t_k} \odot_i c, x_{t_k})}{s_{T - {t_k}}(x_{t_k} \odot_i c, x_{t_k})}}.
    \end{align*}
 Consequently, with exponential-then-constant step size schedule where \(\kappa = O\left(\frac{T + \log (\delta^{-1})}{N}\right)\), for any \(\varepsilon > 0\), for \(T = O(\log (\varepsilon^{-1} d \log S))\) and 
    \begin{align}
        N = \wt O\left(\frac{\cD(q_{\mathrm{data}}) + C}{\varepsilon} + \sqrt{\frac{d}{\varepsilon}}\right),
        \label{eq:masking_ttl_N_bound}
    \end{align}
   it satisfies  
    \begin{align*}
    \KL(q_{\delta}\| p_{\mathrm{output}}) \lesssim \varepsilon_{\mathrm{score}} + \varepsilon.
    \end{align*}
\end{theorem}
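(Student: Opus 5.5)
We follow the skeleton of the proof of \Cref{thm:masking_main}, changing only the rate matrix used in the Girsanov step. By the data-processing inequality and \Cref{lem:chain_kl} on the grid $t_0,\ldots,t_N=T-\delta$, the initialization term is handled by \Cref{eq:mask_init_error}. Each transition kernel of truncated $\tau$-leaping is a CTMC on $[t_k,t_{k+1})$ with constant rates
\begin{equation*}
\wh Q_t(x,x\odot_i c)=\wh s_{T-t_k}(x_{t_k}\odot_i c,x_{t_k})\,\bI\{x^i=\mask,\ x^i_{t_k}=\mask\}.
\end{equation*}
Applying \Cref{lem:kl_der_ub} as in \Cref{eq:kl_to_bregman} bounds the $k$-th summand by $\int_{t_k}^{t_{k+1}}\bE\sum_{i\in m(x_t)}\sum_c s_{T-t}(x_t\odot_ic,x_t)\,\breg\big(\wh s_{T-t_k}(x_{t_k}\odot_ic,x_{t_k}),s_{T-t}(x_t\odot_ic,x_t)\big)\,\d t$. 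The argument of $\breg$ now lacks the rescaling factor $\lambda_t\defn\frac{e^{T-t_k}-1}{e^{T-t}-1}$.

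Next we rewrite $\wh s_{T-t_k}=\lambda_t^{-1}(\lambda_t\wh s_{T-t_k})$ and use the exact expansion
\begin{equation*}
b\,\breg(a/\lambda,b)=b\,\breg(a,b)+(\lambda^{-1}-1)a+b\log\lambda .
\end{equation*}
The first piece, $b\,\breg(a,b)$ with $a=\lambda_t\wh s$, is exactly the quantity analysed in \Cref{thm:masking_main}. Decomposing it via the law of cosines gives $T_1+T_2+T_3$: $T_1$ yields $\escore$, $T_2$ vanishes by \Cref{lem:mask_mart}, and $T_3$ yields the $\cI$-integral via \Cref{lem:control_at_t}. We should get weight $h_{k+1}$ because here $h_k$ is indexed as $t_k-t_{k-1}$.

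The remaining piece is $(\lambda^{-1}-1)\lambda\wh s+b\log\lambda=(1-\lambda)\wh s+s\log\lambda$. Write $\wh s=s+(\wh s-s)$ evaluated at $x_{t_k}$. The true-score part combines with the martingale identity of \Cref{lem:mask_mart} (replacing $s_{T-t}(x_t\,\cdot)$ by $s_{T-t}(x_{t_k}\,\cdot)=\lambda_t s_{T-t_k}$ in expectation) into $s_{T-t_k}\,(1-\lambda+\lambda\log\lambda)$. Since $\lambda_t-1=O((t-t_k)/\min(1,T-t))$ and $1-\lambda+\lambda\log\lambda=O((\lambda-1)^2)$, this is $O(\kappa^2)$ per unit time times $\bE\sum_{i,c}s_{T-t_k}$. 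The latter expected total unmasking rate is $\lesssim d\,e^{-(T-t_k)}/(1-e^{-(T-t_k)})$, i.e. at most $d/\min(1,T-t_k)$, because in expectation the number of masked coordinates times the rate $1/(e^{t}-1)$ sums to $d e^{-t}$-type quantities. Integrating over a step of length $\le\kappa\min(1,T-t_{k+1})$ gives $\kappa^3 d$ per step, hence $\kappa^3Nd$.

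The estimation-error part $(1-\lambda)(\wh s-s)$ is the main obstacle, because no boundedness of $\wh s$ is assumed. We would first try to reabsorb it by Bregman convexity: for $|\lambda-1|\lesssim\kappa$, compare $\breg(\wh s/\lambda,s)$ and $\breg(\wh s,s)$ using $|\wh s-s|\le s\,\breg(\wh s,s)+s|\log(\wh s/s)|$. That leaves $\kappa\, s_{T-t_k}|\log(\wh s/s)|$, whose time integral is precisely the quantity $\kappa C$. The final complexity statement then follows as in \Cref{cor:masking}. Choose the exponential-then-constant schedule with $\kappa\asymp(T+\log\delta^{-1})/N$, and bound $\sum_k h_{k+1}\int\cI\le\kappa\cD(q_0)+$lower-order terms. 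Requiring $\kappa(\cD+C)\le\varepsilon$ and $\kappa^3Nd\approx d\,\widetilde O(1)/N^2\le\varepsilon$ gives the stated $N$.
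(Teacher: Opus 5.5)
Your proposal is correct and reaches \eqref{eq:mask_ttl}, but by a different (closely related) decomposition from the paper's.

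The paper applies the law of cosines with the \emph{unrescaled} true score $s_{T-t_k}(x_{t_k}\odot_i c,x_{t_k})$ as the middle point. It then applies it a second time to the resulting term $s_{T-t}(x_t)\,\breg(s_{T-t_k}(x_{t_k}),s_{T-t}(x_t))$, which produces three pieces:
\begin{itemize}
\item $T_{21}$, bounded pointwise by $\kappa^2 s_{T-t}(x_t)$;
\item $T_{22}$, handled by \Cref{lem:control_at_t};
\item $T_{23}$, which cancels pointwise after summing over $c$, because $\sum_c s_{T-t}(x\odot_i c,x)=1/(e^{T-t}-1)$ for every masked $i$.
\end{itemize}
The leftover $T_3$ is split into a martingale piece and $(1-\lambda_t)\,s_{T-t_k}\log(\wh s/s)$, and the latter gives $\kappa C$.

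You instead factor out $\lambda_t$ with the exact identity $b\,\breg(a/\lambda,b)=b\,\breg(a,b)+(\lambda^{-1}-1)a+b\log\lambda$. This lets you reuse the analysis of \Cref{thm:masking_main} verbatim. The whole cost of omitting the rescaling is then isolated in the remainder $(1-\lambda_t)\wh s+s\log\lambda_t$. This is more modular, and it makes \Cref{rem:masking_ttl} quantitative.

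The two bookkeepings in fact agree term by term.
\begin{itemize}
\item Your true-score remainder $(1-\lambda_t)s_{T-t_k}(x_{t_k})+s_{T-t}(x_t)\log\lambda_t$ equals the paper's $T_{21}+T_{23}$ pointwise. You remove the fluctuation with \Cref{lem:mask_mart}; the paper does it by summing over $c$. Both give $s_{T-t_k}(1-\lambda_t+\lambda_t\log\lambda_t)$.
\item Your $(1-\lambda_t)(\wh s-s)$, together with the extra factor $\lambda_t$ in your $T_1$, equals exactly the paper's $(1-\lambda_t)s\log(\wh s/s)$. This follows from $\wh s-s=s\,\breg(\wh s,s)+s\log(\wh s/s)$.
\end{itemize}

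That identity is all you need for the estimation-error part. The ``Bregman convexity'' comparison of $\breg(\wh s/\lambda,s)$ with $\breg(\wh s,s)$ is superfluous. The $s\,\breg$ piece costs only $\kappa\escore\lesssim\escore$, and it even has a favorable sign since $\lambda_t\ge 1$.

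Two minor looseness points:
\begin{itemize}
\item The expected unmasking rate is exactly $\bE\sum_{i\in m(x_{t_k})}\sum_c s_{T-t_k}(x_{t_k}\odot_i c,x_{t_k})=d\,e^{-(T-t_k)}\le d$. Your bound $d/\min(1,T-t_k)$ is therefore not sharp, though it still yields $\kappa^3 d$ per step.
\item No lower-order terms arise in $\sum_k h_{k+1}\int\cI\le\kappa\cD(q_{\mathrm{data}})$, since $h_{k+1}\le\kappa\min(1,t)$ on $[T-t_{k+1},T-t_k]$.
\end{itemize}
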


Note that the guarantee in~\Cref{eq:mask_ttl} closely parallels~\Cref{eq:kl_ub_masking} in Theorem~\ref{thm:masking_main} for Algorithm~\ref{alg:modified_ttl}. In particular, two additional terms arise in the analysis of the truncated $\tau$-leaping algorithm. 
We expect the constant \(C\) to be small and remark that it also arises in the analysis of~\cite{conforti2025non}, as \(C_2^M\) in Theorem 3.2.1, in the form of the maximum rather than the average. Under the assumption of (one-sided) boundedness \(\wh s_{T - t_k} \geq M^{-1}\), the constant \(C\) can be upper bounded via the Cauchy-Schwarz inequality, as the next corollary shows.
\begin{corollary}
    Consider the setting of~\Cref{thm:masking_ttl}. If, additionally, there exists \(M > 0\) such that for all \(k \in \{0, \ldots, N - 1\}\), \(x \in ([S] \cup \{\mask\})^d\), \(i \in m(x)\), and \(c \in [S]\) it holds \(\log \wh s_{T - t_k}(x \odot_i c, x) \geq - \log M\), it is sufficient for  
    \begin{align*}
        N = \wt O\left(\frac{\cD(q_{\mathrm{data}}) + \sqrt{\varepsilon_{\mathrm{score}} d \log M }}{\varepsilon} + \sqrt{\frac{d}{\varepsilon}} \right)
    \end{align*}
    to ensure 
    \begin{align*}
        \KL(q_{\delta}\| p_{T - \delta}) \lesssim \varepsilon_{\mathrm{score}} + \varepsilon.
    \end{align*}
\end{corollary}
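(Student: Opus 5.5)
The plan is to bound the constant $C$ of \Cref{thm:masking_ttl} by $\wt O(\sqrt{\escore\, d \log M})$ and then substitute this bound into \Cref{eq:masking_ttl_N_bound}. Fix $k$, $x \defn x_{t_k}$, $i \in m(x)$ and $c\in[S]$. Abbreviate $s \defn s_{T-t_k}(x\odot_i c, x)$, $\wh s \defn \wh s_{T-t_k}(x\odot_i c,x)$ and $u \defn \log(\wh s/s)$. Note that $\breg(\wh s, s) = e^u - 1 - u$.

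\textbf{Step 1 (two-sided control of $u$ from below).} By \Cref{eq:mask_score}, $s = \frac{1}{e^{T-t_k}-1}\frac{q_0(x\odot_i c)}{q_0(x)}$. Since $q_0(x\odot_i c)$ is the probability of a smaller cylinder event than $q_0(x)$, we have $q_0(x\odot_i c) \le q_0(x)$. Because $T-t_k \ge \delta$, this gives $\log s \le \log(1/(e^{\delta}-1)) \lesssim \log(\delta^{-1})$. Combined with the hypothesis $\log \wh s \ge -\log M$, we get $u \ge -L$ with $L \defn \log M + O(\log \delta^{-1})$.

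\textbf{Step 2 (elementary inequality).} We will show that $u^2 \lesssim (1+L)\,(e^u-1-u)$ for all $u \ge -L$. Check it in three regimes:
\begin{itemize}
\item for $|u|\le 1$, $e^u-1-u \asymp u^2$;
\item for $u\ge 1$, $e^u - 1 - u \gtrsim u^2$;
\item for $-L\le u\le -1$, $e^u-1-u \ge |u|-1 \gtrsim |u|$, while $u^2 \le L|u|$.
\end{itemize}
This one-sided restriction is the only place where the boundedness assumption enters. I expect this step, together with Step 1, to be the main point to get right: the lower bound on $u$ must come from combining the hypothesis on $\wh s$ with an upper bound on the true score. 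For the latter, the early stopping $\delta>0$ is what keeps $\log s$ finite.

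\textbf{Step 3 (Cauchy--Schwarz).} Write $\sum$ for $\sum_k h_{k+1}\,\bE_{x_{t_k}\sim\back q_{t_k}} \sum_{i\in m(x_{t_k})}\sum_{c}$. Then
\begin{align*}
C = \sum s|u| \le \Big(\sum s\Big)^{1/2}\Big(\sum s u^2\Big)^{1/2} \lesssim \Big(\sum s\Big)^{1/2}\Big((1+L)\sum s\, \breg(\wh s, s)\Big)^{1/2}.
\end{align*}
For the masking process $Q(y,x)=1$ on the relevant pairs. Hence the second factor is $(1+L)$ times the left-hand side of \Cref{eqn:escore}, i.e.\ at most $(1+L)\escore$. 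For the first factor, use \Cref{eq:mask_score} together with $\sum_c q_0(x\odot_i c) = q_0(x)$. This gives
\begin{align*}
\bE\sum_{i\in m}\sum_c s = \frac{\bE |m(x_{t_k})|}{e^{T-t_k}-1} = d\,e^{-(T-t_k)}.
\end{align*}
Summing with weights $h_{k+1}\le\kappa\min(1,T-t_{k+1})$, $\kappa=O(1)$, yields $\sum s \lesssim d\int_0^{T} e^{-u}\,\d u \le d$. Therefore $C \lesssim \sqrt{\escore\, d\,(\log M + \log\delta^{-1})}$.

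\textbf{Step 4 (conclusion).} Substitute this bound on $C$ into the iteration count \Cref{eq:masking_ttl_N_bound} of \Cref{thm:masking_ttl}. The factor $\log \delta^{-1}$ (with $\delta^{-1}$ polynomial in the problem parameters) is absorbed into $\wt O$. This gives $N = \wt O\big((\cD(q_{\mathrm{data}}) + \sqrt{\escore d\log M})/\varepsilon + \sqrt{d/\varepsilon}\big)$, which suffices for $\KL(q_\delta\|p_{T-\delta}) \lesssim \escore+\varepsilon$.
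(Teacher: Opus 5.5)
Your proposal is correct and follows the paper's proof in substance. Both bound $\log(\wh s/s)\ge-\log(M\delta^{-1})$ using $s\le (e^{T-t_k}-1)^{-1}\le\delta^{-1}$, pass to the score entropy loss via $e^u-1-u\gtrsim u^2/B$ for $u\ge -B$, and apply Cauchy--Schwarz with $\bE\sum_{i,c}s\le d$. The only difference is cosmetic: you apply Cauchy--Schwarz once over the joint time--space sum, which gives $\sum s\le d$ directly, while the paper applies it per step and then across steps, picking up a harmless factor $\kappa N=\wt O(1)$.

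There is one small slip in your Step 2. The bound $|u|-1\gtrsim|u|$ fails near $u=-1$. However, $e^u-1-u\ge e^{-1}$ there, so $e^u-1-u\gtrsim|u|$ still holds on all of $u\le-1$.
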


\begin{proof}
    For fixed \(k \in \{0, \ldots, N\}\) and \(\ell \defn t_k\), by the Cauchy-Schwarz inequality, it satisfies 
    \begin{align*}
        &\bE_{x_\ell \sim \back q_\ell}  \sum_{i \in m(x_\ell)} \sum_{c \in [S]} s_{T - \ell}(x_\ell \odot_i c, x_\ell) \abs*{\log \frac{\wh s_{T - \ell}(x_\ell \odot_i c, x_\ell)}{s_{T - \ell}(x_\ell \odot_i c, x_\ell)}} \\
        &\quad \leq \left(\bE_{x_\ell \sim \back q_\ell}  \sum_{i \in m(x_\ell)} \sum_{c \in [S]} s_{T - \ell}(x_\ell \odot_i c, x_\ell)\right)^{1/2}\left(\bE_{x_\ell \sim \back q_\ell}  \sum_{i \in m(x_\ell)} \sum_{c \in [S]} s_{T - \ell}(x_\ell \odot_i c, x_\ell) \left(\log \frac{\wh s_{T - \ell}(x_\ell \odot_i c, x_\ell)}{s_{T - \ell}(x_\ell \odot_i c, x_\ell)}\right)^2\right)^{1/2} \\
        &\quad \leq \sqrt{d} \left(\bE_{x_\ell \sim \back q_\ell}  \sum_{i \in m(x_\ell)} \sum_{c \in [S]} s_{T - \ell}(x_\ell \odot_i c, x_\ell) \left(\log \frac{\wh s_{T - \ell}(x_\ell \odot_i c, x_\ell)}{s_{T - \ell}(x_\ell \odot_i c, x_\ell)}\right)^2\right)^{1/2}.
    \end{align*}
    Next, using \(z - 1 - \log z \gtrsim \frac{(\log z)^2}{B}\) for \(\log z \geq -B\), together with
    \begin{align*}
        \log \frac{\wh s_{T - \ell}(x_\ell \odot_i c, x_\ell)}{s_{T - \ell}(x_\ell \odot_i c, x_\ell)} \geq - \log M + \log \left(e^{T - \ell} - 1\right) \geq -\log M + \log (T - \ell) \geq -\log (M \delta^{-1}),
    \end{align*}
    we upper bound
    \begin{align*}
        &\bE_{x_\ell \sim \back q_\ell}  \sum_{i \in m(x_\ell)} \sum_{c \in [S]} s_{T - \ell}(x_\ell \odot_i c, x_\ell) \left(\log \frac{\wh s_{T - \ell}(x_\ell \odot_i c, x_\ell)}{s_{T - \ell}(x_\ell \odot_i c, x_\ell)}\right)^2 \\
        &\quad \leq \log (M \delta ^{-1}) \times \bE_{x_\ell \sim \back q_\ell}  \sum_{i \in m(x_\ell)} \sum_{c \in [S]} s_{T - \ell}(x_\ell \odot_i c, x_\ell) \breg(\wh s_{T - \ell}(x_\ell \odot_i c, x_\ell), s_{T - \ell}(x_\ell \odot_i c, x_\ell)).
    \end{align*}
    As a result, $C$ can be controlled as 
    \begin{align}
        C &\defn \sum_{k=0}^{N-1} (t_{k+1} - t_k) \bE_{x_{t_k} \sim \back q_{t_k}}  \sum_{i \in m(x_{t_k})} \sum_{c \in [S]} s_{T - {t_k}}(x_{t_k} \odot_i c, x_{t_k}) \abs*{\log \frac{\wh s_{T - {t_k}}(x_{t_k} \odot_i c, x_{t_k})}{s_{T - {t_k}}(x_{t_k} \odot_i c, x_{t_k})}} \notag \\
        & \leq \sqrt{d \log (M \delta^{-1})} \times \notag \\
        & \qquad  \sum_{k=0}^{N-1} (t_{k+1} - t_k) \left(\bE_{x_{t_k} \sim \back q_{t_k}}  \sum_{i \in m(x_{t_k})} \sum_{c \in [S]} s_{T - {t_k}}(x_{t_k} \odot_i c, x_{t_k}) \breg(\wh s_{T - {t_k}}(x_{t_k} \odot_i c, x_{t_k}), s_{T - {t_k}}(x_{t_k} \odot_i c, x_{t_k}))\right)^{1/2} \notag \\
        & \overset{(a)}{\leq} \sqrt{\kappa N  d \log (M \delta^{-1})}
        \times 
        \notag \\
        &\qquad \left(\sum_{k=0}^{N- 1} (t_{k+1} - t_k)\bE_{x_{t_k} \sim \back q_{t_k}}  \sum_{i \in m(x_{t_k})} \sum_{c \in [S]} s_{T - {t_k}}(x_{t_k} \odot_i c, x_{t_k}) \breg(\wh s_{T - {t_k}}(x_{t_k} \odot_i c, x_{t_k}), s_{T - {t_k}}(x_{t_k} \odot_i c, x_{t_k})) \right)^{1/2} \notag \\
        & \leq \sqrt{\kappa N d \log (M \delta^{-1}) \varepsilon_{\mathrm{score}}},
        \label{eq:masking_ttl_C}
    \end{align}
    where in \((a)\) we used \(t_{k+1} - t_k \leq \kappa\) together with the Cauchy-Schwarz inequality.
    Combining the bound of~\Cref{eq:masking_ttl_C} with~\Cref{eq:masking_ttl_N_bound} and \(\kappa = \wt O(1 / N)\) completes the proof.
\end{proof}
We emphasize that, in contrast to~\Cref{thm:masking_main}, in~\Cref{thm:masking_ttl} we require early stopping for some \(\delta > 0\), which in turn leads to the exponential-then-constant step size schedule. 
We now elaborate on the difference between Theorem~\ref{thm:masking_ttl} and Theorem~\ref{thm:masking_main}, and provide some intuition for the appearance of the two additional  terms in Eqn.~\eqref{eq:mask_ttl}.
\begin{remark}
\label{rem:masking_ttl}
To obtain an accurate sampler, it is natural to require that \(\wh Q_t \approx \back Q_t\) uniformly for all \(t \in [0, T]\). The main challenge is that we only have access to score estimates at discrete time points.
In the truncated \(\tau\)-leaping algorithm analyzed in this section, this results in approximating 
\begin{align*}
    \wh s_{T - t} \defn \wh s_{T - t_k} \approx s_{T - t}.
\end{align*}
Informally, we establish this by showing
\begin{align}
\label{eqn:brahms}
    \wh s_{T - t_k} \approx s_{T - t_k} \approx s_{T - t},
\end{align}
where the first approximation is ensured by~\Cref{asm:escore} and the second results from the properties of the score function for the masking noising process, requiring the step size \(t_{k+1} - t_k\) to be small. 

In contrast, for Algorithm~\ref{alg:modified_ttl} considered in  \Cref{thm:masking_main}, the condition \(\wh Q_t \approx \back Q_t\) translates to 
\begin{align*}
    \wh s_{T - t} \defn \frac{e^{T - t_k} - 1}{e^{T - t} - 1} \wh s_{T - t_k} \approx s_{T - t}.
\end{align*}
In view of Proposition~\ref{prop:score_function}, the above condition is equivalent to 
\begin{align*}
    \wh s_{T - t} \defn \frac{e^{T - t_k} - 1}{e^{T - t} - 1} \wh s_{T - t_k} \approx \frac{e^{T - t_k} - 1}{e^{T - t} - 1} s_{T - t_k} = s_{T - t},
\end{align*}
which is guaranteed by \Cref{asm:escore}. 
Notably, this simple rescaling eliminates the need for the second approximation step that is required in the truncated $\tau$-leaping analysis. This distinction explains why \Cref{thm:masking_ttl} contains two additional error terms and necessitates early stopping, in contrast to the cleaner guarantee obtained in \Cref{thm:masking_main}.
\end{remark}

\bigskip
\noindent
\textbf{Proof of~\Cref{thm:masking_ttl}.} \enspace
    The proof follows the proof of~\Cref{thm:masking_main} closely with several additional steps. We begin with \Cref{eq:masking_after_girsanov}
        \begin{align}
        \label{eq:mask_ttl_with_init}
        &\KL(\back q_T \| p_T)\notag\\
        &\lesssim e^{-T} d \log S + \!\!\sum_{k=0}^{N - 1}  \int_{t_k}^{t_{k+1}} \!\!\bE_{x_{t_k},x_t \sim \back q_{{t_k, t}}} \sum_{y \neq x_t}\left[\wh Q_{t}(x_{t}, y) - \back Q_t(x_t, y) + \back Q_t(x_t, y) \log \left(\frac{\back Q_t(x_t, y)}{\wh Q_{t}(x_{t}, y)}\right)\right] \d t.
    \end{align}
    Next, since for this sampler, the rate matrices \(\wh Q_t\) are given by the following:
        \begin{align*}
    \wh Q_t(x, y) \coloneqq \begin{cases}
    \wh s_{T - t_k}(x_{t_k} \odot_{i} y^{i}, x_{t_k} ) \bI\{x^{i} = \mathrm{MASK}\}, &\quad \text{if } \ham(x, y) = 1, x^{i} \neq y^{i}, \text{ and } x_{t_k}^{i} = \mask, \\
    -\sum_{z \neq x} \wh Q_t(x, z), &\quad \text{if } y = x, \\
    0, &\quad \text{otherwise,}\\
    \end{cases}
\end{align*}
we can therefore bound 
\begin{align*}
        &\sum_{y \neq x_t}\left[\wh Q_{t}(x_{t}, y) - \back Q_t(x_t, y) + \back Q_t(x_t, y) \log \left(\frac{\back Q_t(x_t, y)}{\wh Q_{t}(x_{t}, y)}\right)\right] \\
        &\enspace = \sum_{i \in m(x_t)} \sum_{c \in [S]} \left[\wh Q_{t}(x_{t}, x_t \odot_i c) - \back Q_t(x_t, x_t \odot_i c) + \back Q_t(x_t, x_t \odot_i c) \log \left(\frac{\back Q_t(x_t, x_t \odot_i c)}{\wh Q_{t}(x_{t}, x_t \odot_i c)}\right)\right] 
        \\
        &\enspace = 
        \sum_{i \in m(x_t)} \sum_{c \in [S]} \left[\wh s_{T - \ell}(x_{\ell} \odot_i c, x_{\ell}) - s_{T - t}(x_t \odot_i c, x_t)  + s_{T - t}(x_t \odot_i c, x_t) \log \left(\frac{s_{T - t}(x_t \odot_i c, x_t)}{\wh s_{T - \ell}(x_{\ell} \odot_i c, x_{\ell})}\right)\right] \\
        &\enspace = 
        \sum_{i \in m(x_t)} \sum_{c \in [S]} s_{T - t}(x_t \odot_i c, x_t) \breg\left(\wh s_{T - \ell}(x_{\ell} \odot_i c, x_{\ell}), s_{T - t}(x_t \odot_i c, x_t)\right).
    \end{align*}

To proceed, we again apply the law of cosines \(\breg(\alpha, \gamma) = \breg(\alpha, \beta) + \breg(\beta, \gamma) + (\alpha - \beta)\frac{\beta - \gamma}{\beta \gamma}\) with
    \begin{align*}
        \alpha = \wh s_{T - \ell}(x_{\ell} \odot_i c, x_{\ell}), \qquad \beta = s_{T - \ell}(x_{\ell} \odot_i c, x_{\ell}), \quad \text{and} \quad \gamma = s_{T - t}(x_t \odot_i c, x_t).
    \end{align*}
    In the following, we slightly abuse the notation and write \(x_t \defn (x_t \odot_i c, x_t)\) and \(x_\ell \defn (x_\ell \odot_i c, x_\ell)\) whenever \(i \in m(x_t)\) and \(c \in [S]\) are fixed.
   As a result, for fixed \(i, c\), one has 

\begin{align*}
    &s_{T - t}(x_t)\breg\left(\wh s_{T - \ell}(x_\ell), s_{T - t}(x_t)\right)\\
    &= s_{T - t}(x_t) \breg(\wh s_{T - \ell}(x_\ell), s_{T - \ell}(x_\ell)) + s_{T - t}(x_t) \breg\left( s_{T - \ell}(x_\ell), s_{T - t}(x_t)\right) \\
    &\qquad + \frac{\wh s_{T - \ell}(x_\ell) - s_{T - \ell}(x_\ell)}{s_{T - \ell}(x_\ell)}\left( s_{T - \ell}(x_\ell) - s_{T - t}(x_t)\right).
\end{align*}
This can be rearranged as follows:
\begin{align*}
    s_{T - t}(x_t)\breg\left( \wh s_{T - \ell}(x_\ell), s_{T - t}(x_t)\right) &= \underbrace{s_{T - \ell}(x_\ell) \breg(\wh s_{T - \ell}(x_\ell), s_{T - \ell}(x_\ell))}_{\eqqcolon\, T_1}\\
    &\quad + \underbrace{s_{T - t}(x_t) \breg\left(s_{T - \ell}(x_\ell), s_{T - t}(x_t)\right)}_{\eqqcolon\, T_2} \\
    &\qquad + \underbrace{\left( s_{T - \ell}(x_\ell) - s_{T - t}(x_t)\right) \log\frac{\wh s_{T - \ell}(x_\ell)}{s_{T - \ell}(x_\ell)}}_{\eqqcolon\, T_3}.
\end{align*}

It is therefore sufficient to control each term separately. 
Similar to the proof of Theorem~\ref{thm:masking_main}, the first term, \(T_1\), after the summation is upper bounded by the score entropy loss
\begin{align*}
     &\bE_{x_t, x_\ell \sim \back q_{t, \ell}} \sum_{i \in m(x_t)} \sum_{c \in [S]}  s_{T - \ell}(x_\ell \odot_i c, x_\ell) \breg(\wh s_{T - \ell}(x_\ell \odot_i c, x_\ell), s_{T - \ell}(x_\ell \odot_i c, x_\ell)) \\
     &\quad \leq \bE_{x_\ell \sim \back q_\ell} \sum_{i \in m(x_\ell)} \sum_{c \in [S]}  s_{T - \ell}(x_\ell \odot_i c, x_\ell) \breg(\wh s_{T - \ell}(x_\ell \odot_i c, x_\ell), s_{T - \ell}(x_\ell \odot_i c, x_\ell)) \\
    &\quad  = \cL_{\mathrm{SE}}(T - \ell, \wh s_{T - \ell}, s_{T - \ell}),
\end{align*}
and, by~\Cref{asm:escore},
\begin{align}
    \label{eq:mask_ttl_approx}
        \sum_{k=0}^{N - 1}  \int_{t_k}^{t_{k+1}} \cL_{\mathrm{SE}}(T - \ell, \wh s_{T - \ell}, s_{T - \ell})\d t \leq \escore.
\end{align}
Now, we turn to controlling the terms \(T_2\) and \(T_3\) which require a different treatment than in the proof of~\Cref{thm:masking_main}.

For the term \(T_2\), we again apply the law of cosines with
\begin{align*}
    \alpha = s_{T - \ell}(x_\ell),\qquad \beta = s_{T - t}(x_\ell), \quad \text{and} \quad \gamma = s_{T - t}(x_t).
\end{align*}
This leads to the following decomposition
\begin{align*}
    \begin{aligned}
        s_{T - t}(x_t)  \breg(s_{T - \ell}(x_\ell), s_{T - t}(x_t))   &= \underbrace{s_{T - t}(x_t)  \breg(s_{T - \ell}(x_\ell), s_{T - t}(x_\ell))}_{\eqqcolon\, T_{21}}\\
        &\quad + \underbrace{s_{T - t}(x_t)  \breg(s_{T - t}(x_\ell), s_{T - t}(x_t))}_{\eqqcolon\, T_{22}} \\
        &\qquad + \underbrace{(s_{T - t}(x_\ell) - s_{T - t}(x_t))\frac{s_{T - \ell}(x_\ell) - s_{T - t}(x_\ell)}{s_{T -t}(x_\ell)}}_{\eqqcolon\, T_{23}}.
    \end{aligned}
\end{align*}

\begin{itemize}
\item 
For \(T_{21}\), using~\Cref{eq:mask_score}, observe that
\begin{align*}
        \breg(s_{T - \ell}(x_\ell), s_{T - t}(x_\ell)) &= \frac{e^{T - t} - 1}{e^{T - \ell} - 1} - 1 - \log \frac{e^{T - t} - 1}{e^{T - \ell} - 1} \leq \frac{(e^{T - \ell} - e^{T - t})^2}{2(e^{T - \ell} - 1)(e^{T - t} - 1)} \lesssim \kappa^2,
\end{align*}
where \(\kappa\) is a parameter of the step size schedule: \(t_{k+1} - t_k \leq \kappa \min(1, T - t_{k+1})\). 
The total contribution of terms \(T_{21}\) is:
\begin{align}
    &\sum_{k=0}^{N - 1} \int_{t_k}^{t_{k+1}} \!\!\bE_{x_t, x_{t_k} \sim \back q_{t, t_k}} \!\!\sum_{i \in m(x_t)} \sum_{c \in [S]}\! s_{T - t}(x_t \odot_i c, x_t) \breg(s_{T - t_k}(x_{t_k} \odot_i c, x_{t_k}), s_{T - t}(x_{t_k} \odot_i c, x_{t_k})) \d t \notag\\
    &\quad \lesssim \kappa^3 \sum_{k=0}^{N - 1} \bE_{x_t \sim \back q_t} \sum_{i \in m(x_t)} \sum_{c \in [S]} s_{T - t}(x_t \odot_i c, x_t) \notag\\
    &\quad = \kappa^3 \sum_{k=0}^{N - 1} \bE_{x_t \sim \back q_t} \sum_{i \in m(x_t)} \frac{ \sum_{c \in [S]} q_t(x_t \odot_i c)}{q_t(x_t)} \leq \kappa^3 N d, \label{eq:mask_ttl_disc_dn2}
\end{align}
 as \(\sum_{c \in [S]} q_t(x_t \odot_i c) = q_t(x_t)\). 

\item The term \(T_{22}\) is identical to the term \(T_2\) from the proof of~\Cref{thm:masking_main}, thus we use~\Cref{lem:control_at_t} and obtain that after summation of over \(i \in m(x_t)\) and \(c \in [S]\):
\begin{align}
    &\sum_{k=0}^{N - 1}\int_{t_k}^{t_{k+1}}\bE_{x_{t_k}, x_t \sim \back q_{t_k, t}} \sum_{i \in m(x_t)} \sum_{c \in [S]} s_{T - t}(x_t \odot_i c, x_t)\breg \left(s_{T - t}(x_{t_k} \odot_i c, x_{t_k}), s_{T - t}(x_t \odot_i c, x_t)\right) \d t. \notag\\
    &\quad = \sum_{k=0}^{N - 1} \int_{t_k}^{t_{k+1}} \int_{t_k}^t e^{t - v} \cI(T - v) \d v \d t \lesssim \sum_{k=0}^{N - 1} h_{k+1} \int_{T - t_{k+1}}^{T - t_k} \cI(t) \d t.
\end{align}
Here, we invoke the assumption \(\kappa = O(1)\) and the non-negativity of conditional mutual information in the last inequality.

\item To control \(T_{23}\), observe that by~\Cref{eq:mask_score}, the score function satisfies the relation
\begin{align*}
    \frac{s_{T - \ell}(x_\ell) - s_{T - t}(x_\ell)}{s_{T - t}(x_\ell)} = \frac{e^{T - t} - e^{T - \ell}}{e^{T - \ell} - 1},
\end{align*}
and importantly it does not depend on \(x_\ell\).
This implies that upon summation over \(i \in m(x_t)\) and \(c \in [S]\), the term \(T_{23}\) contributes zero, i.e.,
\begin{align*}
    &\sum_{i \in m(x_t)}\sum_{c \in [S]} (s_{T - t}(x_\ell \odot_i c, x_\ell) - s_{T - t}(x_t \odot_i c, x_t)) \frac{e^{T - t} - e^{T - \ell}}{e^{T - \ell} - 1} \\
    &\quad = \frac{e^{T - t} - e^{T - \ell}}{e^{T - \ell} - 1} \sum_{i\in m(x_t)} \left(\frac{\sum_{c \in [S]} q_t(x_\ell \odot_i c)}{q_t(x_\ell)} - \frac{\sum_{c \in [S]} q_t(x_t \odot_i c)}{q_t(x_t)}\right) \\
    &\quad = \frac{e^{T - t} - e^{T - \ell}}{e^{T - \ell} - 1} \sum_{i\in m(x_t)} \left(1 - 1\right)
    \\&\quad = 0.
\end{align*}
\end{itemize}
Putting pieces together, we can conclude
\begin{align}
    \sum_{k=0}^{N - 1}  \int_{t_k}^{t_{k+1}} \!\!\bE_{x_{t_k},x_t \sim \back q_{{t_k, t}}}  \sum_{i \in m(x_t)} \sum_{c \in [S]} T_2
    \leq 
    \kappa^3 N d + \sum_{k=0}^{N - 1} h_{k+1} \int_{T - t_{k+1}}^{T - t_k} \cI(t) \d t.
\end{align}

It therefore remains to control term \(T_3\). Recall the definition
\begin{align*}
    T_3 \defn \left( s_{T - \ell}(x_\ell) - s_{T - t}(x_t)\right) \log \frac{\wh s_{T - \ell}(x_\ell)}{s_{T - \ell}(x_\ell)}.
\end{align*}
Crucially, unlike in the proof of~\Cref{thm:masking_main}, we no longer have a martingale property for this term. 
However, we can decompose 
\begin{align*}
    T_3 = \left( s_{T - \ell}(x_\ell) - s_{T - t}(x_{\ell})\right) \log \frac{\wh s_{T - \ell}(x_\ell)}{s_{T - \ell}(x_\ell)} + \underbrace{\left( s_{T - t}(x_\ell) - s_{T - t}(x_t)\right) \log \frac{\wh s_{T - \ell}(x_\ell)}{s_{T - \ell}(x_\ell)}}_{\text{contributes zero by~\Cref{lem:mask_mart}}}.
\end{align*}
It remains to bound the first term, which can be written as 
\begin{align*}
    \left( s_{T - \ell}(x_\ell) - s_{T - t}(x_{\ell})\right) \log \frac{\wh s_{T - \ell}(x_\ell)}{s_{T - \ell}(x_\ell)} = \frac{e^{T - t} - e^{T - \ell}}{e^{T - t} - 1} s_{T - \ell}(x_\ell) \log \frac{\wh s_{T - \ell}(x_\ell)}{s_{T - \ell}(x_\ell)}.
\end{align*}
In view of the step size assumption \(t_{k+1} - t_k \leq \kappa \min(1, T - t_{k+1})\), it satisfies 
\begin{align*}
      \abs*{\frac{e^{T - t} - e^{T - \ell}}{e^{T - t} - 1}} \lesssim \kappa.
\end{align*}
The total contribution of terms \(T_3\) can therefore be upper bounded by the following:
\begin{align}
    &\kappa\sum_{k=0}^{N - 1} \int_{t_k}^{t_{k+1}} \bE_{x_\ell \sim \back q_\ell} \sum_{i\in m(x_\ell)} \sum_{c \in [S]} s_{T - \ell}(x_\ell \odot_i c, x_\ell) \abs*{\log \frac{\wh s_{T - \ell}(x_\ell \odot_i c, x_\ell)}{s_{T - \ell}(x_\ell \odot_i c, x_\ell)}} \d t\notag\\ 
    &= \kappa \sum_{k=0}^{N - 1} (t_{k+1} - t_k)\bE_{x_\ell \sim \back q_\ell}  \sum_{i\in m(x_\ell)} \sum_{c \in [S]} s_{T - \ell}(x_\ell \odot_i c, x_\ell) \abs*{\log \frac{\wh s_{T - \ell}(x_\ell \odot_i c, x_\ell)}{s_{T - \ell}(x_\ell \odot_i c, x_\ell)}}.\label{eq:mask_ttl_disc_extra_c}
\end{align}

Collecting~Eqns.~(\ref{eq:mask_ttl_with_init}),~(\ref{eq:mask_ttl_approx}),~(\ref{eq:mask_ttl_disc_dn2}),~(\ref{eq:mask_ttl_disc_inf}), and~(\ref{eq:mask_ttl_disc_extra_c}) proves

\begin{align*}
        &\KL(q_{\delta} \,\|\, p_{T - \delta}) \lesssim \escore + e^{-T} d \log(S)  + \sum_{k=0}^{N - 1}h_{k+1} \int_{T - t_{k+1}}^{T - t_k} \cI(t) \d t
        + \kappa^3 N d + \kappa C, 
\end{align*}
    where 
    \begin{align*}
        C \defn \sum_{k=0}^{N - 1} (t_{k+1} - t_k) \bE_{x_\ell \sim \back q_\ell}  \sum_{i \in m(x_\ell)} \sum_{c \in [S]} s_{T - \ell}(x_\ell \odot_i c, x_\ell) \abs*{\log \frac{\wh s_{T - \ell}(x_\ell \odot_i c, x_\ell)}{s_{T - \ell}(x_\ell \odot_i c, x_\ell)}}.
    \end{align*}

For our step size schedule, as in~\Cref{cor:masking}, we upper bound
\begin{align}
            &\sum_{k=0}^{N - 1}h_{k+1} \int_{T - t_{k+1}}^{T - t_k} \cI(t) \d t  \leq \kappa \sum_{k=0}^{N - 1} \int_{T - t_{k+1}}^{T-t_k} \min(1, T - t_{k+1}) \cI(t) \d t \leq \kappa \int_{\delta}^T \min(1, t) \cI(t) \d t \leq \kappa \cD(q_{\mathrm{data}}).\label{eq:mask_ttl_disc_inf}
\end{align}
Plugging in the choices 
\(\kappa = O\left(\frac{T + \log (\delta^{-1})}{N}\right)\), \(T = O(\log (\varepsilon^{-1} d \log S))\), and 
    \begin{align*}
        N = \wt O\left(\frac{\cD(q_{\mathrm{data}}) + C}{\varepsilon} + \sqrt{\frac{d}{\varepsilon}}\right),
\end{align*}
yields 
$\KL(q_{\delta}\| p_{T - \delta}) \lesssim \varepsilon_{\mathrm{score}} + \varepsilon.$
\qed

%% file: appendix/app_main_lemmas.tex
\section{Proofs of the main lemmas}

\subsection{Characterization of \(\cB(q_{\mathrm{data}})\) and \(\cC(q_{\mathrm{data}})\)}
\label{sec:proof-lemma-int}

The characterization of \(\cB(q_{\mathrm{data}})\) and \(\cC(q_{\mathrm{data}})\) is summarized in the following lemma.
\begin{lemma}
\label{lem:int_dtc_tc}
Consider a masking noising process with initial distribution \(q_0 = q_{\mathrm{data}}\). Let \(\cC(q_{\mathrm{data}})\) and \(\cB(q_{\mathrm{data}})\) be the total correlation and dual total correlation. Then,
\begin{align*}
    \cB(q_{\mathrm{data}}) = \int_0^\infty \cI(t) \d t \quad \text{and} \quad \cC(q_{\mathrm{data}}) = \int_0^\infty (e^t - 1) \cI(t) \d t.
\end{align*}
Consequently, \(\cD(q_{\mathrm{data}}) \leq \min(\cB(q_{\mathrm{data}}), \cC(q_{\mathrm{data}}))\).
\end{lemma}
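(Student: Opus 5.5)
Write $\cH(q) \defn \cH(x)$ for $x\sim q$, and more generally $\cH(x^A)$ for the entropy of the sub-vector $x^A=(x^i)_{i\in A}$. The key structural fact is that for the masking process each coordinate is independently unmasked with probability $e^{-t}$, independently of $x_0$. So $x_t$ is equivalent to the pair $(A_t, x_0^{A_t})$, where $A_t\subseteq[d]$ is a random set containing each index independently with probability $p=e^{-t}$. Every conditional entropy of $x_t$ therefore reduces to an average over random subsets of entropies of the clean data $x_0\sim q_0$.

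\textbf{Step 1: rewrite $\cI(t)$ as a second difference.} Concretely,
\[
\mi(x_t^i;x_t^j\mid x_t^{-(i,j)}) = p^2\,\bE_{A}\big[\mi(x_0^i;x_0^j\mid x_0^{A})\big],
\]
with $A$ a $p$-random subset of $[d]\setminus\{i,j\}$. The factor $p^2$ arises because if either coordinate is masked it carries no information. Now define the function
\[
F(p)\defn \bE_{A\sim\mathrm{Bern}(p)^{\otimes d}}\big[\cH(x_0^{A})\big].
\]
The standard Russo/Margulis-type derivative formula gives $F'(p)=\sum_i \bE_{A\not\ni i}\big[\cH(x_0^i\mid x_0^{A})\big]$. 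Differentiating once more gives
\[
F''(p) = -\sum_{i\neq j}\bE_A\big[\mi(x_0^i;x_0^j\mid x_0^A)\big].
\]
Hence $\cI(t)=-e^{-2t}F''(e^{-t})$. Equivalently, I would verify directly that $\frac{d}{dt}$ of $\sum_i \cH(x_t^i\mid x_t^{-i})$ produces $\cI(t)$.

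\textbf{Step 2: integrate.} Substitute $p=e^{-t}$, so that $dt=-dp/p$. This gives
\[
\int_0^\infty \cI(t)\,dt=-\int_0^1 pF''(p)\,dp=-\big[pF'(p)\big]_0^1+\int_0^1F'(p)\,dp=-F'(1)+F(1)-F(0).
\]
Here $F(1)=\cH(x_0)$, $F(0)=0$, and $F'(1)=\sum_i\cH(x_0^i\mid x_0^{-i})$, so the integral equals $\cB$. Similarly, since $e^t-1=(1-p)/p$,
\[
\int_0^\infty (e^t-1)\,\cI(t)\,dt=-\int_0^1(1-p)F''(p)\,dp=F'(0)-\big(F(1)-F(0)\big)=\sum_i\cH(x_0^i)-\cH(x_0)=\cC.
\]
The boundary terms are finite because $F$ is a polynomial in $p$ on $[0,1]$. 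The consequence then follows from the pointwise bounds $\min(1,t)\le 1$ and $\min(1,t)\le t\le e^t-1$, together with $\cI\ge 0$.

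\textbf{Main obstacle.} The delicate step is Step 1, specifically the second-derivative identity. Differentiating $F$ in $p$ coordinate by coordinate, using multilinearity in the inclusion probabilities, gives $\partial_{p_i}\partial_{p_j}\bE[\cH(x_0^A)]=\bE[\cH(x^{A+i+j})-\cH(x^{A+i})-\cH(x^{A+j})+\cH(x^A)]$, which equals $-\mi(x^i;x^j\mid x^A)$. The diagonal terms vanish because of multilinearity. The remaining work is to confirm that the conditional mutual information of the masked vector $x_t$ matches this expression with the correct factor $p^2$, and that conditioning on $x_t^{-(i,j)}$ is the same as conditioning on the random set together with the revealed values, since the mask pattern is independent of $x_0$.
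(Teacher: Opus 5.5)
Your proposal is correct and follows the paper's proof essentially step for step. The paper's $h_i(p)$ is your $\partial_{p_i}F$ restricted to the diagonal, so $\sum_i h_i = F'$. It derives $\cI = -p^2\sum_i h_i'(p)$ by the same re-indexing of the Bernoulli weights, and then uses the same substitution $p=e^{-t}$, the same integration by parts, and the same boundary values $F'(1)=\sum_i \cH(x_0^i\mid x_0^{-i})$ and $F'(0)=\sum_i\cH(x_0^i)$.

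Your choice $F(p)=\bE_A[\cH(x_0^A)]$ is in fact slightly cleaner than the paper's $\cH(X(p))$. Read literally, the $p$-derivative of $\cH(X(p))$ also contains a mask-pattern entropy term, which only vanishes after integrating over $[0,1]$.
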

\begin{proof}
  Let \(p \equiv p(t) =  e^{-t}\) be the probability that at time \(t\) a coordinate is unmasked and \(X(p) \equiv (X_1, \ldots, X_d) \coloneqq (x_t^1, \ldots, x_t^d)\). We also denote \(X_{\cR} \coloneqq x_t^{-(i,j)}\) and \(X_R \coloneqq (X_i)_{i \in R}\) for \(R \subseteq [d]\). With a slight abuse of notation we write \(\cI(p) \defn \cI(t(p))\).
    We have
    \begin{equation*}
        \cI(p) \coloneqq \sum_{i \neq j} \mi(X_i; X_j \mid X_{\cR}) = \sum_{i \neq j} p^2 \sum_{R \subseteq [d] \setminus \{i, j\}} p^{\abs{R}} (1 - p)^{d - 2 - \abs{R}} \mi(X_i; X_j \mid X_R),
    \end{equation*}
    where \(p^2\) appears since for the term \(\mi(X_i; X_j \mid X_\cR)\) to be non-zero, both \(X_i\) and \(X_j\) must be unmasked.
    For \(i \in [d]\), define
    \begin{equation*}
        h_i(p) \coloneqq \sum_{R \subseteq [d] \setminus \{i\}} p^{\abs{R}} (1 - p)^{d - 1 - \abs{R}} \ent(X_i \mid X_R),
    \end{equation*}
    with 
    \begin{equation*} 
    \begin{aligned}
        \frac{\d h_i(p)}{\d p} &= \sum_{R \subseteq [d] \setminus \{i\}} \left[\abs{R} p^{\abs{R} - 1} (1 - p)^{d - 1 - \abs{R}} - (d - 1 - \abs{R}) p^{\abs{R}}(1 - p)^{d - 2 - \abs{R}}\right] \ent(X_i \mid X_R) \\
        & = \sum_{j \neq i} \sum_{R \subseteq [d] \setminus \{i, j\}} p^{\abs{R}}(1 - p)^{d - 2 - \abs{R}} (\ent(X_i \mid X_{R \cup \{j\}}) - \ent(X_i \mid X_R)) \\
        & = -\sum_{j \neq i} \sum_{R \subseteq [d] \setminus \{i, j\}} p^{\abs{R}}(1 - p)^{d - 2 - \abs{R}} \mi(X_i; X_j \mid X_R).
    \end{aligned}
    \end{equation*}
    Therefore, 
    \begin{equation*}
        \cI(p) = \sum_{i=1}^d p^2 \left(- \frac{\d h_i(p)}{\d p}\right).
    \end{equation*}
    Since \(p = e^{-t}\) we have that \(\d t = - \frac{\d p}{p}\) and we can write
    \begin{equation*}
    \begin{aligned}
         \int_{0}^\infty \sum_{i \neq j}\mi(X_i; X_j \mid X_{\cR}) \d t &= \int_{0}^{1} \sum_{i = 1}^d p \left(- \frac{\d h_i(p)}{\d p}\right) \d p = \sum_{i = 1}^d \big(-p h_i (p)\big) \Bigl\vert_{0}^1 + \int_{0}^1 \sum_{i=1}^d  h_i(p) \d p.
    \end{aligned}
    \end{equation*}
    Observe that 
    \begin{equation*}
        \frac{\d \ent(X(p))}{\d p} = \sum_{i = 1}^d h_i(p),
    \end{equation*}
    therefore,
    \begin{equation*}
        \int_{0}^1 \sum_{i=1}^d h_i(p) \d p = \ent(X(1)) - \ent(X(0)) = \ent(x_0).
    \end{equation*}
    Since  \(
        \sum_{i=1}^d h_i(1) = \sum_{i=1}^d \ent(x_0^i \mid x_0^{-i})\), we proved the first part:
        \begin{equation*}
            \int_0^{\infty} \cI(t) \d t = \ent(x_0) - \sum_{i=1}^d \ent(x_0^i \mid x_0^{-i}) = \cB(q_0).
        \end{equation*}
    \noindent
    We proceed similarly for the total correlation:
    \begin{equation*}
    \begin{aligned}
        &\int_{0}^\infty (e^t  -1)\sum_{i \neq j}\cI(t) \d t = \int_0^{1} (1 - p) \sum_{i=1}^d \left(- \frac{\d h_i(p)}{\d p}\right) \d p = - \left(\sum_{i=1}^d (1 - p) h_i(p) \Bigl\vert_0^1 + \int_0^1 \sum_{i=1}^d h_i(p) \d p\right) \\
        &\quad = \sum_{i=1}^d \ent(x_0^i) - \ent(x_0) = \cC(q_0).
    \end{aligned}
    \end{equation*}
 
\end{proof}
\subsection{Proof of Lemma~\ref{lem:uniform_martingale}}
\label{pf:lem-uniform-martingale}

For any $i \in [d]$ and $c \in [S]$, let us define
\begin{align}
\label{eqn:def-f-tmp}
    f_{i, c}(t, x_t) \defn s_{T-t}(x_t \oplus_i c, x_t).
\end{align}
The following analysis holds for all $i \in [d]$ and $c \in [S]$, so we safely omit the index $i, c$ in the following analysis, and write it as $f(t, x_t)$.

Consider the case that the backward process $\{x_t\}_{t \in [0,T]} \sim \{\back q_t\}_{t \in [0,T]}$, which is a Poisson jump process with generator $\back L_t$ such that
\begin{align*}
    \left(\back L_t f\right)(t, x) &= \sum_{y: \ham(y, x) \leq 1} Q_{T-t}(y, x) s_{T-t}(y, x)\big(f(t, y) - f(t, x)\big)\\
    &= \frac{1}{S} \sum_{y: \ham(y, x) = 1} s_{T-t}(y, x)\big(f(t, y) - f(t, x)\big).
\end{align*}
By It\^{o}'s formula for Poisson point process in Lemma~\ref{lem:ito}, $f(t, x_t)$ satisfies the following stochastic differential equation: for $0 \leq \ell \leq t < T$,
\begin{align}\label{eq:uniform_score_ito}
    f(t, x_t) = f(\ell, x_\ell) + \int_{\ell}^t \left[\partial_t f(s, x_{s^-}) + \left(\back L_s f\right)(s, x_{s^-})\right]\d s + M_t,
\end{align}
where $x_{s-} = \lim_{u \to s^-} x_{s}$, which exists for almost everywhere $s \in [0, T)$ under the Lebesgue measure, since we have finite number of jumps almost surely. The compensation process $\{M_u\}_{u \in [\ell, t]}$ in Eqn.~\eqref{eq:uniform_score_ito} is defined as
\begin{align}\label{eq:uniform_mart_def}
    M_u = \sum_{y_s : \ham(y_s, x_s) = 1}\int_\ell^u \big(f(s, y_s) - f(s, x_s)\big)\big(\d N_s^{x_s, y_s} - \lambda_s^{x_s, y_s} \d s\big),
\end{align}
where $N_s^{x, y}$ is the counting process of jumps from $x$ to $y$ and we write the random counting measure as $\d N_s^{x, y}$. We define $\lambda^{x, y}_s = S^{-1}s_{T-t}(y,x)\bI\{x_{s^-} \!\!= x\}$ to be intensity of the process $N_s^{x, y}$. 
Since $x_{s^-} = x_{s}$ almost everywhere $s \in (0,T)$ due to the finite number of jumps for each path almost surely, we can rewrite Eqn.~\eqref{eq:uniform_score_ito} as 
\begin{align}\label{eq:uniform_score_ito_2}
    f(t, x_t) - f(\ell, x_\ell) = \int_{\ell}^t \left[\partial_t f(s, x_{s}) + \left(\back L_s f\right)(s, x_{s})\right]\d s + M_t.
\end{align}

To further simplify the right hand side, we assert that
\begin{align}\label{eq:uniform_score_drift_0}
    \partial_t f(s, x_{s}) + \left(\back L_s f\right)(s, x_{s}) = 0.
\end{align}
In order to see this, first, recall the definition~\eqref{eqn:def-f-tmp} and direct calculations give  
\begin{align*}
    &\partial_t f(s, x_{s}) + \left(\back L_s f\right)(s, x_{s})\\
    &= \frac{\partial}{\partial s} \left(\frac{q_{T-s}(x_s \oplus_i c)}{q_{T-s}(x_s)}\right)\\
    &\qquad\qquad + \frac{1}{S}\sum_{i' \in [d]}\sum_{c' \in [S]} s_{T-s}(x_s \oplus_{i'} c', x_s)\Big(s_{T-s}(x_s \oplus_{i'} c' \oplus_i c, x_s \oplus_{i'} c') - s_{T-s}(x_s \oplus_i c, x_s)\Big)\\
    &\overset{\mathrm{(a)}}{=} \frac{1}{S} \sum_{i' \in [d]}\sum_{c' \in [S]} s_{T-s}(x_s \oplus_i c, x_s) \Big(s_{T-s}(x_s \oplus_{i'} c', x_s) - s_{T-s}(x_s \oplus_i c \oplus_{i'} c', x_s \oplus_i c)  \Big)\\
    &\qquad \qquad + \frac{1}{S}\sum_{i' \in [d]}\sum_{c' \in [S]} s_{T-s}(x_s \oplus_{i'} c', x_s)\Big(s_{T-s}(x_s \oplus_{i'} c' \oplus_i c, x_s \oplus_{i'} c') - s_{T-s}(x_s \oplus_i c, x_s)\Big)\\
    &= \frac{1}{S} \sum_{i' \in [d]}\sum_{c' \in [S]} \Big(s_{T-s}(x_s \oplus_i c, x_s) s_{T-s}(x_s \oplus_{i'} c', x_s) - s_{T-s}(x_s \oplus_{i'} c', x_s) s_{T-s}(x_s \oplus_i c, x_s)\Big)\\
    &\qquad + \frac{1}{S} \sum_{i' \in [d]}\sum_{c' \in [S]} \Big(s_{T-s}(x_s \oplus_{i'} c', x_s)s_{T-s}(x_s \oplus_{i'} c' \oplus_i c, x_s \oplus_{i'} c')\\
    &\qquad\qquad\qquad\qquad\qquad\qquad - s_{T-s}(x_s \oplus_i c, x_s)s_{T-s}(x_s \oplus_i c \oplus_{i'} c', x_s \oplus_i c)\Big)\\
    &\overset{\mathrm{(b)}}{=} \frac{1}{S} \sum_{i' \in [d]}\sum_{c' \in [S]} \Big(s_{T-s}(x_s \oplus_{i'} c' \oplus_i c, x_s) - s_{T-s}(x_s \oplus_i c \oplus_{i'} c', x_s)\Big),
\end{align*}
where in equality (a), we apply  the Kolmogorov forward equation on $q_{T-t}$; in equality (b), we use the fact that $s_{T-t}(x, y) s_{T-t}(y,z) = s_{T-t}(x, z)$ for any $x, y, z \in \cX$.
It is direct to check that the $\oplus$ operators commute, i.e., for any $x_s \in \cX$,
\begin{align*}
    x_s \oplus_{i'} c' \oplus_i c = x_s \oplus_i c \oplus_{i'} c'.
\end{align*}
for $i \neq i'$, and the relation holds trivially when $i = i'$.
This relation directly reveals that
\begin{align*}
    &\partial_t f(s, x_{s}) + \left(\back L_s f\right)(s, x_{s})\\
    &= \frac{1}{S} \sum_{i' \in [d]}\sum_{c' \in [S]} \Big(s_{T-s}(x_s \oplus_{i'} c' \oplus_i c, x_s) - s_{T-s}(x_s \oplus_i c \oplus_{i'} c', x_s)\Big) = 0,
\end{align*}
which completes the proof of Eqn.~\eqref{eq:uniform_score_drift_0}.

Taking $u = \ell$ in Eqn.~\eqref{eq:uniform_mart_def}, we have $M_\ell = 0$ almost surely, and $M_u$ is a local martingale for $u \in [\ell, t]$ by definition. 
Recalling Lemma~\ref{lem:uniform_log_score_bound}, we can bound 
\begin{align*}
    \sup_{s \in [\ell, t]} \sup_{x \in \cX} f(s, x) \leq \log(S) + \max\Big\{\log\big((T-t)^{-1}\big), 0\Big\} < \infty.
\end{align*}
Similarly, for the intensity of the counting process, it satisfies 
\begin{align*}
    \sup_{s \in [\ell, t]}\sup_{x, y \in \cX}\lambda_s^{x,y} \leq \frac{1}{S} s_{T-t}(y,x) \leq \frac{1}{S}\left(\log(S) + \max\Big\{\log\big((T-t)^{-1}\big), 0\Big\}\right) < \infty.
\end{align*}
Now it is direct to check that
\begin{align*}
    \sup_{s \in [\ell, t]}\E[|M_s|] \lesssim (t - \ell)d(S-1) \cdot \sup_{s \in [\ell, t]}\sup_{x, y \in \cX} \left[f(s, x) \cdot \lambda_s^{x, y}\right] < \infty.
\end{align*}
As a result, we can conclude $\{M_u\}_{u \in [\ell, t]}$ is $L^1$ and hence a martingale.
By the definition of the martingale, we arrive at 
\begin{align*}
    \E_{\back q_{t\mid \ell}(\cdot \mid x_\ell)}[M_t] = M_\ell = 0.
\end{align*}

Returning to Eqn.~\eqref{eq:uniform_score_ito_2}, we obtain
\begin{align}\label{eq:uniform_mart_mean}
    \E_{x_t \sim \back q_{t\mid\ell}(\cdot\mid x_\ell)}\big[f(t, x_t) - f(\ell, x_\ell)\big] = \E_{x_t \sim \back q_{t\mid\ell}(\cdot\mid x_\ell)}[M_{t}] = 0.
\end{align}
Thus, we conclude that
\begin{align*}
    &\bE_{x_t \sim \back q_{t|\ell}(\cdot \mid x_{\ell})} \Big[\big(s_{T - \ell}(x_\ell \oplus_i c, x_\ell) - s_{T - t}(x_t \oplus_i c, x_t)\big) \log \wh s_{T - \ell}(x_\ell \oplus_i c, x_\ell)\Big]\\
    &= \bE_{x_t \sim \back q_{t|\ell}(\cdot \mid x_{\ell})} \big[f(\ell, x_\ell) - f(t, x_t)\big] \cdot \log \wh s_{T - \ell}(x_\ell \oplus_i c, x_\ell) = 0,
\end{align*}
where we plug in Eqn.~\eqref{eq:uniform_mart_mean} in the last line.
\qed

\subsection{Proof of Lemma~\ref{lem:uniform_t3_sim}}
The proof of Lemma~\ref{lem:uniform_t3_sim} follows directly from exchanging the order of summation. Specifically, we can write 
\begin{align*}
    &\bE_{x_t \sim \back q_{t}}\left[\sum_{y_t: \ham(y_t, x_t) = 1} h\left(s_{T-t}(y_t, x_t)\right)\right]\\
    &= \bE_{x_t \sim \back q_t} \left[\sum_{y_t: \ham(y_t, x_t) = 1} s_{T-t}(y_t, x_t)\log(s_{T-t}(y_t, x_t)) - s_{T-t}(y_t, x_t) + 1\right]\\
    &= \bE_{x_t \sim \back q_t} \left[\sum_{y_t: \ham(y_t, x_t) = 1} \left(\frac{q_{T-t}(y_t)}{q_{T-t}(x_t)}\right)\log(s_{T-t}(y_t, x_t))\right] - \bE_{x_t \sim \back q_t} \left[\sum_{y_t: \ham(y_t, x_t) = 1} \left(\frac{q_{T-t}(y_t)}{q_{T-t}(x_t)}\right)\right] + d(S-1)\\
    &= \sum_{x_t \in [S]^d} \sum_{y_t: \ham(y_t, x_t) = 1} q_{T-t}(y_t) \log(s_{T-t}(y_t, x_t)) - \sum_{x_t \in [S]^d} \sum_{y_t: \ham(y_t, x_t) = 1} q_{T-t}(y_t) + d(S-1)\\
    &\overset{\mathrm{(a)}}{=} -\sum_{x_t \in [S]^d} \sum_{y_t: \ham(y_t, x_t) = 1} q_{T-t}(x_t) \log(s_{T-t}(y_t, x_t))-\sum_{y_t \in [S]^d} \sum_{x_t: \ham(y_t, x_t) = 1} q_{T-t}(y_t) + d(S-1)\\
    &= -\bE_{x_t \sim \back q_t} \left[\sum_{y_t: \ham(y_t, x_t) = 1} \log(s_{T-t}(y_t, x_t))\right] - d(S-1) + d(S-1)\\
    &= \bE_{x_t \sim \back q_{t}}\left[\sum_{y_t: \ham(y_t, x_t) = 1} -\log\left(s_{T - t}(y_t, x_t)\right)\right],
\end{align*}
where in equality (a), we switch the positions of $x_t$ and $y_t$ in the summations.
\qed
\subsection{Proof of Lemma~\ref{lem:uniform_log_score_bound}}
Lemma~\ref{lem:uniform_log_score_bound} is a direct consequence of \citet[Lemma 2]{liang2025discrete}. 
Here, we present a simplified proof based on Proposition~\ref{prop:score_function}. It can be easily checked that 
\begin{align*}
    \alpha_t = \frac{1 - e^{-t}}{1 + (S-1)e^{-t}} \in (0, 1).
\end{align*}
By Eqn.~\eqref{eq:unif_score}, one has, for $\ham(x, y) = 1$,
\begin{align*}
    s_t(y, x) = \frac{\bE_{x_0 \sim q_0} \alpha_t^{\ham(y, x_0)}}{\bE_{x_0 \sim q_0} \alpha_t^{\ham{(x, x_0)}}} 
    & \leq \alpha_t^{-\sup_{y,x,x_0}|\ham(y, x_0) - \ham(y, x_0)|} \\
    &= \exp\left(-\log(\alpha_t)\cdot \sup_{y,x,x_0}|\ham(y, x_0) - \ham(y, x_0)|\right)\\
    &\leq \exp\left(-\log(\alpha_t)\cdot \sup_{y,x}\ham(y, x)\right) = \exp\big(-\log(\alpha_t)\big).
\end{align*}
With similar calculation, one can establish the reversed inequality
\begin{align*}
    s_t(y, x) \geq \exp\left(\log(\alpha_t)\cdot \sup_{y,x}\ham(y, x)\right) = \exp\big(\log(\alpha_t)\big).
\end{align*}
As a result, we conclude 
\begin{align*}
    |\log\big(s_t(y, x)\big)| \le -\log(\alpha_t) \lesssim \log(S) + \max\big\{\log(t^{-1}), 0\big\}.
\end{align*}
\qed
\subsection{Proof of Lemma~\ref{lem:dkl_dt_phi}}

Let us start by proving the first equation, i.e.,
\begin{align*}
    \varphi_{i, c}(t) = \KL(q_t \,\|\, (N_{i,-c})_{\#}q_t).
\end{align*}
Recall the definition of $\varphi_{i, c}(t)$ that
\begin{align}
    \varphi_{i, c}(t) = \E_{x_t \sim q_t}\left[-\log\left(\frac{q_t\left(N_{i,c}(x_t)\right)}{q_t(x_t)}\right)\right] = \sum_{x_t \in \cX} q_t(x) \log\left(\frac{q_t(x)}{q_t\left(N_{i,c}(x_t)\right)}\right).\label{eq:phi_ic_N}
\end{align}
As in Eqn.~\eqref{eq:pushforward_ic}, the pushforward measure satisfies 
\begin{align*}
    (N_{i,-c})_{\#}q_t(x) = q_t\left(N_{i,c}(x)\right), \text{ for any } x \in \cX.
\end{align*}
As such, we can express Eqn.~\eqref{eq:phi_ic_N} as
\begin{align*}
    \varphi_{i, c}(t) = \sum_{x_t \in \cX} q_t(x) \log\left(\frac{q_t(x)}{(N_{i,-c})_{\#}q_t(x_t)}\right) = \KL(q_t \,\|\, (N_{i,-c})_{\#}q_t),
\end{align*}
which proves the first equation.

For the second relation, the definition of KL divergence gives
    \begin{align}
        -\frac{\partial}{\partial t} \KL (q_t \Vert p_0) = -\frac{\partial}{\partial t} \sum_{x \in [S]^d} q_t(x) \log(q_t(x)) &= -\sum_{x \in [S]^d} \frac{\d q_t(x)}{\d t} \big(\log(q_t(x)) + 1\big) = -\sum_{x \in [S]^d} \frac{\d q_t(x)}{\d t} \log(q_t(x)).\label{eq:uniform_dkl_dt}
    \end{align}
    Using the Kolmogorov forward equation for the forward noising process, we have
    \begin{align*}
        \frac{\d q_t(x)}{\d t} = \sum_{y \in \cX}Q(x, y) q_t(x) = \frac{1}{S}\sum_{y: \hm(y, x) = 1}q_t(y) -\frac{d(S-1)}{S}q_t(x).
    \end{align*}
    Plugging the equation above into Eqn.~\eqref{eq:uniform_dkl_dt}, we arrive at
    \begin{align*}
        -\frac{\partial}{\partial t} \KL (q_t \Vert p_0) 
        &= -\sum_{x \in [S]^d} \left(\sum_{y: \hm(y, x) = 1} \left(\frac{1}{S}q_t(y)\right) -  \frac{d(S-1)}{S}q_t(x)\right) \log(q_t(x))\\
        &= -\frac{1}{S}\sum_{x \in [S]^d} \sum_{y: \hm(y, x) = 1} (q_t(y) - q_t(x)) \log(q_t(x))\\
        &= -\frac{1}{S}\sum_{x \in [S]^d} \sum_{y: \hm(y, x) = 1} q_t(x) \big(\log(q_t(y)) - \log(q_t(x))\big) = \varphi(t).
    \end{align*}
In addition, recall $\varphi(t) = 1/S \sum_{i \in [d]} \sum_{c \in [S]} \varphi_{i, c}(t)$. We reach
\begin{align*}
    \varphi(t) = \frac{1}{S} \sum_{i \in [d]} \sum_{c \in [S]} \KL(q_t \,\|\, (N_{i,-c})_{\#}q_t) = \frac{1}{S} \sum_{i \in [d]} \sum_{c \in [S]} \KL(q_t \,\|\, (N_{i,c})_{\#}q_t).
\end{align*}
\qed

\subsection{Proof of Lemma~\ref{lem:phi'_lower_bound}}
\label{sec:pf-lemma-phi'}
        Let $L$ be the time-homogeneous infinitesimal generator of the forward process. Since each coordinate $i \in [d]$ is updated independently in the forward process, we can write $L = L_i + L_{-i}$, where $L_i$ only updates coordinate $i$, and $L_{-i}$ updates all other coordinates. It is direct to show that $L_i$ and $L_{-i}$ commute, therefore, we have for any $u \geq 0$,
        \begin{align*}
            q_{t+u} = q_t e^{uL_i} e^{uL_{-i}}, \quad (N_{i,-c})_{\#} q_{t+u} = ((N_{i,-c})_{\#} q_t) e^{uL_i} e^{uL_{-i}},
        \end{align*}
        where the second equation is due to the operator $N_{i,-c}$ commutes with the semigroup $\{e^{uL}\}_{u \geq 0}$. With this formulation, we reach
        \begin{align}\label{eq:dpi-i-coord}
            \varphi_{i,c}(t + u) = \KL(q_{t+u} \,\Vert\, (N_{i,-c})_{\#} q_{t+u}) \le \KL(q_t e^{uL_i} \,\Vert\, ((N_{i,-c})_{\#} q_t) e^{uL_i}),
        \end{align}
        where in the last inequality, we apply the weak data processing inequality for KL divergence.
        Since both $N_{i,-c}$ and $L_i$ only operate on the coordinate $i$, we arrive at the decomposition
        \begin{align}\label{eq:kl-semigroup-decomp}
            \KL(q_t e^{uL_i} \,\Vert\, ((N_{i,-c})_{\#} q_t) e^{uL_i}) = \bE_{x^{-i} \sim (q_t)^{-i}}\left[\KL\big(q_t(\cdot|x^{-i}) e^{uL_i} \,\Vert\, ((N_{i,-c})_{\#} q_t(\cdot|x^{-i})) e^{uL_i}\big)\right],
        \end{align}
        where $(q_t)^{-i}$ is the marginal distribution of $q_t$ with coordinate $i$ excluded. 
        Define $K_u$ to be the transition kernel on $[S] \times [S]$ induced by $e^{uL_i}$.
        It is shown that 
        \begin{align*}
            K_u(v_1, v_2) = \begin{cases}
                \frac{1}{S} + (1 - \frac{1}{S})e^{-u} \quad &\text{if } v_1 = v_2;\\
                \frac{1}{S}(1 - e^{-u}) \quad &\text{if } v_1 \neq v_2.
            \end{cases}
        \end{align*}
        It can be directly checked that $K_u$ is a $S$-ary symmetric channel with noise scale $\sigma_u = (1-S^{-1})(1-e^{-u})$.
        By \citet[Proposition 12]{makur2018comparison}, a strong data processing inequality holds for the channel $K_u$, i.e., for any distribution $p, q$ supported on $[S]$,
        \begin{align*}
            \KL(p e^{uL_i} \,\Vert\, q e^{uL_i}) \le \eta_{\KL}(K_u) \KL(p \,\Vert\, q),
        \end{align*}
        where $\eta_{\KL}(K_u)$ satisfies
        \begin{align*}
            \eta_{\KL}(K_u) \le \bigg|1 - \sigma_u - \frac{\sigma_u}{S-1}\bigg| = 1 - \frac{S}{S-1}(1-S^{-1})(1- e^{-u}) = e^{-u}.
        \end{align*}
        Taking this strong data processing inequality with Eqn.~\eqref{eq:kl-semigroup-decomp} yields 
        \begin{align*}
            \KL(q_t e^{uL_i} \Vert             ((N_{i,-c})_{\#} q_t) e^{uL_i}) &= \bE_{x^{-i} \sim (q_t)^{-i}}\left[\KL(q_t(\cdot|x^{-i}) e^{uL_i} \,\Vert\, ((N_{i,-c})_{\#} q_t(\cdot|x^{-i})) e^{uL_i})\right]\\
            &\leq \bE_{x_{-i} \sim (q_t)^{-i}}\left[e^{-u}\,\KL(q_t(\cdot|x^{-i})\,\Vert\, ((N_{i,-c})_{\#} q_t(\cdot|x_{-i})) \right]\\
            &\leq e^{-u}\,\bE_{x^{-i} \sim (q_t)^{-i}}\left[\KL(q_t(\cdot|x^{-i})\,\Vert\, ((N_{i,-c})_{\#} q_t(\cdot|x^{-i})) \right]\\
            &= e^{-u} \KL(q_t \,\Vert\, (N_{i,-c})_{\#} q_t).
        \end{align*}
        Then, by Eqn.~\eqref{eq:dpi-i-coord}, we have
        \begin{align*}
            \varphi_{i,c}(t+u) \leq \KL(q_t e^{uL_i} \,\Vert\, ((N_{i,-c})_{\#} q_t) e^{uL_i}) \leq e^{-u} \KL(q_t \,\Vert\, (N_{i,-c})_{\#} q_t) = e^{-u} \varphi_{i,c}(t),
        \end{align*}
        which holds for any $u \geq 0$.
        Therefore, the derivative can be bound as
        \begin{align*}
            \varphi_{i,c}'(t) = \lim_{u \to 0^{+}} \frac{\varphi_{i,c}(t+u) - \varphi_{i,c}(t)}{u} \le \lim_{u \to 0^{+}} \frac{e^{-u}}{u} \varphi_{i,c}(t) = -\varphi_{i,c}(t),
        \end{align*}
        which induces the target result
        \begin{align*}
            -\varphi_{i,c}'(t) \geq \varphi_{i,c}(t).
        \end{align*}
\qed

\subsection{Proof of~\Cref{lem:mask_mart}}
\label{sec:proof_mask_mark}

The proof follows from~\citep[Lemma 5.2.2]{conforti2025non}. We add the argument below for completeness.
 Let us define
    \begin{align*}
        f(t, x_t) \coloneqq  s_{T - t}(x_t \odot_i c, x_t) \bI\{i \in m(x_t)\},
    \end{align*}
    where the dependence on $i$ and $c$ is omitted for simplicity.
    In view of \Cref{lem:ito}, for $0 \leq \ell \leq t < T$, we can write 
    \begin{align*}
        f(t, x_t) = f(\ell, x_\ell) + \int_\ell^t\left[\partial_t f(s, x_s) + (\back L_s f)(s, x_s)\right]\d s + M_t,
    \end{align*}
    with generator \(
    \{\back L_s\}_{s \in [\ell, t]}\) as follows
    \begin{align*}
    \left(\back L_s f\right)(s, x) &= \sum_{y \neq x} Q_{T-s}(y, x) s_{T-s}(y, x)\big(f(s, y) - f(s, x)\big)\\
    &= \sum_{i' \in m(x)} \sum_{c' \in [S]} s_{T-s}(x \odot_{i'} c', x)\big(f(s, x \odot_{i'} c') - f(s, x)\big),
\end{align*}
    and the compensation process $\{M_u\}_{u\in[\ell, t]}$ defined as
    \begin{align*}
        M_u = \int_\ell^u \sum_{i' \in m(x_s)} \sum_{c' \in [S]} \big(f(s, x_s \odot_{i'} c') - f(s, x_s)\big)\big(\d N_s^{x_s, x_s \odot_{i'} c'} - \lambda_s^{x_s, x_s \odot_{i'} c'} \d s\big).
    \end{align*}
    With similar argument as in the proof of Lemma~\ref{lem:uniform_martingale}, one has 
    $\E_{\back q_{t\mid \ell}(\cdot \mid x_\ell)}[M_t] = 0,$ which leads to
    \begin{align*}
        \bE_{x_t \sim \back q_{t \mid \ell} (\cdot \mid x_{\ell})} [f(t, x_t) - f(\ell, x_\ell)] =  \bE_{x_t \sim \back q_{t \mid \ell} (\cdot \mid x_{\ell})} \left[\int_{\ell}^t \big(\partial_t f(s, x_s) + (\back L_s f)(s, x_s)\big) \d s\right].
    \end{align*}
    Taking derivative with respect to $t$ on both side, we arrive at
    \begin{align*}
        \frac{\d}{\d t} \bE_{x_t \sim \back q_{t \mid \ell} (\cdot \mid x_{\ell})} [f(t, x_t)] = \bE_{x_t \sim \back q_{t \mid \ell} (\cdot \mid x_{\ell})} \left[\partial_t f(t, x_t) + (\back L_t f)(t, x_t)\right].
    \end{align*}
    Now let us consider each term on the right hand side above separately. By~\Cref{prop:score_function}, it obeys that $$s_{T - t}(x_t \odot_i c, x_t) = \frac{1}{e^{T - t} - 1} \frac{q_0(x_t \odot_i c)}{q_0(x_t)},$$ and we have
    \begin{align*}
        \frac{\partial}{\partial t} f(t, x_t) = \frac{e^{T - t}}{e^{T - t} - 1} f(t, x_t).
    \end{align*}
    Next, direct calculations yield 
\begin{align*}
 (\back L_t f)(t, x_t) 
& = \sum_{i' \in m(x_t)} \sum_{c' \in [S]} s_{T -t}(x_t \odot_{i'} c', x_t)\Bigl(s_{T - t}(x_t \odot_{i} c \odot_{i'} c', x_t \odot_{i'} c') \bI\{i \in m(x_t \odot_{i'} c')\}\\
&\hspace{10em} 
- s_{T - t}(x_t \odot_i c, x_t) \bI\{i \in m(x_t)\}\Bigr) \\
&\quad = \frac{1}{e^{T - t} - 1} f(t, x_t) \left(\sum_{i' \in m(x_t) \setminus \{i\}} \sum_{c' \in [S]} \frac{q_0(x_t \odot_i c \odot_{i'} c')}{q_0(x_t \odot_i c)} - \sum_{i' \in m(x_t)} \sum_{c' \in [S]} \frac{q_0(x_t \odot_{i'} c')}{q_0(x_t)}\right) \\
&\quad = \frac{1}{e^{T - t} - 1} f(t, x_t)(|m(x_t) \setminus \{i\}| - |m(x_t)|)
 = -\frac{1}{e^{T - t} - 1} f(t, x_t).
    \end{align*}
Putting everything together leads to  
    \begin{align*}
         \frac{\d}{\d t} \bE_{x_t \sim \back q_{t \mid \ell} (\cdot \mid x_{\ell})} [f(t, x_t)] = \bE_{x_t \sim \back q_{t \mid \ell} (\cdot \mid x_{\ell})} [f(t, x_t)],
    \end{align*}
    and therefore,
    \begin{align*}
        \bE_{x_t \sim \back q_{t \mid \ell} (\cdot \mid x_{\ell})} [f(t, x_t)] = e^{t - \ell} \cdot f(\ell, x_\ell).
    \end{align*}
    
Finally, in view of the relation \(\Pr(x_t^i = \mask \mid x_{\ell}^i = \mask) = \frac{1 - e^{t - T}}{1 - e^{\ell - T}}\), we conclude the following 
\begin{align*}
\begin{aligned}
\bE_{x_t \sim \back q_{t \mid \ell}(\cdot \mid x_\ell)} \big[s_{T - t}(x_t \odot_i c, x_t) \,\bI\{i \in m(x_t)\}\big] &= e^{t-\ell} \cdot s_{T - \ell}(x_{\ell} \odot_i c, x_{\ell}) \bI\{i \in m(x_{\ell})\}\\
&= \bE_{x_t \sim \back q_{t \mid \ell}(\cdot \mid x_\ell)} \big[s_{T - t}(x_\ell \odot_i c, x_\ell)\,\bI\{i \in m(x_t)\}\big],
\end{aligned}
\end{align*}
which completes the proof of the desired result. 
\qed

%% file: appendix/app_aux_lemmas.tex
\section{Proofs of the auxiliary lemmas}

\subsection{Proof of \Cref{lem:I-variance}}
\label{sec:pf-i-variance}
For a continuous random variable $U$ in $\mathbb{R}^d$ with density function $p_U$ with respect to Lebesgue measure, define the differential entropy of $U$ as
\begin{align}\label{eq:defn-diff-ent}
    \cH^{\mathrm{diff}}(U) = -\int_{\mathbb{R}^d} p_U \log(p_U) \d x,
\end{align}
where we adopt the convention $0\log(0) = 0$ again.
By definition of mutual information, we have
\begin{align}
    I(W; W + \varepsilon_{\mathrm{noise}}) &= \cH^{\mathrm{diff}}(W + \varepsilon_{\mathrm{noise}}) - \cH^{\mathrm{diff}}(W + \varepsilon_{\mathrm{noise}} \mid W)\notag\\
    &\overset{\mathrm{(a)}}{=} \cH^{\mathrm{diff}}(W + \varepsilon_{\mathrm{noise}}) - \E_{w}\big[\cH^{\mathrm{diff}}(w + \varepsilon_{\mathrm{noise}} \mid W = w)\big]\notag\\
    &\overset{\mathrm{(b)}}{=} \cH^{\mathrm{diff}}(W + \varepsilon_{\mathrm{noise}}) - \E_{w}\big[\cH^{\mathrm{diff}}(\varepsilon_{\mathrm{noise}} \mid W = w)\big]\notag\\
    &\overset{\mathrm{(c)}}{=} \cH^{\mathrm{diff}}(W + \varepsilon_{\mathrm{noise}}) - \cH^{\mathrm{diff}}(\varepsilon_{\mathrm{noise}}),\label{eq:manifold-i-w-w}
\end{align}
where in (a), we use the chain rule of differential entropy; in (b), we apply the translation invariance property, i.e., $\cH^{\mathrm{diff}}(U) = \cH^{\mathrm{diff}}(c_0 + U)$ for any constant $c_0$; in (c), we use the condition that $\varepsilon_{\mathrm{noise}} \indep W$.

Denote the Gaussian density function with mean $0$ and variance $\sigma^2 I_d$ as $\phi_\sigma(\cdot)$. 
Since $\varepsilon_{\mathrm{noise}} \sim \cN(0, \sigma^2 I_d)$, we can compute with \Cref{eq:defn-diff-ent} that
\begin{align}
    \cH^{\mathrm{diff}}(\varepsilon_{\mathrm{noise}}) &= -\int_{\mathbb{R}^d} \phi_\sigma(x) \log\big(\phi_\sigma(x)\big) \d x\notag\\
    &= -\int_{\mathbb{R}^d} \phi_\sigma(x) \left(-\frac{d}{2} \log(2 \pi \sigma^2) - \frac{\|x\|_2^2}{2\sigma^2} \right) \d x\notag\\
    &= \frac{d}{2} \log(2 \pi \sigma^2) + \frac{\E[\|\varepsilon_{\mathrm{noise}}\|_2^2]}{2\sigma^2}\notag\\
    &= \frac{d}{2} \log(2 \pi e \sigma^2),\label{eq:gaussian-diff-entropy}
\end{align}
where $\|\cdot\|_2$ is the Euclidean norm in $\mathbb{R}^d$.
For $\cH^{\mathrm{diff}}(W + \varepsilon_{\mathrm{noise}})$, notice that
\begin{align*}
    \mathrm{Var}\big[W + \varepsilon_{\mathrm{noise}}\big] = \mathrm{Var}[W] + \mathrm{Var}[\varepsilon_{\mathrm{noise}}] + 2\,\mathrm{Cov}\big[W, \varepsilon_{\mathrm{noise}}\big]= \mathrm{Var}[W] + \sigma^2 I_d.
\end{align*}
By \citet[Page 255]{cover1999elements}, for distributions with the same finite variance, $\cH^{\mathrm{diff}}$ is maximized at the centered Gaussian random variable.
Therefore, we have
\begin{align*}
    \cH^{\mathrm{diff}}(W + \varepsilon_{\mathrm{noise}}) &\leq \cH^{\mathrm{diff}}\Big(\cN\big(0, \mathrm{Var}[W] + \sigma^2 I_d\big)\Big) = \frac{d}{2} \log(2 \pi e) + \frac{1}{2} \log\left(\mathrm{det}\big(\mathrm{Var}[W] + \sigma^2 I_d\big)\right),
\end{align*}
where $\mathrm{det}(\cdot)$ is the determinant of matrices, and the calculation is the same as in \Cref{eq:gaussian-diff-entropy}. 
Since $\mathrm{Var}[W]$ is a positive semidefinite matrix, we can apply the matrix inequality that
\begin{align*}
    \log\left(\mathrm{det}\big(\mathrm{Var}[W] + \sigma^2 I_d\big)\right) &= d\log(\sigma^2) + \log\left(\mathrm{det}\big(I_d + \mathrm{Var}[W/\sigma^2]\big)\right)\\
    &\leq d \log(\sigma^2) + \tr\left(\mathrm{Var}[W/\sigma^2]\right)\\
    &= d \log(\sigma^2) + \frac{\tr\left(\mathrm{Var}[W]\right)}{\sigma^2},
\end{align*}
which leads to 
\begin{align}\label{eq:gaussian-channel-bound}
    \cH^{\mathrm{diff}}(W + \varepsilon_{\mathrm{noise}}) &\leq \frac{d}{2} \log(2 \pi e \sigma^2) + \frac{\tr\left(\mathrm{Var}[W]\right)}{2\sigma^2}.
\end{align}
Plugging Eqns.~\eqref{eq:gaussian-diff-entropy} and \eqref{eq:gaussian-channel-bound} into \Cref{eq:manifold-i-w-w}, we conclude that
\begin{align*}
    I(W; W + \varepsilon_{\mathrm{noise}}) \leq \frac{d}{2} \log(2 \pi e \sigma^2) + \frac{\tr\left(\mathrm{Var}[W]\right)}{2\sigma^2} - \frac{d}{2} \log(2 \pi e \sigma^2) = \frac{\tr\left(\mathrm{Var}[W]\right)}{2\sigma^2}.
\end{align*}
\qed
\subsection{Proof of~\Cref{lem:bin-even-odd}}
For $X \sim \mathrm{Bin}(n , 1/2)$, its pmf is given by 
\begin{align*}
    \P(X = x) = {n \choose x} \left(\frac{1}{2}\right)^n \propto {n \choose x}.
\end{align*}
Notice that our desired bound is equivalent to the following equation:
\begin{align*}
    \sum_{x: x \bmod 2 = 0} {n \choose x}  = \sum_{x: x \bmod 2 = 1} {n \choose x},
\end{align*}
which follows from the binomial theorem for \(0 = (1 - 1)^n\). \qed

\subsection{Proof of~\Cref{lem:alg_is_ctmc}}
\label{sec:proof_is_ctmc}

    The CTMC~\eqref{eq:modified_ttl} in the lemma statement can be decomposed into \(d\) independent CTMCs for each coordinate. For coordinates \(i\) such that \(x_{t_k}^i \neq \mathrm{MASK}\) clearly neither~\Cref{eq:modified_ttl} nor~\Cref{alg:modified_ttl} makes a change. Next, we fix \(i \in m(x_{t_k})\).
    First, we compute the probability that the \(i\)-th coordinate remains masked for \(y_{t_{k+1}}\):
    \begin{equation*}
    \begin{aligned}
        \Pr(y_{t_{k+1}}^i  = \mathrm{MASK} \mid y_{t_k}) &= \exp\left(\int_{t_k}^{t_{k+1}} \left(-\sum_{c \in [S]} \wh s_{T - t_k}(y_{t_k} \odot_{i} c, y_{t_k} ) \frac{e^{T - t_k} - 1}{e^{T - t} - 1}\right) \d t\right) \\
        & = \exp(\wh Q_k^i(\mathrm{MASK}) \Delta_k) \\
        & = \cP_k,
    \end{aligned}
    \end{equation*}
    where \(\wh Q_k^i(\mathrm{MASK})\), \(\Delta_k\), and \(\cP_k\) are defined in~\Cref{alg:modified_ttl}. Next, for \(c \in [S]\) we can write
    \begin{equation*}
        \Pr(y_{t_{k+1}}^i = c \mid x_{t_k}) = \Pr(x_{t_{k+1}}^i = c \mid x_{t_k} \text{ and } x_{t_{k+1}}^i \neq \mathrm{MASK}) (1 - \cP_k).
    \end{equation*}
    Since for any \(t \in [t_k, t_{k+1})\) the rates \(\wh Q_t(x, x \odot_i c)\) are proportional to \(\wh Q_k^i(c)\), we get that
    \begin{equation*}
         \Pr(y_{t_{k+1}}^i = c \mid x_{t_k} \text{ and } y_{t_{k+1}}^i \neq \mathrm{MASK}) = \frac{\wh Q_k^i(c)}{\sum_{b \in [S]} \wh Q_k^i(b)},
    \end{equation*}
    which matches the expression in~\Cref{alg:modified_ttl}. Therefore, the distribution of \(y_{t_{k+1}}\) defined by the CTMC matches the distribution of \(x_{t_{k+1}}\) from the algorithm. 
\qed

\subsection{Proof of~\Cref{lem:control_at_t}}
\label{sec:proof_control_at_t}
  In view of the definition of $D(\cdot, \cdot)$, one can write  
    \begin{equation*}
    \begin{aligned}
        &s_{T - t}(x_t \odot_i c, x_t)\breg \left( s_{T - t}(x_{\ell} \odot_i c, x_{\ell}), s_{T - t}(x_t \odot_i c, x_t)\right) \\
        &= s_{T - t}(x_{\ell} \odot_i c, x_{\ell}) -  s_{T - t}(x_t \odot_i c, x_t) +  s_{T - t}(x_t \odot_i c, x_t) \log \frac{ s_{T - t}(x_t \odot_i c, x_t) }{ s_{T - t}(x_{\ell} \odot_i c, x_{\ell})}.
    \end{aligned}
    \end{equation*}
    The first two terms cancel out in expectation by~\Cref{lem:mask_mart}; i.e., for any \(c \in [S]\), one has
    \begin{align*}
         \bE_{x_t \sim \back q_{t \mid \ell}(\cdot \mid x_\ell)} \Big[\sum_{i \in m(x_t)} \left(s_{T - t}(x_\ell \odot_i c, x_\ell) - s_{T - t}(x_t \odot_i c, x_t)\right)\Big] = 0.
    \end{align*}
    Next, using~\Cref{eq:mask_score}, we obtain 
    \begin{equation*}
        \frac{ s_{T - t}(x_t \odot_i c, x_t) }{s_{T - t}(x_{\ell} \odot_i c, x_{\ell})} = \frac{q_0(x_t \odot_i c) q_0(x_{\ell})}{q_0(x_t) q_0(x_{\ell} \odot_i c)}.
    \end{equation*}
    Using this relation, we continue 
    \begin{align}
        &\bE_{x_{\ell}, x_t \sim \back q_{\ell, t}} \sum_{i \in m(x_t)} \sum_{c \in [S]} s_{T - t}(x_t \odot_i c, x_t) \log \frac{q_0(x_t \odot_i c) q_0(x_{\ell})}{q_0(x_t) q_0(x_{\ell} \odot_i c)} \notag\\
        &\quad = \bE_{y_\ell, y_t \sim \back q_{\ell, t}} \sum_{i \notin m(y_t)} 
        \log \frac{q_0(y_t) q_0(y_{\ell} \odot_i \mask)}{q_0(y_t \odot_i \mask) q_0(y_{\ell} \odot_i y_t^i)} \notag\\
        &\quad = \sum_{i \in [d]} \bE_{y_\ell, y_t \sim \back q_{\ell, t}} 
        \log \frac{q_0(y_t) q_0(y_{\ell} \odot_i \mask)}{q_0(y_t \odot_i \mask) q_0(y_{\ell} \odot_i y_t^i)}, \label{eq:change_measure}
    \end{align}
    where in the second line, we used the definition of the score function along with the natural bijection between the sets
    \(\{(x, i, c) \text{, for } x \in \cX \text{, } i \in m(x) \text{, and } c \in [S]\}\) and \(\{(y, i) \text{, for } y \in \cX \text{ and } i \notin m(y)\}\) 
    to change the measure under the expectation: 
    \begin{align*}
        x_t &\to y_t \odot_i \mask \\
        x_\ell &\to y_\ell \odot_i \mask \\
        x_t \odot_i c &\to y_t \\
        x_\ell \odot_i c &\to y_\ell \odot_i y_t^i.
    \end{align*}
    Note that since \(y_\ell\) appears earlier in the backward process, \(y_\ell^i\) can be masked or unmasked. Since the \(i\)-th element of \(x_{
    \ell
    } \odot_i c\) is unmasked by construction, we explicitly set the \(i\)-th element of \(y_\ell\) to \(y_t^i\). The third line follows from the fact that, for \(i \in m(y_t)\), the term is equal to zero.

    Next, we define, for fixed 
    \(t, y_t\), and \(i \in [d]\),
    \begin{equation*}
        f_i(y) \defn \log \frac{q_0(y \odot_i y_t^i)}{q_0(y \odot_i \mask)},
    \end{equation*}
    and rewrite~\Cref{eq:change_measure} as follows:
    \begin{equation*}
    \sum_{i \in [d]} \bE_{y_\ell, y_t \sim \back q_{\ell, t}} 
        \log \frac{q_0(y_t) q_0(y_{\ell} \odot_i \mask)}{q_0(y_i \odot_i \mask) q_0(y_{\ell} \odot_i y_t^i)} = \sum_{i \in [d]} \bE_{y_\ell, y_t \sim \back q_{\ell, t}} \left[f_i(y_t) - f_i(y_\ell)\right].
    \end{equation*}
    We observe that as the value \(f_i(y)\) does not depend on the \(i\)-th coordinate of \(y\), we can apply Dynkin's formula,~\Cref{lem:ito} to the remaining \(d - 1\) coordinates for the forward process: \[\bE_{y_\ell, y_t \sim \back q_{\ell, t}}\left[f_i(y_t) - f_i(y_{\ell})\right] =  \int_t^{\ell}\bE_{y_v \sim \back q_{v}}\sum_{j \neq i}  \left[f_i(y_v) -  f_i(y_v \odot_j \mask)\right] \d v.\]
    With this, we continue:
    \begin{align}
\label{eq:total_cor_lemma_dynkin}
        &\sum_{i \in [d]} \bE_{y_\ell, y_t \sim \back q_{\ell, t}} \left[f_i(y_t) - f_i(y_\ell)\right] \notag\\
        &\quad = 
        \sum_{i \in [d]} \int_{\ell}^t \bE_{y_v, y_t \sim \back q_{v, t}} \sum_{j \notin m(y_v) \cup \{i\}} \log \frac{q_0(y_v 
    \odot_i y_t^i) q_0(y_v \odot_i \mathrm{MASK} \odot_j \mathrm{MASK})}{q_0(y_v \odot_i \mathrm{MASK}) q_0(y_v \odot_i y_t^i \odot_j \mathrm{MASK})} \d v \notag\\
        &\quad = \sum_{i \neq j \in [d]} \int_{\ell}^t \bE_{y_v, y_t \sim \back q_{v, t}} \log \frac{q_0(y_v \odot_i y_t^i) q_0(y_v \odot_i \mathrm{MASK} \odot_j \mathrm{MASK})}{q_0(y_v \odot_i \mathrm{MASK}) q_0(y_v\odot_i y_t^i \odot_j \mathrm{MASK})} \d v \notag\\
        &\quad = \sum_{i \neq j \in [d]} \int_{\ell}^t e^{t - v} \bE_{y_v \sim \back q_v} \log \frac{q_0(y_v) q_0(y_v \odot_i \mathrm{MASK} \odot_j \mathrm{MASK})}{q_0(y_v \odot_i \mathrm{MASK}) q_0(y_v \odot_j \mathrm{MASK})} \d v,
    \end{align}
    where in the third line, as before, we extended the sum to \(j \in m(y_v) \setminus \{i\}\) since additional terms equal zero. 
    The last line follows from
    \begin{equation*}
        \Pr(y_v^i \neq \mask \mid y_t^i \neq \mask) = e^{v - t}.
    \end{equation*}
    Next, let \(y_v^{-(i,j)}\) denote all unmasked elements of \(y_v\), except \(i\)-th and \(j\)-th.
    We can write
    \begin{equation*}
        \frac{q_0(y_v) q_0(y_v \odot_i \mathrm{MASK} \odot_j \mathrm{MASK})}{q_0(y_v \odot_i \mathrm{MASK}) q_0(y_v \odot_j \mathrm{MASK})} = \frac{q_0(y_v^i, y_v^j \mid y_v^{-(i,j)})}{q_0(y_v^i \mid y_v^{-(i,j)}) q_0(y_v^j \mid y_v^{-(i,j)})},
    \end{equation*}
    and thus,
    \begin{align}
        &\sum_{i \neq j \in [d]} \int_{\ell}^t e^{t - v} \bE_{y_v \sim \back q_v} \log \frac{q_0(y_v) q_0(y_v \odot_i \mathrm{MASK} \odot_j \mathrm{MASK})}{q_0(y_v \odot_i \mathrm{MASK}) q_0(y_v \odot_j \mathrm{MASK})} \d v \notag\\
        &\quad = \sum_{i \neq j} \int_{\ell}^t e^{t - v} \mi(y_v^i; y_v^j \mid y_v^{-(i, j)}) \d v \notag\\
        &\quad = \int_{\ell}^t e^{t - v} \cI(T - v) \d v,
\label{eq:total_cor_lemma_final}
    \end{align}
    as \(y_v \sim q_{T - v}\). Combining~Eqns.~(\ref{eq:change_measure}), (\ref{eq:total_cor_lemma_dynkin}), and~(\ref{eq:total_cor_lemma_final}) concludes the proof.
\qed

%% file: reference-diffusion.bib
@article{shi2024simplified,
  title={Simplified and generalized masked diffusion for discrete data},
  author={Shi, Jiaxin and Han, Kehang and Wang, Zhe and Doucet, Arnaud and Titsias, Michalis},
  journal={Advances in neural information processing systems},
  volume={37},
  pages={103131--103167},
  year={2024}
}

@article{liang2025low,
  title={Low-dimensional adaptation of diffusion models: Convergence in total variation},
  author={Liang, Jiadong and Huang, Zhihan and Chen, Yuxin},
  journal={arXiv preprint arXiv:2501.12982},
  year={2025}
}

@article{von2025scaling,
  title={Scaling Behavior of Discrete Diffusion Language Models},
  author={von R{\"u}tte, Dimitri and Fluri, Janis and Pooladzandi, Omead and Sch{\"o}lkopf, Bernhard and Hofmann, Thomas and Orvieto, Antonio},
  journal={arXiv preprint arXiv:2512.10858},
  year={2025}
}

@article{bach2025sampling,
  title={Sampling Binary Data by Denoising through Score Functions},
  author={Bach, Francis and Saremi, Saeed},
  journal={arXiv preprint arXiv:2502.00557},
  year={2025}
}

@inproceedings{
pham2025discrete,
title={Discrete Markov Probabilistic Models: An Improved Discrete Score-Based  Framework with sharp convergence bounds under minimal assumptions},
author={Le-Tuyet-Nhi Pham and Dario Shariatian and Antonio Ocello and Giovanni Conforti and Alain Oliviero Durmus},
booktitle={ International Conference on Machine Learning},
year={2025},
}

@article{austin2018multi,
  title = {Multi-Variate Correlation and Mixtures of Product Measures},
  author = {Austin, Tim},
  year = 2020,
  month = jul,
  journal = {Kybernetika},
  pages = {459--499},
  issn = {0023-5954, 1805-949X},
  doi = {10.14736/kyb-2020-3-0459},
  urldate = {2026-02-04},
  langid = {english}
}

@article{feller1940integro,
  title={On the integro-differential equations of purely discontinuous Markoff processes},
  author={Feller, Willy},
  journal={Transactions of the American Mathematical Society},
  volume={48},
  number={3},
  pages={488--515},
  year={1940}
}

@article{feinberg2014solutions,
  title={On solutions of {kolmogorov's} equations for nonhomogeneous jump Markov processes},
  author={Feinberg, Eugene A and Mandava, Manasa and Shiryaev, Albert N},
  journal={Journal of Mathematical Analysis and Applications},
  volume={411},
  number={1},
  pages={261--270},
  year={2014},
  publisher={Elsevier}
}

@inproceedings{park2024jump,
  title={Jump your steps: Optimizing sampling schedule of discrete diffusion models},
  author={Park, Yong-Hyun and Lai, Chieh-Hsin and Hayakawa, Satoshi and Takida, Yuhta and Mitsufuji, Yuki},
  booktitle={The Thirteenth International Conference on Learning Representations},
  year={2025}
}

@inproceedings{
chen2022sampling,
title={Sampling is as easy as learning the score: theory for diffusion models with minimal data assumptions},
author={Sitan Chen and Sinho Chewi and Jerry Li and Yuanzhi Li and Adil Salim and Anru Zhang},
booktitle={The Eleventh International Conference on Learning Representations },
year={2023},
}

@article{benton2024denoising,
  title={From denoising diffusions to denoising markov models},
  author={Benton, Joe and Shi, Yuyang and De Bortoli, Valentin and Deligiannidis, George and Doucet, Arnaud},
  journal={Journal of the Royal Statistical Society Series B: Statistical Methodology},
  volume={86},
  number={2},
  pages={286--301},
  year={2024},
  publisher={Oxford University Press US}
}

@article{li2024generative,
  title={Generative emulation of weather forecast ensembles with diffusion models},
  author={Li, Lizao and Carver, Robert and Lopez-Gomez, Ignacio and Sha, Fei and Anderson, John},
  journal={Science Advances},
  volume={10},
  number={13},
  year={2024},
  publisher={American Association for the Advancement of Science}
}

@article{zeni2025generative,
  title={A generative model for inorganic materials design},
  author={Zeni, Claudio and Pinsler, Robert and Z{\"u}gner, Daniel and Fowler, Andrew and Horton, Matthew and Fu, Xiang and Wang, Zilong and Shysheya, Aliaksandra and Crabb{\'e}, Jonathan and Ueda, Shoko and others},
  journal={Nature},
  volume={639},
  number={8055},
  pages={624--632},
  year={2025},
  publisher={Nature Publishing Group UK London}
}

@article{ho2022video,
  title={Video diffusion models},
  author={Ho, Jonathan and Salimans, Tim and Gritsenko, Alexey and Chan, William and Norouzi, Mohammad and Fleet, David J},
  journal={Advances in Neural Information Processing Systems},
  volume={35},
  pages={8633--8646},
  year={2022}
}

@article{gillespie2001approximate,
  title={Approximate accelerated stochastic simulation of chemically reacting systems},
  author={Gillespie, Daniel T},
  journal={The Journal of chemical physics},
  volume={115},
  number={4},
  pages={1716--1733},
  year={2001},
  publisher={AIP Publishing}
}

@article{chen2025optimal,
  title={Optimal inference schedules for masked diffusion models},
  author={Chen, Sitan and Cong, Kevin and Li, Jerry},
  journal={arXiv preprint arXiv:2511.04647},
  year={2025}
}

@article{libreaking,
  title={Breaking {AR'}s Sampling Bottleneck: Provable Acceleration via Diffusion Language Models},
  author={Li, Gen and Cai, Changxiao},
  journal={Advances in Neural Information Processing Systems},
  volume={38},
    year={2025}
}

@article{liang2025discrete,
  title={Discrete diffusion models: Novel analysis and new sampler guarantees},
  author={Liang, Yuchen and Liang, Yingbin and Lai, Lifeng and Shroff, Ness},
  journal={Advances in Neural Information Processing Systems},
  volume={39},
  year={2025}
}

@article{liang2025absorb,
  title={Absorb and Converge: Provable Convergence Guarantee for Absorbing Discrete Diffusion Models},
  author={Liang, Yuchen and Huang, Renxiang and Lai, Lifeng and Shroff, Ness and Liang, Yingbin},
  journal={Advances in Neural Information Processing Systems},
  volume={39},
  year={2025}
}

@inproceedings{ren2024discrete,
  title={How discrete and continuous diffusion meet: Comprehensive analysis of discrete diffusion models via a stochastic integral framework},
  author={Ren, Yinuo and Chen, Haoxuan and Rotskoff, Grant M and Ying, Lexing},
  booktitle={International Conference on Learning Representations},
year={2025},
}

@inproceedings{lou2023discrete,
  title={Discrete diffusion modeling by estimating the ratios of the data distribution},
  author={Lou, Aaron and Meng, Chenlin and Ermon, Stefano},
  booktitle={International Conference on Machine Learning},
    pages={4735--4763},
  year={2024},
  organization={PMLR}
}

@article{campbell2022continuous,
  title={A continuous time framework for discrete denoising models},
  author={Campbell, Andrew and Benton, Joe and De Bortoli, Valentin and Rainforth, Thomas and Deligiannidis, George and Doucet, Arnaud},
  journal={Advances in Neural Information Processing Systems},
  volume={35},
  pages={28266--28279},
  year={2022}
}

@article{conforti2025non,
  title={Non-Asymptotic Convergence of Discrete Diffusion Models: Masked and Random Walk dynamics},
  author={Conforti, Giovanni and Durmus, Alain and Pham, Le-Tuyet-Nhi},
  journal={arXiv preprint arXiv:2512.00580},
  year={2025}
}

@article{li2023towards,
  title={Towards Faster Non-Asymptotic Convergence for Diffusion-Based Generative Models},
  author={Li, Gen and Wei, Yuting and Chen, Yuxin and Chi, Yuejie},
  journal={arXiv preprint arXiv:2306.09251},
  year={2023}
}

@article{watson2023novo,
  title={De novo design of protein structure and function with RFdiffusion},
  author={Watson, Joseph L and Juergens, David and Bennett, Nathaniel R and Trippe, Brian L and Yim, Jason and Eisenach, Helen E and Ahern, Woody and Borst, Andrew J and Ragotte, Robert J and Milles, Lukas F and others},
  journal={Nature},
  volume={620},
  number={7976},
  pages={1089--1100},
  year={2023},
  publisher={Nature Publishing Group UK London}
}

@article{ho2020denoising,
  title={Denoising diffusion probabilistic models},
  author={Ho, Jonathan and Jain, Ajay and Abbeel, Pieter},
  journal={Advances in Neural Information Processing Systems},
  volume={33},
  pages={6840--6851},
  year={2020}
}

@article{meng2022concrete,
  title={Concrete score matching: Generalized score matching for discrete data},
  author={Meng, Chenlin and Choi, Kristy and Song, Jiaming and Ermon, Stefano},
  journal={Advances in Neural Information Processing Systems},
  volume={35},
  pages={34532--34545},
  year={2022}
}

@inproceedings{ou2024your,
  title={Your absorbing discrete diffusion secretly models the conditional distributions of clean data},
  author={Ou, Jingyang and Nie, Shen and Xue, Kaiwen and Zhu, Fengqi and Sun, Jiacheng and Li, Zhenguo and Li, Chongxuan},
  booktitle={The Thirteenth International Conference on Learning Representations },
  year={2025}
}

@article{sahoo2024simple,
  title={Simple and effective masked diffusion language models},
  author={Sahoo, Subham and Arriola, Marianne and Schiff, Yair and Gokaslan, Aaron and Marroquin, Edgar and Chiu, Justin and Rush, Alexander and Kuleshov, Volodymyr},
  journal={Advances in Neural Information Processing Systems},
  volume={37},
  pages={130136--130184},
  year={2024}
}

@article{chen2024convergence,
  title = {Convergence {{analysis}} of {{discrete diffusion model}}: {{exact implementation through uniformization}}},
  shorttitle = {Convergence {{Analysis}} of {{Discrete Diffusion Model}}},
  author = {Chen, Hongrui and Ying, Lexing},
  year = 2025,
  month = jun,
  journal = {Journal of Machine Learning},
  volume = {4},
  number = {2},
  pages = {108--127},
  issn = {2790-2048, 2790-203X},
  doi = {10.4208/jml.240812},
  urldate = {2026-02-03},
  copyright = {https://creativecommons.org/licenses/by/4.0/}
}

@article{li2025dimension,
  title={Dimension-free convergence of diffusion models for approximate Gaussian mixtures},
  author={Li, Gen and Cai, Changxiao and Wei, Yuting},
  journal={arXiv preprint arXiv:2504.05300},
  year={2025}
}

@inproceedings{chen2023improved,
  title={Improved analysis of score-based generative modeling: User-friendly bounds under minimal smoothness assumptions},
  author={Chen, Hongrui and Lee, Holden and Lu, Jianfeng},
  booktitle={International Conference on Machine Learning},
  pages={4735--4763},
  year={2023},
  organization={PMLR}
}

@article{van1992uniformization,
  title={Uniformization for nonhomogeneous Markov chains},
  author={Van Dijk, Nico M},
  journal={Operations research letters},
  volume={12},
  number={5},
  pages={283--291},
  year={1992},
  publisher={Elsevier}
}

@article{gillespie1976general,
  title={A general method for numerically simulating the stochastic time evolution of coupled chemical reactions},
  author={Gillespie, Daniel T},
  journal={Journal of computational physics},
  volume={22},
  number={4},
  pages={403--434},
  year={1976},
  publisher={Elsevier}
}

@article{li2024adapting,
  title={Adapting to unknown low-dimensional structures in score-based diffusion models},
  author={Li, Gen and Yan, Yuling},
  journal={Advances in Neural Information Processing Systems},
  volume={37},
  pages={126297--126331},
  year={2024}
}

@article{huang2024denoising,
  title={Denoising diffusion probabilistic models are optimally adaptive to unknown low dimensionality},
  author={Huang, Zhihan and Wei, Yuting and Chen, Yuxin},
  journal={arXiv preprint arXiv:2410.18784},
  year={2024}
}

@article{austin2021structured,
  title={Structured denoising diffusion models in discrete state-spaces},
  author={Austin, Jacob and Johnson, Daniel D and Ho, Jonathan and Tarlow, Daniel and van den Berg, Rianne},
  journal={Advances in Neural Information Processing Systems},
  volume={34},
  pages={17981--17993},
  year={2021}
}

@article{dhariwal2021diffusion,
  title={Diffusion models beat {GANs} on image synthesis},
  author={Dhariwal, Prafulla and Nichol, Alexander},
  journal={Advances in Neural Information Processing Systems},
  volume={34},
  pages={8780--8794},
  year={2021}
}

@inproceedings{sohl2015deep,
  title={Deep unsupervised learning using nonequilibrium thermodynamics},
  author={Sohl-Dickstein, Jascha and Weiss, Eric and Maheswaranathan, Niru and Ganguli, Surya},
  booktitle={International Conference on Machine Learning},
  pages={2256--2265},
  year={2015},
}

@article{song2019generative,
  title={Generative modeling by estimating gradients of the data distribution},
  author={Song, Yang and Ermon, Stefano},
  journal={Advances in Neural Information Processing Systems},
  volume={32},
  year={2019}
}

@article{makur2018comparison,
  title={Comparison of channels: Criteria for domination by a symmetric channel},
  author={Makur, Anuran and Polyanskiy, Yury},
  journal={IEEE Transactions on Information Theory},
  volume={64},
  number={8},
  pages={5704--5725},
  year={2018},
  publisher={IEEE}
}

@inproceedings{zhang2024convergence,
  title={Convergence of score-based discrete diffusion models: A discrete-time analysis},
  author={Zhang, Zikun and Chen, Zixiang and Gu, Quanquan},
booktitle={International Conference on Learning Representations},
year={2025},
}

@article{mor2021systematic,
  title={A Systematic Review of Hidden Markov Models and Their Applications.},
  author={Mor, Bhavya and Garhwal, Sunita and Kumar, Ajay},
  journal={Archives of computational methods in engineering},
  volume={28},
  number={3},
  year={2021}
}

@article{mark2024application,
  title={The application of hidden Markov models in speech recognition},
  author={Gales, Mark and Young, Steve},
  journal={Foundations and Trends{\textregistered} in Signal Processing},
  volume={1},
  number={3},
  pages={195--304},
  year={2024},
  publisher={Emerald Publishing Limited}
}

@inproceedings{
pope2021intrinsic,
title={The Intrinsic Dimension of Images and Its Impact on Learning},
author={Phil Pope and Chen Zhu and Ahmed Abdelkader and Micah Goldblum and Tom Goldstein},
booktitle={International Conference on Learning Representations},
year={2021},
}

@article{gorban2018blessing,
  title={Blessing of dimensionality: mathematical foundations of the statistical physics of data},
  author={Gorban, Alexander N and Tyukin, Ivan Yu},
  journal={Philosophical Transactions of the Royal Society A: Mathematical, Physical and Engineering Sciences},
  volume={376},
  number={2118},
  pages={20170237},
  year={2018},
  publisher={The Royal Society Publishing}
}

@article{ingraham2019generative,
  title={Generative models for graph-based protein design},
  author={Ingraham, John and Garg, Vikas and Barzilay, Regina and Jaakkola, Tommi},
  journal={Advances in Neural Information Processing Systems},
  volume={32},
  year={2019}
}

@inproceedings{xu2022geodiff,
  title={Geodiff: A geometric diffusion model for molecular conformation generation},
  author={Xu, Minkai and Yu, Lantao and Song, Yang and Shi, Chence and Ermon, Stefano and Tang, Jian},
  booktitle={International Conference on Learning Representations},
year={2022},
}

@article{liebenau2017asymptotic,
  title={Asymptotic enumeration of graphs by degree sequence, and the degree sequence of a random graph},
  author={Liebenau, Anita and Wormald, Nick},
  journal={Journal of the European Mathematical Society},
  volume={26},
  pages={1--40},
  year={2024}
}

@book{cover1999elements,
  title={Elements of information theory},
  author={Cover, Thomas M},
  year={1999},
  publisher={John Wiley \& Sons}
}
